\def\eqdef{:=}
\def\VAR{\mathrm{Var}}
\def\Regret{\mathrm{Regret}}
\newtheorem{proposition}{Proposition}
\newtheorem{lemma}[proposition]{Lemma}
\newtheorem{theorem}[proposition]{Theorem}
\newtheorem{defn}{Definition}
\newtheorem{remark}{Remark}
\newtheorem{definition}{Definition}
\newenvironment{proofsketch}{%
  \proof}{\endproof}
\newcommand{\Olog}{\tilde{\mathcal{O}}}
\DeclarePairedDelimiter\br{(}{)}
\DeclarePairedDelimiter\brs{[}{]}
\DeclarePairedDelimiter\brc{\{}{\}}
\DeclarePairedDelimiter\abs{\lvert}{\rvert}
\DeclarePairedDelimiter\norm{\lVert}{\rVert}
\DeclarePairedDelimiter\innorm{\langle}{\rangle}
\newcommand{\E}{\mathbb{E}}
\newcommand{\R}{\mathbb{R}}
\newcommand{\F}{\mathcal{F}}
\newcommand{\N}{\mathcal{N}}
\newcommand{\ind}{\mathbbm{1}}
\newcommand{\printfnsymbol}[1]{%
  \textsuperscript{\@fnsymbol{#1}}%
}
\title{Tight Regret Bounds for Model-Based Reinforcement Learning with Greedy Policies}
\author{
  Yonathan Efroni\thanks{equal contribution}\\
  Technion, Israel\\
  \And
  Nadav Merlis \printfnsymbol{1}\\
  Technion, Israel\\
  \And
  Mohammad Ghavamzadeh\\
  Facebook AI Research\\
  \And
  Shie Mannor \\
  Technion, Israel\\
  }
\begin{document}

\maketitle

\begin{abstract}
    State-of-the-art efficient model-based Reinforcement Learning (RL) algorithms typically act by iteratively solving empirical models, i.e., by  performing \emph{full-planning} on Markov Decision Processes (MDPs) built by the gathered experience. 
    In this paper, we focus on model-based RL in the finite-state finite-horizon undiscounted MDP setting and establish that exploring with \emph{greedy policies} -- act by \emph{1-step planning} -- can achieve  tight minimax performance in terms of regret,  $\Olog(\sqrt{HSAT})$. Thus, full-planning in model-based RL can be avoided altogether without any performance degradation, and, by doing so, the computational complexity decreases by a factor of $S$. The results are based on a novel analysis of real-time dynamic programming, then extended to model-based RL. Specifically, we generalize existing algorithms that perform full-planning to act by 1-step planning. For these generalizations, we prove regret bounds with the same rate as their full-planning counterparts. 
\end{abstract}

\section{Introduction}

Reinforcement learning (RL) \citep{sutton2018reinforcement} is a field of machine learning that tackles the problem of learning how to act in an {\em unknown} dynamic environment. An agent interacts with the environment, and receives feedback on its actions in the form of a state-dependent reward signal. Using this experience, the agent's goal is then to find a policy that maximizes the long-term reward. 

There are two main approaches for learning such a policy: model-based and model-free. The model-based approach estimates the system's model and uses it to assess the long-term effects of actions via \emph{full-planning} (e.g., \citealt{jaksch2010near}). Model-based RL algorithms usually enjoy good performance guarantees in terms of the regret -- the difference between the sum of rewards gained by playing an optimal policy and the sum of rewards that the agent accumulates  \citep{jaksch2010near,bartlett2009regal}. Nevertheless, model-based algorithms suffer from high space and computation complexity. The former is caused by the need for storing a model. The latter is due to the frequent full-planning, which requires a full solution of the estimated model. Alternatively, model-free RL algorithms directly estimate quantities that take into account the long-term effect of an action, thus, avoiding model estimation and planning operations altogether \citep{jin2018q}. These algorithms usually enjoy better computational and space complexity, but seem to have worse performance guarantees.

In many applications, the high computational complexity of model-based RL makes them infeasible. Thus, practical model-based approaches alleviate this computational burden by using \emph{short-term planning}  e.g., Dyna \citep{sutton1991dyna}, instead of full-planning. 
To the best of our knowledge, there are no regret guarantees for such algorithms, even in the tabular setting. This raises the following  question: \emph{Can a model-based approach coupled with short-term planning enjoy the favorable performance of model-based RL?}

In this work, we show that model-based algorithms that use 1-step planning can achieve the same performance as algorithms that perform full-planning, thus, answering affirmatively to the above question. To this end, we study Real-Time Dynamic-Programming (RTDP) \citep{barto1995learning} that finds the optimal policy of a {\em known} model by acting greedily based on 1-step planning, and establish new and sharper finite sample guarantees. We demonstrate how the new analysis of RTDP can be incorporated into two model-based RL algorithms, and prove that the regret of the resulting algorithms remains unchanged, while their computational complexity drastically decreases. As Table \ref{tab:tabbounds} shows, this reduces the computational complexity of model-based RL methods by a factor of $S$. 

The contributions of our paper are as follows: we first prove regret bounds for RTDP when the model is known. To do so, we establish concentration results on Decreasing Bounded Processes, which are of independent interest. We then show that the regret bound translates into a Uniform Probably Approximately Correct (PAC) \citep{dann2017unifying} bound for RTDP that greatly improves existing PAC results \citep{strehl2006pac}. Next, we move to the learning problem, where the model is unknown. Based on the analysis developed for RTDP we adapt UCRL2 \citep{jaksch2010near} and EULER \citep{zanette2019tighter}, both act by full-planning, to UCRL2 with Greedy Policies (UCRL2-GP) and EULER with Greedy Policies (EULER-GP); model-based algorithms that act by 1-step planning. The adapted versions are shown to preserve the performance guarantees, while improve in terms of computational complexity.

\begin{table*}
\begin{center}
\begin{tabular}{|c| c | c | c| }\hline
 Algorithm & Regret & Time Complexity & Space Complexity \\ \hline \hline
  \texttt{UCRL2}\footnotemark {\tiny\citep{jaksch2010near}} & $\tilde O(\sqrt{H^2S^2AT})$ & $\Olog(\N SAH)$ & $\Olog(HS+\N SA)$ \\ 
  \hline
 \texttt{UCBVI} {\tiny\citep{azar2017minimax}} & $\tilde O(\sqrt{HSAT} + \sqrt{H^2 T} )$& $\Olog(\N SAH)$ & $\Olog(HS+\N SA)$ \\  
 \hline
 \texttt{EULER} {\tiny \citep{zanette2019tighter}} & $\Olog(\sqrt{HSAT})$& $\Olog(\N SAH)$  & $\Olog(HS+\N SA)$   \\
  \hline
 \rowcolor{lightgray}
 \texttt{UCRL2-GP}& $\tilde O(\sqrt{H^2S^2 A T})$ &
$\Olog(\N AH)$ & $\Olog(HS+\N SA)$\\
\hline
\rowcolor{lightgray}
 \texttt{EULER-GP}& $\tilde O(\sqrt{HS A T})$ &
$\Olog(\N AH)$ & $\Olog(HS+\N SA)$\\
\hline
     \texttt{Q-v2}  {\tiny\citep{jin2018q}}& $\tilde O(\sqrt{ H^3 S A T}$  & $\Olog(AH)$ & $\Olog(HSA )$ \\
\hline\hline

 Lower bounds & $\Omega\left(\sqrt{HSAT} \right)$ & -- & --\\\hline
\end{tabular}
\end{center}
\caption{Comparison of our bounds with several state-of-the-art bounds for RL in tabular finite-horizon MDPs. The time complexity of the algorithms is per episode; $S$ and $A$ are the sizes of the state and action sets, respectively; $H$ is the horizon of the MDP; $T$ is the total number of samples that the algorithm gathers; $\N\le S$ is the maximum number of non-zero transition probabilities across the entire state-action pairs. 
The algorithms proposed in this paper are highlighted in gray. 
}
\label{tab:tabbounds}
\end{table*}
\footnotetext{Similarly to previous work in the finite horizon setting, we state the regret in terms of the horizon $H$. The regret in the infinite horizon setting is $DS\sqrt{AT}$, where $D$ is the diameter of the MDP.}

\section{Notations and Definitions}\label{sec: setup}

We consider finite-horizon MDPs with time-independent dynamics \citep{bertsekas1996neuro}. A finite-horizon MDP is defined by the tuple $\mathcal{M} = \br*{\mathcal{S},\mathcal{A}, R, p, H}$, where $\mathcal{S}$ and $\mathcal{A}$ are the state and action spaces with cardinalities $S$ and $A$, respectively. The immediate reward for taking an action $a$ at state $s$ is a random variable $R(s,a)\in\brs*{0,1}$ with expectation $\E R(s,a)=r(s,a)$. The transition probability is $p(s'\mid s,a)$, the probability of transitioning to state $s'$ upon taking action $a$ at state $s$. 
Furthermore, $\N\eqdef \max_{s,a}|\brc*{s':p(s'\mid s,a)>0}|$ is the maximum number of non-zero transition probabilities across the entire state-action pairs. If this number is unknown to the designer of the algorithm in advanced, then we set $\N=S$. 
The initial state in each episode is arbitrarily chosen and $H\in \mathbb{N}$ is the {\em horizon}, i.e.,~the number of time-steps in each episode. We define $[N] \eqdef \brc*{1,\ldots,N},\;$ for all $N\in \mathbb{N}$, and throughout the paper use $t\in\brs*{H}$ and $k\in\brs*{K}$ to denote time-step inside an episode and the index of an episode, respectively.

A deterministic policy $\pi: \mathcal{S}\times[H]\rightarrow \mathcal{A}$ is a mapping from states and time-step indices to actions. We denote by $a_t \eqdef \pi(s_t,t)$, the action taken at time $t$ at state $s_t$ according to a policy $\pi$. The quality of a policy $\pi$ from state $s$ at time $t$ is measured by its value function, which is defined as
\begin{align*}
    V_t^\pi(s) \eqdef \E\brs*{\sum_{t'=t}^H r\br*{s_{t'},\pi(s_{t'},t')}\mid s_t=s},
\end{align*}
where the expectation is over the environment's randomness. An optimal policy maximizes this value for all states $s$ and time-steps $t$, and the corresponding optimal value is denoted by $V_t^*(s) \eqdef \max_{\pi} V_t^\pi(s),\;$ for all $t\in [H]$. The optimal value satisfies the optimal Bellman equation, i.e.,
\begin{align}
\label{eq:bellman}
    V_t^*(s) = T^*V_{t+1}^*(s) \eqdef \max_{a}\brc*{r(s,a) + p(\cdot\mid s,a)^TV_{t+1}^*}.
\end{align}
We consider an agent that repeatedly interacts with an MDP in a sequence of episodes $[K]$. The performance of the agent is measured by its \textit{regret}, defined as $\Regret(K)\eqdef \sum_{k=1}^K \br*{V_1^*(s_1^k) - V_1^{\pi_k}(s_1^k)}$. Throughout this work, the policy $\pi_k$ is computed by a 1-step planning operation with respect to the value function estimated by the algorithm at the end of episode $k-1$, denoted by~$\bar{V}^{k-1}$. We also call such policy a \emph{greedy policy}. Moreover, $s_t^k$ and $a_t^k$ stand, respectively, for the state and the action taken at the $t^{th}$ time-step of the $k^{th}$ episode.

Next, we define the filtration $\F_k$ that includes all events (states, actions, and rewards) until the end of the $k^{th}$ episode, as well as the initial state of the episode $k+1$. We denote by $T=KH$, the total number of time-steps (samples). Moreover, we denote by $n_{k}(s,a)$, the number of times that the agent has visited state-action pair $(s,a)$, and by $\hat{X}_k$, the empirical average of a random variable $X$. Both quantities are based on experience gathered until the end of the $k^{th}$ episode and are $\F_k$ measurable. We also define the probability to visit the state-action pair $(s,a)$ at the $k^{th}$ episode at time-step $t$ by $w_{tk}(s,a)=\Pr\br*{s_t^k=s,a_t^k=a \mid s_0^k,\pi_k}$. We note that $\pi_k$ is $\F_{k-1}$ measurable, and thus, $w_{tk}(s,a)=\Pr\br*{s_t^k=s,a_t^k=a \mid \F_{k-1}}$. Also denote $w_k(s,a)=\sum_{t=1}^H w_{tk}(s,a)$. 

We use $\Olog(X)$ to refer to a quantity that depends on $X$ up to poly-log expression of a quantity at most polynomial in $S$, $A$, $T$, $K$, $H$, and $\frac{1}{\delta}$. Similarly, $\lesssim$ represents $\leq$ up to numerical constants or poly-log factors. We define $\norm{X}_{2,p}\eqdef \sqrt{\E_p X^2}$, where $p$ is a probability distribution over the domain of $X$, and use $X \vee Y \eqdef \max\brc*{X,Y}$. Lastly, $\mathcal{P}(\mathcal{S})$ is the set of probability distributions over the state space $\mathcal S$. 

\section{Real-Time Dynamic Programming}\label{sec: RTDP}

\begin{algorithm}
\begin{algorithmic}
\caption{Real-Time Dynamic Programming}
\label{algo: RTDP}
    \STATE Initialize: $\forall s\in \mathcal S,\; \forall t\in [H],\; \bar{V}^0_t(s)=H-(t-1)$.
    \FOR{$k=1,2,\ldots$}
        \STATE Initialize $s^k_1$
        \FOR{$t=1,\ldots,H$}
            \STATE $a_t^k\in \arg\max_a  r(s_t^k,a)+ p(\cdot \mid s_t^k,a)^T \bar{V}^{k-1}_{t+1}$
            \STATE $\bar{V}^{k}_{t}(s_t^k) =  r(s_t^k,a_t^k)+ p(\cdot\mid s_t^k,a_t^k)^T \bar{V}^{k-1}_{t+1}$
            \STATE Act with $a_t^k$ and observe $s_{t+1}^k$.
        \ENDFOR
    \ENDFOR
\end{algorithmic}
\end{algorithm}

 RTDP \citep{barto1995learning} is a well-known algorithm that solves an MDP when a model of the environment is given. Unlike, e.g., Value Iteration (VI) \citep{bertsekas1996neuro} that solves an MDP by offline calculations, RTDP solves an MDP in a real-time manner. As mentioned in \citet{barto1995learning}, RTDP can be interpreted as an asynchronous VI adjusted to a real-time algorithm. 
 
Algorithm~\ref{algo: RTDP} contains the pseudocode of RTDP for finite-horizon MDPs. The value function is initialized with an optimistic value, i.e., an upper bound of the optimal value. At each time-step $t$ and episode $k$, the agent acts from the current state $s_t^k$ greedily with respect to the current value at the next time step, $\bar{V}^{k-1}_{t+1}$. It then updates the value of $s_t^k$ according to the optimal Bellman operator. We denote by $\bar{V}$, the value function, and as we show in the following, it always upper bounds $V^*$. Note that since the action at a fixed state is chosen according to $\bar{V}^{k-1}$, then $\pi_k$ is $\F_{k-1}$ measurable.
 
 Since RTDP is an online algorithm, i.e., it updates its value estimates through interactions with the environment, it is natural to measure its performance in terms of the regret. The rest of this section is devoted to supplying expected and high-probability bounds on the regret of RTDP, which will also lead to PAC bounds for this algorithm. In Section \ref{sec: model base RL with 1-step greedy}, based on the observations from this section, we will establish minimax regret bounds for 1-step greedy model-based RL.

We start by stating two basic properties of RTDP in the following lemma: the value is always optimistic and decreases in $k$ (see proof in Appendix \ref{sec:supp proof RTDP}). Although the first property is known \citep{barto1995learning}, to the best of our knowledge, the second one has not been proven in previous work.

\begin{restatable}{lemma}{rtdpProperties}
\label{lemma:rtdp properties}
For all $s$, $t$, and $k$, it holds that (i) $V^*_t(s) \leq \bar{V}^k_t(s)$ and (ii) $\bar{V}^{k}_t(s) \leq \bar{V}^{k-1}_t(s)$.
\end{restatable}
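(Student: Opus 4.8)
The plan is to prove both properties simultaneously by induction on the episode index $k$, since the two statements are naturally coupled through the RTDP update rule. For the base case $k=0$ (or rather establishing the setup for the induction), I would use the initialization $\bar{V}^0_t(s) = H-(t-1)$. Since all rewards lie in $[0,1]$ and there are at most $H-(t-1)$ remaining steps from time $t$, we immediately have $V^*_t(s) \le H-(t-1) = \bar{V}^0_t(s)$, establishing optimism at $k=0$. The monotonicity claim (ii) for $k=0$ is vacuous or can be handled by defining $\bar V^{-1}$ appropriately; the real work is the inductive step.

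For the inductive step, I assume that for all $s,t$ we have $V^*_t(s) \le \bar{V}^{k-1}_t(s)$ and $\bar{V}^{k-1}_t(s) \le \bar{V}^{k-2}_t(s)$, and prove both hold with $k-1$ replaced by $k$. The key observation is that $\bar{V}^k_t$ differs from $\bar{V}^{k-1}_t$ only at the single visited state $s_t^k$; at all other states the value is carried over unchanged, so both inequalities are preserved trivially there. At the visited state $s_t^k$, the update sets $\bar{V}^k_t(s_t^k) = \max_a\brc*{r(s_t^k,a) + p(\cdot\mid s_t^k,a)^T \bar{V}^{k-1}_{t+1}} = T^*\bar{V}^{k-1}_{t+1}(s_t^k)$, where I am using that the greedy action $a_t^k$ is precisely the argmax. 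The entire argument then reduces to monotonicity of the Bellman operator $T^*$: since $T^*$ is order-preserving in its argument, the two inductive hypotheses applied at time $t+1$ yield the two desired inequalities at time $t$.

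Concretely, for optimism (i) at $s_t^k$, I would write $\bar{V}^k_t(s_t^k) = T^*\bar{V}^{k-1}_{t+1}(s_t^k) \ge T^* V^*_{t+1}(s_t^k) = V^*_t(s_t^k)$, where the inequality uses the monotonicity of $T^*$ together with the inductive hypothesis $\bar{V}^{k-1}_{t+1} \ge V^*_{t+1}$, and the final equality is the optimal Bellman equation~\eqref{eq:bellman}. For monotonicity (ii) at $s_t^k$, I would similarly use $\bar{V}^k_t(s_t^k) = T^*\bar{V}^{k-1}_{t+1}(s_t^k) \le T^*\bar{V}^{k-2}_{t+1}(s_t^k)$ by monotonicity of $T^*$ applied to the hypothesis $\bar{V}^{k-1}_{t+1} \le \bar{V}^{k-2}_{t+1}$. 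The one subtlety is that $\bar V^{k-2}_t(s_t^k)$ need not itself equal $T^*\bar V^{k-2}_{t+1}(s_t^k)$, since state $s_t^k$ may not have been the visited state at episode $k-1$. To close this gap I would invoke the inductive monotonicity hypothesis at time $t$, namely $T^*\bar{V}^{k-2}_{t+1}(s_t^k) = \bar V^{k-1}_t(s_t^k) \le \bar V^{k-2}_t(s_t^k)$ when $s_t^k$ was updated at episode $k-1$, or handle the unvisited case directly; this bookkeeping is the main obstacle.

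I expect the genuinely delicate part to be exactly this reconciliation in claim (ii) between the freshly computed value and the possibly-stale value $\bar{V}^{k-2}$ at states not visited on the current trajectory, which requires carefully tracking that every update is itself a $T^*$ application to a value that was already shown to be monotone. The monotonicity of $T^*$ is itself a short lemma: if $U \le W$ pointwise then $p(\cdot\mid s,a)^T U \le p(\cdot\mid s,a)^T W$ for every $(s,a)$, hence taking $\max_a$ of $r(s,a)$ plus these terms preserves the order. Everything else is routine induction, so I would present the operator-monotonicity fact first and then run the coupled induction cleanly.
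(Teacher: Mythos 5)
Your proposal is correct and follows essentially the same route as the paper's proof: induction over episodes, monotonicity of the optimal Bellman operator $T^*$, and a case split at the visited state on whether it was previously updated (tracing back to the last update episode) versus never updated — exactly the paper's argument. One small correction: the base case of (ii) is not vacuous — the inequality $T^*\bar{V}^0_{t+1}(s) \le \bar{V}^0_t(s)$, which uses $r(s,a)\in[0,1]$ and the specific initialization $\bar{V}^0_t(s)=H-(t-1)$, must be verified explicitly (the paper does this as its $k=1$ base case, and the same computation is what closes your ``never-updated'' subcase in later episodes).
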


The following lemma, that we believe is new, relates the difference between the optimistic value $\bar{V}_1^{k-1}(s_1^k)$ and the real value $V_1^{\pi_k}(s_1^k)$ to the \emph{expected cumulative update}  of the value function at the end of the $k^{th}$ episode (see proof in Appendix \ref{sec:supp proof RTDP}). 

\begin{restatable}[Value Update for Exact Model]{lemma}{rdtpExpectedValueUpdate}
\label{lemma: RTDP expected value difference}
The expected cumulative value update of RTDP at the $k^{th}$ episode satisfies
\begin{align*}
    \bar{V}_1^{k-1}(s^k_1)-V_1^{\pi_k}(s^k_1) = \sum_{t=1}^{H} \E[ \bar{V}_t^{k-1}(s_t^k) - \bar{V}_t^{k}(s_t^k)\mid \F_{k-1}].
\end{align*}
\end{restatable}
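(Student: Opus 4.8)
The plan is to establish the identity by a telescoping argument driven by the RTDP backup rule, combined with the tower property of conditional expectation. First I would invoke the update step of Algorithm~\ref{algo: RTDP}, which says that at the visited state $s_t^k$ the value is refreshed to $\bar{V}_t^k(s_t^k) = r(s_t^k,a_t^k) + p(\cdot\mid s_t^k,a_t^k)^T \bar{V}_{t+1}^{k-1}$, where crucially the backup references the \emph{frozen} previous-episode value $\bar{V}_{t+1}^{k-1}$ rather than any intra-episode iterate. Rearranging, each summand on the right-hand side of the claim becomes $\bar{V}_t^{k-1}(s_t^k) - r(s_t^k,a_t^k) - p(\cdot\mid s_t^k,a_t^k)^T\bar{V}_{t+1}^{k-1}$.

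The key reduction is to rewrite the one-step look-ahead as a trajectory expectation. Conditioned on $\F_{k-1}$ and on $(s_t^k,a_t^k)$, the next state $s_{t+1}^k$ is drawn from $p(\cdot\mid s_t^k,a_t^k)$, and $\bar{V}_{t+1}^{k-1}$ is $\F_{k-1}$-measurable, so by the tower property
\[
\E\brs*{p(\cdot\mid s_t^k,a_t^k)^T \bar{V}_{t+1}^{k-1}\mid \F_{k-1}} = \E\brs*{\bar{V}_{t+1}^{k-1}(s_{t+1}^k)\mid \F_{k-1}}.
\]
Defining $u_t \eqdef \E[\bar{V}_t^{k-1}(s_t^k)\mid \F_{k-1}]$, the sum on the right then splits into $\sum_{t=1}^H u_t$, minus $\sum_{t=1}^H \E[r(s_t^k,a_t^k)\mid\F_{k-1}]$, minus $\sum_{t=1}^H u_{t+1}$.

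Next I would telescope the two $u$-sums. Using the boundary convention $\bar{V}_{H+1}^{k-1}\equiv 0$ (so $u_{H+1}=0$), the difference $\sum_{t=1}^H u_t - \sum_{t=1}^H u_{t+1}$ collapses to $u_1 = \E[\bar{V}_1^{k-1}(s_1^k)\mid \F_{k-1}]$. Because $s_1^k$ and the value table $\bar{V}_1^{k-1}$ are both $\F_{k-1}$-measurable, this equals $\bar{V}_1^{k-1}(s_1^k)$. For the reward sum, since $\pi_k$ is $\F_{k-1}$-measurable and $s_1^k$ is fixed, $\sum_{t=1}^H \E[r(s_t^k,a_t^k)\mid \F_{k-1}]$ is precisely the definition of $V_1^{\pi_k}(s_1^k)$. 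Combining these identifications yields $\bar{V}_1^{k-1}(s_1^k) - V_1^{\pi_k}(s_1^k)$, which is the claim.

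I expect the only real subtlety -- rather than a genuine obstacle -- to be the bookkeeping of measurability: being careful that $s_t^k$ for $t\ge 2$ is random given $\F_{k-1}$ (so the conditional expectations cannot be dropped mid-argument), while $s_1^k$, $\pi_k$, and the entire value table $\bar{V}^{k-1}$ are $\F_{k-1}$-measurable, and that the backup at time $t$ uses $\bar{V}^{k-1}_{t+1}$ and not an intermediate update. Once the tower-property step is stated cleanly, the telescoping and the reward-sum identification are routine.
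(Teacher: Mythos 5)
Your proposal is correct and follows essentially the same route as the paper's proof: both use the RTDP backup rule, convert the one-step look-ahead into a trajectory expectation via the Markov/tower property, telescope over $t$ with the boundary condition $\bar{V}_{H+1}\equiv 0$, and identify the conditional reward sum with $V_1^{\pi_k}(s_1^k)$ using the $\F_{k-1}$-measurability of $\pi_k$ and $s_1^k$. The only difference is cosmetic bookkeeping (you rearrange each summand first and then telescope, whereas the paper sums the updated values and shifts indices), so there is nothing substantive to distinguish the two arguments.
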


The result relates the difference of the optimistic value $\bar{V}^{k-1}$ and the value of the greedy policy $V^{\pi_k}$ to the expected update along the trajectory, created by following~$\pi_k$. Thus, for example, if the optimistic value is overestimated, then the value update throughout this episode is expected to be large.

\subsection{Regret and PAC Analysis}

Using Lemma~\ref{lemma:rtdp properties}, we observe that the sequence of values is decreasing and bounded from below. Thus, intuitively, the decrements of the values cannot be indefinitely large. Importantly, Lemma~\ref{lemma: RTDP expected value difference} states that when the expected decrements of the values are small, then  $V_1^{\pi_k}(s_1^k)$ is close to $\bar{V}^{k-1}(s_1^k)$, and thus, to $V^*$, since $\bar{V}^{k-1}(s_1^k)\ge \bar{V}^*(s_1^k) \ge V_1^{\pi_k}(s_1^k)$. 

Building on this reasoning, we are led to establish a general result on a decreasing process. This result will allow us to formally justify the aforementioned reasoning and derive regret bounds for RTDP. 
The proof utilizes self-normalized concentration bounds \citep{pena2007pseudo}, applied on martingales,  and can be found in Appendix \ref{supp: proofs on DBP}.

\begin{restatable}[Decreasing Bounded Process]{defn}{DefnDecreasingProcess}\label{defn: DBP}
We call a random process $\brc*{X_k,\F_k}_{k\geq 0}$, where $\brc*{\F_k}_{k\geq 0}$ is a filtration and $\brc*{X_k}_{k\geq 0}$ is adapted to this filtration, a Decreasing Bounded Process, if it satisfies the following properties:
\begin{enumerate}
\item $\brc*{X_k}_{k\geq 0}$ decreases, i.e., $X_{k+1}\leq X_k$ a.s. .
\item $X_0=C\geq 0,$ and for all $k,\ X_k\geq 0$ a.s. .
\end{enumerate}
\end{restatable}

\begin{restatable}[Regret Bound of a Decreasing Bounded Process]{theorem}{TheoremDecreasingProcess}
\label{theorem: regret of decreasing process}
Let $\brc*{X_k,\F_k}_{k\geq 0}$ be a Decreasing Bounded Process 
and $R_K = \sum_{k=1}^K X_{k-1} - \E[X_{k}\mid \F_{k-1}]$ be its $K$-round regret. Then,
\begin{align*}
    \Pr\brc*{\exists K>0: R_K \ge C\br*{1+2\sqrt{\ln\br*{2/\delta}}}^2} \le \delta.
\end{align*}
Specifically, it holds that $\Pr\brc*{\exists K>0: R_K \geq 9C\ln(3/\delta)} \le \delta.$
\end{restatable}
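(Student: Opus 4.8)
The plan is to reduce the statement to a concentration bound for the predictable compensator of a bounded, telescoping sum, and then to control that compensator uniformly in $K$ via an exponential supermartingale. First I would introduce the one-step decrements $Y_k \eqdef X_{k-1}-X_k$. Because $X_{k-1}$ is $\F_{k-1}$-measurable, the $k$-th summand of the regret equals $\E[Y_k\mid\F_{k-1}]$, so $R_K=\sum_{k=1}^K \E[Y_k\mid\F_{k-1}]$. Two facts are immediate from Definition~\ref{defn: DBP}: each $Y_k\in[0,C]$, and the decrements telescope, $S_K\eqdef\sum_{k=1}^K Y_k = C-X_K \le C$. Thus $R_K$ is the compensator of a sum $S_K$ that is deterministically bounded by $C$, and the entire task is to show that replacing the bounded $S_K$ by its conditional means cannot inflate it beyond $O(C\ln(1/\delta))$, simultaneously for all $K$.

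The core step is an exponential supermartingale. Fix $\lambda>0$ and set $\beta\eqdef(1-e^{-\lambda C})/C$. Convexity of $y\mapsto e^{-\lambda y}$ on $[0,C]$ gives the chord bound $e^{-\lambda Y_k}\le 1-\tfrac{Y_k}{C}(1-e^{-\lambda C})$, which after taking the conditional expectation and using $1-x\le e^{-x}$ yields $\E[e^{-\lambda Y_k}\mid\F_{k-1}]\le \exp(-\beta\,\E[Y_k\mid\F_{k-1}])$. Hence $U_K\eqdef\exp(-\lambda S_K+\beta R_K)$ obeys $\E[U_K\mid\F_{K-1}]\le U_{K-1}$ and is a non-negative supermartingale with $U_0=1$. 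Ville's maximal inequality then gives $\Pr\{\exists K: U_K\ge 1/\delta\}\le\delta$; building the compensator $\beta R_K$ directly into the exponent is precisely what converts a fixed-$K$ tail bound into a guarantee holding for all $K$ at once, for which a naive union bound over infinitely many $K$ would be useless.

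On the complementary event we have, for every $K$ simultaneously, $\beta R_K<\lambda S_K+\ln(1/\delta)\le \lambda C+\ln(1/\delta)$, i.e. $R_K<C(\lambda C+\ln(1/\delta))/(1-e^{-\lambda C})$. It remains to optimize $\lambda$. Writing $u=\lambda C$ and using $1-e^{-u}\ge u/(1+u)$ turns the right-hand side into $C(1+\ln(1/\delta)+u+\ln(1/\delta)/u)$; choosing $u=\sqrt{\ln(1/\delta)}$ collapses this to $C(1+\sqrt{\ln(1/\delta)})^2$. Since this threshold is smaller than $C(1+2\sqrt{\ln(2/\delta)})^2$, the claimed high-probability bound, and a fortiori the simplified $9C\ln(3/\delta)$ form, follow immediately.

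The step I expect to be the main obstacle is obtaining the mixed $\sqrt{\ln(1/\delta)}+\ln(1/\delta)$ dependence with the correct uniform-in-$K$ scope, rather than a cruder $O(C\sqrt{K\ln(1/\delta)})$ estimate: this is exactly what forces the $\lambda$-dependent supermartingale in place of a single Azuma/Hoeffding application, and the interpolation between its sub-Gaussian regime (small $\lambda C$, source of the $\sqrt{\ln(1/\delta)}$ term) and its sub-exponential regime (large $\lambda C$, source of the linear term) is what produces the $(1+\sqrt{\cdot})^2$ shape. A route closer to the self-normalized bounds of \citet{pena2007pseudo} cited in the paper is to decompose $R_K=(C-X_K)+N_K$ with $N_K$ a martingale, note the self-bounding variance estimate $\langle N\rangle_K\le C\,R_K$ (which follows from $\E[Y_k^2\mid\F_{k-1}]\le C\,\E[Y_k\mid\F_{k-1}]$), and then solve the resulting self-referential quadratic inequality for $R_K$; this yields the same rate, typically with the looser numerical constants that appear in the stated bound.
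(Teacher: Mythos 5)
Your proof is correct, and it takes a genuinely different route from the paper's. The paper decomposes $R_K = (X_0 - X_K) + M_K$, where $M_K = \sum_{k}\br*{X_k - \E\brs*{X_k\mid\F_{k-1}}}$ is a martingale, and invokes the self-normalized concentration bound of \citet{pena2007pseudo} (their Theorem 2.7); the problem-specific work there is the self-bounding variance estimate $\brs*{M}_K + \innorm{M}_K \le C^2 + CR_K$ together with $\E\brs*{B_K^2}\le 2C^2$, after which one solves the self-referential inequality $(R_K - C)/\sqrt{3C^2+CR_K}\le x$ and upgrades the fixed-$K$ tail bound to a uniform one via monotonicity of $R_K$ and monotone convergence. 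This is essentially the alternative you sketch in your closing paragraph, including the key observation $\E\brs*{Y_k^2\mid\F_{k-1}}\le C\,\E\brs*{Y_k\mid\F_{k-1}}$. Your main argument instead builds the compensator directly into the exponent of the supermartingale $U_K=\exp\br*{-\lambda S_K+\beta R_K}$ (the chord bound on $e^{-\lambda y}$ over $[0,C]$ is the standard multiplicative-Chernoff device for nonnegative bounded increments), so uniformity in $K$ comes for free from Ville's maximal inequality, and an explicit optimization over $\lambda$ finishes the job. What your route buys: it is elementary and self-contained (no self-normalized machinery), the uniform-in-$K$ scope is immediate rather than patched on afterwards, and it yields the strictly sharper threshold $C\br*{1+\sqrt{\ln(1/\delta)}}^2 \le C\br*{1+2\sqrt{\ln(2/\delta)}}^2$, from which both stated bounds follow. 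What the paper's route buys: the concentration step is an off-the-shelf citation, with only the variance bounds left to verify. The only gaps worth closing in your writeup are the trivial edge cases $C=0$ and $\delta\ge 1$ (needed so that $\lambda=\sqrt{\ln(1/\delta)}/C$ is well defined and positive), which the paper likewise dispatches with a without-loss-of-generality remark.
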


We are now ready to prove the central result of this section, the expected and high-probability regret bounds on RTDP (see full proof in Appendix \ref{sec:supp proof RTDP}).

\begin{restatable}[Regret Bounds for RTDP]{theorem}{TheoremRegretRTDP}
\label{theorem: regret rtdp}
The following regret bounds hold for RTDP:
\begin{enumerate}
    \item $\E[\Regret(K)] \leq SH^2.$
    \item For any $\delta>0$, with probability $1-\delta$, for all $K>0$, $\Regret(K) \leq 9 SH^2\ln(3SH/\delta).$
\end{enumerate}
\end{restatable}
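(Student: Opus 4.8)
The plan is to reduce the regret of RTDP to a sum of regrets of Decreasing Bounded Processes, one per state--time pair, and then invoke Theorem~\ref{theorem: regret of decreasing process}. First I would bound the regret using optimism: by Lemma~\ref{lemma:rtdp properties}(i) we have $V_1^*(s_1^k)\le \bar V_1^{k-1}(s_1^k)$, so $\Regret(K)\le \sum_{k=1}^K\br*{\bar V_1^{k-1}(s_1^k)-V_1^{\pi_k}(s_1^k)}$. Applying Lemma~\ref{lemma: RTDP expected value difference} to each summand rewrites this upper bound as the expected cumulative value update along the greedy trajectories, $\sum_{k=1}^K\sum_{t=1}^H \E\brs*{\bar V_t^{k-1}(s_t^k)-\bar V_t^k(s_t^k)\mid \F_{k-1}}$.

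The key observation I would then exploit is that within episode $k$ the value at time $t$ is updated only at the single visited state $s_t^k$, while $\bar V_t^k(s)=\bar V_t^{k-1}(s)$ for every $s\neq s_t^k$. Hence the per-step update equals a sum over all states, $\bar V_t^{k-1}(s_t^k)-\bar V_t^k(s_t^k)=\sum_{s}\br*{\bar V_t^{k-1}(s)-\bar V_t^k(s)}$, an identity that holds pathwise. Using that $\bar V_t^{k-1}(s)$ is $\F_{k-1}$-measurable, taking conditional expectations and summing over $k$ and $t$ decouples the bound into $\Regret(K)\le \sum_{t=1}^H\sum_{s}R_K^{(s,t)}$, where $R_K^{(s,t)}\eqdef \sum_{k=1}^K\br*{\bar V_t^{k-1}(s)-\E\brs*{\bar V_t^k(s)\mid\F_{k-1}}}$. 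I would then verify that for each fixed $(s,t)$ the sequence $\brc*{\bar V_t^k(s)}_{k\ge 0}$ is exactly a Decreasing Bounded Process: it is $\F_k$-adapted, it decreases by Lemma~\ref{lemma:rtdp properties}(ii), it is nonnegative since $\bar V_t^k(s)\ge V_t^*(s)\ge 0$, and it starts at $C=\bar V_t^0(s)=H-(t-1)\le H$; moreover $R_K^{(s,t)}$ is precisely its $K$-round regret as defined in Theorem~\ref{theorem: regret of decreasing process}.

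With this structure in place, both claims follow. For the expectation bound I would use the tower property to write $\E R_K^{(s,t)}=\E\brs*{\bar V_t^0(s)-\bar V_t^K(s)}\le H-(t-1)$, so summing over the $S$ states and over $t$ gives $\E[\Regret(K)]\le S\sum_{t=1}^H (H-t+1)=SH(H+1)/2\le SH^2$. For the high-probability bound I would apply Theorem~\ref{theorem: regret of decreasing process} to each of the $SH$ processes with confidence parameter $\delta'=\delta/(SH)$ and take a union bound; since each $C\le H$, with probability at least $1-\delta$ every $R_K^{(s,t)}$ is simultaneously at most $9H\ln(3SH/\delta)$ for all $K$, and summing over the $SH$ pairs yields $\Regret(K)\le 9SH^2\ln(3SH/\delta)$.

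The main obstacle, and the conceptual heart of the argument, is the decoupling step: turning the single random update at the visited state into a full sum over states, so that the trajectory-dependent regret becomes a sum of $SH$ state-wise processes that are individually Decreasing Bounded Processes. The remaining delicate point is that these processes are \emph{not} independent, so I cannot simply add failure probabilities of independent events; instead I rely on the fact that Theorem~\ref{theorem: regret of decreasing process} gives a uniform-in-$K$ high-probability guarantee for each process separately, which a union bound over the deterministic finite index set $\brc*{(s,t)}$ combines without any independence assumption.
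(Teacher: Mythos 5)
Your proposal is correct and follows essentially the same route as the paper: optimism plus Lemma~\ref{lemma: RTDP expected value difference} to bound the regret by the expected cumulative updates, the decoupling of the visited-state update into a sum over all states (which is exactly the paper's Lemma~\ref{lemma: regret to SH decreasing processes}, though your pathwise justification is a slightly cleaner way to obtain it), telescoping for the expectation bound, and Theorem~\ref{theorem: regret of decreasing process} with a union bound over the $SH$ processes (the content of the paper's Lemma~\ref{lemma: sum of decreasing processes}) for the high-probability bound. No gaps.
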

\begin{proofsketch}
We give a sketch of the proof of the second claim. 
Applying Lemmas~\ref{lemma:rtdp properties} and then~\ref{lemma: RTDP expected value difference},
\begin{align}
     \Regret(K)& \eqdef \sum_{k=1}^K V^*_1(s^k_1)- V^{\pi_k}_1(s^k_1)
      \leq  \sum_{k=1}^K \bar{V}_1^{k-1}(s^k_1)- V^{\pi_k}(s^k_1)\nonumber \\
     &\leq \sum_{k=1}^K\sum_{t=1}^{H} \E[ \bar{V}_t^{k-1}(s_t^k) - \bar{V}_t^{k}(s_t^k)\mid \F_{k-1}]. \label{eq: rtdp general bound}
\end{align}

We then establish (see Lemma \ref{lemma: regret to SH decreasing processes}) that RHS of~\eqref{eq: rtdp general bound} is, in fact, a sum of $SH$ Decreasing Bounded Processes, i.e.,
\begin{align}
     \eqref{eq: rtdp general bound} & =\sum_{t=1}^{H}\;\sum_{s\in\mathcal S} \;\sum_{k=1}^K \bar{V}_t^{k-1}(s)- \E[\bar{V}_t^{k}(s)  \mid \F_{k-1}].\label{eq:bound-temp0}
\end{align}

Since for any fixed $s,t$, $\brc*{\bar{V}_t^{k}(s)}_{k\geq 0}$ is a decreasing process by Lemma \ref{lemma:rtdp properties}, we can use Theorem~\ref{theorem: regret of decreasing process}, for a fixed $s,t$, and conclude the proof by applying the union bound on all $SH$ terms in~\eqref{eq:bound-temp0}. 
\end{proofsketch}

Theorem \ref{theorem: regret rtdp} exhibits a regret bound that does not depend on $T=KH$. While it is expected that RTDP, that has access to the exact model, would achieve better performance than an RL algorithm with no such access, a regret bound independent of $T$ is a noteworthy result. Indeed, it leads to the following Uniform PAC (see \citealt{dann2017unifying} for the definition) and $(0,\delta)$ PAC guarantees for RTDP (see proofs in Appendix \ref{sec:supp proof RTDP}). To the best of our knowledge, both are the first PAC guarantees for RTDP.\footnote{Existing PAC results on RTDP analyze variations of RTDP in which $\epsilon$ is an input parameter of the algorithm.}

\begin{restatable}[RTDP is Uniform PAC]{corollary}{CorPACrtdp}
\label{corollary: pac RTDP}
Let $\delta>0$ and $N_\epsilon$ be the number of episodes in which RTDP outputs a policy with ${V^*_1(s_1^k)-V^{\pi_k}_1(s_1^k)>\epsilon}$. Then, 
\begin{align*}
    \Pr\brc*{\exists \epsilon>0: N_\epsilon \geq \frac{9SH^2\ln(3SH/\delta)}{\epsilon}}\leq \delta.
\end{align*}
\end{restatable}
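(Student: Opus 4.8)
The plan is to derive the Uniform PAC bound directly from the high-probability regret bound in Theorem~\ref{theorem: regret rtdp}, claim~2. The key observation is that $N_\epsilon$, the number of episodes whose instantaneous regret exceeds $\epsilon$, can be lower-bounded by a contribution to the cumulative regret. Specifically, every episode $k$ with $V^*_1(s_1^k) - V^{\pi_k}_1(s_1^k) > \epsilon$ contributes strictly more than $\epsilon$ to $\Regret(K)$, while every other episode contributes a nonnegative amount (nonnegativity follows from optimality of $V^*$, which gives $V^*_1(s_1^k) \ge V^{\pi_k}_1(s_1^k)$). Hence, for any $K$, we have the deterministic inequality $\epsilon N_\epsilon^{(K)} < \Regret(K)$, where $N_\epsilon^{(K)}$ counts the bad episodes among the first $K$.

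First I would fix the high-probability event of Theorem~\ref{theorem: regret rtdp}, claim~2, on which $\Regret(K) \le 9SH^2\ln(3SH/\delta)$ holds \emph{simultaneously for all} $K>0$; this event has probability at least $1-\delta$. On this event, combining with the counting inequality above yields, for every $K$ and every $\epsilon>0$,
\begin{align*}
    \epsilon N_\epsilon^{(K)} \le \Regret(K) \le 9SH^2\ln(3SH/\delta),
\end{align*}
so that $N_\epsilon^{(K)} \le 9SH^2\ln(3SH/\delta)/\epsilon$. Since this holds uniformly in $K$, I would take the supremum over $K$ (equivalently, let $K\to\infty$) to conclude $N_\epsilon \le 9SH^2\ln(3SH/\delta)/\epsilon$ for the total count $N_\epsilon$. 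Because the right-hand side is finite, only finitely many episodes can be $\epsilon$-bad, which justifies passing to the total count.

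The final step is to note that the bound holds for all $\epsilon>0$ simultaneously: the event we conditioned on did not depend on $\epsilon$, and the inequality $\epsilon N_\epsilon \le 9SH^2\ln(3SH/\delta)$ was derived on that single event for arbitrary $\epsilon$. Therefore, on an event of probability at least $1-\delta$, we have $N_\epsilon < 9SH^2\ln(3SH/\delta)/\epsilon$ for all $\epsilon>0$, which is exactly the complement of the event in the statement, giving the claimed probability bound.

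The only subtle point — and the part I would be most careful about — is the uniformity over $K$ and $\epsilon$. The strength of a Uniform PAC guarantee is that a \emph{single} high-probability event must simultaneously control the bad-episode count for every accuracy level $\epsilon$. This is precisely what the ``for all $K>0$'' quantifier in Theorem~\ref{theorem: regret rtdp} buys us: because the regret is bounded uniformly in $K$ on one event, and the counting argument is purely deterministic and holds for each $\epsilon$ on that same event, no union bound over $\epsilon$ is needed. I expect this interchange of quantifiers to be the main conceptual obstacle, whereas the counting inequality itself is routine.
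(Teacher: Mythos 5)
Your proposal is correct and follows essentially the same route as the paper: both rest on the deterministic counting inequality $\epsilon N_\epsilon \le \Regret(K)$ (bad episodes each contribute more than $\epsilon$, all others contribute nonnegatively) combined with the uniform-in-$K$ high-probability regret bound of Theorem~\ref{theorem: regret rtdp}, with no union bound over $\epsilon$ needed since the single regret event controls every accuracy level. The paper phrases this as an event inclusion via the episode index $K_{N_\epsilon}$, whereas you condition on the good event and pass to the supremum over $K$; these are equivalent formulations of the same argument.
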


\begin{restatable}[RTDP is $(0,\delta)$ PAC]{corollary}{CorPACrtdpTotalEpisodes}
\label{corollary: pac RTDP finite sample}
Let $\delta>0$ and $N$ be the number of episodes in which RTDP outputs a non optimal policy. Define the (unknown) gap of the MDP, $\Delta(\mathcal{M})= \min_s \min_{\pi: V^{\pi}_1(s)\neq V_1^*(s)} V^*_1(s) - V^{\pi}_1(s) > 0.$ Then,
 \begin{align*}
    \Pr\brc*{N \geq \frac{9SH^2\ln(3SH/\delta)}{\Delta(\mathcal{M})}}\leq \delta.
\end{align*}
\end{restatable}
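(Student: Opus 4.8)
The plan is to derive Corollary~\ref{corollary: pac RTDP finite sample} as an almost immediate consequence of the Uniform PAC bound in Corollary~\ref{corollary: pac RTDP}, using the fact that the gap $\Delta(\mathcal{M})$ provides a floor on how small a \emph{suboptimal} policy's value error can be. The key observation is that if RTDP outputs a non-optimal policy in episode $k$, i.e.\ $V^{\pi_k}_1(s_1^k)\neq V^*_1(s_1^k)$, then by the very definition of the gap we must have $V^*_1(s_1^k)-V^{\pi_k}_1(s_1^k)\ge \Delta(\mathcal{M})$. Hence the count $N$ of non-optimal episodes is bounded above by the count $N_\epsilon$ of episodes with error exceeding $\epsilon$, provided we take any $\epsilon$ strictly below $\Delta(\mathcal{M})$; more precisely, every non-optimal episode is counted in $N_\epsilon$ for all $\epsilon<\Delta(\mathcal{M})$, so $N\le N_\epsilon$ for such $\epsilon$.

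First I would make the gap argument precise: for a fixed $\epsilon$ with $0<\epsilon<\Delta(\mathcal{M})$, the event $\{V^*_1(s_1^k)-V^{\pi_k}_1(s_1^k)>\epsilon\}$ contains the event $\{\pi_k \text{ is non-optimal at } s_1^k\}$, because a non-optimal policy incurs error at least $\Delta(\mathcal{M})>\epsilon$. Summing indicators over episodes gives $N\le N_\epsilon$ deterministically (on every sample path) for every fixed $\epsilon<\Delta(\mathcal{M})$. Then I would invoke Corollary~\ref{corollary: pac RTDP}, which guarantees with probability at least $1-\delta$ that for \emph{all} $\epsilon>0$ simultaneously $N_\epsilon < \frac{9SH^2\ln(3SH/\delta)}{\epsilon}$. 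Combining, on the same high-probability event we have $N\le N_\epsilon<\frac{9SH^2\ln(3SH/\delta)}{\epsilon}$ for every $\epsilon\in(0,\Delta(\mathcal{M}))$.

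The final step is to take the supremum over admissible $\epsilon$, i.e.\ let $\epsilon\uparrow\Delta(\mathcal{M})$. Since $N$ is a fixed integer not depending on $\epsilon$ and the bound $\frac{9SH^2\ln(3SH/\delta)}{\epsilon}$ is continuous and decreasing in $\epsilon$, passing to the limit yields $N\le \frac{9SH^2\ln(3SH/\delta)}{\Delta(\mathcal{M})}$ on the high-probability event. Taking complements, $\Pr\{N\ge \frac{9SH^2\ln(3SH/\delta)}{\Delta(\mathcal{M})}\}\le\delta$, which is the claim. One should be mildly careful with the strict-versus-weak inequality at the endpoint: the uniform bound gives a \emph{strict} inequality $N_\epsilon<\frac{9SH^2\ln(3SH/\delta)}{\epsilon}$ for each $\epsilon<\Delta(\mathcal{M})$, and the limit relaxes this to a weak inequality at $\epsilon=\Delta(\mathcal{M})$, which is exactly what the statement asserts.

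I expect the only genuine subtlety — and thus the main obstacle — to be the interchange of the high-probability ``for all $\epsilon$'' quantifier with the limiting argument. This is where the \emph{uniform} nature of Corollary~\ref{corollary: pac RTDP} is essential: because the event in that corollary holds for all $\epsilon>0$ on a single $1-\delta$ probability set, we may evaluate the bound along a sequence $\epsilon_n\uparrow\Delta(\mathcal{M})$ without incurring additional failure probability or a union bound over $\epsilon$. Had we only possessed an $(\epsilon,\delta)$ PAC guarantee for each fixed $\epsilon$ separately, this limiting step would fail. Everything else is elementary monotonicity and the definition of $\Delta(\mathcal{M})$, so the proof is short.
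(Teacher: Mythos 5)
Your proposal is correct and takes essentially the same approach as the paper: both reduce the claim to the Uniform PAC bound of Corollary~\ref{corollary: pac RTDP} by using the gap $\Delta(\mathcal{M})$ to relate the number of non-optimal episodes $N$ to the counts $N_\epsilon$. The only difference is at the boundary—the paper applies the corollary directly at $\epsilon=\Delta(\mathcal{M})$ after asserting $N=N_{\Delta(\mathcal{M})}$ (since any non-optimal episode has error at least $\Delta(\mathcal{M})$), whereas you take $\epsilon\uparrow\Delta(\mathcal{M})$, which is, if anything, a slightly more careful treatment of the strict-versus-weak inequality issue.
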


\section{Exploration in Model-based RL: Greedy Policy Achieves Minimax Regret}\label{sec: model base RL with 1-step greedy}

We start this section by formulating a general optimistic RL scheme that acts by 1-step planning (see Algorithm \ref{alg: general model based RL algorithm}).
Then, we establish Lemma \ref{lemma: model base RL expected value difference}, which generalizes Lemma \ref{lemma: RTDP expected value difference} to the case where a non-exact model is used for the value updates. 
Using this lemma, we offer a novel regret decomposition for algorithms which follow Algorithm \ref{alg: general model based RL algorithm}.  Based on the  decomposition, we analyze generalizations of UCRL2 \citep{jaksch2010near} (for finite horizon MDPs) and EULER \citep{zanette2019tighter}, that use greedy policies instead of solving an MDP (full planning) at the beginning of each episode. Surprisingly, we find that both generalized algorithms do not suffer from performance degradation, up to numerical constants and logarithmic factors. Thus, we conclude that there exists an RL algorithm that achieves the minimax regret bound, while acting according to greedy policies. 

\begin{algorithm}
\begin{algorithmic}[1]
\caption{Model-based RL with Greedy Policies}
\label{alg: general model based RL algorithm}
    \STATE Initialize: $\forall s\in \mathcal{S}, \; \forall t\in [H],\ \bar{V}^0_t(s)=H-(t-1)$.
    \FOR{$k=1,2,\ldots$}
        \STATE Initialize $s_1^k$
        \FOR{$t=1,\ldots,H$}
            \STATE $\forall a,\ \bar{Q}(s_t^k,a) = \mathrm{ModelBasedOptimisticQ}\br*{\hat{r}_{k-1},\hat{p}_{k-1},n_{k-1},\bar{V}_{t+1}^{k-1}}$
            \STATE $a_t^k\in \arg\max_a \bar{Q}(s_t^k,a)$
            \STATE $\bar{V}^{k}_{t}(s_t^k) =\min\brc*{\bar{V}^{k-1}_{t}(s_t^k),\bar{Q}(s_t^k,a_t^k)}$ \label{alg line: v^k does not decrease}
            \STATE Act with $a_t^k$ and observe $s_{t+1}^k$.
        \ENDFOR
            \STATE Update $\hat{r}_k,\hat{p}_k,n_{k}$ with all experience gathered in episode.
    \ENDFOR
\end{algorithmic}
\end{algorithm}

Consider the general RL scheme that explores by greedy policies as depicted in Algorithm \ref{alg: general model based RL algorithm}. The value $\bar{V}$ is initialized optimistically and the algorithm interacts with the unknown environment in an episodic manner. At each time-step $t$, a greedy policy from the current state, $s_t^k$, is calculated optimistically based on the empirical model $(\hat{r}_{k-1},\hat{p}_{k-1},n_{k-1})$ and the current value at the next time-step $\bar{V}_{t+1}^{k-1}$. 
This is done in a subroutine called `ModelBasedOptimisticQ'.\footnote{We also allow the subroutine to use $\mathcal{O}(S)$ internal memory for auxiliary calculations, which does not change the overall space complexity.} 
We further assume the optimistic $Q$-function has the form $\bar{Q}(s_t^k,a)= \tilde{r}_{k-1}(s_t^k,a) +\tilde{p}_{k-1}(\cdot\mid s_t^k,a)^T\bar{V}_{t+1}^{k-1}$ and refer to  $(\tilde{r}_{k-1},\tilde{p}_{k-1})$ as the optimistic model. The agent interacts with the environment based on the greedy policy with respect to $\bar{Q}$ and uses the gathered experience to update the empirical model at the end of the episode.

By construction of the update rule (see Line \ref{alg line: v^k does not decrease}), the value is a decreasing function of $k$, for all $(s,t)\in\mathcal{S}\times [H]$. Thus, property (ii) in Lemma~\ref{lemma:rtdp properties} holds for Algorithm \ref{alg: general model based RL algorithm}. Furthermore, the  algorithms analyzed in this section will also be optimistic with high probability, i.e., property (i) in Lemma~\ref{lemma:rtdp properties} also holds. Finally, since the value update uses the empirical quantities $\hat{r}_{k-1}$, $\hat{p}_{k-1}$, $n_{k-1}$ and $\bar{V}_{t+1}^{k-1}$ from the previous episode, policy $\pi_k$ is still $\F_{k-1}$ measurable.

The following lemma generalizes Lemma \ref{lemma: RTDP expected value difference} to the case where, unlike in RTDP, the update rule does not use the exact model (see proof in Appendix \ref{supp proofs for general greedy policy RL}).

\begin{restatable}[Value Update for Optimistic Model]{lemma}{LemmaModelBaseDecomposition}
\label{lemma: model base RL expected value difference} 
The expected cumulative value update of Algorithm \ref{alg: general model based RL algorithm} in the $k^{th}$ episode is bounded by
\begin{align*}
    \bar{V}_1^{k-1}(s_1^k) &- V_1^{\pi_k}(s_1^k) \leq  \sum_{t=1}^{H}\E\brs*{ \bar{V}_t^{k-1}(s_t^k) - \bar{V}_t^{k}(s_t^k)\mid \F_{k-1}}\\
    & + \sum_{t=1}^{H}\E \brs*{(\tilde{r}_{k-1} - r)(s_t^k,a_t^k)+(\tilde{p}_{k-1} - p)(\cdot\mid s^k_t,a_t^k)^T \bar{V}_{t+1}^{k-1} \mid \F_{k-1}  }\enspace.
\end{align*}
\end{restatable}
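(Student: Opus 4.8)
The plan is to mimic the telescoping argument behind Lemma~\ref{lemma: RTDP expected value difference}, but now track the extra discrepancy introduced by replacing the exact model $(r,p)$ with the optimistic model $(\tilde r_{k-1},\tilde p_{k-1})$ in the update rule. Fix an episode $k$ and recall that $\pi_k$, $s_1^k$, $\bar V^{k-1}$, and the optimistic model are all $\F_{k-1}$-measurable, so conditioning on $\F_{k-1}$ fixes them and leaves only the randomness of the trajectory $s_1^k,a_1^k,s_2^k,\dots$ generated by running $\pi_k$ under the \emph{true} dynamics $p$. I would define the per-step quantity $g(t)\eqdef\E[\bar V_t^{k-1}(s_t^k)-V_t^{\pi_k}(s_t^k)\mid\F_{k-1}]$, so that $g(1)$ is exactly the left-hand side of the claim, while $g(H+1)=0$ by the finite-horizon convention.

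The core is a one-step pointwise inequality. First I would use the min in Line~\ref{alg line: v^k does not decrease} to get $\bar V_t^k(s_t^k)\le \bar Q(s_t^k,a_t^k)=\tilde r_{k-1}(s_t^k,a_t^k)+\tilde p_{k-1}(\cdot\mid s_t^k,a_t^k)^T\bar V_{t+1}^{k-1}$; this is the single place where the exact equality available in RTDP degrades into an inequality, and it is precisely what turns the equality of Lemma~\ref{lemma: RTDP expected value difference} into the inequality claimed here. Next, writing the Bellman equation for $\pi_k$ under the true model, $V_t^{\pi_k}(s_t^k)=r(s_t^k,a_t^k)+p(\cdot\mid s_t^k,a_t^k)^TV_{t+1}^{\pi_k}$, and adding and subtracting $p(\cdot\mid s_t^k,a_t^k)^T\bar V_{t+1}^{k-1}$, I would use the identity $\bar V_t^{k-1}(s_t^k)-V_t^{\pi_k}(s_t^k)=\big(\bar V_t^{k-1}(s_t^k)-\bar V_t^k(s_t^k)\big)+\big(\bar V_t^k(s_t^k)-V_t^{\pi_k}(s_t^k)\big)$ and upper bound the second bracket, via the min inequality, by the sum of the model-error term $(\tilde r_{k-1}-r)(s_t^k,a_t^k)+(\tilde p_{k-1}-p)(\cdot\mid s_t^k,a_t^k)^T\bar V_{t+1}^{k-1}$ and the next-step term $p(\cdot\mid s_t^k,a_t^k)^T(\bar V_{t+1}^{k-1}-V_{t+1}^{\pi_k})$.

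Finally I would take $\E[\cdot\mid\F_{k-1}]$ of this bound. The key step, as in the RTDP proof, is the Markov/tower identity $\E[p(\cdot\mid s_t^k,a_t^k)^T(\bar V_{t+1}^{k-1}-V_{t+1}^{\pi_k})\mid\F_{k-1}]=\E[\bar V_{t+1}^{k-1}(s_{t+1}^k)-V_{t+1}^{\pi_k}(s_{t+1}^k)\mid\F_{k-1}]=g(t+1)$, valid because $s_{t+1}^k\sim p(\cdot\mid s_t^k,a_t^k)$ given the within-episode history. This yields the recursion $g(t)\le \E[\bar V_t^{k-1}(s_t^k)-\bar V_t^k(s_t^k)\mid\F_{k-1}]+\E[(\tilde r_{k-1}-r)(s_t^k,a_t^k)+(\tilde p_{k-1}-p)(\cdot\mid s_t^k,a_t^k)^T\bar V_{t+1}^{k-1}\mid\F_{k-1}]+g(t+1)$, and telescoping over $t=1,\dots,H$ with $g(H+1)=0$ gives exactly the statement.

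I expect the only genuine subtlety, rather than the algebra itself, to be the bookkeeping of measurability that licenses the tower step: one must confirm that $\pi_k$ and the optimistic model are $\F_{k-1}$-measurable (true, since the update in Algorithm~\ref{alg: general model based RL algorithm} uses only $\hat r_{k-1},\hat p_{k-1},n_{k-1},\bar V^{k-1}$), so that inside the conditional expectation $a_t^k=\pi_k(s_t^k,t)$ is a deterministic function of the state and the transition of $s_{t+1}^k$ is governed by the true $p$. One must also check the direction of the min inequality, to ensure it produces an \emph{upper} bound consistent with the $\le$ in the claim; it does, since $\bar V_t^k\le\bar Q$ implies $\bar V_t^k-V_t^{\pi_k}\le\bar Q-V_t^{\pi_k}$, while the term $\bar V_t^{k-1}-\bar V_t^k$ is kept intact.
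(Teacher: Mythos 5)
Your proposal is correct and takes essentially the same route as the paper's proof: the same three ingredients (the $\min$ in the update rule giving $\bar V_t^k \le \bar Q$, adding and subtracting the true model to isolate the model-error term, and the Markov/tower identity converting $p(\cdot\mid s_t^k,a_t^k)^T(\cdot)$ into an expectation over $s_{t+1}^k$) drive both arguments. The only difference is organizational — you telescope a per-step recursion on $g(t)$ using the Bellman equation for $V^{\pi_k}$, while the paper sums the per-step bound over the trajectory, identifies the accumulated rewards with $V_t^{\pi_k}$, and shifts indices — which yields identical algebra.
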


In the rest of the section, we consider two instantiations of the subroutine `ModelBasedOptimisticQ' in Algorithm~\ref{alg: general model based RL algorithm}. We use the bonus terms of UCRL2 and of EULER to acquire an optimistic $Q$-function,~$\bar{Q}$. These two options then lead to UCRL2 with Greedy Policies (UCRL2-GP) and EULER with Greedy Policies (EULER-GP) algorithms.

\subsection{UCRL2 with Greedy Policies for Finite-Horizon MDPs}\label{sec: UCRL2 with greedy policies}

 \begin{algorithm}
    \caption{UCRL2 with Greedy Policies (UCRL2-GP)}
    \label{alg:UCRL2 Optimistic Model}
	\begin{algorithmic}[1]
        \STATE $\tilde{r}_{k-1}(s_t^k,a) = \hat{r}_{k-1}(s_t^k,a) + \sqrt{\frac{2\ln \frac{8SAT}{\delta}}{n_{k-1}(s_t^k,a)\vee 1}}$
        \STATE  $CI(s_t^k,a) = \brc*{P'\in \mathcal{P}(\mathcal{S}): \norm{P'(\cdot)-\hat{p}_{k-1}(\cdot\mid s_t^k,a)}_1\leq \sqrt{\frac{4S\ln\frac{12SAT}{\delta}}{n_{k-1}(s_t^k,a)\vee 1}}}$
        \STATE $\tilde{p}_{k-1}(\cdot\mid s_t^k,a) = \arg\max_{P'\in CI(s_t^k,a)} P'(\cdot\mid s_t^k,a )^T \bar{V}^{k-1}_{t+1}$ \label{eq alg: UCRL optimistic model}
        \STATE $\bar{Q}(s_t^k,a) = \tilde{r}_{k-1}(s_t^k,a)+\tilde{p}_{k-1}(\cdot\mid s_t^k,a)^T\bar{V}^{k-1}_{t+1}$
        \STATE {\bf Return} $\bar{Q}(s_t^k,a)$
    \end{algorithmic}
  \end{algorithm}
  
We form the optimistic local model based on the confidence set of UCRL2~\citep{jaksch2010near}. 
This amounts to use Algorithm~\ref{alg:UCRL2 Optimistic Model} as the subroutine `ModelBasedOptimisticQ' in Algorithm~\ref{alg: general model based RL algorithm}. The maximization problem on Line~\ref{eq alg: UCRL optimistic model} of Algorithm~\ref{alg:UCRL2 Optimistic Model} is common, when using bonus based on an optimistic model \citep{jaksch2010near}, and it can be solved efficiently in $\Olog(\mathcal{N})$ operations  
(e.g., \citealt{strehl2008analysis}, Section 3.1.5). A full version of the algorithm can be found in Appendix \ref{supp: UCRL2}.

Thus, Algorithm~\ref{alg:UCRL2 Optimistic Model} performs $\mathcal{N}AH$ operations per episode. This saves the need to perform Extended Value Iteration~\citep{jaksch2010near},  that costs $\mathcal{N}SAH$ operations per episode (an extra factor of $S$). Despite the significant improvement in terms of computational complexity, the regret of UCRL2-GP is similar to the one of UCRL2 \citep{jaksch2010near} as the following theorem formalizes (see proof in Appendix \ref{supp: UCRL2}).

\begin{restatable}[Regret Bound of UCRL2-GP]{theorem}{TheoremGreedyPoliciesUCRL}\label{theorem: UCRL2 with greedy policies}
    For any time $T\leq KH$, with probability at least $1-\delta$, the regret of UCRL2-GP is bounded by $\Olog\br*{HS\sqrt{AT} +H^2\sqrt{S}SA}$.
\end{restatable}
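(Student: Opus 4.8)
The plan is to combine the general regret decomposition afforded by Lemma~\ref{lemma: model base RL expected value difference} with the concentration machinery for Decreasing Bounded Processes from Theorem~\ref{theorem: regret of decreasing process}, and then bound the model-error term using the UCRL2 confidence intervals. First I would establish that, on a high-probability event, the optimistic model dominates the true model, so that property (i) of Lemma~\ref{lemma:rtdp properties} holds for UCRL2-GP. This requires standard concentration arguments: a Hoeffding/Bernstein bound for $\hat r_{k-1}$ and an $\ell_1$-deviation bound for $\hat p_{k-1}$, showing that with probability $1-\delta$, uniformly over all $(s,a)$ and episodes, the true reward and transition lie inside the respective confidence sets $CI(s_t^k,a)$ defined in Algorithm~\ref{alg:UCRL2 Optimistic Model}. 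On this event one shows by backward induction on $t$ that $\tilde r_{k-1}(s,a)+\tilde p_{k-1}(\cdot\mid s,a)^T\bar V^{k-1}_{t+1}\ge r(s,a)+p(\cdot\mid s,a)^T V^*_{t+1}$, hence $\bar V^k\ge V^*$.

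Next I would sum the decomposition of Lemma~\ref{lemma: model base RL expected value difference} over $k\in[K]$. Since $V^*_1(s_1^k)\le \bar V_1^{k-1}(s_1^k)$ on the good event, the regret splits into two pieces:
\begin{align*}
\Regret(K)\le \underbrace{\sum_{k=1}^K\sum_{t=1}^H \E\brs*{\bar V_t^{k-1}(s_t^k)-\bar V_t^k(s_t^k)\mid \F_{k-1}}}_{\text{(I)}} + \underbrace{\sum_{k=1}^K\sum_{t=1}^H \E\brs*{b_{k-1}(s_t^k,a_t^k)\mid \F_{k-1}}}_{\text{(II)}},
\end{align*}
where $b_{k-1}$ denotes the combined reward- and transition-bonus. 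For term (I), exactly as in the RTDP proof, I would reindex the sum over visited $(s,t)$ pairs to write it as a sum of $SH$ Decreasing Bounded Processes and apply Theorem~\ref{theorem: regret of decreasing process} with a union bound, yielding an $\Olog(SH^2)$ contribution, which is lower order. The main work is term (II).

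For term (II), the key step is to replace the conditional expectation by the realized trajectory at the cost of a martingale term: $\sum_{t,k}\E[b_{k-1}(s_t^k,a_t^k)\mid\F_{k-1}]$ equals $\sum_{t,k} b_{k-1}(s_t^k,a_t^k)$ plus a bounded-difference martingale handled by Azuma. The realized bonus sum is $\sum_{s,a}\sum_{k,t}\ind\{s_t^k=s,a_t^k=a\}\,b_{k-1}(s,a)$, and since $b_{k-1}(s,a)\lesssim \sqrt{\frac{S\ln(SAT/\delta)}{n_{k-1}(s,a)\vee 1}}\cdot H$ (the reward bonus gives the $\sqrt{1/n}$ term and the $\ell_1$ transition bonus contributes a factor $H\sqrt{S/n}$ after bounding $\|\bar V^{k-1}_{t+1}\|_\infty\le H$), I would invoke the standard pigeonhole/visitation bound $\sum_k \frac{n_k(s,a)-n_{k-1}(s,a)}{\sqrt{n_{k-1}(s,a)\vee 1}}\lesssim \sqrt{n_K(s,a)}$ together with Cauchy--Schwarz over the $SA$ pairs and $\sum_{s,a}n_K(s,a)=T$. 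This produces the dominant $\Olog(HS\sqrt{AT})$ term, with the $H^2\sqrt{S}SA$ term arising from the lower-order ``$\vee 1$'' corrections and the burn-in on rarely visited pairs.

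\emph{The hard part} will be verifying that the model-error contribution is genuinely controlled by the \emph{greedy} (1-step) optimistic model $\tilde p_{k-1}$ rather than a globally solved one: one must argue that the per-step greedy maximization on Line~\ref{eq alg: UCRL optimistic model} still yields a pointwise bonus bounded by the same $\ell_1$-radius times $\|\bar V^{k-1}\|_\infty$, so that $(\tilde p_{k-1}-p)(\cdot\mid s,a)^T\bar V^{k-1}_{t+1}\le \|\tilde p_{k-1}-p\|_1\,\|\bar V^{k-1}_{t+1}\|_\infty$. Because $\bar V^{k-1}$ is itself data-dependent and optimistic, I would use the uniform confidence event to control $\|\tilde p_{k-1}-\hat p_{k-1}\|_1$ and $\|\hat p_{k-1}-p\|_1$ separately and bound $\|\bar V^{k-1}_{t+1}\|_\infty\le H$ deterministically, decoupling the value estimate from the concentration argument. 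Once this pointwise bound is in place, the remaining summation is the routine UCRL2 accounting, and assembling terms (I) and (II) with the optimism event gives the claimed $\Olog(HS\sqrt{AT}+H^2\sqrt{S}SA)$ bound.
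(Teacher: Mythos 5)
Your skeleton matches the paper's proof up to the treatment of the model-error term: optimism of UCRL2-GP from the confidence sets (the paper's Lemma~\ref{lemma: optimism of UCRL2 with greedy policies}; note the induction there is on the episode index $k$, not backward on $t$ --- the update at time $t$ of episode $k$ uses $\bar V^{k-1}_{t+1}$ and is clipped by $\bar V^{k-1}_t$, so the hypothesis needed is $\bar V^{k-1}\ge V^*$ for \emph{all} $t$ from the previous episode), then the decomposition via Lemma~\ref{lemma: model base RL expected value difference}, with term (I) rewritten as $SH$ Decreasing Bounded Processes (Lemma~\ref{lemma: regret to SH decreasing processes}) and bounded by $\Olog(SH^2)$ via Lemma~\ref{lemma: sum of decreasing processes}; your H\"older/triangle-inequality bound $(\tilde p_{k-1}-p)(\cdot\mid s,a)^T\bar V^{k-1}_{t+1}\le\br{\norm{\tilde p_{k-1}-\hat p_{k-1}}_1+\norm{\hat p_{k-1}-p}_1}H$ is also exactly the paper's. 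The genuine gap is the step converting term (II) from conditional expectations to realized counts. You propose that $\sum_{k,t}\E\brs{b_{k-1}(s_t^k,a_t^k)\mid\F_{k-1}}$ equals the realized sum plus ``a bounded-difference martingale handled by Azuma.'' The per-episode differences here are only bounded by $O(H^2)$ (each summand can only be clipped at $O(H)$, and there are $H$ of them; without clipping the bound is $O(H^2\sqrt{S})$), so Azuma contributes $\Olog(H^2\sqrt{K})=\Olog(H^{3/2}\sqrt{T})$. But $H^{3/2}\sqrt{T}\le HS\sqrt{AT}$ only when $H\le S^2A$; in the regime $H>S^2A$ and $T\gtrsim HS^3A^2$ this extra term strictly dominates $\Olog(HS\sqrt{AT}+H^2\sqrt{S}SA)$, so the argument as stated does not prove the theorem for all parameter ranges.

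The paper avoids this entirely by never passing to realized counts: it adds the failure event $F^N$ (Section~\ref{supp: failure events ucrl2}, from Corollary E.4 of \citealt{dann2017unifying}), which lower-bounds $n_{k-1}(s,a)$ by the \emph{expected} visitation mass $\tfrac12\sum_{j<k}w_j(s,a)-H\ln\tfrac{SAH}{\delta'}$, and then bounds $\sum_{k,t}\E\brs*{\sqrt{1/(n_{k-1}(s_t^k,a_t^k)\vee 1)}\mid\F_{k-1}}\le\Olog(\sqrt{SAT}+SAH)$ directly in expectation via the good set $L_k$ (Definition~\ref{defn:good set} and Lemmas~\ref{lemma: visitation outside good set}, \ref{lemma: good set visitation count ratio}, \ref{lemma: supp 1 over sqrt n sum}), with no martingale error at all. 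If you want to keep your realized-count route, it is repairable in two ways: (i) perform the martingale decomposition per step rather than per episode, so each increment is $O(H)$ and Azuma yields $\Olog(H\sqrt{T})\le\Olog(HS\sqrt{AT})$; or (ii) keep the per-episode martingale but apply a Freedman/Bernstein-type inequality with conditional variances, which by Lemma~\ref{lemma: sum of 1 over n} sum to $\Olog(S^2AH^3+SAH^4)$ and hence contribute only $\Olog(S\sqrt{A}H^{3/2}+\sqrt{SA}H^2)$, absorbed by the additive $H^2\sqrt{S}SA$ term. Your pigeonhole step itself is sound --- $\sum_k \nu_k(s,a)/\sqrt{n_{k-1}(s,a)\vee 1}\lesssim\sqrt{n_K(s,a)}+H$ per pair, with the $+H$ burn-in producing the $H^2\sqrt{S}SA$ term and Cauchy--Schwarz over pairs giving $HS\sqrt{AT}$ --- the flaw is solely in the Azuma bridge between expectations and realizations.
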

\begin{proofsketch}
Using the optimism of the value function (see Section \ref{supp: optimism UCRL2}) and by applying Lemma \ref{lemma: model base RL expected value difference}, we bound the regret as follows:
\begin{align}
    \mathrm{Regret}(K) &= \sum_{k=1}^K V_1^*(s_1^k) - V_1^{\pi_k}(s_1^k) \leq \sum_{k=1}^K \bar{V}_1^{k-1}(s_1^k) - V_1^{\pi_k}(s_1^k)\nonumber \\
    &\leq \sum_{k=1}^K\sum_{t=1}^{H} \E[ \bar{V}_t^{k-1}(s_t^k) - \bar{V}_t^{k}(s_t^k)\mid \F_{k-1}] \nonumber\\
    &\quad+\sum_{k=1}^K\sum_{t=1}^{H} \E \left[ (\tilde{r}_{k-1} - r)(s_t^k,a_t^k)+(\tilde{p}_{k-1} - p)(\cdot\mid s^k_t,a_t^k)^T \bar{V}_{t+1}^{k-1} \mid \F_{k-1}  \right]. \label{eq: regret decomposition}
\end{align}

Thus, the regret is upper bounded by two terms. As in Theorem~\ref{theorem: regret rtdp}, by applying Lemma \ref{lemma: sum of decreasing processes} (Appendix \ref{supp: proofs on DBP}), the first term in \eqref{eq: regret decomposition} is a sum of $SH$ Decreasing Bounded Processes, and can thus be bounded by $\Olog\br*{SH^2}$. The presence of the second term in \eqref{eq: regret decomposition} is common in recent regret analyses (e.g.,~\citealt{dann2017unifying}). Using standard techniques \citep{jaksch2010near,dann2017unifying,zanette2019tighter}, this term can be bounded (up to additive constant factors) with high probability by ${\lesssim H\sqrt{S}\sum_{k=1}^K\sum_{t=1}^{H} \E \left[ \sqrt{\frac{1}{n_{k-1}(s_t^k,a_t^k)}} \mid \F_{k-1}  \right]\leq \Olog(HS\sqrt{AT})}$.
\end{proofsketch}

\subsection{EULER with Greedy Policies}\label{sec: EULER with greedy policies}

In this section, we use bonus terms as in EULER ~\citep{zanette2019tighter}. Similar to the previous section, this amounts to replacing the subroutine `ModelBasedOptimisticQ' in Algorithm~\ref{alg: general model based RL algorithm} with a subroutine based on the bonus terms from~\citep{zanette2019tighter}. Algorithm \ref{alg supp: EULER with greedy policies} in Appendix \ref{supp: EULER} contains the pseudocode of the algorithm. The bonus terms in EULER are based on the empirical Bernstein inequality and tracking both an upper bound $\bar{V}_t$ and a lower-bound $\underline{V}_t$ on $V^*_t$.  Using these, EULER achieves both  minimax optimal and problem dependent regret bounds. 

EULER \citep{zanette2019tighter} performs $\mathcal{O}(\N S A H)$ computations per episode (same as the VI algorithm), while EULER-GP requires only $\mathcal{O}(\N A H)$. Despite this advantage in computational complexity, EULER-GP exhibits similar minimax regret bounds to EULER (see proof in Appendix~\ref{supp: EULER}), much like the equivalent performance of UCRL2 and UCRL2-GP proved in Section~\ref{sec: UCRL2 with greedy policies}.
  
\begin{restatable}[Regret Bound of EULER-GP]{theorem}{TheoremGreedyPoliciesEULER}\label{theorem: EULER with greedy policies}
    Let $\mathcal{G}$ be an upper bound on the total reward collected within an episode. Define ${\mathbb{Q}^*\eqdef \max_{s,a,t} \br*{\VAR{R(s,a)+\VAR_{s'\sim p(\cdot\mid s,a)}V^*_{t+1}(s)}}}$ and ${H_{\mathrm{eff}}\eqdef \min \brc*{\mathbb{Q}^*, \mathcal{G}^2/H}}$.
    With probability $1-\delta$, for any time $T\leq KH$ jointly on all episodes $k\in[K]$, the regret of EULER-GP is bounded  by $\Olog\br*{\sqrt{H_{\mathrm{eff}}SAT} + \sqrt{S}SAH^2(\sqrt{S}+\sqrt{H})}.$ Thus, it is also bounded by $\Olog\br*{\sqrt{HSAT} + \sqrt{S}SAH^2(\sqrt{S}+\sqrt{H})}$. 
\end{restatable}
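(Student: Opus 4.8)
The plan is to follow the same two-term regret decomposition used for UCRL2-GP in the sketch of Theorem~\ref{theorem: UCRL2 with greedy policies}, replacing the UCRL2 confidence-set bonuses by the empirical-Bernstein bonuses of EULER and carefully tracking the resulting variance terms. The first step is to establish that EULER-GP is optimistic, i.e.\ that property (i) of Lemma~\ref{lemma:rtdp properties}, $V_t^*(s)\le \bar V_t^k(s)$ for all $s,t,k$, holds with probability at least $1-\delta$. Property (ii) holds by construction (Line~\ref{alg line: v^k does not decrease}), so optimism must be proven separately from the concentration guarantees behind the EULER bonuses: the reward bonus and the transition bonus must jointly dominate the corresponding estimation errors for all $(s,a,t,k)$ on a single high-probability event, and the tracking of the lower bound $\underline V_t$ must be shown consistent with the Bernstein analysis. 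Given optimism, I bound the true regret by $\sum_{k}\bar V_1^{k-1}(s_1^k)-V_1^{\pi_k}(s_1^k)$ and apply Lemma~\ref{lemma: model base RL expected value difference} to split it into the \emph{value-update term} and the \emph{model-error term}, exactly as in \eqref{eq: regret decomposition}.

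The value-update term $\sum_{k,t}\E[\bar V_t^{k-1}(s_t^k)-\bar V_t^{k}(s_t^k)\mid\F_{k-1}]$ is handled identically to the RTDP and UCRL2-GP analyses: by Lemma~\ref{lemma: sum of decreasing processes} it rewrites as a sum of $SH$ Decreasing Bounded Processes, one per $(s,t)$, each starting from $C=\bar V_t^0(s)\le H$. Applying Theorem~\ref{theorem: regret of decreasing process} to each process together with a union bound over the $SH$ terms controls this contribution by $\Olog(SH^2)$, which is absorbed into the lower-order additive term.

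The bulk of the work is the model-error term $\sum_{k,t}\E[(\tilde r_{k-1}-r)(s_t^k,a_t^k)+(\tilde p_{k-1}-p)(\cdot\mid s_t^k,a_t^k)^T\bar V_{t+1}^{k-1}\mid\F_{k-1}]$. Since both $p$ and $\tilde p_{k-1}$ lie in the EULER confidence region (the former on the good event), each per-step error is bounded by roughly twice the Bernstein bonus, which scales like $\sqrt{\VAR_{s'\sim p}(\bar V_{t+1}^{k-1}(s'))/n_{k-1}}$ plus $\Olog(H/n_{k-1})$ correction terms. I would then (i) replace the empirical variance in the bonus by the true transition variance of $\bar V_{t+1}^{k-1}$ via empirical-Bernstein concentration, at the cost of additional $1/n$ terms; (ii) transfer $\VAR_{s'\sim p}(\bar V_{t+1}^{k-1})$ to $\VAR_{s'\sim p}(V_{t+1}^*)$ using optimism and monotonicity, again producing lower-order terms; (iii) invoke the law of total variance to bound the per-episode sum $\sum_t\E[\VAR_{s'\sim p}(V_{t+1}^*(s'))\mid\F_{k-1}]$ by $H H_{\mathrm{eff}}$, combining the return-variance bound $\mathcal G^2$ with the trivial bound $H\mathbb Q^*$; and (iv) apply Cauchy--Schwarz with the pigeonhole estimates $\sum_{k,t}\E[1/n_{k-1}(s_t^k,a_t^k)]\lesssim SA$ and $\sum_{k,t}\E[\VAR]\le KHH_{\mathrm{eff}}=H_{\mathrm{eff}}T$ to assemble the leading term $\Olog(\sqrt{H_{\mathrm{eff}}SAT})$. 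The accumulated $1/n$ corrections, after the pigeonhole bounds and the extra powers of $S$ and $H$ coming from the variance concentration and the value-gap transfer, produce the additive $\Olog(\sqrt{S}SAH^2(\sqrt{S}+\sqrt{H}))$.

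The main obstacle is steps (ii)--(iii). Unlike full-planning EULER, where the per-episode value is the exact solution of the optimistic MDP, here $\bar V_{t+1}^{k-1}$ is the $1$-step-planning value, updated only along visited states; consequently the trajectory variances must first be expressed in terms of $\bar V^{k-1}$ and only then transferred to $\VAR(V^*)$ so that the law of total variance applies. Controlling the gap $\bar V^{k-1}-V^*$ uniformly, and absorbing it into the lower-order additive term rather than into the leading $\sqrt{H_{\mathrm{eff}}SAT}$, is the delicate part; this is exactly what Lemma~\ref{lemma: model base RL expected value difference} enables, since it isolates the full-planning gap into the already-controlled value-update term. Finally, combining the two bounds and using $H_{\mathrm{eff}}\le \mathcal G^2/H\le H$ (the total reward per episode is at most $H$) yields the stated $\Olog(\sqrt{HSAT}+\sqrt{S}SAH^2(\sqrt{S}+\sqrt{H}))$ bound.
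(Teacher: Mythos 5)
Your top-level plan coincides with the paper's proof: optimism of EULER-GP (Lemma~\ref{lemma: optimism EULER}), the decomposition of Lemma~\ref{lemma: model base RL expected value difference} into a value-update term and a model-error term, the $\Olog(SH^2)$ bound on the former via Lemma~\ref{lemma: sum of decreasing processes}, and a Bernstein-bonus analysis of the latter. However, your treatment of the variance terms in steps (ii)--(iv) hides a genuine gap, and your closing claim that ``this is exactly what Lemma~\ref{lemma: model base RL expected value difference} enables'' is not correct: that lemma only isolates the planning gap in the \emph{regret} decomposition; it does not control the quantities $\norm{\bar V^{k-1}_{t+1}-\underline V^{k-1}_{t+1}}_{2,\hat p}$ and $(\hat p_{k-1}-p)(\cdot\mid s,a)^T(\bar V^{k-1}_{t+1}-V^*_{t+1})$ that arise when you replace the empirical Bernstein bonus by true variances. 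In the paper these are controlled by bounding the cumulative \emph{squared} value differences (Lemmas~\ref{lemma: central lemma of EULER analysis} and~\ref{lemma: squared difference upper to real}), and this requires a new concentration result beyond Theorem~\ref{theorem: regret of decreasing process}: the processes $\brc*{(\bar V^k_t(s)-\underline V^k_t(s))^2}_k$ are \emph{not} Decreasing Bounded Processes (they decrease and may then increase once the upper and lower bounds cross), so the paper introduces a separate failure event $F^{vsDP}$ and proves its low probability via an indicator-truncation argument (Lemma~\ref{lemma: euler with local models square decreasing process}). Nothing in your sketch supplies this ingredient; ``optimism and monotonicity'' alone do not absorb these cross terms into the additive $\Olog(\sqrt{S}SAH^2(\sqrt S+\sqrt H))$ term.

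Second, your step (iii)--(iv) applies the law of total variance to $\sum_t\E\brs*{\VAR_{s'\sim p}(V^*_{t+1})\mid\F_{k-1}}$, but the LTV identity holds for the value of the \emph{executed} policy, i.e.\ $\sum_t\E\brs*{\VAR_{s'\sim p}(V^{\pi_k}_{t+1})\mid\F_{k-1}}=\VAR\br*{\text{return of }\pi_k}\le\mathcal G^2$; since $\pi_k$ is not optimal, this does not bound the $V^*$-variances. To obtain the $\mathcal G^2/H$ half of $H_{\mathrm{eff}}$ one must translate $g(p,V^*_{t+1})$ into $g(p,V^{\pi_k}_{t+1})$, paying $B_v\norm{V^*_{t+1}-V^{\pi_k}_{t+1}}_{2,p}$ per term (the paper's Lemma~\ref{lemma: bound translation}), and that in turn requires bounding $\sum_{k,t}\sum_{s,a}w_{tk}(s,a)\,p(\cdot\mid s,a)^T\br*{\bar V^{k-1}_{t+1}-V^{\pi_k}_{t+1}}^2$ (Lemma~\ref{lemma: squared difference upper to real}). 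This is precisely the step the paper flags as broken in the original EULER argument --- it relies there on $\underline V^{k-1}_t\le V^{\pi_k}_t$, which does not hold --- and the paper re-proves it from scratch using the generalized decomposition of Lemma~\ref{lemma: FULL model base RL expected value difference} combined again with the squared-process concentration. So while your outline would recover the $\mathbb Q^*$ half of the bound, the $\mathcal G^2/H$ half cannot be reached by the route you describe without these two missing pieces.
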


Note that Theorem \ref{theorem: EULER with greedy policies} exhibits similar problem-dependent regret-bounds as in Theorem~1 of~\citep{zanette2019tighter}. Thus, the same corollaries derived in \citep{zanette2019tighter} for EULER can also be applied to EULER-GP.

\section{Experiments}
In this section, we present an empirical evaluation of both UCRL2 and EULER, and compare their performance to the proposed variants, which use greedy policy updates, UCRL2-GP and EULER-GP, respectively. We evaluated the algorithms on two environments. (i) \textbf{Chain environment} \citep{osband2017posterior}: In this MDP, there are $N$ states, which are connected in a chain. The agent starts at the left side of the chain and can move either to the left or try moving to the right, which succeeds w.p. $1-1/N$, and results with movement to the left otherwise. The agent goal is to reach the right side of the chain and try moving to the right, which results with a reward $r\sim\mathcal{N}(1,1)$. Moving backwards from the initials state also results with $r\sim\mathcal{N}(0,1)$, and otherwise, the reward is $r=0$. Furthermore, the horizon is set to $H=N$, so that the agent must always move to the right to have a chance to receive a reward. (ii) \textbf{2D chain}: A generalization of the chain environment, in which the agent starts at the upper-left corner of a $N\times N$ grid and aims to reach the lower-right corner and move towards this corner, in $H=2N-1$ steps. Similarly to the chain environment, there is a probability $1/H$ to move backwards (up or left), and the agent must always move toward the corner to observe a reward $r\sim\mathcal{N}(1,1)$. Moving into the starting corner results with $r\sim\mathcal{N}(0,1)$, and otherwise $r=0$. This environment is more challenging for greedy updates, since there are many possible trajectories that lead to reward.

\begin{figure}
\subfigure[Chain environment with $N=25$ states]{
\includegraphics[width=0.45\linewidth]{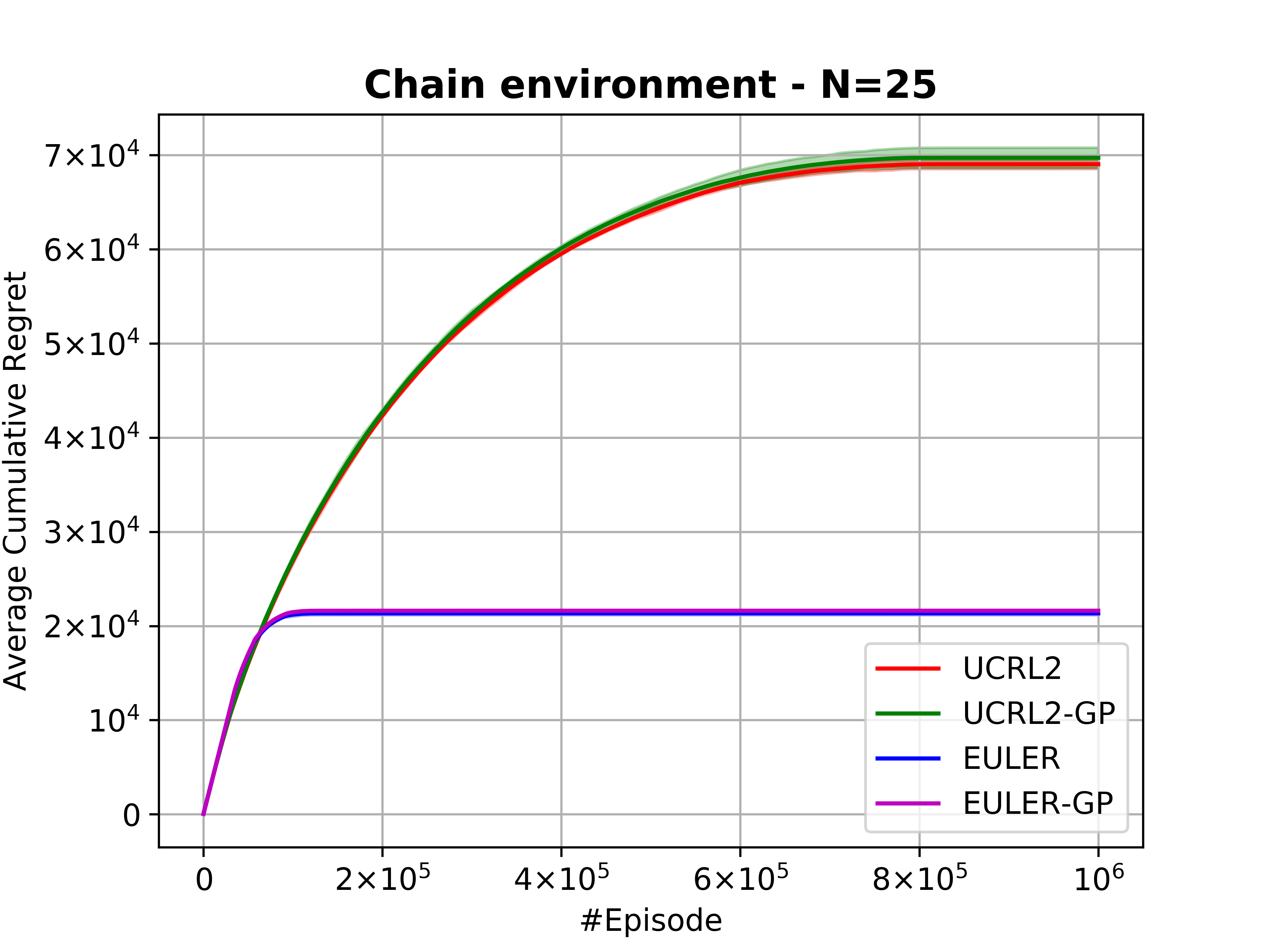}
\label{subfig:chain25}}
\subfigure[2D chain environment with $5\times5$ grid]{
\includegraphics[width=0.45\linewidth]{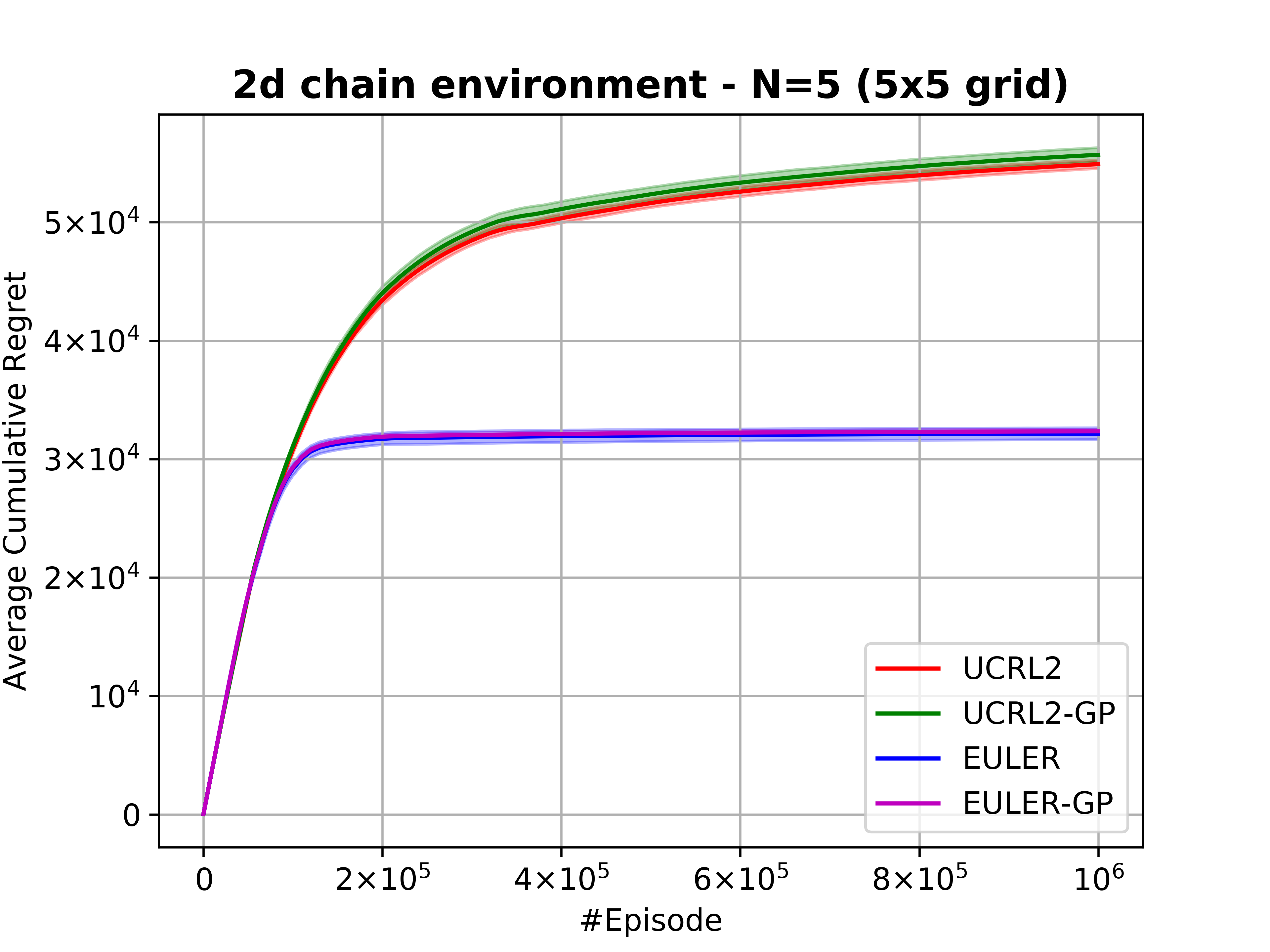}
\label{subfig:grid5}}
\caption{A comparison UCRL2 and EULER with their greedy counterpart. Results are averaged over 5 random seeds and are shown alongside error bars ($\pm3$std).}
\label{figure:experiments}
\end{figure}

The simulation results can be found in Figure \ref{figure:experiments}, and clearly indicate that using greedy planning leads to negligible degradation in the performance. Thus, the simulations verify our claim that greedy policy updates greatly improve the efficiency of the algorithm while maintaining the same performance.

\section{Related Work}

\textbf{Real-Time Dynamic Programming}: RTDP \citep{barto1995learning} has been extensively used and has many variants that exhibit superior empirical performance (e.g.,  \citep{bonet2003labeled, mcmahan2005bounded,smith2006focused}). For discounted MDPs, \citet{strehl2006pac} proved $(\epsilon,\delta)$-PAC bounds of $\tilde{O}\br*{SA/\epsilon^2\br*{1-\gamma}^4}$, for a modified version of RTDP in which the value updates occur only if the decrease in value is larger than $\epsilon\br*{1-\gamma}$. 
I.e., their algorithm explicitly use $\epsilon$ to mark states with accurate value estimate. We prove that RTDP converges in a rate of $\tilde{O}\br*{SH^2/\epsilon}$ without knowing $\epsilon$. Indeed, \cite{strehl2006pac} posed {\em whether the original RTDP is PAC} as an open problem. Furthermore, no regret bound for RTDP has been reported in the literature. 

\textbf{Regret bounds for RL}: The most renowned algorithms with regret guarantees for undiscounted infinite-horizon MDPs are UCRL2 \citep{jaksch2010near} and REGAL \citep{bartlett2009regal}, which have been extended throughout the years (e.g., by \citealt{fruit2018efficient,talebi2018variance}). Recently, there is an increasing interest in regret bounds for MDPs with finite horizon $H$ and stationary dynamics. In this scenario, UCRL2 enjoys a regret bound of order $HS\sqrt{AT}$. \citet{azar2017minimax} proposed UCBVI, with improved regret bound of order $\sqrt{HSAT}$, which is also asymptotically tight \citep{osband2016lower}. \citet{dann2018policy} presented ORLC that achieves tight regret bounds and  (nearly) tight PAC guarantees for non-stationary MDPs. Finally, \citet{zanette2019tighter} proposed EULER, an algorithm that enjoys tight minimax regret bounds and has additional problem-dependent bounds that encapsulate the MDP's complexity. All of these algorithms are model-based and require full-planning. Model-free RL was analyzed by \citep{jin2018q}. There, the authors exhibit regret bounds that are worse by a factor of $H$ relatively to the lower-bound. To the best of our knowledge, there are no model-based algorithms with regret guarantees that avoid full-planning. It is worth noting that while all the above algorithms, and the ones in this work, rely on the Optimism in the Face of Uncertainty principle \citep{lai1985asymptotically}, Thompson Sampling model-based RL algorithms exist \citep{osband2013more,gopalan2015thompson,agrawal2017optimistic, osband2017posterior}. There, a model is sampled from a distribution over models, on which full-planning takes place.

\textbf{Greedy policies in model-based RL:}
By adjusting RTDP to the case where the model is unknown, \citet{strehl2012incremental} formulated model-based RL algorithms that act using a greedy policy. They proved a  $\tilde{O}\br*{S^2A/\epsilon^3\br*{1-\gamma}^6}$ sample complexity bound for discounted MDPs. To the best of our knowledge, there are no regret bounds for model-based RL algorithms that act by greedy policies.

\textbf{Practical model-based RL}: Due to the high computational complexity of planning in model-based RL, most of the practical algorithms are model-free (e.g., \citealt{mnih2015human}). Algorithms that do use a model usually only take advantage of local information. For example, Dyna \citep{sutton1991dyna,peng2018deep} selects state-action pairs, either randomly or via prioritized sweeping \citep{moore1993prioritized,van2013planning}, and updates them according to a local model. Other papers use the local model to plan for a short horizon from the current state \citep{tamar2016value,hafner2018learning}. The performance of such algorithms depends heavily on the planning horizon, that in turn dramatically increases the computational complexity. 

\section{Conclusions and Future Work}

In this work, we established that tabular model-based RL algorithms can explore by 1-step planning instead of full-planning, without suffering from performance degradation. Specifically, exploring with model-based greedy policies can be minimax optimal in terms of regret. Differently put, the variance caused by exploring with greedy policies is smaller than the variance caused by learning a sufficiently good model. Indeed, the extra term which appears due to the greedy exploration is $\Olog(SH^2)$ (e.g., the first term in \eqref{eq: regret decomposition}); a constant term, smaller than the existing constant terms of UCRL2 and EULER. 

This work raises and highlights some interesting research questions. The obvious ones are extensions to average and discounted MDPs, as well as to Thompson sampling based RL algorithms. Although these scenarios are harder or different in terms of analysis, we believe this work introduces the relevant approach to tackle this question. Another interesting question is the applicability of the results in large-scale problems, when tabular representation is infeasible and approximation must be used. There, algorithms that act using lookahead policies, instead of 1-step planning, are expected to yield better performance, as they are less sensitive to value approximation errors (e.g., \citealt{bertsekas1996neuro,jiang2018feedback,efroni2018multiple,efroni2018combine}). Even then, full-planning, as opposed to using a short-horizon planning, might be unnecessary. Lastly, establishing whether the model-based approach is or is not provably better than the model-free approach, as the current state of the literature suggests, is yet an important and unsolved open problem.

\section*{Acknowledgments}
We thank Oren Louidor for illuminating discussions relating the Decreasing Bounded Process, and Esther Derman for the very helpful comments. This work was partially funded by the Israel Science Foundation under ISF grant number 1380/16.

\bibliographystyle{plainnat}
\bibliography{from_rtdp_to_rl}

\begin{thebibliography}{38}
\providecommand{\natexlab}[1]{#1}
\providecommand{\url}[1]{\texttt{#1}}
\expandafter\ifx\csname urlstyle\endcsname\relax
  \providecommand{\doi}[1]{doi: #1}\else
  \providecommand{\doi}{doi: \begingroup \urlstyle{rm}\Url}\fi

\bibitem[Agrawal and Jia(2017)]{agrawal2017optimistic}
Shipra Agrawal and Randy Jia.
\newblock Optimistic posterior sampling for reinforcement learning: worst-case
  regret bounds.
\newblock In \emph{Advances in Neural Information Processing Systems}, pages
  1184--1194, 2017.

\bibitem[Azar et~al.(2017)Azar, Osband, and Munos]{azar2017minimax}
Mohammad~Gheshlaghi Azar, Ian Osband, and R{\'e}mi Munos.
\newblock Minimax regret bounds for reinforcement learning.
\newblock In \emph{Proceedings of the 34th International Conference on Machine
  Learning-Volume 70}, pages 263--272. JMLR. org, 2017.

\bibitem[Bartlett and Tewari(2009)]{bartlett2009regal}
Peter~L Bartlett and Ambuj Tewari.
\newblock Regal: A regularization based algorithm for reinforcement learning in
  weakly communicating mdps.
\newblock In \emph{Proceedings of the Twenty-Fifth Conference on Uncertainty in
  Artificial Intelligence}, pages 35--42. AUAI Press, 2009.

\bibitem[Barto et~al.(1995)Barto, Bradtke, and Singh]{barto1995learning}
Andrew~G Barto, Steven~J Bradtke, and Satinder~P Singh.
\newblock Learning to act using real-time dynamic programming.
\newblock \emph{Artificial intelligence}, 72\penalty0 (1-2):\penalty0 81--138,
  1995.

\bibitem[Bertsekas and Tsitsiklis(1996)]{bertsekas1996neuro}
Dimitri~P Bertsekas and John~N Tsitsiklis.
\newblock \emph{Neuro-dynamic programming}, volume~5.
\newblock Athena Scientific Belmont, MA, 1996.

\bibitem[Bonet and Geffner(2003)]{bonet2003labeled}
Blai Bonet and Hector Geffner.
\newblock Labeled rtdp: Improving the convergence of real-time dynamic
  programming.
\newblock In \emph{ICAPS}, volume~3, pages 12--21, 2003.

\bibitem[Dann et~al.(2017)Dann, Lattimore, and Brunskill]{dann2017unifying}
Christoph Dann, Tor Lattimore, and Emma Brunskill.
\newblock Unifying pac and regret: Uniform pac bounds for episodic
  reinforcement learning.
\newblock In \emph{Advances in Neural Information Processing Systems}, pages
  5713--5723, 2017.

\bibitem[Dann et~al.(2018)Dann, Li, Wei, and Brunskill]{dann2018policy}
Christoph Dann, Lihong Li, Wei Wei, and Emma Brunskill.
\newblock Policy certificates: Towards accountable reinforcement learning.
\newblock \emph{arXiv preprint arXiv:1811.03056}, 2018.

\bibitem[de~la Pe{\~n}a et~al.(2007)de~la Pe{\~n}a, Klass, Lai,
  et~al.]{pena2007pseudo}
Victor~H de~la Pe{\~n}a, Michael~J Klass, Tze~Leung Lai, et~al.
\newblock Pseudo-maximization and self-normalized processes.
\newblock \emph{Probability Surveys}, 4:\penalty0 172--192, 2007.

\bibitem[de~la Pe{\~n}a et~al.(2008)de~la Pe{\~n}a, Lai, and
  Shao]{pena2008self}
Victor~H de~la Pe{\~n}a, Tze~Leung Lai, and Qi-Man Shao.
\newblock \emph{Self-normalized processes: Limit theory and Statistical
  Applications}.
\newblock Springer Science \& Business Media, 2008.

\bibitem[Efroni et~al.(2018{\natexlab{a}})Efroni, Dalal, Scherrer, and
  Mannor]{efroni2018combine}
Yonathan Efroni, Gal Dalal, Bruno Scherrer, and Shie Mannor.
\newblock How to combine tree-search methods in reinforcement learning.
\newblock \emph{arXiv preprint arXiv:1809.01843}, 2018{\natexlab{a}}.

\bibitem[Efroni et~al.(2018{\natexlab{b}})Efroni, Dalal, Scherrer, and
  Mannor]{efroni2018multiple}
Yonathan Efroni, Gal Dalal, Bruno Scherrer, and Shie Mannor.
\newblock Multiple-step greedy policies in approximate and online reinforcement
  learning.
\newblock In \emph{Advances in Neural Information Processing Systems}, pages
  5238--5247, 2018{\natexlab{b}}.

\bibitem[Fruit et~al.(2018)Fruit, Pirotta, Lazaric, and
  Ortner]{fruit2018efficient}
Ronan Fruit, Matteo Pirotta, Alessandro Lazaric, and Ronald Ortner.
\newblock Efficient bias-span-constrained exploration-exploitation in
  reinforcement learning.
\newblock \emph{arXiv preprint arXiv:1802.04020}, 2018.

\bibitem[Gopalan and Mannor(2015)]{gopalan2015thompson}
Aditya Gopalan and Shie Mannor.
\newblock Thompson sampling for learning parameterized markov decision
  processes.
\newblock In \emph{Conference on Learning Theory}, pages 861--898, 2015.

\bibitem[Hafner et~al.(2018)Hafner, Lillicrap, Fischer, Villegas, Ha, Lee, and
  Davidson]{hafner2018learning}
Danijar Hafner, Timothy Lillicrap, Ian Fischer, Ruben Villegas, David Ha,
  Honglak Lee, and James Davidson.
\newblock Learning latent dynamics for planning from pixels.
\newblock \emph{arXiv preprint arXiv:1811.04551}, 2018.

\bibitem[Jaksch et~al.(2010)Jaksch, Ortner, and Auer]{jaksch2010near}
Thomas Jaksch, Ronald Ortner, and Peter Auer.
\newblock Near-optimal regret bounds for reinforcement learning.
\newblock \emph{Journal of Machine Learning Research}, 11\penalty0
  (Apr):\penalty0 1563--1600, 2010.

\bibitem[Jiang et~al.(2018)Jiang, Ekwedike, and Liu]{jiang2018feedback}
Daniel~R Jiang, Emmanuel Ekwedike, and Han Liu.
\newblock Feedback-based tree search for reinforcement learning.
\newblock \emph{arXiv preprint arXiv:1805.05935}, 2018.

\bibitem[Jin et~al.(2018)Jin, Allen-Zhu, Bubeck, and Jordan]{jin2018q}
Chi Jin, Zeyuan Allen-Zhu, Sebastien Bubeck, and Michael~I Jordan.
\newblock Is q-learning provably efficient?
\newblock In \emph{Advances in Neural Information Processing Systems}, pages
  4863--4873, 2018.

\bibitem[Lai and Robbins(1985)]{lai1985asymptotically}
Tze~Leung Lai and Herbert Robbins.
\newblock Asymptotically efficient adaptive allocation rules.
\newblock \emph{Advances in applied mathematics}, 6\penalty0 (1):\penalty0
  4--22, 1985.

\bibitem[Maurer and Pontil(2009)]{maurer2009empirical}
Andreas Maurer and Massimiliano Pontil.
\newblock Empirical bernstein bounds and sample variance penalization.
\newblock \emph{arXiv preprint arXiv:0907.3740}, 2009.

\bibitem[McMahan et~al.(2005)McMahan, Likhachev, and
  Gordon]{mcmahan2005bounded}
H~Brendan McMahan, Maxim Likhachev, and Geoffrey~J Gordon.
\newblock Bounded real-time dynamic programming: Rtdp with monotone upper
  bounds and performance guarantees.
\newblock In \emph{Proceedings of the 22nd international conference on Machine
  learning}, pages 569--576. ACM, 2005.

\bibitem[Mnih et~al.(2015)Mnih, Kavukcuoglu, Silver, Rusu, Veness, Bellemare,
  Graves, Riedmiller, Fidjeland, Ostrovski, et~al.]{mnih2015human}
Volodymyr Mnih, Koray Kavukcuoglu, David Silver, Andrei~A Rusu, Joel Veness,
  Marc~G Bellemare, Alex Graves, Martin Riedmiller, Andreas~K Fidjeland, Georg
  Ostrovski, et~al.
\newblock Human-level control through deep reinforcement learning.
\newblock \emph{Nature}, 518\penalty0 (7540):\penalty0 529, 2015.

\bibitem[Moore and Atkeson(1993)]{moore1993prioritized}
Andrew~W Moore and Christopher~G Atkeson.
\newblock Prioritized sweeping: Reinforcement learning with less data and less
  time.
\newblock \emph{Machine learning}, 13\penalty0 (1):\penalty0 103--130, 1993.

\bibitem[Osband and Van~Roy(2016)]{osband2016lower}
Ian Osband and Benjamin Van~Roy.
\newblock On lower bounds for regret in reinforcement learning.
\newblock \emph{arXiv preprint arXiv:1608.02732}, 2016.

\bibitem[Osband and Van~Roy(2017)]{osband2017posterior}
Ian Osband and Benjamin Van~Roy.
\newblock Why is posterior sampling better than optimism for reinforcement
  learning?
\newblock In \emph{Proceedings of the 34th International Conference on Machine
  Learning-Volume 70}, pages 2701--2710. JMLR. org, 2017.

\bibitem[Osband et~al.(2013)Osband, Russo, and Van~Roy]{osband2013more}
Ian Osband, Daniel Russo, and Benjamin Van~Roy.
\newblock (more) efficient reinforcement learning via posterior sampling.
\newblock In \emph{Advances in Neural Information Processing Systems}, pages
  3003--3011, 2013.

\bibitem[Peng et~al.(2018)Peng, Li, Gao, Liu, Wong, and Su]{peng2018deep}
Baolin Peng, Xiujun Li, Jianfeng Gao, Jingjing Liu, Kam-Fai Wong, and Shang-Yu
  Su.
\newblock Deep dyna-q: Integrating planning for task-completion dialogue policy
  learning.
\newblock \emph{arXiv preprint arXiv:1801.06176}, 2018.

\bibitem[Smith and Simmons(2006)]{smith2006focused}
Trey Smith and Reid Simmons.
\newblock Focused real-time dynamic programming for mdps: Squeezing more out of
  a heuristic.
\newblock In \emph{AAAI}, pages 1227--1232, 2006.

\bibitem[Strehl and Littman(2008)]{strehl2008analysis}
Alexander~L Strehl and Michael~L Littman.
\newblock An analysis of model-based interval estimation for markov decision
  processes.
\newblock \emph{Journal of Computer and System Sciences}, 74\penalty0
  (8):\penalty0 1309--1331, 2008.

\bibitem[Strehl et~al.(2006)Strehl, Li, and Littman]{strehl2006pac}
Alexander~L Strehl, Lihong Li, and Michael~L Littman.
\newblock Pac reinforcement learning bounds for rtdp and rand-rtdp.
\newblock In \emph{Proceedings of AAAI workshop on learning for search}, 2006.

\bibitem[Strehl et~al.(2012)Strehl, Li, and Littman]{strehl2012incremental}
Alexander~L Strehl, Lihong Li, and Michael~L Littman.
\newblock Incremental model-based learners with formal learning-time
  guarantees.
\newblock \emph{arXiv preprint arXiv:1206.6870}, 2012.

\bibitem[Sutton(1991)]{sutton1991dyna}
Richard~S Sutton.
\newblock Dyna, an integrated architecture for learning, planning, and
  reacting.
\newblock \emph{ACM SIGART Bulletin}, 2\penalty0 (4):\penalty0 160--163, 1991.

\bibitem[Sutton and Barto(2018)]{sutton2018reinforcement}
Richard~S Sutton and Andrew~G Barto.
\newblock \emph{Reinforcement learning: An introduction}.
\newblock MIT press, 2018.

\bibitem[Talebi and Maillard(2018)]{talebi2018variance}
Mohammad~Sadegh Talebi and Odalric-Ambrym Maillard.
\newblock Variance-aware regret bounds for undiscounted reinforcement learning
  in mdps.
\newblock \emph{arXiv preprint arXiv:1803.01626}, 2018.

\bibitem[Tamar et~al.(2016)Tamar, Wu, Thomas, Levine, and
  Abbeel]{tamar2016value}
Aviv Tamar, Yi~Wu, Garrett Thomas, Sergey Levine, and Pieter Abbeel.
\newblock Value iteration networks.
\newblock In \emph{Advances in Neural Information Processing Systems}, pages
  2154--2162, 2016.

\bibitem[Van~Seijen and Sutton(2013)]{van2013planning}
Harm Van~Seijen and Richard~S Sutton.
\newblock Planning by prioritized sweeping with small backups.
\newblock \emph{arXiv preprint arXiv:1301.2343}, 2013.

\bibitem[Weissman et~al.(2003)Weissman, Ordentlich, Seroussi, Verdu, and
  Weinberger]{weissman2003inequalities}
Tsachy Weissman, Erik Ordentlich, Gadiel Seroussi, Sergio Verdu, and Marcelo~J
  Weinberger.
\newblock Inequalities for the l1 deviation of the empirical distribution.
\newblock \emph{Hewlett-Packard Labs, Tech. Rep}, 2003.

\bibitem[Zanette and Brunskill(2019)]{zanette2019tighter}
Andrea Zanette and Emma Brunskill.
\newblock Tighter problem-dependent regret bounds in reinforcement learning
  without domain knowledge using value function bounds.
\newblock \emph{arXiv preprint arXiv:1901.00210}, 2019.

\end{thebibliography}

\newpage

\appendix

 \section{Proofs on Decreasing Bounded Processes}\label{supp: proofs on DBP}

In this section, we state and prove useful results on Decreasing Bounded Processes (see Definition \ref{defn: DBP}). These results will be in use in proofs of the central theorems of this work.

\TheoremDecreasingProcess*
\begin{proof}
Without loss of generality, assume $C>0$, since otherwise the results are trivial. We start by remarking that $R_K$ is almost surely monotonically increasing, since $X_k\le X_{k-1}$. Define the martingale difference process 
\begin{align*}
\xi_k=X_k-\E\brs*{X_k \mid \F_{k-1}} 
= X_k-X_{k-1}-\E\brs*{X_k-X_{k-1} \mid \F_{k-1}}
\end{align*}
and the martingale process $M_K=\sum_{k=1}^K \xi_k$. Since $X_k\ge0$ almost surely, $R_K$ can be bounded by $R_K =M_K+X_0-X_K\le X_0+M_K$. Also define the quadratic variations as $\innorm{M}_K=\sum_{k=1}^K \E\brs*{\xi_k^2\mid \F_{k-1}}$ and $\brs*{M}_K=\sum_{k=1}^K \xi_k^2$. Next, recall Theorem 2.7 of \citep{pena2007pseudo}:

\begin{theorem}
Let $A$ and $B$ be two random variables, such that for all $\lambda\in\mathbb{R}$, we have
\begin{align}
\label{eq:canonical}
    \E\brs*{e^{\lambda A -\frac{\lambda^2B^2}{2}}}\le1 \enspace.
\end{align}
Then, $\forall x>0$,
\begin{align}
\label{eq:self normalized}
    \Pr\brc*{\frac{\abs{A}}{\sqrt{B^2+\E\brs*{B^2}}}> x} \le \sqrt{2}e^{-x^2/4}.
\end{align}
\end{theorem}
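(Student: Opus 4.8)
The plan is to establish this self-normalized tail bound by the classical \emph{method of mixtures} (pseudo-maximization), converting the exponential hypothesis \eqref{eq:canonical} into a single pointwise inequality, and then to turn that inequality into a tail estimate via Cauchy--Schwarz. We may assume $\E[B^2]>0$; the case $\E[B^2]=0$ is degenerate, since then $B=0$ a.s. and \eqref{eq:canonical} (letting $\lambda\to\pm\infty$) forces $A=0$ a.s., so the ratio is not well-defined. First I would integrate \eqref{eq:canonical} against a centered Gaussian mixing density. Setting $c\eqdef\E[B^2]$ and letting $f$ be the $\mathcal N(0,1/c)$ density, $f(\lambda)=\sqrt{c/(2\pi)}\,e^{-c\lambda^2/2}$, nonnegativity of the integrand together with Tonelli's theorem and the hypothesis give
\begin{align*}
\E\brs*{\int_{-\infty}^{\infty} e^{\lambda A-\lambda^2 B^2/2}\,f(\lambda)\,d\lambda}
=\int_{-\infty}^{\infty}\E\brs*{e^{\lambda A-\lambda^2 B^2/2}}f(\lambda)\,d\lambda\le 1 .
\end{align*}
For each fixed realization the inner integral is Gaussian in $\lambda$: completing the square in the exponent $\lambda A-\tfrac12(B^2+c)\lambda^2$ and evaluating yields $\sqrt{c/(B^2+c)}\,e^{A^2/(2(B^2+c))}$. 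Writing $V\eqdef B^2+\E[B^2]=B^2+c$ (so the scaling $c=\E[B^2]$ is exactly what makes the normalizer match the statement), this produces the key pointwise bound
\begin{align*}
\E\brs*{\frac{1}{\sqrt V}\,e^{A^2/(2V)}}\le \frac{1}{\sqrt{\E[B^2]}}. \tag{$\star$}
\end{align*}

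The main obstacle is that the random, unbounded weight $1/\sqrt V$ inside $(\star)$ blocks a direct Markov/Chernoff bound on the event $E\eqdef\{\abs{A}/\sqrt V>x\}$. I would resolve this with a symmetric Cauchy--Schwarz split of the indicator,
\begin{align*}
\ind_E=\br*{\ind_E\,V^{1/4}e^{-A^2/(4V)}}\cdot\br*{V^{-1/4}e^{A^2/(4V)}},
\end{align*}
which gives
\begin{align*}
\Pr(E)=\E[\ind_E]\le \sqrt{\E\brs*{\ind_E\,V^{1/2}e^{-A^2/(2V)}}}\cdot\sqrt{\E\brs*{V^{-1/2}e^{A^2/(2V)}}} .
\end{align*}
The second factor is controlled immediately by $(\star)$. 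For the first, on $E$ we have $e^{-A^2/(2V)}\le e^{-x^2/2}$, and by Jensen's inequality (concavity of the square root) $\E[\sqrt V]\le\sqrt{\E[V]}=\sqrt{2\E[B^2]}$, so that $\E[\ind_E V^{1/2}e^{-A^2/(2V)}]\le e^{-x^2/2}\sqrt{2\E[B^2]}$. The symmetric choice of exponent $A^2/(4V)$ on each side is precisely what produces the exponent $x^2/4$ rather than $x^2/2$.

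Finally I would collect the constants. Multiplying the two factor bounds, the $\E[B^2]$ terms cancel and the product under the square root is $\sqrt2\,e^{-x^2/2}$, whence $\Pr(E)\le 2^{1/4}e^{-x^2/4}\le\sqrt2\,e^{-x^2/4}$, which is the claimed inequality (and in fact a slightly sharper constant). The only technical care needed is in justifying the interchange of expectation and integral via Tonelli and in the Gaussian-integral evaluation; both are routine once the mixing density and the scaling $c=\E[B^2]$ have been chosen so that $B^2+c$ equals the target normalizer $B^2+\E[B^2]$.
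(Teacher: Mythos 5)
Your proof is correct. Note, however, that the paper itself never proves this statement: it is quoted verbatim as Theorem 2.7 of \citep{pena2007pseudo} and used as a black box inside the proof of the Decreasing Bounded Process theorem, so there is no in-paper argument to compare against. What you have written is essentially the standard pseudo-maximization (method-of-mixtures) proof underlying the cited result: integrating the canonical assumption against a $\mathcal{N}(0,1/\E[B^2])$ mixing density via Tonelli (valid since the integrand is nonnegative), completing the square to obtain the pointwise bound $\E\brs*{V^{-1/2}e^{A^2/(2V)}}\le \E[B^2]^{-1/2}$ with $V=B^2+\E[B^2]$, and then converting this into a tail bound by the symmetric Cauchy--Schwarz split of the indicator together with Jensen's inequality, $\E[\sqrt{V}]\le\sqrt{\E[V]}=\sqrt{2\E[B^2]}$. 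All steps check out: the Cauchy--Schwarz factorization recovers the indicator exactly, the $\E[B^2]$ factors cancel, and the constants multiply out to $2^{1/4}e^{-x^2/4}$, which is in fact slightly sharper than the stated $\sqrt{2}e^{-x^2/4}$; your treatment of the degenerate case $\E[B^2]=0$ (where the hypothesis forces $A=0$ a.s., so the normalized ratio is ill-defined) is a reasonable way to make the statement well-posed. In short, the paper's approach buys brevity by outsourcing the result, while yours buys self-containedness, and it is the right proof to include if one wanted the paper's concentration argument to stand on its own.
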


Condition (\ref{eq:canonical}) holds for $A_K=M_K$ and $B_K^2=\innorm{M}_K+\brs*{M}_K$, due to Theorem 9.21 of~\citep{pena2008self}. $A_K$ can be easily bounded by $\abs{A_K}\ge R_K-X_0\ge R_K-C$. To bound $B_K^2$, we first calculate $\xi_k^2$ and $\E\brs*{\xi_k^2\mid \F_{k-1}}$:
\begin{align*}
    &\xi_k^2
    =  \br*{X_k-X_{k-1}}^2-2\br*{X_k-X_{k-1}}\E\brs*{X_k-X_{k-1} \mid \F_{k-1}} + \br*{\E\brs*{X_k-X_{k-1} \mid \F_{k-1}}}^2, \\
    &\E\brs*{\xi_k^2\mid \F_{k-1}} = \E\brs*{\br*{X_k-X_{k-1}}^2\mid \F_{k-1}} - \br*{\E\brs*{X_k-X_{k-1} \mid \F_{k-1}}}^2.
\end{align*}
Thus, 
\begin{align*}
    &\xi_k^2 + \E\brs*{\xi_k^2\mid \F_{k-1}} \\
    &=  \br*{X_k-X_{k-1}}^2 + \E\brs*{\br*{X_k-X_{k-1}}^2\mid \F_{k-1}} -2\br*{X_k-X_{k-1}}\E\brs*{X_k-X_{k-1} \mid \F_{k-1}} \\
    & \stackrel{(*)}{\le} \br*{X_k-X_{k-1}}^2 + \E\brs*{\br*{X_k-X_{k-1}}^2\mid \F_{k-1}} \\
    & \stackrel{(**)}{\le} \br*{X_k-X_{k-1}}^2 + C\E\brs*{X_{k-1}-X_k\mid \F_{k-1}} \enspace .
\end{align*}
In $(*)$ we used the fact that $X_{k-1}- X_k \ge0$ a.s., which allows us to conclude that the cross-term is non-positive. In (**), we bounded $0\le X_{k-1}-X_k\le C$. We can also bound $\sum_{k=1}^K \br*{X_{k-1}-X_k}^2 \le C^2$, since each of the summands is a.s. non-negative, and thus,
\begin{align*}
    \sum_{k=1}^K \br*{X_{k-1}-X_k}^2
    \le \br*{\sum_{k=1}^K X_{k-1}-X_k}^2
    = \br*{X_K-X_0}^2
    \le C^2.
\end{align*}
Combining all of the above bounds yields
\begin{align*}
    B_K^2 & \le \sum_{k=1}^K \br*{ \br*{X_k-X_{k-1}}^2 +C\E\brs*{X_{k-1}-X_k\mid \F_{k-1}}} \\
    & \le C^2 +C \sum_{k=1}^K \E\brs*{X_{k-1}-X_k\mid \F_{k-1}} = C^2 + CR_K.
\end{align*}
Finally, we can bound $\E\brs*{B_K^2}$ by
\begin{align*}
    \E\brs*{B_K^2} & = \sum_{k=1}^K \E\brs*{\xi_k^2 + \E\brs*{\xi_k^2\mid \F_{k-1}}} = 2\sum_{k=1}^K \E\brs*{\E\brs*{\xi_k^2\mid \F_{k-1}}} \\
    & = 2\sum_{k=1}^K \E\brs*{\E\brs*{\br*{X_k-X_{k-1}}^2\mid \F_{k-1}} - \br*{\E\brs*{X_k-X_{k-1} \mid \F_{k-1}}}^2} \\
    & \le 2\sum_{k=1}^K \E\brs*{\E\brs*{\br*{X_k-X_{k-1}}^2\mid \F_{k-1}}} = 2\sum_{k=1}^K \br*{X_k-X_{k-1}}^2 \le 2C^2.
\end{align*}
Combining everything we obtain
\begin{align*}
    \frac{\abs{A}}{\sqrt{B^2+\E\brs*{B^2}}} 
    \ge \frac{R_K-C}{\sqrt{C^2 + CR_K + 2C^2}} 
    = \frac{R_K-C}{\sqrt{3C^2 + CR_K}}. 
\end{align*}
Or, substituting in~\eqref{eq:self normalized}, we have
\begin{align*}
    \Pr\brc*{\frac{R_K-C}{\sqrt{3C^2 + CR_K}} > x} \le\Pr\brc*{\frac{\abs{A}}{\sqrt{B^2+\E\brs*{B^2}}}> x} \le \sqrt{2}e^{-x^2/4}.
\end{align*}

Next, notice that for $C>0$, the function $f(y)=\frac{y-C}{\sqrt{3C^2+Cy}}$ is monotonically increasing for any $y>0$:

\begin{align*}
    f'(y)
    =\frac{\sqrt{3C^2+Cy} - \frac{C(y-C)}{2\sqrt{3C^2+Cy}}}{3C^2+Cy}
    = \frac{2(3C^2+ Cy) -Cy + C^2}{2\br*{3C^2+Cy}^{3/2}}
    = \frac{7C^2+ Cy}{2\br*{3C^2+Cy}^{3/2}}
    >0
\end{align*}

Moreover, for $y=C(1+x)^2$,

\begin{align*}
    f\br*{C(1+x)^2}
    &= \frac{C(1+x)^2-C}{\sqrt{3C^2+C^2(1+x)^2}}
    = \frac{Cx^2+2Cx}{\sqrt{4C^2+2C^2x+C^2x^2}}\\
    &> \frac{Cx^2+2Cx}{\sqrt{4C^2+4C^2x+C^2x^2}}
    = \frac{Cx^2+2Cx}{Cx+2C} =x\enspace,
\end{align*}

where the inequality holds since $x>0$. Thus, if $R_K \ge C(1+x)^2$, then $f(R_K)>x$, and we can bound the probability that $R_K \ge C(1+x)^2$ by 

\begin{align*}
    \Pr\brc*{R_K \ge C(1+x)^2} \le
    \Pr\brc*{\frac{R_K-C}{\sqrt{3C^2 + CR_K}} > x}  \le \sqrt{2}e^{-x^2/4} \enspace ,
\end{align*}
and setting $x=2\sqrt{\ln\frac{2}{\delta}}>0$, we obtain
\begin{align*}
    \Pr\brc*{R_K \ge C\br*{1+2\sqrt{\ln\frac{2}{\delta}}}^2} \le \delta \enspace .
\end{align*}
We remark that since $R_K$ is monotonically increasing a.s., this bound also implies that 
\begin{align*}
    \Pr\brc*{\exists N: 1\le N\le K, \; R_N \ge C\br*{1+2\sqrt{\ln\frac{2}{\delta}}}^2} \le \delta.
\end{align*}
To obtain a uniform bound, that is, bound that holds for all $K>0$, note that the random sequence 
$Z_K=\ind\brc*{\exists 1\le N\le K: R_N \ge C\br*{1+2\sqrt{\ln\frac{2}{\delta}}}^2}$ is monotonically increasing in $K$ and bounded. Thus, due to monotone convergence
\begin{align*}
    \Pr&\brc*{\exists K>0: R_K \ge C\br*{1+2\sqrt{\ln\frac{2}{\delta}}}^2}
    = \E\brs*{\lim_{K\to\infty}Z_K}
    = \lim_{K\to\infty}\E\brs*{Z_K} \\
    & = \lim_{K\to\infty}\Pr\brc*{\exists 1\le N\le K: R_N \ge C\br*{1+2\sqrt{\ln\frac{2}{\delta}}}^2} \le \delta.
\end{align*}
To conclude the proof, note that $\delta\le1$, and thus, $\ln\frac{3}{\delta}\ge1$. Therefore, we can bound
\begin{align*}
     C\br*{1+2\sqrt{\ln\frac{2}{\delta}}}^2
     \le C\br*{1+2\sqrt{\ln\frac{3}{\delta}}}^2
     \le C\br*{3\sqrt{\ln\frac{3}{\delta}}}^2 
     = 9C\ln\frac{3}{\delta}\enspace ,
\end{align*}
which yields the second bound.
\end{proof}

\begin{lemma}
\label{lemma: sum of decreasing processes}
Let $\{X^k_n\}_{k\geq 1}$ be a Bounded Decreasing Process in $[0,C]$ for any $n\in [N]$. The regret of the sum of processes is defined as $R(K) =\sum_{n=1}^N \sum_{k=1}^K X_n^{k-1}-\E[X^k_n\mid \mathcal{F}_{k-1}]$. Then, for any $\delta>0$, we have
\begin{align*}
    \Pr\brc*{\exists K>0: R(K)\geq 9CN\ln\frac{3N}{\delta}}\leq \delta.
\end{align*}
\end{lemma}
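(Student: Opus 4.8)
The plan is to reduce the sum of $N$ Decreasing Bounded Processes to a single invocation of Theorem~\ref{theorem: regret of decreasing process} per process, followed by a union bound over the $N$ processes. First I would observe that for each fixed $n\in[N]$, the process $\brc*{X_n^k,\F_k}_{k\ge0}$ is, by assumption, a Decreasing Bounded Process with $X_n^0\le C$, so its individual regret $R_n(K)=\sum_{k=1}^K X_n^{k-1}-\E[X_n^k\mid\F_{k-1}]$ is exactly the quantity controlled by Theorem~\ref{theorem: regret of decreasing process}. Note that the total regret decomposes additively as $R(K)=\sum_{n=1}^N R_n(K)$, which is the key structural observation that makes the union bound applicable.

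The second step is to apply Theorem~\ref{theorem: regret of decreasing process} to each process individually, but with a rescaled failure probability. Specifically, for a fixed $n$ and error budget $\delta/N$, the theorem gives
\begin{align*}
    \Pr\brc*{\exists K>0: R_n(K)\ge 9C\ln\frac{3N}{\delta}}\le\frac{\delta}{N}.
\end{align*}
Here I use the constant $C$ as the uniform upper bound on each process (the theorem's $C=X_n^0$ may be smaller, but since the regret bound is increasing in $C$ and each process lives in $[0,C]$, using $C$ is a valid upper bound). The third step is a union bound over $n\in[N]$: the event $\brc*{\exists K: R(K)\ge 9CN\ln\frac{3N}{\delta}}$ is contained in $\bigcup_{n=1}^N\brc*{\exists K: R_n(K)\ge 9C\ln\frac{3N}{\delta}}$, because if every $R_n(K)<9C\ln\frac{3N}{\delta}$ simultaneously for all $K$, then summing over $n$ gives $R(K)<9CN\ln\frac{3N}{\delta}$ for all $K$. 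Therefore the probability of the bad event is at most $N\cdot\frac{\delta}{N}=\delta$, which is the claimed bound.

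The one point requiring slight care is the \emph{joint-in-$K$} nature of the statement: the bound must hold for all $K>0$ simultaneously. This is precisely why I rely on the uniform (all-$K$) version of Theorem~\ref{theorem: regret of decreasing process} rather than a fixed-horizon version. Since each $R_n(K)$ is almost surely monotonically increasing in $K$ (each summand $X_n^{k-1}-\E[X_n^k\mid\F_{k-1}]\ge0$ because the process is decreasing), the uniform bound on each component transfers cleanly to the sum, and the union bound over the $N$ uniform events is valid. The main (and only) obstacle is this interchange of the ``$\exists K$'' quantifier with the finite sum — but because the decomposition $R(K)=\sum_n R_n(K)$ holds pointwise for every $K$, and each component event is already uniform in $K$, there is no subtlety beyond bookkeeping the constants.
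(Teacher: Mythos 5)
Your proof is correct and follows essentially the same route as the paper: apply Theorem~\ref{theorem: regret of decreasing process} to each process individually with failure probability $\delta/N$ (handling the initial-value-below-$C$ technicality just as the paper does, by monotonicity of the threshold in $C$), then use the pigeonhole containment $\brc*{\exists K: R(K)\ge 9CN\ln\frac{3N}{\delta}}\subseteq\bigcup_{n}\brc*{\exists K: R_n(K)\ge 9C\ln\frac{3N}{\delta}}$ and a union bound. No gaps.
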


\begin{proof}
We first remark that if $X_n^0<C$, we can replace it to $X_n^0=C$, which only increases the regret, so we assume w.l.o.g. that $X_n^0=C$. Define
\begin{align*}
    R_{n}(K)\eqdef \sum_{k=1}^K X^{k-1}_n(s) - \E[X^{k}_n(s)\mid \F_{k-1}].
\end{align*}
Define the event $A_{n}\eqdef \brc*{\exists K>0: R_{n}(K)\ge 9CN\ln\frac{3N}{\delta} }$. By applying Theorem \ref{theorem: regret of decreasing process}, with probability $\frac{\delta}{N}$, it holds that for a fixed $n\in[N]$
\begin{align}
        \Pr\brc*{\exists K>0: R_{n}(K)\ge 9C\ln\frac{3N}{\delta}} = \Pr\brc*{A_{n}} \le \frac{\delta}{N}.\label{eq supp: fixed s t bound rtdp}
\end{align}
Finally, we obtain
\begin{align*}
    \Pr\brc*{\exists K>0: R(K) \ge 9NC\ln\frac{3N}{\delta}} &= \Pr\brc*{\exists K>0: \sum_{n=1}^N R_{n}(K) \ge 9NC\ln\frac{3N}{\delta}}\\
    &\stackrel{(1)}{\leq}  \Pr\brc*{\bigcup_{n=1}^N A_{n}} \stackrel{(2)}{\leq}  \sum_{n=1}^N \Pr\brc*{A_{n}} \stackrel{(3)}{\leq} \delta.
\end{align*}
Relation $(1)$ holds since $$ \brc*{ \exists K>0: \sum_{n=1}^N R_{n}(K) \ge 9NC\ln\frac{3N}{\delta}} \subseteq \bigcup_{n=1}^N A_{n}.$$ In $(2)$ we use the union bound and $(3)$ holds by \eqref{eq supp: fixed s t bound rtdp}. \end{proof}

\newpage
\section{Proof of Real-Time Dynamic Programming Bounds}\label{sec:supp proof RTDP}

\rtdpProperties*
\begin{proof}
Both claims are proven using induction. 

\paragraph{{\em (i)}} 
By the initialization, $\forall s,t,\  V^*_t(s) \leq V^0_t(s)$. Assume the claim holds for $k-1$ episodes. Let $s_t^{k}$ be the state the algorithm is at in the $t^{th}$ time-step of the $k^{th}$ episode. By the value update of Algorithm~\ref{algo: RTDP},
\begin{align*}
     \bar{V}_t^{k}(s_t^{k}) &= \max_a \; r(s_t^k,a)+\sum_{s'} p(s'\mid s_t^k,a) \bar{V}_{t+1}^{k-1}(s') \\
     &\geq \max_a \; r(s_t^k,a)+\sum_{s'} p(s'\mid s_t^k,a) \bar{V}_{t+1}^*(s') = V^*(s_t^k).
\end{align*}
The second relation holds by the induction hypothesis and the monotonicity of the optimal Bellman operator~\citep{bertsekas1996neuro}. The third relation holds by the recursion satisfied by the optimal value function (see Section~\ref{sec: setup}). Thus, the induction step is proven for the first claim.

\paragraph{{\em (ii)}} 
To prove the base case of the second claim we use the optimistic initialization. Let $s^1_t$ be the state the algorithm is at in the $t^{th}$ time-step of the first episode. By the update rule,
\begin{align*}
    \bar{V}^1_t(s^1_t)&= \max_a \; r(s^1_t,a)+\sum_{s'}p(s'\mid s^1_t,a)\bar{V}^0_{t+1}(s')\\
    & \stackrel{(1)}{=} \max_a \; r(s_t^1,a)+H-t \\
    &\stackrel{(2)}{\le} 1+H-t = H-(t-1) \stackrel{(3)}{=}\bar{V}^0_t(s^1_t).
\end{align*}
Relation $(1)$ holds by the initialization of the values, $(2)$ holds since $r(s,a)\in [0,1]$ and $(3)$ is by the initialization. States that were not visited on the first episode were not update, and thus the inequality trivially holds. 

Assume the second claim holds for $k-1$ episodes. Let $s_t^{k}$ be the state that the algorithm is at in the $t^{th}$ time-step of the $k^{th}$ episode. By the value update of Algorithm \ref{algo: RTDP}, we have
\begin{align*}
     \bar{V}_t^{k}(s_t^{k}) &= \max_a \; r(s_t^k,a)+\sum_{s'} p(s'\mid s_t^k,a) \bar{V}_{t+1}^{k-1}(s').
\end{align*}
If $s_t^k$ was previously updated, let  $\bar{k}$ be the previous episode in which the update occured. By the induction hypothesis, we have that $\forall s,t,\ \bar{V}^{\bar{k}}_t(s)\geq \bar{V}^{k-1}_t(s)$. Using the monotonicity of the Bellman operator~\citep{bertsekas1996neuro}, we may write
\begin{align*}
    &\max_a \; r(s_t^k,a)+\sum_{s'} p(s'\mid s_t^k,a) \bar{V}_{t+1}^{k-1}(s')\\
    &\leq \max_a \; r(s_t^k,a)+\sum_{s'} p(s'\mid s_t^k,a) \bar{V}_{t+1}^{\bar{k}-1}(s') = \bar{V}^{k-1}(s_t^k).
\end{align*}
Thus, $\bar{V}_t^{k}(s_t^{k}) \leq \bar{V}^{k-1}(s_t^k)$ and the induction step is proved. If $s_t^k$ was not previously updated, then $\bar{V}_t^{k-1}(s_t^{k})=\bar{V}_t^{0}(s_t^{k})$. In this case, the induction hypothesis implies that $\forall s', \bar{V}_{t+1}^{k-1}(s')\le \bar{V}_{t+1}^{0}(s')$ and the result can be proven similarly to the base case.
\end{proof}

\clearpage




\rdtpExpectedValueUpdate*

\begin{proof}
By the definition of $a_t^k$ and the update rule, the following holds:

\begin{align*}
     \E\brs*{\bar{V}_t^{k}(s_t^k) \mid \F_{k-1}} &= \E\brs*{r(s_t^k,a_t^k)+ p(\cdot \mid s_t^k,a_t^k)^T \bar{V}^{k-1}_{t+1} \mid \F_{k-1}}  \\
     &= \E\brs*{r(s_t^k,a_t^k)\mid \F_{k-1}}
     +\E\brs*{\sum_{\bar{s}_{t+1}}p(\bar{s}_{t+1}\mid s_t,\pi_k)\bar{V}_{t+1}^{k-1} (\bar{s}_{t+1})\mid \F_{k-1}}.
\end{align*}

Furthermore,
\begin{align}
    &\E\brs*{\sum_{\bar{s}_{t+1}}p(\bar{s}_{t+1}\mid s_t^k,\pi_k)\bar{V}_{t+1}^{k-1}(\bar{s}_{t+1})\mid \F_{k-1}} \nonumber\\
    &= \sum_{s_t^k} \Pr(s_t^k \mid s_1^k,\pi_k) \sum_{\bar{s}_{t+1}\in \mathcal{S}}p(\bar{s}_{t+1}\mid s_t^k,\pi_k)\bar{V}_{t+1}^{k-1}(\bar{s}_{t+1})\nonumber\\
    &= \sum_{s_{t+1}^k\in \mathcal{S}}  \Pr(s_{t+1}^k \mid s_1^k,\pi_k)  \bar{V}_{t+1}^{k-1}(s_{t+1}^k) 
    = \E\brs*{\bar{V}_{t+1}^{k-1}(s_{t+1}^k) \mid \F_{k-1}}. \label{eq:value transition update}
\end{align}
The first relation holds by definition and the second one holds by the Markovian property of the dynamics. Substituting back and summing both side from $t=1,\ldots,H$, we obtain

\begin{align*}
    \E\brs*{\sum_{t=1}^H \bar{V}_t^{k}(s_t^k) \mid \F_{k-1}} 
     &= \E\brs*{\sum_{t=1}^H r(s_t^k,a_t^k)\mid \F_{k-1}} + \E\brs*{\sum_{t=1}^H \bar{V}_{t+1}^{k-1}(s_{t+1}^k) \mid \F_{k-1}} \\
     &= \E\brs*{\sum_{t=1}^H r(s_t^k,a_t^k)\mid \F_{k-1}} + \E\brs*{\sum_{t=1}^H \bar{V}_t^{k-1}(s_t^k) \mid \F_{k-1}} - \bar{V}^{k-1}_1(s_1^k) \\
     &= V^{\pi_k}_1(s_1^k) + \E\brs*{\sum_{t=1}^H \bar{V}_t^{k-1}(s_t^k) \mid \F_{k-1}} - \bar{V}^{k-1}_1(s_1^k)
\end{align*}

The second line hold by shifting the index of the sum and using the fact that $\forall s,\ \bar{V}^{k}_{H+1}(s)=0$. The third line holds by the definition of the value function,
\begin{align*}
    \sum_{t=1}^H \E[r(s_t^k,a_t^k) \mid \F_{k-1}]  =\E[\sum_{t=1}^H r(s_t^k,a_t^k) \mid s_1= s_1^k ] = V^{\pi_k}_1(s_1^k).
\end{align*}
Reorganizing the equation yields the desired result.
\end{proof}

\clearpage

\TheoremRegretRTDP*
\begin{proof}
The following bounds on the regret hold.
\begin{align}
     \Regret(K)\eqdef \sum_{k=1}^K V^*(s^k_1)- V^{\pi_k}(s^k_1)
     & \leq  \sum_{k=1}^K \bar{V}_1^{k-1}(s^k_1)- V^{\pi_k}(s^k_1)\nonumber \\
     &\leq \sum_{k=1}^K\sum_{t=1}^{H} \E[ \bar{V}_t^{k-1}(s_t^k) - \bar{V}_t^{k}(s_t^k)\mid \F_{k-1}]. \label{supp eq: rtdp general bound}
\end{align}
The second relation is by the optimism of the value function (Lemma \ref{lemma:rtdp properties}), and the third relation is by Lemma \ref{lemma: RTDP expected value difference}.

To prove the bound on the expected regret, we take expectation on both sides of \eqref{supp eq: rtdp general bound}. Thus,
\begin{align*}
    \E[\Regret(K)] &\leq \sum_{k=1}^K \E[\E[\sum_{t=1}^{H} \bar{V}_t^{k-1}(s_t^k) - \bar{V}_t^{k}(s_t^k)\mid \F_{k-1}]] =  \E[\sum_{k=1}^K\sum_{t=1}^{H} \bar{V}_t^{k-1}(s_t^k) - \bar{V}_t^{k}(s_t^k)].
\end{align*}
Where the second relation holds by the tower property and linearity of expectation. Finally, for any run of RTDP, we have that
\begin{align*}
    \sum_{k=1}^K\sum_{t=1}^{H} \bar{V}_t^{k-1}(s_t^k) - \bar{V}_t^{k}(s_t^k) &= \sum_{s}\sum_{t=1}^H  \bar{V}_t^{0}(s) - \bar{V}_t^{K}(s) \leq \sum_{s}\sum_{t=1}^H  \bar{V}_t^{0}(s) - V_t^{*}(s) \leq SH^2.
\end{align*}
The first relation holds since per $s$, the sum is telescopic, thus, only the first and last term exist in the sum. Due to the update rule, on the first time a state appears, its value will be $\bar{V}_t^{0}(s)$. From the last time it appears, its value will not be updated and thus the last value of a state is  $\bar{V}_t^{K}(s)$. The second relation holds by Lemma \ref{lemma:rtdp properties}. The third relation holds since $\forall s,t,\ \bar{V}_t^{0}(s) - V_t^{*}(s) \in [0,H]$, summing on $SH$ such terms yields the result.

To prove the high-probability bound we apply Lemma \ref{lemma: regret to SH decreasing processes} by which,
\begin{align*}
    \eqref{supp eq: rtdp general bound}
    = \sum_{t=1}^H\sum_{s}\sum_{k=1}^K \bar{V}_t^{k-1}(s) - \E[\bar{V}_t^{k}(s)\mid \F_{k-1}].
\end{align*}

For a fixed $s,t$, $\brc*{\bar{V}_t^{k}(s)}_{k\geq 0}$ is a Decreasing Bounded Process by Lemma \ref{lemma:rtdp properties}, and its initial value is less than $H$. Thus, \eqref{supp eq: rtdp general bound} is a sum of $SH$ Decreasing Bounded Processes. We apply Lemma \ref{lemma: sum of decreasing processes} which provides a high-probability bound on a sum of Decreasing Bounded Processes to conclude the proof. 
\end{proof}

\clearpage

\CorPACrtdp*

\begin{proof}

Let $K_{N_\epsilon}$ be an episode index such that there are $N_\epsilon$ previous episodes $k\le K_{N_\epsilon}$ in which RTDP outputs a policy with $V_1^*(s_1^k) -V_1^{\pi_k}(s_1^k)>\epsilon$. The following relation holds,
\begin{align*}
    \forall \epsilon >0: kkvbvuvdhfirinhblbkudchurbknbulrN_\epsilon \epsilon \leq \Regret(K_{N_\epsilon}). 
\end{align*}
Thus,
\begin{align*}
    \brc*{\exists \epsilon>0: N_\epsilon \epsilon \geq 9SH^2 \ln\frac{3SH}{\delta}} &\subseteq	 \brc*{\Regret(K_{N_\epsilon}) \geq 9SH^2 \ln\frac{3SH}{\delta}} \\
    &\subseteq	 \brc*{\exists K>0: \Regret(K) \geq 9SH^2 \ln\frac{3SH}{\delta}}.
\end{align*}

Which results in
\begin{align*}
    \Pr\brc*{\exists \epsilon>0: N_\epsilon \epsilon \geq 9SH^2 \ln\frac{3SH}{\delta}} \leq \Pr\brc*{\exists K>0: \Regret(K) \geq 9SH^2 \ln\frac{3SH}{\delta}} \leq \delta.
\end{align*}


where the third relation holds by Theorem \ref{theorem: regret rtdp}.
\end{proof}

\CorPACrtdpTotalEpisodes*
\begin{proof}
We have that $N = N_{\Delta(\mathcal{M})}$ since $\Delta(\mathcal{M})$ is the minimal gap; in all rest of episodes in which the gap is smaller than $\Delta(\mathcal{M})$, the policy $\pi_k$ is necessarily the optimal one. Based on Corollary \ref{corollary: pac RTDP} we conclude that,
\begin{align*}
        \Pr\brc*{N \geq \frac{9SH^2\ln\frac{3SH}{\delta}}{ \Delta(\mathcal{M}) }}=\Pr\brc*{N_{\Delta(\mathcal{M})} \geq \frac{9SH^2\ln\frac{3SH}{\delta}}{ \Delta(\mathcal{M}) }}\leq \delta.
\end{align*}
\end{proof}

\newpage
\section{Proofs of Section \ref{sec: model base RL with 1-step greedy}}\label{supp proofs for general greedy policy RL}

\LemmaModelBaseDecomposition*
We prove a more general, Lemma \ref{lemma: FULL model base RL expected value difference},  of which  Lemma \ref{lemma: model base RL expected value difference} is a direct corollay (by setting $t=1$).

\begin{lemma}\label{lemma: FULL model base RL expected value difference}
The expected value update of Algorithm \ref{alg: general model based RL algorithm} in the $k^{th}$ episode at the state $t^{th}$ is bounded by
\begin{align*}
    \bar{V}_t^{k-1}(s_t^k) &- V_t^{\pi_k}(s_t^k) \leq  \sum_{t'=t}^{H}\E\brs*{ \bar{V}_{t'}^{k-1}(s_{t'}^k) - \bar{V}_{t'}^{k}(s_{t'}^k)\mid \F_{k-1},s_t^k}\\
    & + \sum_{t'=t}^{H}\E \brs*{(\tilde{r}_{k-1} - r)(s_{t'}^k,a_{t'}^k)+(\tilde{p}_{k-1} - p)(\cdot\mid s^k_{t'},a_{t'}^k)^T \bar{V}_{t'+1}^{k-1} \mid \F_{k-1},s_t^k  }\enspace.
\end{align*}
\end{lemma}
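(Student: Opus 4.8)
The plan is to prove the bound by backward induction on the time index $t$, peeling off one time-step at a time so that the claim at time $t$ reduces to the claim at time $t+1$. The base case is $t=H+1$: there $\bar{V}^{k-1}_{H+1}\equiv 0$ and $V^{\pi_k}_{H+1}\equiv 0$, and the right-hand side is an empty sum, so the inequality reads $0\le 0$. The inductive step then runs over $t=H,H-1,\dots,1$, and setting $t=1$ at the end recovers Lemma~\ref{lemma: model base RL expected value difference}.

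For the inductive step, fix $s_t^k$ and let $a_t^k$ be the greedy action. Since $\tilde{r}_{k-1},\tilde{p}_{k-1},\bar{V}^{k-1}$ are all $\F_{k-1}$-measurable, both $a_t^k$ and $\bar{V}^k_t(s_t^k)$ are deterministic given $\F_{k-1}$ and $s_t^k$. The key one-step inequality comes from the $\min$ in Line~\ref{alg line: v^k does not decrease} of Algorithm~\ref{alg: general model based RL algorithm}, namely $\bar{V}^k_t(s_t^k)\le \bar{Q}(s_t^k,a_t^k)=\tilde{r}_{k-1}(s_t^k,a_t^k)+\tilde{p}_{k-1}(\cdot\mid s_t^k,a_t^k)^T\bar{V}^{k-1}_{t+1}$. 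This is precisely the place where the \emph{equality} of Lemma~\ref{lemma: RTDP expected value difference} degrades into an inequality once the exact model is replaced by an optimistic one.

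First I would split $\bar{V}^{k-1}_t(s_t^k)-V^{\pi_k}_t(s_t^k)=\br*{\bar{V}^{k-1}_t(s_t^k)-\bar{V}^k_t(s_t^k)}+\br*{\bar{V}^k_t(s_t^k)-V^{\pi_k}_t(s_t^k)}$; the first bracket is exactly the value-update term at $t'=t$. For the second bracket I would apply the one-step inequality together with the Bellman equation $V^{\pi_k}_t(s_t^k)=r(s_t^k,a_t^k)+p(\cdot\mid s_t^k,a_t^k)^TV^{\pi_k}_{t+1}$, and then add and subtract $p(\cdot\mid s_t^k,a_t^k)^T\bar{V}^{k-1}_{t+1}$. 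This separates the second bracket into the model-error summand $(\tilde{r}_{k-1}-r)(s_t^k,a_t^k)+(\tilde{p}_{k-1}-p)(\cdot\mid s_t^k,a_t^k)^T\bar{V}^{k-1}_{t+1}$ (the $t'=t$ term of the second sum) plus the recursion term $p(\cdot\mid s_t^k,a_t^k)^T\br*{\bar{V}^{k-1}_{t+1}-V^{\pi_k}_{t+1}}$.

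To close the induction I would rewrite the recursion term as $\E\brs*{\bar{V}^{k-1}_{t+1}(s_{t+1}^k)-V^{\pi_k}_{t+1}(s_{t+1}^k)\mid\F_{k-1},s_t^k}$, insert the inductive hypothesis at $t+1$ inside this expectation, and collapse the nested conditional expectations via the tower property $\E[\,\cdot\mid\F_{k-1},s_t^k]=\E\brs*{\E[\,\cdot\mid\F_{k-1},s_{t+1}^k]\mid\F_{k-1},s_t^k}$, which is valid because $\pi_k$ is $\F_{k-1}$-measurable and hence the post-$t$ trajectory is Markov given $\F_{k-1}$. This is the same telescoping mechanism as in the proof of Lemma~\ref{lemma: RTDP expected value difference}. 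I expect the main obstacle to be the conditional-expectation bookkeeping rather than any inequality: one must check that the $t'=t$ summands are deterministic given $\F_{k-1},s_t^k$ (so that wrapping them in $\E[\cdot\mid\F_{k-1},s_t^k]$ changes nothing), and that the Markov property legitimately converts the inner conditioning on $s_{t+1}^k$ back into conditioning on $s_t^k$, so that the $\sum_{t'=t+1}^{H}$ terms supplied by the hypothesis re-emerge with exactly the conditioning required by the statement.
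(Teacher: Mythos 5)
Your proof is correct and rests on exactly the same ingredients as the paper's own argument: the one-step inequality coming from the $\min$ update rule, adding and subtracting the true reward and transition model, the Markov property (with $\pi_k$ being $\F_{k-1}$-measurable) to manage the conditional expectations, and telescoping using $\bar{V}^{k-1}_{H+1}=0$. The only difference is bookkeeping — you unroll the recursion as a backward induction on $t$, whereas the paper sums the one-step bounds over $t'=t,\dots,H$ and telescopes via an index shift before reorganizing — and these two presentations are equivalent.
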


\begin{proof}
We closely follow the proof of Lemma \ref{lemma: RTDP expected value difference}. By the definition of $a_t^k$ and the update rule, for $t'\geq t$, the following holds.
\begin{align*}
     &\E\brs*{\bar{V}_{t'}^{k}(s_{t'}^k) \mid \F_{k-1},s_t^k }  \\
     &\stackrel{(1)}{\leq} \E \brs*{\tilde{r}_{k-1}(s_{t'}^k,a_{t'}^k)+\sum_{\bar{s}_{{t'}+1}\in \mathcal{S}}\tilde{p}_{k-1}(\bar{s}_{{t'}+1}\mid s^k_{t'},a_{t'})\bar{V}_{{t'}+1}^{k-1} (\bar{s}_{{t'}+1}) \mid \F_{k-1},s_t^k }\\
     &\stackrel{(2)}{=} \E\brs*{r(s_{t'}^k,a_{t'}^k)\mid \F_{k-1},s_{t'}^k }+\E\brs*{\sum_{\bar{s}_{{t'}+1}\in \mathcal{S}}p(\bar{s}_{{t'}+1}\mid s_{t'}^k,a_{t'}^k)\bar{V}_{{t'}+1}^{k-1} (\bar{s}_{{t'}+1}) \mid \F_{k-1},s_t^k }\\
     &\quad +\E\brs*{(\tilde{r}_{k-1}-r)(s_{t'}^k,a_{t'}^k)+\sum_{\bar{s}_{{t'}+1}\in \mathcal{S}}(\tilde{p}_{k-1}-p)(\bar{s}_{{t'}+1}\mid s^k_{t'},a_{t'}^k)\bar{V}_{{t'}+1}^{k-1} (\bar{s}_{{t'}+1}) \mid \F_{k-1},s_t^k }\\
     &\stackrel{(3)}{=} \E\brs*{r(s_{t'}^k,a_{t'}^k) \mid \mathcal{F}_{k-1},,s_t^k} +\E\brs*{\bar{V}_{{t'}+1}^{k-1}(s_{{t'}+1}^k) \mid \mathcal{F}_{k-1},s_t^k}\\
     &\quad + \E \brs*{(\tilde{r}_{k-1}-r)(s_{t'}^k,a_{t'}^k)+\sum_{\bar{s}_{{t'}+1}\in \mathcal{S}}(\tilde{p}_{k-1}-p)(\bar{s}_{{t'}+1}\mid s_{t'}^k,a_{t'}^k)\bar{V}_{{t'}+1}^{k-1} (\bar{s}_{{t'}+1})\mid \F_{k-1},s_t^k   }
\end{align*}
Relation $(1)$ holds by the update rule for $\bar{V}_t^k$. Next, $(2)$ holds by adding an subtracting the real reward and dynamics and using linearity of expectation. In $(3)$, we used the same reasoning as in Equation~\eqref{eq:value transition update}. 

Summing both side from $t'=t,\ldots,H$, we obtain:
\begin{align*}
    \E&\brs*{\sum_{t'=t}^H \bar{V}_{t'}^{k}(s_{t'}^k) \mid \F_{k-1},s_t^k} \\
    & \le \E\brs*{\sum_{{t'}=t}^Hr(s_{t'}^k,a_{t'}^k) \mid \F_{k-1},s_t^k} +\E\brs*{\sum_{{t'}=t}^H\bar{V}_{{t'}+1}^{k-1}(s_{{t'}+1}^k) \mid \F_{k-1},s_t^k}\\
     &\quad + \sum_{{t'}=t}^H\E \brs*{(\tilde{r}_{k-1}-r)(s_{t'}^k,a_{t'}^k)+\sum_{\bar{s}_{{t'}+1}\in \mathcal{S}}(\tilde{p}_{k-1}-p)(\bar{s}_{{t'}+1}\mid s_{t'}^k,a_{t'}^k)\bar{V}_{{t'}+1}^{k-1} (\bar{s}_{{t'}+1})\mid \F_{k-1},s_t^k  }\\
     &\stackrel{(1)}{=} V_t^{\pi_k}(s_t^k) +E\brs*{\sum_{{t'}=t}^H\bar{V}_{{t'}+1}^{k-1}(s_{{t'}+1}^k) \mid \F_{k-1},s_t^k}\\
     &\quad + \sum_{{t'}=t}^H\E \brs*{(\tilde{r}_{k-1}-r)(s_{t'}^k,a_{t'}^k)+\sum_{\bar{s}_{{t'}+1}\in \mathcal{S}}(\tilde{p}_{k-1}-p)(\bar{s}_{{t'}+1}\mid s_{t'}^k,a_{t'}^k)\bar{V}_{{t'}+1}^{k-1} (\bar{s}_{{t'}+1})\mid \F_{k-1},s_t^k  }\\
     &\stackrel{(2)}{=} V_t^{\pi_k}(s_t^k) +\E\brs*{\sum_{{t'}=t}^H \bar{V}_{t'}^{k-1}(s_{t'}^k) \mid\F_{k-1},s_t^k} - \bar{V}^{k-1}_t(s_t^k)\\
     &\quad + \sum_{{t'}=t}^H\E \brs*{(\tilde{r}_{k-1}-r)(s_{t'}^k,a_{t'}^k)+\sum_{\bar{s}_{{t'}+1}\in \mathcal{S}}(\tilde{p}_{k-1}-p)(\bar{s}_{{t'}+1}\mid s_{t'}^k,a_{t'}^k)\bar{V}_{{t'}+1}^{k-1} (\bar{s}_{{t'}+1})\mid \F_{k-1},s_t^k  }
\end{align*}
In $(1)$ we used the fact that $V_t^{\pi_k}(s_t^k) = \E[\sum_{t'=t}^{H} r(s_{t'}^k,\pi_k(s_{t'}^k))\mid \mathcal{F}_{k-1},s_t^k]$. Relation $(2)$ holds by shifting the index of the sum and using $\forall s,k,\ \bar{V}_{H+1}^{k-1}(s)=0$. Reorganizing the equation yields the desired result.
\end{proof}

\newpage
\section{Proof of Theorem \ref{theorem: UCRL2 with greedy policies}}\label{supp: UCRL2}

\begin{algorithm}
\begin{algorithmic}[1]
\caption{UCRL2 with Greedy Policies}
\label{alg supp: UCRL2 with greedy policies}
    \STATE Initialize: $\delta, \delta'=\frac{\delta}{4} \forall s\in \mathcal{S}, t\in [H],\ \bar{V}^0_t(s)=H-(t-1).$
    \FOR{$k=1,2,..$}
        \STATE Initialize $s_1^k$
        \FOR{$t=1,..,H$}
            \STATE {\color{gray}{$\#$Update Upper Bound on $V^*$}}
            \FOR{$a\in \mathcal{A}$}
                \STATE $\tilde{r}_{k-1}(s_t^k,a) = \hat{r}_{k-1}(s_t^k,a) + \sqrt{\frac{2\ln \frac{2SAT}{\delta'}}{n_{k-1}(s_t^k,a)\vee 1}}$
                \STATE  $CI(s_t^k,a) = \brc*{P'\in \mathcal{P}(\mathcal{S}): \norm{P'(\cdot)-\hat{p}_{k-1}(\cdot\mid s_t^k,a)}_1\leq \sqrt{\frac{4S\ln\frac{3SAT}{\delta'}}{n_{k-1}(s_t^k,a)\vee 1}}}$
                \STATE $\tilde{p}_{k-1}(\cdot\mid s_t^k,a) = \arg\max_{P'\in CI(s_t^k,a)} P'(\cdot\mid s_t^k,a )^T \bar{V}^{k-1}_{t+1}$ 
                \STATE $\bar{Q}(s_t^k,a) = \tilde{r}_{k-1}(s_t^k,a)+\tilde{p}_{k-1}(\cdot\mid s_t^k,a)^T\bar{V}^{k-1}_{t+1}$
                \ENDFOR
          
            \STATE $a_t^k\in \arg\max_a \bar{Q}(s_t^k,a)$
            \STATE $\bar{V}^{k}_{t}(s_t^k) =\min\brc*{\bar{V}^{k-1}_{t}(s_1^k),\bar{Q}(s_t^k,a_t^k)}$ 
            \STATE {\color{gray}{$\#$Act by the Greedy Policy}}
            \STATE Apply $a_t^k$ and observe $s_{t+1}^k$.
        \ENDFOR
            \STATE Update $\hat{r}_k,\hat{p}_k,n_{k}$ with all experience gathered in the episode. 
    \ENDFOR
\end{algorithmic}
\end{algorithm}

We provide the full proof of Theorem \ref{theorem: UCRL2 with greedy policies} which establishes a regret bound for UCRL2 with Greedy Policies (UCRL2-GP) in finite horizon MDPs. In the following, we present the structure of this section. 

We define the failure events for UCRL2-GP in Section \ref{supp: failure events ucrl2}. Most of the events are standard low-probability failure events, derived using, e.g., Hoeffding's inequality. We add to the standard set of events a failure event which holds when a sum of Decreasing Bounded Processes is large in its value. Using uniform bounds, the failure events are shown to hold jointly. When all failure events do not occur for all time-steps we say the algorithm is outside the failure event. In Section \ref{supp: optimism UCRL2} we establish that UCRL2-GP is optimistic, and, more specifically, that for all $s,t,k$ $\bar{V}_t^k(s)\geq V^*_t(s)$, outside the failure event. Lastly, in Section \ref{supp: UCRL2 detailed proof} we give the full proof of Theorem \ref{theorem: UCRL2 with greedy policies}, based on a new regret decomposition using on Lemma \ref{lemma: model base RL expected value difference}, the new results on Decreasing Bounded Processes (see Appendix \ref{supp: proofs on DBP}), and existing techniques (e.g., \citep{dann2017unifying,zanette2019tighter}).

\subsection{Failure Events for UCRL2 with Greedy Policies}\label{supp: failure events ucrl2}

Define the following failure events.
\begin{align*}
    &F^r_k=\brc*{\exists s,a:\ |r(s,a) - \hat{r}_{k-1}(s,a)| \geq \sqrt{\frac{2\ln \frac{2SAT}{\delta'}}{n_{k-1}(s,a)\vee 1}} }\\
    &F^p_k=\brc*{\exists s,a:\ \norm{p(\cdot\mid s,a)- \hat{p}_{k-1}(\cdot\mid s,a)}_1 \geq \sqrt{\frac{4S\ln\frac{3SAT}{\delta'}}{n_{k-1}(s,a)\vee 1
    }}}\\
    &F^N_k = \brc*{\exists s,a: n_{k-1}(s,a) \le \frac{1}{2} \sum_{j<k} w_j(s,a)-H\ln\frac{SAH}{\delta'}}.\\
    &F^{DBP} =\brc*{\exists k>0: \sum_{k=1}^K\sum_{t=1}^{H}\sum_{s} \bar{V}_t^{k-1}(s) -\E[\bar{V}_t^{k}(s)\mid \F_{k-1}] \geq 9SH^2\ln\frac{3SH}{\delta'}}
\end{align*}

Furthermore, the following relations hold.

\begin{itemize}
    \item Let $F^r=\bigcup_{k=1}^K F^r_k.$ Then $\Pr\brc*{F^r}\leq \delta'$, by Hoeffding's inequality, and using a union bound argument on all $s,a$, possible values of $n_{k}(s,a)$ and $k$. Furthermore, for $n(s,a)=0$ the bound holds trivially since $R\in[0,1]$. 
    \item Let $F^P=\bigcup_{k=1}^K F^{p}_k.$ Then $\Pr\brc*{ F^p}\leq \delta'$, holds by \citep{weissman2003inequalities} while applying union bound on all $s,a,n_{k-1}(s,a)$ and possible values of $k$ (e.g., \citealt{azar2017minimax,zanette2019tighter}). Furthermore, for $n(s,a)=0$ the bound holds trivially. 
    \item Let $F^N=\bigcup_{k=1}^K F^N_k.$ Then, $\Pr\brc*{F^N}\leq \delta'$. The proof is given in \citep{dann2017unifying} Corollary E.4 (and is used in \citealt{zanette2019tighter} Appendix D.4).
    \item  By construction of Algorithm \ref{alg: general model based RL algorithm}, $\forall s,t,\ \bar{V}^k_t(s)$ is a decreasing function of $k$, with $\bar{V}^0_t(s)=H$. Furthermore, since $\hat{r}_{k-1}(s,a)$ and $\hat{p}_{k-1}(\cdot\mid s,a)$ are non-negative, and $\bar{V}^0_t(s)>0$, a simple induction allows us to conclude that $\forall s,t,\ \bar{V}^k_t(s)\ge0$. Thus, by applying Lemma \ref{lemma: sum of decreasing processes}, $\Pr\brc*{F^{DBP}}\leq \delta'$.
\end{itemize}

\begin{lemma}\label{lemma: ucrl failure events}
Setting $\delta'=\frac{\delta}{4}$ then $\Pr\brc{F^r \bigcup F^p\bigcup F^N \bigcup F^{DBP}}\leq \delta$. When the failure events does not hold we say the algorithm is outside the failure event.
\end{lemma}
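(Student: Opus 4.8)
The plan is to obtain the lemma directly from Boole's inequality (a union bound over the four failure events), since the per-event probability bounds have already been assembled in the preceding bullet list. Concretely, those four displayed relations establish that each of $\Pr\{F^r\}$, $\Pr\{F^p\}$, $\Pr\{F^N\}$, and $\Pr\{F^{DBP}\}$ is at most $\delta'$: the bound on $F^r$ follows from Hoeffding's inequality together with a union bound over all $(s,a)$, all attainable values of the visitation count, and all episodes; the bound on $F^p$ follows identically from the $\ell_1$ concentration of \citep{weissman2003inequalities}; the bound on $F^N$ is imported from \citep{dann2017unifying} (Corollary E.4); and the bound on $F^{DBP}$ is exactly the high-probability guarantee of Lemma \ref{lemma: sum of decreasing processes}, applied to the $SH$ value processes $\{\bar{V}_t^k(s)\}_{k\ge0}$, each of which is a Decreasing Bounded Process in $[0,H]$ by the monotonicity and nonnegativity arguments noted just above.

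Given these, the only remaining step is to combine them. I would write
\[
\Pr\{F^r \cup F^p \cup F^N \cup F^{DBP}\} \le \Pr\{F^r\} + \Pr\{F^p\} + \Pr\{F^N\} + \Pr\{F^{DBP}\} \le 4\delta',
\]
where the first inequality is subadditivity of probability, which requires no independence between the events, and the second uses the four per-event bounds. Substituting the choice $\delta' = \delta/4$ then gives $4\delta' = \delta$, which is the claim. The statement that the algorithm is ``outside the failure event'' is simply a name for the complement of this union, which therefore holds with probability at least $1-\delta$.

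There is essentially no obstacle in this final aggregation step: it is a one-line union bound. The genuine content of the lemma lives entirely in the four individual tail bounds, and among those the least routine are $F^N$ and $F^{DBP}$ --- the former because it must control the deviation of the empirical counts $n_{k-1}(s,a)$ from the expected cumulative visitation $\sum_{j<k} w_j(s,a)$ uniformly over episodes, and the latter because it relies on the novel self-normalized martingale argument behind Theorem \ref{theorem: regret of decreasing process} and its extension to a sum of $SH$ processes via Lemma \ref{lemma: sum of decreasing processes}. Care should be taken to verify that all four bounds are uniform in $k$ (they are, by the union-bound-over-episodes and monotone-convergence constructions cited), so that the resulting guarantee holds jointly on all time-steps, exactly as required for the subsequent optimism and regret analysis.
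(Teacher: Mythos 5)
Your proposal is correct and matches the paper's argument exactly: the paper establishes the same four per-event bounds of $\delta'$ each (Hoeffding for $F^r$, \citet{weissman2003inequalities} for $F^p$, \citet{dann2017unifying} for $F^N$, and Lemma \ref{lemma: sum of decreasing processes} for $F^{DBP}$) and then concludes by the same union bound with $\delta'=\delta/4$. No gaps; your additional remark about uniformity in $k$ is consistent with how the paper defines each event as a union over episodes.
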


\subsection{UCRL2 with Greedy Policies is Optimistic}\label{supp: optimism UCRL2}

\begin{lemma}\label{lemma: optimism of UCRL2 with greedy policies}
Outside the failure event UCRL2-GP is Optimistic,
\begin{align*}
    \forall s,t,k\ \bar{V}^{k}_t(s)\geq V^*_t(s).
\end{align*}
\end{lemma}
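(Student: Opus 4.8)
The plan is to prove optimism by induction on the time-step $t$, running backwards from $t=H+1$ down to $t=1$, and showing that at every episode $k$ the upper value $\bar{V}^k_t(s)$ dominates $V^*_t(s)$ at the visited states, while using the decreasing-in-$k$ property together with the optimistic initialization to handle unvisited states. The base case is $\bar{V}^k_{H+1}\equiv 0 = V^*_{H+1}$. For the inductive step I would assume $\bar{V}^{k}_{t+1}(s)\geq V^*_{t+1}(s)$ for all $s$ and all $k$, and aim to conclude the same at level $t$. However, because the update at time $t$ in episode $k$ uses $\bar{V}^{k-1}_{t+1}$ (the previous episode's value at the next level), I actually need a joint induction over both $t$ and $k$; the cleanest route is to fix $t$ and induct on $k$, while assuming optimism already holds for all states at level $t+1$ and all episodes.

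First I would establish, outside the failure event, that the optimistic model genuinely over-estimates the one-step Bellman backup. Concretely, on the complement of $F^r$ and $F^p$ the true reward $r(s,a)$ lies below $\tilde r_{k-1}(s,a)$ and the true transition $p(\cdot\mid s,a)$ lies inside the confidence set $CI(s,a)$; since $\tilde p_{k-1}$ is chosen in Line~\ref{eq alg: UCRL optimistic model} to maximize $P'^T\bar V^{k-1}_{t+1}$ over that set, and since $\bar V^{k-1}_{t+1}\geq V^*_{t+1}$ by the (level $t+1$) induction hypothesis together with monotonicity, we obtain
\begin{align*}
\bar Q(s,a) = \tilde r_{k-1}(s,a) + \tilde p_{k-1}(\cdot\mid s,a)^T\bar V^{k-1}_{t+1} \geq r(s,a) + p(\cdot\mid s,a)^T V^*_{t+1} .
\end{align*}
Maximizing over $a$ and using the optimal Bellman equation~\eqref{eq:bellman} then gives $\max_a\bar Q(s,a)\geq V^*_t(s)$ at the visited state $s=s^k_t$.

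The subtlety, and what I expect to be the main obstacle, is the $\min$ in the update rule on Line~\ref{alg line: v^k does not decrease}: the algorithm sets $\bar V^k_t(s^k_t)=\min\{\bar V^{k-1}_t(s^k_t),\bar Q(s^k_t,a^k_t)\}$, so I must check that \emph{both} arguments of the $\min$ dominate $V^*_t(s^k_t)$. The second argument is handled by the display above; the first, $\bar V^{k-1}_t(s^k_t)\geq V^*_t(s^k_t)$, follows from the induction hypothesis on $k$ (the previous episode was optimistic) combined with the base case $\bar V^0_t(s)=H-(t-1)\geq V^*_t(s)$. For states $s\neq s^k_t$ not updated in episode $k$, the value is unchanged, $\bar V^k_t(s)=\bar V^{k-1}_t(s)\geq V^*_t(s)$, again by the induction hypothesis on $k$. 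This closes the induction on $k$ at level $t$, and then the outer induction on $t$ proceeds. Throughout I would emphasize that every inequality is conditioned on being outside the failure event, so that the conclusion $\bar V^k_t(s)\geq V^*_t(s)$ for all $s,t,k$ holds on that high-probability event, as asserted.
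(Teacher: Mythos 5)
Your proof is correct and follows essentially the same argument as the paper's: outside $F^r$ and $F^p$ the optimistic model dominates the true one-step backup, optimism of $\bar{V}^{k-1}_{t+1}$ handles the backup target via monotonicity and the Bellman equation, and optimism of $\bar{V}^{k-1}_{t}$ handles both the first argument of the $\min$ in the update rule and the unvisited states. The only difference is bookkeeping: the paper runs a single induction on $k$ with the hypothesis quantified over all $(s,t)$ — which suffices precisely because every quantity used in episode $k$ (including $\bar{V}^{k-1}_{t+1}$) comes from episode $k-1$ — so your nested induction on $t$ and $k$ is valid but not actually needed.
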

\begin{proof}
We prove by induction. The base case holds by the initialization of the algorithm, $\bar{V}^0_t(s)=H-(t-1)\geq V^*_t(s)$. Assume the induction hypothesis holds for $k-1$ episodes. At the $k^{th}$ episode, states there were not visited at step $t$ will not be updated, and thus by the induction hypothesis, the result hold for these states. For states that were visited, if the minimum at the update stage equals to $\bar{V}^{k-1}_t(s)$, then the result similarly holds. Let $s_t^k$ be a state that was updated according to the optimistic model, and let
\begin{align*}
    &\tilde{a}^*\in \arg\max_a \tilde{r}_{k-1}(s^k_t,a) +\tilde{p}_{k-1}(\cdot\mid s^k_t,a)v^{k-1}_{t+1}\\
    &a^*\in \arg\max_a r(s^k_t,a) +p(\cdot\mid s^k_t,a)V^{*}_{t+1}.    
\end{align*}
Then,

\begin{align*}
    \bar{V}^k_t(s^k_t) &= \max_{a} \tilde{r}_{k-1}(s^k_t,a) +\tilde{p}_{k-1}(\cdot\mid s^k_t,a)\bar{V}^{k-1}_{t+1}\\
    &\stackrel{(1)}{=}\tilde{r}_{k-1}(s^k_t,\tilde{a}^*) +\tilde{p}_{k-1}(\cdot\mid s^k_t,\tilde{a}^*)\bar{V}^{k-1}_{t+1}\\
    &\stackrel{(2)}{\geq} \tilde{r}_{k-1}(s^k_t,a^*) +\tilde{p}_{k-1}(\cdot\mid s^k_t,a^*)\bar{V}^{k-1}_{t+1}\\
    &\stackrel{(3)}{\geq} r(s^k_t,a^*) +p(\cdot\mid s^k_t,a^*)\bar{V}^{k-1}_{t+1}\\
    &\stackrel{(4)}{\geq} r(s^k_t,a^*) +p(\cdot\mid s^k_t,a^*)V^{*}_{t+1} \\
    &\stackrel{(5)}{=} V^*_t(s^k_t).
\end{align*}
Relations $(1)$ and $(2)$ are by the definition and optimality of $\tilde{a}^*$, respectively. $(3)$ holds since outside failure event $F^r_k$, $\tilde{r}_{k-1}(s_t^k,a^*)\geq r(s_t^k,a^*)$. Furthermore, outside failure event $F^p_k$, the real transition probabilities $p(\cdot \mid s,a^*)\in CI(s_t^k,a^*)$, and thus
\begin{align*}
   \max_{P'\in CI(s_t^k,a^*)}P'(\cdot \mid s_t^k,a^*)\bar{V}_{t+1}^{k-1}=\tilde{p}_{k-1}(\cdot \mid s_t^k,a^*)\bar{V}_{t+1}^{k-1}\geq p_{k-1}(\cdot \mid s_t^k,a^*)\bar{V}_{t+1}^{k-1}.
\end{align*}

Finally, $(4)$ holds by the induction hypothesis $\forall s,a,t,\ \bar{V}^{k-1}_t(s)\geq V^{*}_t(s)$ and $(5)$ holds by the Bellman recursion.
\end{proof}

\subsection{Proof of Theorem \ref{theorem: UCRL2 with greedy policies}}\label{supp: UCRL2 detailed proof}
\TheoremGreedyPoliciesUCRL*

\begin{proof}
By the optimism of the value (Lemma \ref{lemma: optimism of UCRL2 with greedy policies}), we have that
\begin{align}
    \sum_{k=1}^K V^*_1(s_1^k) &- V^{\pi_k}_1(s_1^k) 
    \leq \sum_{k=1}^K \bar{V}^{k-1}_1(s_1^k) - V^{\pi_k}_1(s_1^k) \nonumber\\
    &\leq \underset{(A)}{\underbrace{\sum_{k=1}^K\sum_{t=1}^{H} \E[ \bar{V}_t^{k-1}(s_t^k) - \bar{V}_t^{k}(s_t^k)\mid \F_{k-1}]}} \nonumber\\
    &\quad+\underset{(B)}{\underbrace{\sum_{k=1}^K\sum_{t=1}^{H} \E \left[ (\tilde{r}_{k-1} - r)(s_t^k,a_t^k)+(\tilde{p}_{k-1} - p)(\cdot\mid s^k_t,a_t^k)^T \bar{V}_{t+1}^{k-1} \mid \F_{k-1}  \right]}}\enspace . \label{eq: UCRL2 term A and B}
\end{align}
The first relation is by the optimism of the value, and the second relation is by Lemma \ref{lemma: model base RL expected value difference}.
We now bound the two terms outside the failure event.

{\bf Bounding (A).} By Lemma \ref{lemma: regret to SH decreasing processes} (Appendix \ref{sec: supp general lemmas}),
\begin{align*}
    (A) = \sum_{k=1}^K\sum_{t=1}^{H}\sum_{s} \bar{V}_t^{k-1}(s) -\E[\bar{V}_t^{k}(s)\mid \F_{k-1}].
\end{align*}
Outside the failure event, the sum is bounded by $9SH^2\ln \frac{3SH}{\delta'}$ (see event $F^{DBP}$). Thus,
\begin{align*}
    (A)\leq \Olog(SH^2).
\end{align*}

{\bf Bounding (B).} Outside failure event $F^r_k$ the following inequality holds:
\begin{align}
    &\sum_{k=1}^K\sum_{t=1}^{H}\E \left[  (\tilde{r}_{k-1} - r)(s_t^k,\pi_k(s_t^k) \mid \F_{k-1}  \right] \nonumber\\
    &\lesssim  \sum_{k=1}^K\sum_{t=1}^{H} \E\brs*{\sqrt{\frac{1}{n_{k-1}(s_t^k,\pi_k(s_t^k))\vee 1}} \mid \F_{k-1}}\lesssim \Olog(\sqrt{SAT} +SAH), \label{eq: UCRL2 term B first term}
\end{align}
where the second inequality is by Lemma \ref{lemma: supp 1 over sqrt n sum}. 
It is worth noting that Lemma \ref{lemma: supp 1 over sqrt n sum} is proven by defining $L_k$, the set of 'good' state-action pairs, that contains pairs that were visited sufficiently often in the past \citep{dann2017unifying,zanette2019tighter}. The term we bound is then analyzed separately for state-action pairs inside and outside $L_k$. The definition of $L_k$ can be found in Definition \ref{defn:good set}, and its properties (including Lemma) are analyzed in Appendix \ref{sec: Lk definition}.

\clearpage

Furthermore, outside the failure event,

\begin{align}
    &\sum_{k=1}^K\sum_{t=1}^{H}\E \left[(\tilde{p}_{k-1} - p)(\cdot\mid s^k_t,a_t^k)^T \bar{V}_{t+1}^{k-1} \mid \F_{k-1}  \right]\nonumber\\
    &=\sum_{k=1}^K\sum_{t=1}^{H}\E \left[(\tilde{p}_{k-1} - \hat{p}_{k-1})(\cdot\mid s^k_t,a_t^k)^T \bar{V}_{t+1}^{k-1} \mid \F_{k-1}  \right]\nonumber\\
    &\qquad\qquad +\E \left[(\hat{p}_{k-1} - p)(\cdot\mid s^k_t,a_t^k)^T \bar{V}_{t+1}^{k-1} \mid \F_{k-1}  \right]\nonumber\\
    &\stackrel{(1)}{\leq}\sum_{k=1}^K\sum_{t=1}^{H}\E \left[\norm{(\tilde{p}_{k-1} - \hat{p}_{k-1})(\cdot\mid s^k_t,a_t^k)}_1 \norm{\bar{V}_{t+1}^{k-1}}_{\infty} \mid \F_{k-1}  \right]\nonumber\\
    &\qquad\quad +\E \left[\norm{(\hat{p}_{k-1} - p)(\cdot\mid s^k_t,a_t^k)} \norm{\bar{V}_{t+1}^{k-1}}_{\infty} \mid \F_{k-1}  \right]\nonumber\\
    &\stackrel{(2)}{\leq}  H\sum_{k=1}^K\sum_{t=1}^{H}\E \left[\norm{(\hat{p}_{k-1} - p)(\cdot\mid s^k_t,a_t^k)}_1 + \norm{(\tilde{p}_{k-1} - \hat{p}_{k-1})(\cdot\mid s^k_t,a_t^k)}_1  \mid \F_{k-1}  \right]\nonumber\\ 
    &\stackrel{(3)}{\lesssim}  H\sqrt{S}\sum_{k=1}^K\sum_{t=1}^{H}\E \left[\sqrt{\frac{1}{n_{k-1}(s_t^k,a_t^k)\vee 1}}  \mid \F_{k-1}  \right] \nonumber \\
    &\stackrel{(4)}{\lesssim} \Olog (HS\sqrt{AT} + H^2\sqrt{S}SA). \label{eq: UCRL2 term B second term} 
\end{align}

Relation $(1)$ holds by H\"older's inequality. Next, $(2)$ holds since $\forall s,t,k,\ 0\le\bar{V}^{k}_t(s)\leq H$. The lower bounds holds by Lemma \ref{lemma: optimism of UCRL2 with greedy policies} and since $V_t^*\ge0$. The upper bound is since the value can only decrease by Algorithm \ref{alg: general model based RL algorithm} and the inequality holds for the initialized value. Finally, $(3)$ holds outside failure event $F^p$ (Lemma \ref{lemma: ucrl failure events}), and $(4)$ holds by Lemma \ref{lemma: supp 1 over sqrt n sum}.

Combining \eqref{eq: UCRL2 term B first term}, \eqref{eq: UCRL2 term B second term} we conclude that,
\begin{align*}
    (B) \leq \Olog (HS\sqrt{AT} + H^2\sqrt{S}SA)
\end{align*}

Combining the bounds on (A) and (B) in \eqref{eq: UCRL2 term A and B} concludes the proof.
\end{proof}

\newpage
\section{Proof of Theorem \ref{theorem: EULER with greedy policies}}\label{supp: EULER}

\begin{algorithm}
\begin{algorithmic}[1]
\caption{EULER with Greedy Policies}
\label{alg supp: EULER with greedy policies}
    \STATE Initialize: $\delta,\delta'=\frac{\delta}{9},\forall s\in \mathcal{S}, t\in [H],\ \bar{V}^0_t(s)=H-(t-1),\underline{V}^0_t(s)=0,$
    \STATE  $\quad\quad\quad\quad \phi(s,a)=\sqrt{\frac{2\hat{\VAR}_{\hat{p}_{k-1}(s,a)}(\bar{V}^{k-1}_{t+1})\ln\frac{4SAT}{\delta'}}{n_{k-1}(s,a)}} + \frac{2H\ln\frac{4SAT}{\delta'}}{3n_{k-1}(s,a)}, L = 2\sqrt{\ln\frac{4SAT}{\delta'}},$
    \STATE  $\quad\quad\quad\quad J = \frac{2H\ln\frac{4SAT}{\delta'}}{3}, B_v = \sqrt{2\ln\frac{4SAT}{\delta'}},B_p=H\sqrt{2\ln\frac{4SAT}{\delta'}}.$
    \FOR{$k=1,2,..$}
        \STATE Initialize $s_1^k$
        \FOR{$t=1,..,H$}
            \STATE {\color{gray}{$\#$Update Upper Bound on $V^*$}}
            \FOR{$a\in \mathcal{A}$}
                \STATE $b^r_k(s_t^k,a) = \sqrt{\frac{2\hat{\VAR}(R(s_t^k,a))\ln\frac{4SAT}{\delta'}}{n_{k-1}(s_t^k,a)\vee 1}} + \frac{14\ln \frac{4SAT}{\delta'}}{3n_{k-1}(s_t^k,a)\vee 1} $
            	\STATE $b_k^{pv}(s_t^k,a) = \phi(\hat{p}_{k-1}(\cdot \mid s_t^k,a),\bar{V}^{k-1}_{t+1})  + \frac{4J+B_p}{n_{k-1}(s_t^k,a)\vee 1} + \frac{B_v \norm{ \bar{V}^{k-1}_{t+1} - \underline{V}^{k-1}_{t+1}  }_{2,\hat{p}}}{\sqrt{n_{k-1}(s_t^k,a)\vee 1}}$
            	 \STATE $\bar{Q}(s_t^k,a) = \hat{r}_{k-1}(s_t^k,a) + b_k^r(s_t^k,a) + \hat{p}_{k-1}(\cdot \mid s_t^k,a)^T \bar{V}^{k-1}_{t+1} + b_k^{pv}(s_t^k,a)$
            \ENDFOR
            \STATE $a_t^k\in \arg\max_a \bar{Q}(s_t^k,a)$
            \STATE $\bar{V}^{k}_{t}(s_t^k) =\min\brc*{\bar{V}^{k-1}_{t}(s_t^k),\bar{Q}(s_t^k,a_t^k)}$ \label{supp euler: bar v decreases}
            \STATE {\color{gray}{$\#$Update Lower Bound on $V^*$}}
           	 \STATE $b_k^{pv}(s_t^k,a_t^k) = \phi(\hat{p}_{k-1}(\cdot \mid s_t^k,a_t^k),\underline{V}^{k-1}_{t+1})  + \frac{4J+B_p}{n_{k-1}(s_t^k,a_t^k)\vee 1} + \frac{B_v \norm{ \bar{V}^{k-1}_{t+1} - \underline{V}^{k-1}_{t+1}  }_{2,\hat{p}}}{\sqrt{n_{k-1}(s_t^k,a_t^k)\vee 1}}$
    	 \STATE $\underline{Q}(s_t^k,a_t^k) =  \hat{r}_{k-1}(s_t^k,a_t^k) - b_k^r(s_t^k,a_t^k) + \hat{p}_{k-1}(\cdot \mid s_t^k,a)^T \underline{V}^{k-1}_{t+1} - b_k^{pv}(s_t^k,a_t^k)$
        \STATE $\underline{V}^{k}_t(s_t^k) = \max\brc*{\underline{V}^{k-1}_t(s_t^k),\underline{Q}(s_t^k,a_t^k)}$\label{supp euler: underline v increases}
            \STATE {\color{gray}{$\#$Act by the Greedy Policy}}
            \STATE Apply $a_t^k$ and observe $s_{t+1}^k$.
        \ENDFOR
            \STATE Update $\hat{r}_k,\hat{p}_k,n_{k}$ with all experience gathered in episode. 
    \ENDFOR
\end{algorithmic}
\end{algorithm}

\begin{remark}
Note that the algorithm does not explicitly define $\tilde{r}_{k-1}(s,a)$ and $\tilde{p}_{k-1}(\cdot\mid s,a)$. While we can directly set $\tilde{r}_{k-1}(s,a)=\hat{r}_{k-1}(s,a)+b^r_k(s_t^k,a)$, the optimistic transition kernel is only implicitly defined as 
\begin{align*}
    \tilde{p}_{k-1}(\cdot\mid s,a)^T\bar{V}_t^{k-1} = \hat{p}_{k-1}(\cdot\mid s,a)^T\bar{V}_{t+1}^{k-1} + b_k^{pv}(s,a)
\end{align*}
Nevertheless, throughout the proofs we are only interested in the above quantity, and thus, except for some abuse of notation, all of the proofs hold. We use this notation since it is common in previous works (e.g., \citealt{zanette2019tighter,dann2017unifying}) and for brevity.
\end{remark}

In this section, we provide the full proof of Theorem~\ref{theorem: EULER with greedy policies} which establishes a regret bound for EULER with Greedy Policies (EULER-GP). In \cite{zanette2019tighter} the authors prove their results using a general confidence interval, which they refer as \emph{admissible confidence interval}. In Section~\ref{sec:properties of confidence} we state there definition and state some useful properties. In Section~\ref{sec:EULER failure} we define the set of failure events and show that with high-probability the failure events do not occur. The set of failure events includes high-probability events derived using empirical Bernstein inequalities \citep{maurer2009empirical}, as well as high probability events on Decreasing Bounded Process, as we establish in Appendix~\ref{supp: proofs on DBP}. In Section~\ref{sec: supp optimism euler} we analyze the optimism EULER-GP and prove it satisfies the same optimism and pessimism as in \citet{zanette2019tighter}, outside the failure event for all $s,t,k$ $\underline{V}_t^k(s)\leq  V^*_t(s) \leq \bar{V}_t^k(s)$.

In Section \ref{supp: EULER full proof}, using the above, we give the full proof of Theorem \ref{theorem: EULER with greedy policies}. As for the proof of UCRL2-GP, we apply the new suggested regret decomposition, based on Lemma \ref{lemma: model base RL expected value difference}, and use the new results on Decreasing Bounded Processes. In section \ref{sec:helpful lemma proof} we modify some results of \citep{zanette2019tighter} to our setting, and utilize the new results to bound each term in the regret decomposition in section \ref{sec:euler regret decomp bounds}.

\subsection{Properties of Confidence Intervals} \label{sec:properties of confidence}

In this section, we cite the important properties of the confidence intervals of EULER, as was stated in \citep{zanette2019tighter}. We start from their definition of an admissible confidence interval:

\begin{definition}
\label{defn:admissible}
A confidence interval $\phi$ is called admissible for EULER if the following properties hold: 
\begin{enumerate}
\item $\phi(p,V)$ takes the following functional form:
    \begin{align*}
    & \phi(p,V) = \frac{g(p,V)}{\sqrt{n_{k-1}(s,a) \vee 1}} + \frac{j(p,V)}{n_{k-1}(s,a) \vee 1} \enspace,
    \end{align*}
    for some functions $j(p,V) \leq J \in \R$, and 
    \begin{align*}
    \abs*{g(p,V_1) - g(p,  V_2) } \leq B_v \norm*{ V_1 - V_2}_{2,p} 
    \end{align*}
    If the value function is uniform then:
    \begin{equation*}
        g(p,\alpha \ind) = 0, \quad \forall \alpha \in \R.
    \end{equation*} \label{def-prop:admiss definition}
\item With probability at least $1-\delta'$ it holds that:
    \begin{equation*}
    \abs{\br*{\hat p_{k-1}(\cdot\mid s,a) - p(\cdot\mid s,a)}^TV^*_{t+1} } \leq \phi(p(\cdot\mid s,a),V^*_{t+1})
    \end{equation*}
    jointly for all timesteps $t$, episodes $k$, states $s$ and actions $a$. \label{def-prop:admiss phatV concentrated}
\item With probability at least $1-\delta'$ it holds that:
    \begin{align*}
    \abs*{g(\hat{p}_{k-1}(\cdot\mid s,a), V^*_{t+1}) - g(p(\cdot\mid s,a), V^*_{t+1})  } & \leq \frac{B_p}{\sqrt{n_{k-1}(s,a) \vee 1}}
    \end{align*}
    jointly for all episodes $k$, timesteps $t$, states $s$, actions $a$ and some constant $B_p$ that does not depend on $\sqrt{n_{k-1}(s,a) \vee 1}$. \label{def-prop:admiss phat minus p}
\end{enumerate}

\end{definition}

An admissible confidence interval enjoys many properties, which are summarized in the following lemma:
\begin{lemma}
\label{lemma:admiss properties}
If $\phi$ is admissible for EULER, and under the events that properties \ref{def-prop:admiss phatV concentrated},\ref{def-prop:admiss phat minus p} of Definition \ref{defn:admissible} hold, then:
\begin{enumerate}
    \item For any $V\in\R^S$ with $\norm{V}_\infty\leq H$, it holds that $\abs*{g(p,V)}\le B_vH$. \label{lemma-part:g bound}
    \item For any $V\in\R^S$,
    \begin{align*}
        \abs{\phi(\hat{p}_{k-1}(\cdot\mid s,a),V)-\phi(p(\cdot\mid s,a),V^*_{t+1})} \le \frac{B_v\norm{V-V^*_{t+1}}_{2,\hat p}}{\sqrt{n_{k-1}(s,a) \vee 1}} + \frac{B_p+4J}{n_{k-1}(s,a) \vee 1}
    \end{align*}
    \label{lemma-part:p change}
    \item Let $b_k^{pv}$ be the transition bonus, which is defined as
    \begin{align*}
        b_k^{pv}\br*{\hat{p}_{k-1}(\cdot\mid s,a),V_1,V_2}
        = \phi(\hat{p}_{k-1}(\cdot\mid s,a),V_1) + \frac{B_p+4J}{n_{k-1}(s,a) \vee 1} + \frac{B_v\norm{V_2-V_1}_{2,\hat p}}{\sqrt{n_{k-1}(s,a) \vee 1}} \enspace.
    \end{align*}
    For any $V_1,V_2\in\R^S$ such that $V_1\le V^*\le V_2$ pointwise, it holds that
    \begin{align*}
        b_k^{pv}\br*{\hat{p}_{k-1}(\cdot\mid s,a),V_1,V_2}
        \ge \phi(p(\cdot\mid s,a),V^*)\\
        b_k^{pv}\br*{\hat{p}_{k-1}(\cdot\mid s,a),V_2,V_1}
        \ge \phi(p(\cdot\mid s,a),V^*)
    \end{align*} \label{lemma-part:bonus phi bound}
\end{enumerate}

\end{lemma}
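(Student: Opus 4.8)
The three parts all follow by unwinding the definition of an admissible confidence interval together with its two high-probability events, so I would prove them in the stated order. For part~\ref{lemma-part:g bound} I would use the uniform-value clause $g(p,\alpha\ind)=0$ of property~\ref{def-prop:admiss definition} with $\alpha=0$, giving $g(p,0)=0$, and then its Lipschitz clause: $\abs{g(p,V)}=\abs{g(p,V)-g(p,0)}\le B_v\norm{V}_{2,p}$. Since $\norm{V}_\infty\le H$ gives $\norm{V}_{2,p}=\sqrt{\E_p V^2}\le H$, this yields $\abs{g(p,V)}\le B_vH$. This step is immediate.

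For part~\ref{lemma-part:p change} the plan is to split $\phi$ into its $g/\sqrt{n_{k-1}\vee 1}$ and $j/(n_{k-1}\vee 1)$ pieces and bound each difference separately. For the leading piece I would route the triangle inequality through the intermediate $g(\hat p_{k-1},V^*_{t+1})$: the Lipschitz clause (evaluated at the distribution $\hat p_{k-1}$) bounds the change in the value argument by $B_v\norm{V-V^*_{t+1}}_{2,\hat p}$, while admissibility property~\ref{def-prop:admiss phat minus p} bounds the change in the distribution argument at $V^*_{t+1}$ by $B_p/\sqrt{n_{k-1}\vee 1}$. Dividing by $\sqrt{n_{k-1}\vee 1}$ produces exactly the $B_v\norm{V-V^*_{t+1}}_{2,\hat p}/\sqrt{n_{k-1}\vee 1}$ term plus a $B_p/(n_{k-1}\vee 1)$ remainder. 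The trailing piece $\abs{j(\hat p_{k-1},V)-j(p,V^*_{t+1})}/(n_{k-1}\vee 1)$ is controlled by reading the bound $j\le J$ as a two-sided $0\le j\le J$, giving a constant multiple of $J/(n_{k-1}\vee 1)$, and everything collects into the (deliberately loose) $(B_p+4J)/(n_{k-1}\vee 1)$.

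For part~\ref{lemma-part:bonus phi bound} I would apply part~\ref{lemma-part:p change} with $V=V_1$ to obtain $\phi(p,V^*)\le\phi(\hat p_{k-1},V_1)+B_v\norm{V_1-V^*}_{2,\hat p}/\sqrt{n_{k-1}\vee 1}+(B_p+4J)/(n_{k-1}\vee 1)$, and then compare it termwise with $b_k^{pv}(\hat p_{k-1},V_1,V_2)$. The one remaining ingredient is $\norm{V^*-V_1}_{2,\hat p}\le\norm{V_2-V_1}_{2,\hat p}$, which holds because the pointwise sandwich $V_1\le V^*\le V_2$ gives $0\le V^*-V_1\le V_2-V_1$ entrywise and the $\hat p$-weighted $2$-norm is monotone under pointwise domination. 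The second inequality, for $b_k^{pv}(\hat p_{k-1},V_2,V_1)$, is symmetric and uses $\norm{V_2-V^*}_{2,\hat p}\le\norm{V_2-V_1}_{2,\hat p}$ instead.

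I expect the main obstacle to be the constant bookkeeping in part~\ref{lemma-part:p change}: the triangle inequality must be routed through $g(\hat p_{k-1},V^*_{t+1})$ so that the Lipschitz clause and property~\ref{def-prop:admiss phat minus p} each apply to the correct pair of arguments, and one must check that the $B_p/(n_{k-1}\vee 1)$ remainder together with the $j$-difference fit inside the stated $(B_p+4J)/(n_{k-1}\vee 1)$. Once part~\ref{lemma-part:p change} is in place, parts~\ref{lemma-part:g bound} and~\ref{lemma-part:bonus phi bound} are short, the latter reducing entirely to the elementary norm-monotonicity observation.
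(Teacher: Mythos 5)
Your proposal is correct and follows essentially the same route as the paper: for part~3 (the only part the paper proves in full) the paper's argument is exactly yours --- the pointwise sandwich $V_1\le V^*\le V_2$ gives $\norm{V^*-V_1}_{2,\hat p}\le\norm{V_2-V_1}_{2,\hat p}$ and $\norm{V_2-V^*}_{2,\hat p}\le\norm{V_2-V_1}_{2,\hat p}$ by monotonicity of the weighted $2$-norm, after which part~2 is invoked. For parts~1 and~2 the paper simply cites Corollary~1.3 and Lemma~4 of \citep{zanette2019tighter}, and your direct arguments (the Lipschitz clause anchored at $g(p,0\cdot\ind)=0$, and the triangle inequality routed through $g(\hat{p}_{k-1}(\cdot\mid s,a),V^*_{t+1})$ combined with property~\ref{def-prop:admiss phat minus p}) are the standard proofs of those cited facts, so you have merely made the citations self-contained.
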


\begin{proof}
The first property is due to Corollary 1.3 of \citep{zanette2019tighter}, and the second one is Lemma 4 of their paper. The third property is equivalent to Proposition 3 in \citep{zanette2019tighter}, but since we allow general value functions $V_1,V_2$, we write the full proof for completeness.

we start by proving that if $V_1\le V^*\le V_2$, then for any transition probability vector $p$, 
\begin{align} 
    \norm{V_2-V^*}_{2,p}\le \norm{V_2-V_1}_{2,p} \nonumber \\
    \norm{V_1-V^*}_{2,p}\le \norm{V_2-V_1}_{2,p} \label{eq:value norm bound}
\end{align}
To this end, notice that $\forall s$
\begin{align*}
    0\le V_2(s)-V^*(s) \le V_2(s)-V_1(s) \enspace ,
\end{align*}
and since all of the quantities are non-negative, it also holds that  
\begin{align*}
    0\le \br*{V_2(s)-V^*(s)}^2 \le \br*{V_2(s)-V_1(s)}^2 \enspace .
\end{align*}

The inequality holds pointwise, and therefore holds for any linear combination with non-negative constants:
\begin{align*}
    0\le \sum_s p(s)\br*{V_2(s)-V^*(s)}^2 \le \sum_s p(s)\br*{V_2(s)-V_1(s)}^2 \enspace .
\end{align*}
Taking the root of this inequality yields Inequality (\ref{eq:value norm bound}). Substituting in the definition of $b_k^{pv}\br*{\hat{p}_{k-1}(\cdot\mid s,a),V_2,V_1}$ yields:

\begin{align*}
        b_k^{pv}\br*{\hat{p}_{k-1}(\cdot\mid s,a),V_2,V_1}
        &= \phi(\hat{p}_{k-1}(\cdot\mid s,a),V_2) + \frac{B_p+4J}{n_{k-1}(s,a) \vee 1} + \frac{B_v\norm{V_2-V_1}_{2,\hat p}}{\sqrt{n_{k-1}(s,a) \vee 1}}\\
        & \ge \phi(\hat{p}_{k-1}(\cdot\mid s,a),V_2) + \frac{B_p+4J}{n_{k-1}(s,a) \vee 1} + \frac{B_v\norm{V_2-V^*}_{2,\hat p}}{\sqrt{n_{k-1}(s,a) \vee 1}} \\
        & \ge \phi(p(\cdot\mid s,a),V^*)
\end{align*}

The first inequality is due to (\ref{eq:value norm bound}), and the second is due to the second part of the Lemma. The result for $b_k^{pv}\br*{\hat{p}_{k-1}(\cdot\mid s,a),V_1,V_2}$ can be proven similarly, and thus omitted.

\end{proof}

Another property that will be useful throughout the proof is the following upper bound on $b_k^{pv}\br*{\hat{p}_{k-1}(\cdot\mid s,a),V_1,V_2}$
\begin{lemma} \label{lemma: bkv bound}
For any $V_1,V_2$ such that for all $s$, $V_1(s),V_2(s)\in[0,H]$ 
\begin{align*}
    b_k^{pv}\br*{\hat{p}_{k-1}(\cdot\mid s,a),V_2,V_1} \leq \frac{2B_vH+5J+B_p}{\sqrt{n_{k-1}(s,a) \vee 1}},
\end{align*}
\end{lemma}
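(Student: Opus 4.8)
The plan is to simply expand the definition of the transition bonus and bound its three summands one at a time; there is no real difficulty here beyond bookkeeping. Write $m \eqdef n_{k-1}(s,a)\vee 1$ for brevity. By the definition of $b_k^{pv}$ in Lemma~\ref{lemma:admiss properties} (part~\ref{lemma-part:bonus phi bound}), with the value functions entering in the order $(V_2,V_1)$,
\begin{align*}
    b_k^{pv}\br*{\hat{p}_{k-1}(\cdot\mid s,a),V_2,V_1}
    = \phi(\hat{p}_{k-1}(\cdot\mid s,a),V_2) + \frac{B_p+4J}{m} + \frac{B_v\norm{V_1-V_2}_{2,\hat p}}{\sqrt{m}}.
\end{align*}
I would then handle each of the three terms separately.

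For the first term, I would invoke the functional form of an admissible confidence interval (Definition~\ref{defn:admissible}, part~\ref{def-prop:admiss definition}), namely $\phi(\hat p_{k-1},V_2) = g(\hat p_{k-1},V_2)/\sqrt{m} + j(\hat p_{k-1},V_2)/m$. Since $V_2(s)\in[0,H]$ implies $\norm{V_2}_\infty\le H$, part~\ref{lemma-part:g bound} of Lemma~\ref{lemma:admiss properties} gives $\abs{g(\hat p_{k-1},V_2)}\le B_vH$, and $j(\hat p_{k-1},V_2)\le J$ by the definition; hence $\phi(\hat p_{k-1},V_2)\le B_vH/\sqrt{m} + J/m$. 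For the third term, because $V_1(s),V_2(s)\in[0,H]$ we have $\abs{V_1(s)-V_2(s)}\le H$ pointwise, so $\norm{V_1-V_2}_{2,\hat p} = \sqrt{\E_{\hat p}(V_1-V_2)^2}\le H$, giving $B_v\norm{V_1-V_2}_{2,\hat p}/\sqrt{m}\le B_vH/\sqrt m$. Collecting the three bounds yields $b_k^{pv}\br*{\hat{p}_{k-1}(\cdot\mid s,a),V_2,V_1}\le \tfrac{2B_vH}{\sqrt m} + \tfrac{5J+B_p}{m}$.

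The only (very minor) step that is not purely a substitution is converting the $1/m$ terms into $1/\sqrt{m}$ terms: since $m = n_{k-1}(s,a)\vee 1\ge 1$, we have $1/m\le 1/\sqrt m$, so $\tfrac{5J+B_p}{m}\le\tfrac{5J+B_p}{\sqrt m}$, and the two contributions combine to $\tfrac{2B_vH+5J+B_p}{\sqrt m}$, which is exactly the claimed bound. I do not anticipate any genuine obstacle; the main thing to be careful about is keeping the argument order $(V_2,V_1)$ straight in the definition and making sure the hypotheses $\norm{V_2}_\infty\le H$ and $\abs{V_1-V_2}\le H$ used for parts~\ref{lemma-part:g bound} and the norm bound are justified by the assumption $V_1(s),V_2(s)\in[0,H]$.
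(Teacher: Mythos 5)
Your proposal is correct and follows essentially the same route as the paper's proof: expand the definition of $b_k^{pv}$, bound $\phi(\hat p_{k-1},V_2)$ via the admissible form together with $\abs{g}\le B_vH$ and $j\le J$, bound $\norm{V_2-V_1}_{2,\hat p}\le H$, and finish with $n\ge\sqrt n$ for $n\ge1$. No gaps.
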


\begin{proof}
We bound $b_k^{pv}\br*{\hat{p}_{k-1}(\cdot\mid s,a),V_2,V_1}$ as follows:

\begin{align*}
    b_k^{pv}&\br*{\hat{p}_{k-1}(\cdot\mid s,a),V_2,V_1}
         = \phi(\hat{p}_{k-1}(\cdot\mid s,a),V_2) + \frac{B_p+4J}{n_{k-1}(s,a) \vee 1} + \frac{B_v\norm{V_2-V_1}_{2,\hat p}}{\sqrt{n_{k-1}(s,a) \vee 1}}\\
        & \stackrel{(1)}{\le}  \frac{g(p,V)}{\sqrt{n_{k-1}(s,a) \vee 1}} + \frac{j(p,V)}{n_{k-1}(s,a) \vee 1} + \frac{B_p+4J}{n_{k-1}(s,a) \vee 1} + \frac{B_vH}{\sqrt{n_{k-1}(s,a) \vee 1}}\\
        & \stackrel{(2)}{\le}  \frac{B_vH}{\sqrt{n_{k-1}(s,a) \vee 1}} + \frac{J}{n_{k-1}(s,a) \vee 1} + \frac{B_p+4J}{n_{k-1}(s,a) \vee 1} + \frac{B_vH}{\sqrt{n_{k-1}(s,a) \vee 1}}\\
        & \stackrel{(3)}{\le}  \frac{2B_vH+5J+B_p}{\sqrt{n_{k-1}(s,a) \vee 1}}
\end{align*}

In $(1)$, we substituted $\phi$ and bounded $\norm{V_2-V_1}_{2,\hat p}\le H$. $(2)$ is by Lemma \ref{lemma:admiss properties} and Definition \ref{defn:admissible}, and $(3)$ is by noting that  $n\ge \sqrt{n}$ for $n\ge1$.
\end{proof}

\clearpage

We end this section by stating that Bernstein's inequality induces an admissible $\phi$. The proof can be found in \citep{zanette2019tighter}, Proposition 2. 

\begin{lemma} \label{lemma: bernstein admissible}
Bernstein inequality induces an admissible confidence interval with $g(p,V) = \sqrt{2\VAR_{s'\sim p(\cdot\mid s,a)}V\ln\frac{2SAT}{\delta'}}$and $j(p,V)=2H\ln\frac{2SAT}{\delta'}$, or explicitly:
\begin{align*}
    \phi\br*{p(\cdot\mid s,a), V} = \sqrt{\frac{2\VAR_{s'\sim p(\cdot\mid s,a)}V\ln\frac{2SAT}{\delta'}}{n_{k-1}(s,a) \vee 1}} + \frac{2H\ln\frac{2SAT}{\delta'}}{3n_{k-1}(s,a) \vee 1}
\end{align*}
with the constants $J = \frac{2H\ln\frac{2SAT}{\delta'}}{3}=\Olog(H), B_v = \sqrt{2\ln\frac{2SAT}{\delta'}}=\Olog(1)$ and $B_p=H\sqrt{2\ln\frac{2SAT}{\delta'}}=\Olog(H)$. Using lemma \ref{lemma: bkv bound}, it also implies that for any $V_1,V_2$ such that for all $s$, $V_1(s),V_2(s)\in[0,H]$, it holds that  $b_k^{pv}\br*{\hat{p}_{k-1}(\cdot\mid s,a),V_2,V_1} \lesssim \Olog(H)$.
\end{lemma}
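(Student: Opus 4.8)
The plan is to verify that the confidence interval produced by Bernstein's inequality meets the three requirements of Definition~\ref{defn:admissible}, to read off the constants $J$, $B_v$, $B_p$, and then to obtain the closing $\Olog(H)$ bound on $b_k^{pv}$ as an immediate consequence of Lemma~\ref{lemma: bkv bound}. Since the statement is exactly Proposition~2 of \citep{zanette2019tighter}, the heart of the argument is a citation, but I would present the verification of each admissibility property so the claim is self-contained in our notation.

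First I would record the decomposition $\phi(p,V) = g(p,V)/\sqrt{n_{k-1}\vee 1} + j(p,V)/(n_{k-1}\vee 1)$ with $g(p,V)=\sqrt{2\ln\frac{2SAT}{\delta'}}\,\norm{V-\E_p V}_{2,p}$, using $\VAR_{s'\sim p}(V)=\norm{V-\E_p V}_{2,p}^2$, and $j(p,V)=\frac{2H}{3}\ln\frac{2SAT}{\delta'}$. Property~1 is then direct: $j(p,V)\le J\eqdef\frac{2H}{3}\ln\frac{2SAT}{\delta'}$ with equality; $g(p,\alpha\ind)=0$ since a constant has zero variance; and the Lipschitz bound $\abs{g(p,V_1)-g(p,V_2)}\le B_v\norm{V_1-V_2}_{2,p}$ with $B_v=\sqrt{2\ln\frac{2SAT}{\delta'}}$ holds because $V\mapsto\norm{V-\E_p V}_{2,p}$ is the $L^2(p)$ seminorm of the centered vector, and centering is a $1$-Lipschitz contraction: the reverse triangle inequality reduces the claim to $\norm{W-\E_p W}_{2,p}\le\norm{W}_{2,p}$ with $W=V_1-V_2$, which follows from $\VAR_p(W)=\E_p W^2-(\E_p W)^2\le\E_p W^2$.

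Next I would verify Property~2, concentration of $\abs{(\hat p_{k-1}-p)^T V^*_{t+1}}$: since $V^*_{t+1}$ is deterministic and bounded in $[0,H]$, this is precisely the (true-variance) Bernstein inequality for the i.i.d.\ samples collected at $(s,a)$, and a union bound over all $s,a$, all time-steps $t$, and all possible visitation counts $n_{k-1}(s,a)$ supplies the $2SAT$ factor inside the logarithm. Property~3 is the one genuinely technical point: I must show $\abs{g(\hat p_{k-1},V^*_{t+1})-g(p,V^*_{t+1})}\le B_p/\sqrt{n_{k-1}\vee 1}$, i.e.\ that the empirical standard deviation of $V^*$ concentrates around the true one at rate $1/\sqrt{n}$. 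I would establish this either by combining $\abs{\sqrt a-\sqrt b}\le\sqrt{\abs{a-b}}$ with a concentration bound on $\abs{\VAR_{\hat p}(V^*)-\VAR_p(V^*)}$, or directly through the empirical-Bernstein machinery of \citep{maurer2009empirical}, obtaining $B_p=\Olog(H)$ (the stated $H\sqrt{2\ln\frac{2SAT}{\delta'}}$). \textbf{This is the main obstacle}, as it needs the sharpest concentration argument and is where the range $H$ enters $B_p$.

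With $J=\Olog(H)$, $B_v=\Olog(1)$, and $B_p=\Olog(H)$ in hand, the final claim is immediate: for any $V_1,V_2$ with entries in $[0,H]$, Lemma~\ref{lemma: bkv bound} gives $b_k^{pv}\br*{\hat p_{k-1}(\cdot\mid s,a),V_2,V_1}\le\frac{2B_vH+5J+B_p}{\sqrt{n_{k-1}(s,a)\vee 1}}=\frac{\Olog(H)}{\sqrt{n_{k-1}(s,a)\vee 1}}\lesssim\Olog(H)$, which completes the proof.
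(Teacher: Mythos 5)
Your proposal is correct and takes essentially the same route as the paper: the paper's entire proof of this lemma is a citation to Proposition~2 of \citep{zanette2019tighter} together with the appeal to Lemma~\ref{lemma: bkv bound} for the closing $\Olog(H)$ bound, and your verification of the three properties of Definition~\ref{defn:admissible} (seminorm contraction for the Lipschitz bound, Bernstein plus a union bound for Property~2, standard-deviation concentration for Property~3) is exactly an inlining of that cited proposition.

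One caveat on the step you correctly flag as the main obstacle. Of the two routes you offer for Property~3, only the Maurer--Pontil one works. The first route, combining $\abs{\sqrt{a}-\sqrt{b}}\le\sqrt{\abs{a-b}}$ with concentration of $\abs*{\VAR_{\hat{p}_{k-1}}(V^*_{t+1})-\VAR_{p}(V^*_{t+1})}$ at rate $H^2\sqrt{\ln(1/\delta)/n}$, yields only
\begin{align*}
\abs*{g(\hat{p}_{k-1},V^*_{t+1})-g(p,V^*_{t+1})}\lesssim H\br*{\frac{\ln(1/\delta)}{n}}^{1/4},
\end{align*}
an $n^{-1/4}$ rate that cannot be written in the required form $B_p/\sqrt{n_{k-1}(s,a)\vee 1}$ with $B_p$ independent of $n$. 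In a write-up you should therefore commit to the concentration of the empirical standard deviation itself (Theorem~10 of \citealt{maurer2009empirical}, giving $\abs{\hat{\sigma}-\sigma}\lesssim H\sqrt{\ln(1/\delta)/n}$), which is what the cited proposition relies on and what produces $B_p=\Olog(H)$.
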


\subsection{Failure Events} \label{sec:EULER failure}

\subsubsection{Failure Events of EULER}
\label{supp: euler failure}
We start by recalling the failure events as stated in \citep{zanette2019tighter}, Appendix D. These events are high probability bounds that are based on the Empirical Bernstein Inequality \citep{maurer2009empirical} and leads to the bonus terms of the algorithm. Importantly, these events depend on the state-action visitation counter, and, thus, are indifferent to the greedy exploration scheme which we consider.

Define the following failure events.
{\small
\begin{align*}
    &F^r=\brc*{\exists s,a,k:\ \abs*{r(s,a) - \hat{r}_{k-1}(s,a)} \geq \sqrt{\frac{2\hat{\VAR}_{k-1}R(s,a)\ln \frac{4SAT}{\delta'}}{n_{k-1}(s,a)\vee 1}} + \frac{14\ln\frac{4SAT}{\delta'}}{3(n_{k-1}(s,a)\vee 1)} }\\
    &F^{vr}=\brc*{\exists s,a,k:\ \abs*{\sqrt{\hat{\VAR}_{k-1}\ R(s,a)} - \sqrt{\VAR\ R(s,a)}} \geq \sqrt{\frac{4\ln \frac{2SAT}{\delta'}}{n_{k-1}(s,a)\vee 1}}  }\\
    &F^{pv}=\brc*{\exists s,a,t,k:\ \abs*{\br*{\hat{p}_{k-1}(\cdot \mid s,a)-p(\cdot \mid s,a)}^T V_{t+1}^*} \geq \sqrt{\frac{2\VAR_{s'\sim p(\cdot\mid s,a)} V^*_{t+1}\ln \frac{4SAT}{\delta'}}{n_{k-1}(s,a)\vee 1}} + \frac{2H\ln\frac{2SAT}{\delta'}}{3(n_{k-1}(s,a)\vee 1)}}\\
    &F^{pv2}=\brc*{\exists s,a,t,k:\     \abs*{\norm{V_{t}^*}_{2,\hat{p}} - \norm{V_{t}^*}_{2,p} } \geq H\sqrt{\frac{4\ln \frac{2SAT}{\delta'}}{n_{k-1}(s,a)\vee 1}}}\\
    &F^{ps}=\brc*{\exists s,s',a,k:\ |\hat{p}_{k-1}(s'\mid s,a) - p_{k-1}(s'\mid s,a)   | \geq \sqrt{\frac{p(s'\mid s,a)(1-p(s'\mid s,a))\ln \frac{2TS^2A}{\delta'}}{n_{k-1}(s,a)\vee 1}} + \frac{2\ln \frac{2TS^2A}{\delta'}}{3(n_{k-1}(s,a)\vee 1)}}\\
    &F^{pn1}=\brc*{\exists s,a,k:\         \norm{\hat{p}_{k-1}(\cdot\mid s,a) - p(\cdot\mid s,a)  }_1 \geq \sqrt{\frac{4S \ln \frac{3SAT}{\delta'}}{n_{k-1}(s,a)\vee 1}}}\\
    &F^N_k = \brc*{\exists s,a,k: n_{k-1}(s,a) \le \frac{1}{2} \sum_{j<k} w_j(s,a)-H\ln\frac{SAH}{\delta'}}.\\
\end{align*}
}
where $w_j(s,a)\eqdef \sum_{t=1}^H w_{tj}(s,a)$. In \citep{zanette2019tighter}, Appendix D, it is shown these events hold individually with probability at most $\delta'$.

\clearpage
\subsubsection{Failure Events of Decreasing Bounded Processes} \label{sec:EULER failure decreasing}
In this section, we add another failure events to the total set of failure events. This set of failure event is not present in previous analysis of regret in optimistic RL algorithms (e.g., in \citealt{azar2017minimax,dann2017unifying,dann2018policy,zanette2019tighter}).

We define the following failure events.
{\small
\begin{align*}
    &F^{vDP} = \brc*{\exists K\geq 0:     \sum_{k=1}^{K}\sum_{t=1}^{H}\sum_s \bar{V}_t^{k-1}(s) - \E[\bar{V}_t^{k}(s)\mid \mathcal{F}_{k-1}] \geq 9SH^2\ln\frac{3SH}{\delta'} }\\
    &F^{vsDP} = \brc*{\exists K\geq 0: \sum_{k=1}^{K}\sum_{t=1}^{H}\sum_{s} (\bar{V}_{t}^{k-1}(s)  - \underline{V}_t^{k-1}(s))^2 - \E[(\bar{V}_t^{k}(s) - \underline{V}_t^{k}(s))^2\mid \mathcal{F}_{k-1}] \geq 9SH^3\ln\frac{3SH}{\delta'}}
\end{align*}
}

In this section, we prove that both of these failure events occur with low probability $\delta'$.

We start by proving that $\brc*{\bar{V}^k_t(s)}$ is a decreasing processes, independently to the previously defined failure events. We continue and prove that $ \brc*{\bar{V}^k_t(s) - \underline{V}^k_t(s)}^2$ starts as a decreasing process and then becomes and increasing process.

\begin{lemma}\label{lemma: 3 properties of decreasing process of EULER} The following claims hold.
\begin{enumerate}
    \item For every $s,t$, $\brc*{\bar{V}^k_t(s)}_k$ is a decreasing process and is bounded by $[0,H-(t-1)]$.
    \item For every $s,t$, $\brc*{\underline{V}^k_t(s)}_k$ is an increasing process and is bounded by $[0,H-(t-1)]$.
    \item For every $s,t$, $\brc*{\br*{\bar{V}^k_t(s) - \underline{V}^k_t(s)}^2}_k$ starts as a decreasing process bounded by ${[0,(H-(t-1)^2)]}$ and then, possibly, becomes an increasing process. 
\end{enumerate}
\end{lemma}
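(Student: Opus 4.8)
The plan is to prove the first two claims by a straightforward induction on the episode index $k$ that relies only on the algebraic form of the EULER-GP update rules (Lines~\ref{supp euler: bar v decreases} and~\ref{supp euler: underline v increases}); crucially, these properties will hold deterministically, independently of any concentration failure event. The third claim is then a direct consequence of the first two. For claim~1 I would show by induction on $k$ that, for all $s,t$, $\bar{V}^k_t(s)\in[0,H-(t-1)]$ and $\bar{V}^k_t(s)\le\bar{V}^{k-1}_t(s)$. Monotonicity is immediate: a visited state satisfies $\bar{V}^k_t(s_t^k)=\min\{\bar{V}^{k-1}_t(s_t^k),\bar{Q}(s_t^k,a_t^k)\}\le\bar{V}^{k-1}_t(s_t^k)$ while unvisited states are unchanged, and this also gives the upper bound $\bar{V}^k_t(s)\le\bar{V}^0_t(s)=H-(t-1)$. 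For the lower bound I observe that $\bar{Q}(s_t^k,a)=\hat{r}_{k-1}+b^r_k+\hat{p}_{k-1}^T\bar{V}^{k-1}_{t+1}+b^{pv}_k$ is a sum of nonnegative terms: the empirical reward, the nonnegative bonuses $b^r_k,b^{pv}_k$, and $\hat{p}_{k-1}^T\bar{V}^{k-1}_{t+1}\ge0$ by the inductive hypothesis $\bar{V}^{k-1}_{t+1}\ge0$; hence $\bar{V}^k_t(s_t^k)\ge0$.

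Claim~2 follows by the symmetric induction establishing $\underline{V}^k_t(s)\in[0,H-(t-1)]$ and $\underline{V}^k_t(s)\ge\underline{V}^{k-1}_t(s)$. Monotonicity and the lower bound $\underline{V}^k_t(s)\ge\underline{V}^0_t(s)=0$ come straight from the $\max$ update. The only part requiring a short argument is the upper bound, for which I would discard the subtracted bonus terms and use $\hat{r}_{k-1}\le1$ together with the inductive bound $\underline{V}^{k-1}_{t+1}\le H-t$ to obtain $\underline{Q}(s_t^k,a_t^k)\le\hat{r}_{k-1}+\hat{p}_{k-1}^T\underline{V}^{k-1}_{t+1}\le1+(H-t)=H-(t-1)$.

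For claim~3 the key point is that one should \emph{not} attempt to prove $\bar{V}^k_t\ge\underline{V}^k_t$ (optimism), which would require the concentration events to hold; the asserted shape follows from monotonicity alone. Writing $W^k_t(s)=\bar{V}^k_t(s)-\underline{V}^k_t(s)$, claims~1 and~2 show that $W^k_t(s)$ is monotonically decreasing in $k$ (a decreasing minus an increasing sequence) and starts at $W^0_t(s)=H-(t-1)\ge0$. Since $w\mapsto w^2$ is decreasing on $(-\infty,0]$ and increasing on $[0,\infty)$, composing it with the monotone-decreasing $W^k_t(s)$ yields a process $(W^k_t(s))^2$ that decreases while $W^k_t(s)\ge0$---during which $W^k_t(s)\in[0,H-(t-1)]$, so $(W^k_t(s))^2\in[0,(H-(t-1))^2]$---and then, once $W^k_t(s)$ first crosses zero (if it ever does), becomes increasing. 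This is precisely the decreasing-then-possibly-increasing behavior claimed.

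The main obstacle is conceptual rather than computational: one must recognize that claim~3 has to be established without invoking optimism, so that it holds almost surely and the Decreasing Bounded Process results of Appendix~\ref{supp: proofs on DBP} can subsequently be brought to bear on the event $F^{vsDP}$. Once the square is seen to inherit its shape from the zero-crossing of the monotone gap $W^k_t$, everything reduces to the routine inductions above, which use only the nonnegativity of the bonuses and $\hat{r}_{k-1}\in[0,1]$.
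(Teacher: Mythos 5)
Your proposal is correct and follows essentially the same route as the paper: claims 1 and 2 by induction on $k$ using the $\min$/$\max$ update rules together with $\hat{r}_{k-1}\in[0,1]$ and nonnegativity of the bonuses, and claim 3 by observing that the gap $\bar{V}^k_t(s)-\underline{V}^k_t(s)$ is monotonically decreasing so its square decreases until the (possible) zero crossing and increases afterwards. Your explicit remark that optimism must not be invoked—so the properties hold almost surely, independently of the failure events—matches the paper's own remark following the lemma.
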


\begin{proof}
We start by proving the first claim.The following holds. By the initialization of the algorithm $\forall s,t,\ \bar{V}^0_t(s) = H-(t-1)$. By construction of the update rule $\bar{V}^k_t(s)$ can only decrease (see Line~\ref{supp euler: bar v decreases}). 

We now prove that for every $s,t,k,\ \brc*{\bar{V}^k_t(s)}_k$ is bounded from below by $0$. By assumption $r(s,a)\in [0,1]$, and thus $\hat{r}_{k-1}(s,a)\geq 0$ a.s. . By induction, this implies $\bar{V}^{k-1}_{t}\geq 0$. The base case holds by initialization, and the induction step by the fact $\hat{r}_{k-1}\geq 0$ and that the bonus terms are positive.

Proving the second claim is done with similar argument, while using $\hat{r}_{k-1}(s,a)\leq 1$ a.s.. By the update rule (see Line \ref{supp euler: underline v increases}), $\brc*{\underline{V}^k_t(s)}_k$ is an Increasing Bounded Process in $[0,H-(t-1)]$ (similar definition as in \ref{defn: DBP} with opposite inequality). 

To prove the third claim we combine the two claims. Thus, $\brc*{\br*{\bar{V}^k_t(s) - \underline{V}^k_t(s)}^2}_k$ starts as a decreasing process. Then, if the upper and lower value function crosses one another, the process becomes an increasing process. 
\end{proof}
\begin{remark}
Notice that the upper bound and lower bound of the optimal value crosses one another only inside the failure events defined in Section \ref{supp: euler failure}). Yet, the analysis in the following will be indifferent to whether the failure event takes place or not.
\end{remark}

\begin{lemma}\label{lemma: decreasing process value for EULER}
$\Pr\brc*{F^{vDP}} \leq \delta'$.
\end{lemma}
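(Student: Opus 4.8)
The plan is to recognize $F^{vDP}$ as nothing more than the failure event for the sum of $SH$ Decreasing Bounded Processes, one for each state--timestep pair, and then to invoke Lemma~\ref{lemma: sum of decreasing processes} essentially verbatim. The entire analytic content has already been carried out in Theorem~\ref{theorem: regret of decreasing process} and its corollary Lemma~\ref{lemma: sum of decreasing processes}; what remains is only to verify the hypotheses and to match constants.

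First I would fix an arbitrary pair $(s,t)\in\mathcal{S}\times[H]$ and appeal to the first claim of Lemma~\ref{lemma: 3 properties of decreasing process of EULER}, which states that $\brc*{\bar{V}^k_t(s)}_{k\geq 0}$ is monotonically decreasing in $k$ and takes values in $[0,H-(t-1)]\subseteq[0,H]$. Together with the adaptedness of $\bar V_t^k(s)$ to the filtration $\brc*{\F_k}$ (it is a deterministic function of the experience up to the end of episode $k$), this is exactly the statement that $\brc*{\bar{V}^k_t(s),\F_k}_{k\geq0}$ is a Decreasing Bounded Process in the sense of Definition~\ref{defn: DBP}, with common upper bound $C=H$. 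There are precisely $N=SH$ such pairs.

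Next I would observe that the quantity appearing in the definition of $F^{vDP}$ is exactly the $K$-round regret of the sum of these $SH$ processes. Indeed, with the indexing $n\leftrightarrow(s,t)$ and $X_n^k\leftrightarrow\bar V_t^k(s)$, we have
\begin{align*}
    R(K) = \sum_{t=1}^{H}\sum_{s\in\mathcal S}\sum_{k=1}^{K}\br*{\bar{V}_t^{k-1}(s) - \E\brs*{\bar{V}_t^{k}(s)\mid \F_{k-1}}},
\end{align*}
which coincides with the sum inside the braces of $F^{vDP}$. Applying Lemma~\ref{lemma: sum of decreasing processes} with $C=H$, $N=SH$, and confidence parameter $\delta'$ then yields
\begin{align*}
    \Pr\brc*{\exists K>0:\ R(K)\geq 9\cdot H\cdot SH\cdot\ln\tfrac{3SH}{\delta'}} = \Pr\brc*{F^{vDP}}\leq\delta',
\end{align*}
since $9\,H\,(SH)\ln\frac{3SH}{\delta'}=9SH^2\ln\frac{3SH}{\delta'}$ is precisely the threshold in $F^{vDP}$.

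I do not expect any genuine obstacle here: the only points requiring care are (i) confirming that Lemma~\ref{lemma: 3 properties of decreasing process of EULER} holds unconditionally, i.e.\ independently of the Bernstein-type failure events of Section~\ref{supp: euler failure}, which the remark following that lemma already guarantees, so $F^{vDP}$ is a bona fide low-probability event in its own right; and (ii) the harmless discrepancy that the initial values $\bar V_t^0(s)=H-(t-1)$ are not all equal to the common bound $H$. The latter is already handled inside the proof of Lemma~\ref{lemma: sum of decreasing processes}, which begins by replacing any $X_n^0<C$ with $C$, an operation that only increases the regret and hence only enlarges the event whose probability is being bounded. Thus the statement follows directly.
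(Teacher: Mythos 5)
Your proposal is correct and follows essentially the same route as the paper's own proof: identify each $\brc*{\bar{V}_t^k(s)}_{k\geq 0}$ as a Decreasing Bounded Process via Lemma~\ref{lemma: 3 properties of decreasing process of EULER} and then invoke Lemma~\ref{lemma: sum of decreasing processes} with $N=SH$ and $C=H$. The paper states this in two sentences; you merely spell out the details (adaptedness, constant matching, and the initialization $\bar V_t^0(s)=H-(t-1)<H$) that the paper leaves implicit.
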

\begin{proof}
We wish to bound
\begin{align*}
    \Pr\brc{\exists K\geq 0:     \sum_{k=1}^{K}\sum_{t=1}^{H}\sum_s \bar{V}_h^{k-1}(s) - \E[\bar{V}_h^{k}(s)\mid \mathcal{F}_{k-1}] \geq 9SH^2\ln\frac{3SH}{\delta'}}.
\end{align*}
According to Lemma \ref{lemma: 3 properties of decreasing process of EULER}, for every $s,t,$  $\brc*{\bar{V}_t^{k}(s)}_{k\geq 1}$ is a decreasing process. Applying Lemma \ref{lemma: sum of decreasing processes} (Appendix \ref{supp: proofs on DBP}) which bounds the sum of Decreasing Bounded Processes we conclude the proof.
\end{proof}

\begin{lemma}\label{lemma: euler with local models square decreasing process}
$\Pr\brc*{F^{vsDP}} \leq \delta'$.
\end{lemma}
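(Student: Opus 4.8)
The plan is to work around the fact that $\{(\bar V_t^k(s) - \underline V_t^k(s))^2\}_k$ is \emph{not} a Decreasing Bounded Process: by Lemma~\ref{lemma: 3 properties of decreasing process of EULER} it decreases until the upper and lower estimates of $V^*$ cross and only then (possibly) increases, so Lemma~\ref{lemma: sum of decreasing processes} cannot be applied to it directly. I fix a pair $(s,t)$ and abbreviate $Z_k \eqdef \bar V_t^k(s) - \underline V_t^k(s)$. Parts 1 and 2 of Lemma~\ref{lemma: 3 properties of decreasing process of EULER} state that $\bar V_t^k(s)$ is a.s.\ decreasing and $\underline V_t^k(s)$ is a.s.\ increasing, hence $Z_k$ is a.s.\ decreasing (though it may become negative), with the deterministic initial value $Z_0 = H-(t-1) \le H$.

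The key step is to split the square into its positive and negative parts. Writing $Z_k^+ \eqdef \max\brc*{Z_k,0}$ and $Z_k^- \eqdef \max\brc*{-Z_k,0}$, we have the pointwise identity $Z_k^2 = (Z_k^+)^2 + (Z_k^-)^2$. Since $Z_k$ is decreasing, $Z_k^+$ is decreasing while $Z_k^-$ is non-decreasing, both almost surely. Consequently $\{(Z_k^+)^2\}_k$ is a Decreasing Bounded Process taking values in $[0,(H-(t-1))^2]\subseteq[0,H^2]$, whereas $\{(Z_k^-)^2\}_k$ is an almost surely non-decreasing, non-negative process.

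Next I would show that the increasing part can only lower the regret, so it may be discarded. Because $(Z_k^-)^2$ is $\F_k$-measurable and a.s.\ non-decreasing, each increment satisfies
\begin{align*}
(Z_{k-1}^-)^2 - \E\brs*{(Z_k^-)^2 \mid \F_{k-1}} = \E\brs*{(Z_{k-1}^-)^2 - (Z_k^-)^2 \mid \F_{k-1}} \le 0 \enspace.
\end{align*}
Summing over $k$ and over all $(s,t)$, the contribution of the negative parts to the sum appearing in $F^{vsDP}$ is non-positive for \emph{every} $K$. Using $Z_k^2 = (Z_k^+)^2 + (Z_k^-)^2$ and dropping this non-positive term yields the pathwise inequality
\begin{align*}
\sum_{k=1}^{K}\sum_{t=1}^{H}\sum_{s} Z_{k-1}^2 - \E\brs*{Z_k^2 \mid \F_{k-1}} \le \sum_{k=1}^{K}\sum_{t=1}^{H}\sum_{s} (Z_{k-1}^+)^2 - \E\brs*{(Z_k^+)^2 \mid \F_{k-1}} \enspace,
\end{align*}
holding simultaneously for all $K$.

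It then remains to apply Lemma~\ref{lemma: sum of decreasing processes} to the right-hand side, which is a sum of $N = SH$ Decreasing Bounded Processes each bounded by $C = H^2$; this bounds, by $\delta'$, the probability that it ever exceeds $9CN\ln\frac{3N}{\delta'} = 9SH^3\ln\frac{3SH}{\delta'}$. Since the pathwise inequality above shows the event defining $F^{vsDP}$ is contained in this event, we conclude $\Pr\brc*{F^{vsDP}} \le \delta'$. I expect the crux of the argument to be exactly the observation that the process is non-monotone together with the fact that its increasing (negative-part) contribution is non-positive in conditional expectation, which reduces the problem to the purely decreasing positive part; once this is in place, the remaining steps are immediate invocations of results already proved.
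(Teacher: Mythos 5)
Your proof is correct and takes essentially the same route as the paper: your positive-part process $(Z_k^+)^2$ is identical to the paper's truncated process $\br*{\bar{V}_t^k(s)-\underline{V}_t^k(s)}^2\ind\brc*{\bar{V}_t^k(s)>\underline{V}_t^k(s)}$, and both arguments reduce the claim to the regret of this same Decreasing Bounded Process before invoking Lemma~\ref{lemma: sum of decreasing processes} with $N=SH$ and $C=H^2$. The only difference is bookkeeping — you discard the non-decreasing part via the identity $Z_k^2=(Z_k^+)^2+(Z_k^-)^2$, whereas the paper manipulates indicators of the crossing event $\mathbb{A}_k$ — which is an equivalent (and arguably cleaner) presentation of the same argument.
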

\begin{proof}
We wish to bound
{\small
\begin{align*}
    \Pr\brc*{\exists K\geq 0: \sum_{k=1}^{K}\sum_{t=1}^{H}\sum_{s} (\bar{V}_{t}^{k-1}(s)  - \underline{V}_t^{k-1}(s))^2 - \E[(\bar{V}_t^{k}(s) - \underline{V}_t^{k}(s))^2\mid \mathcal{F}_{k-1}] \geq 9SH^3\ln\frac{3SH}{\delta'}}.
\end{align*}
}
Consider a fixed $s,t$. Furthermore, define the following event
\begin{align*}
    \mathbb{A}_{k-1} = \brc*{\bar{V}_{t}^{k-1}(s)>   \underline{V}_t^{k-1}(s)}.
\end{align*}
We have that
\begin{align*}
    &\sum_{k=1}^{K} (\bar{V}_{t}^{k-1}(s)  - \underline{V}_t^{k-1}(s))^2 - \E[(\bar{V}_t^{k}(s) - \underline{V}_t^{k}(s))^2\mid \mathcal{F}_{k-1}]\\
    & \leq \sum_{k=1}^{K} \br*{(\bar{V}_{t}^{k-1}(s)  - \underline{V}_t^{k-1}(s))^2 - \E[(\bar{V}_t^{k}(s) - \underline{V}_t^{k}(s))^2\mid \mathcal{F}_{k-1}]} \ind\brc*{\mathbb{A}_{k-1}}\\
    & = \sum_{k=1}^{K} (\bar{V}_{t}^{k-1}(s)  - \underline{V}_t^{k-1}(s))^2 \ind\brc*{\mathbb{A}_{k-1}}- \E[(\bar{V}_t^{k}(s) - \underline{V}_t^{k}(s))^2 \ind\brc*{\mathbb{A}_{k-1}}\mid \mathcal{F}_{k-1}] \\
    & = \sum_{k=1}^{K} (\bar{V}_{t}^{k-1}(s)  - \underline{V}_t^{k-1}(s))^2 \ind\brc*{\mathbb{A}_{k-1}}- \E[(\bar{V}_t^{k}(s) - \underline{V}_t^{k}(s))^2 \ind\brc*{\mathbb{A}_{k}}\mid \mathcal{F}_{k-1}]\\
    & \quad - \sum_{k=1}^{K} \E[(\bar{V}_t^{k}(s) - \underline{V}_t^{k}(s))^2 \br*{\ind\brc*{\mathbb{A}_{k-1}} - \ind\brc*{\mathbb{A}_{k}}}\mid \mathcal{F}_{k-1}]\\
    & \leq \sum_{k=1}^{K} (\bar{V}_{t}^{k-1}(s)  - \underline{V}_t^{k-1}(s))^2 \ind\brc*{\mathbb{A}_{k-1}}- \E[(\bar{V}_t^{k}(s) - \underline{V}_t^{k}(s))^2 \ind\brc*{\mathbb{A}_{k}}\mid \mathcal{F}_{k-1}].
\end{align*}

The first relation holds by definition, if the event $\mathbb{A}_{k-1}$ is false then the term is negative, since the process becomes increasing, and only decreases the sum. The second relation holds since $\ind\brc*{\mathbb{A}_{k-1}}$ is $\mathcal{F}_{K-1}$ measureable. The forth relation holds since $(\bar{V}_{t}^{k-1}(s)  - \underline{V}_t^{k-1}(s))^2\geq 0$ and $\br*{\ind\brc*{\mathbb{A}_{k-1}} - \ind\brc*{\mathbb{A}_{k}}} \geq 0$. Where the latter holds since $\ind\brc*{\mathbb{A}_{k}} = 1 \rightarrow \ind\brc*{\mathbb{A}_{k-1}} = 1$, i.e.,
\begin{align*}
    \bar{V}_{t}^{k}(s) > \underline{V}_t^{k}(s)\
    \rightarrow  \bar{V}_{t}^{k-1}(s) > \underline{V}_t^{k-1}(s).
\end{align*}
Differently put, if at the $k^{th}$ episode $\bar{V}_{t}^{k}(s)>   \underline{V}_t^{k}(s)$ then it also holds for the $k-1^{th}$ episode, $\bar{V}_{t}^{k-1}(s)>   \underline{V}_t^{k}(s-1)$, as the process $\brc*{\bar{V}_t^{k}(s)}_{k\geq 0}$ is increasing and $\brc*{\underline{V}_t^{k}(s)}_{k\geq 0}$ is decreasing by Lemma~\ref{lemma: 3 properties of decreasing process of EULER}.

Furthermore, by Lemma \ref{lemma: 3 properties of decreasing process of EULER}, $\brc*{(\bar{V}_{t}^{k}(s)  - \underline{V}_t^{k}(s))^2 \ind\brc*{\mathbb{A}_{k}}}_k$ is a Decreasing Bounded Process in $[0,H^2]$. Initially, it decreases since  $\ind\brc*{\mathbb{A}_{k}}_k=1$ and $\brc*{(\bar{V}_{t}^{k}(s)  - \underline{V}_t^{k}(s))^2}$ is initially decreasing. Furthermore, when $\ind\brc*{\mathbb{A}_{k}}=0$ it cannot increase. Lastly, $(\bar{V}_{t}^{0}(s)  - \underline{V}_t^{0}(s))^2 \ind\brc*{\mathbb{A}_{0}} \leq H^2$.

Applying Theorem \ref{theorem: regret of decreasing process} we get that for a fixed $s,t$, with probability $\frac{\delta'}{SH}$
\begin{align*}
    \sum_{k=1}^{K} (\bar{V}_{t}^{k-1}(s)  - \underline{V}_t^{k}(s))^2 &- \E[(\bar{V}_t^{k}(s) - \underline{V}_t^{k-1}(s))^2\mid \mathcal{F}_{k-1}] \geq9SH^3\ln\frac{3SH}{\delta'}.
\end{align*}
By applying Lemma \ref{lemma: sum of decreasing processes} (Appendix \ref{supp: proofs on DBP}), which extends this bound to the sum on $s,t$ we conclude the proof.
\end{proof}

\begin{lemma}\label{lemma: all failure events EULER}
(All Failure Events) If $\delta'= \frac{\delta}{9}$, then $$F \eqdef F^r  \bigcup F^{vr}  \bigcup F^{pr}   \bigcup F^{pv}    \bigcup F^{pv2}   \bigcup F^{ps}   \bigcup F^{pn1}   \bigcup F^{vDP}   \bigcup F^{vsDP} $$
holds with probability at most $\delta$. If the event $F$ does not hold we say the algorithm is outside the failure event.
\end{lemma}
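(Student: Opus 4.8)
The plan is to establish Lemma \ref{lemma: all failure events EULER} by a direct union bound over its nine constituent failure events, once each has been shown individually to occur with probability at most $\delta'$. The events split naturally into two groups. The first group consists of the standard concentration events $F^r, F^{vr}, F^{pv}, F^{pv2}, F^{ps}, F^{pn1}$, together with the visitation-count event $F^N$. These are precisely the events appearing in the analysis of EULER, and the key observation is that they depend only on the empirical reward and transition statistics and on the state-action visitation counter $n_{k-1}(s,a)$ --- quantities that are wholly indifferent to whether the exploration policy is greedy (1-step planning) or obtained by full planning. Consequently their individual $\delta'$-probability bounds carry over verbatim from \citep{zanette2019tighter}, Appendix D, via the empirical Bernstein inequality of \citep{maurer2009empirical} and standard union bounds over $s,a,t$ and the admissible values of $n_{k-1}(s,a)$ and $k$.

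The second group consists of the two novel Decreasing Bounded Process events, $F^{vDP}$ and $F^{vsDP}$, which are absent from prior optimistic RL analyses and constitute the genuinely new ingredient here. These have already been discharged in the immediately preceding lemmas: Lemma \ref{lemma: decreasing process value for EULER} gives $\Pr\brc*{F^{vDP}}\le\delta'$, and Lemma \ref{lemma: euler with local models square decreasing process} gives $\Pr\brc*{F^{vsDP}}\le\delta'$. Both of these rest on the monotonicity structure established in Lemma \ref{lemma: 3 properties of decreasing process of EULER} --- that $\brc*{\bar V_t^k(s)}_k$ is a Decreasing Bounded Process in $[0,H]$, and that $\brc*{(\bar V_t^k(s)-\underline V_t^k(s))^2\ind\brc*{\mathbb{A}_k}}_k$ is likewise a Decreasing Bounded Process after restricting to the indicator $\ind\brc*{\mathbb{A}_k}$ --- combined with the concentration bound for sums of such processes (Lemma \ref{lemma: sum of decreasing processes} of Appendix \ref{supp: proofs on DBP}).

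With all nine events each controlled at level $\delta'$, the final step is to invoke the union bound: setting $\delta'=\delta/9$ yields $\Pr\brc*{F}\le 9\delta'=\delta$, which is exactly the claim.

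The remarkable feature of this statement is that there is essentially no obstacle in the concluding step itself --- it is pure bookkeeping. The only points requiring care are, first, verifying that the union contains exactly nine events so that $\delta'=\delta/9$ is the correct normalization, and, second, arguing cleanly that the greedy-exploration scheme does not disturb the seven standard concentration events (the argument being that these are count-based and hence policy-agnostic). All the substantive difficulty has been pushed into the separate analysis of the Decreasing Bounded Processes in Appendix \ref{supp: proofs on DBP} and into Lemma \ref{lemma: 3 properties of decreasing process of EULER}, so that the present lemma functions merely as the assembly point that packages these guarantees into a single high-probability event.
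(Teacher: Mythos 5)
Your proposal is correct and follows essentially the same route as the paper: the paper's proof is precisely a union bound over the nine events, each of which is controlled at level $\delta'$ by the preceding sections (the seven count-based concentration events imported from the EULER analysis, and the two Decreasing Bounded Process events via Lemmas \ref{lemma: decreasing process value for EULER} and \ref{lemma: euler with local models square decreasing process}). Your additional remarks on the policy-agnostic nature of the standard events and the event count matching $\delta'=\delta/9$ are consistent with the paper's reasoning, just spelled out more explicitly.
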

\begin{proof}
    Applying a union bound on all events, which hold individually with probability at most $\delta'$ yield the result.
\end{proof}

\subsection{EULER with Greedy Policies is Optimistic}\label{sec: supp optimism euler}

Our algorithm modifies the exploration bonus of \citep{zanette2019tighter} by using $\bar{V}_{k-1},\underline{V}_{k-1}$ instead of $\bar{V}_{k},\underline{V}_{k}$, and uses the following bonus (with some abuse of notation):
\begin{align*}
    b_k^{pv}(s,a) = b_k^{pv}\br*{\hat{p}_{k-1}(\cdot\mid s,a), \bar{V}_{k-1},\underline{V}_{k-1}}\enspace .
\end{align*}

We now show that the modified bonus retains the optimism of the algorithm:
\begin{lemma}\label{lemma: optimism EULER}
    Outside the failure event of the estimation (see Lemma \ref{lemma: all failure events EULER}), if the confidence interval is admissible, then the relation
    \begin{align*}
        \underline{V}_t^{k-1}\leq V_t^*\leq \bar{V}^{k-1}_{t}
    \end{align*}
    holds pointwise for all timesteps $t$ and episodes $k$.
\end{lemma}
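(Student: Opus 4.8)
The plan is to prove the two-sided bound jointly by induction on the episode index, establishing $\underline{V}^m_t(s) \le V^*_t(s) \le \bar{V}^m_t(s)$ for all $s,t$ and all $m\ge 0$; the statement then follows by reading it at $m=k-1$. The base case $m=0$ is immediate from the initialization: since rewards lie in $[0,1]$ and there are $H-(t-1)$ stages remaining from time $t$, we have $0 \le V^*_t(s) \le H-(t-1)$, which is exactly $\underline{V}^0_t(s)=0 \le V^*_t(s) \le H-(t-1)=\bar{V}^0_t(s)$. The reason the two bounds must be carried together, rather than proved separately, is that the transition bonus $b^{pv}_k$ used by the algorithm depends on \emph{both} $\bar{V}^{k-1}$ and $\underline{V}^{k-1}$, and the bonus bound in Lemma~\ref{lemma:admiss properties}(\ref{lemma-part:bonus phi bound}) can only be invoked once we know both $\underline{V}^{k-1}_{t+1}\le V^*_{t+1}$ and $V^*_{t+1}\le\bar{V}^{k-1}_{t+1}$, i.e.\ that the pair straddles $V^*$.

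For the induction step, I would fix $m=k$ and assume $\underline{V}^{k-1}_t(s)\le V^*_t(s)\le \bar{V}^{k-1}_t(s)$ for all $s,t$. Each entry $\bar{V}^k_t(s)$ either equals $\bar{V}^{k-1}_t(s)$ (for states not visited at stage $t$, or whenever the $\min$ in the update selects the previous value), in which case the hypothesis gives the bound at once, or it equals $\bar{Q}(s,a^k_t)=\max_a\bar{Q}(s,a)$. In the latter case it suffices to show $\bar{Q}(s,a)\ge r(s,a)+p(\cdot\mid s,a)^TV^*_{t+1}$ for each fixed $a$ and then maximize. Writing out $\bar{Q}(s,a)-[r(s,a)+p(\cdot\mid s,a)^TV^*_{t+1}]$, I would split it into a reward part $[\hat r_{k-1}-r]+b^r_k\ge 0$ (nonnegative outside $F^r$, since $b^r_k$ is exactly the empirical-Bernstein reward bonus), and a transition part which I decompose as $\hat p_{k-1}(\cdot\mid s,a)^T(\bar V^{k-1}_{t+1}-V^*_{t+1})+(\hat p_{k-1}-p)(\cdot\mid s,a)^TV^*_{t+1}+b^{pv}_k(s,a)$. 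Here the first summand is $\ge 0$ by the hypothesis and $\hat p_{k-1}\ge 0$, the second is $\ge -\phi(p(\cdot\mid s,a),V^*_{t+1})$ outside $F^{pv}$ by Definition~\ref{defn:admissible}(\ref{def-prop:admiss phatV concentrated}), and the bonus obeys $b^{pv}_k(s,a)\ge\phi(p(\cdot\mid s,a),V^*_{t+1})$ by Lemma~\ref{lemma:admiss properties}(\ref{lemma-part:bonus phi bound}) applied with $V_1=\underline V^{k-1}_{t+1}\le V^*\le \bar V^{k-1}_{t+1}=V_2$. These last two cancel, giving the desired inequality.

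The lower bound is the mirror image. Each $\underline{V}^k_t(s)$ is either $\underline{V}^{k-1}_t(s)$ (covered by the hypothesis) or $\underline{Q}(s,a^k_t)$, and for the latter I would show $\underline{Q}(s,a^k_t)\le r(s,a^k_t)+p(\cdot\mid s,a^k_t)^TV^*_{t+1}\le V^*_t(s)$, where the final inequality uses only the Bellman optimality equation $V^*_t(s)\ge r(s,a)+p(\cdot\mid s,a)^TV^*_{t+1}$ valid for \emph{every} action (so the greedy selection of $a^k_t$ plays no role here). The reward part gives $[\hat r_{k-1}-r]-b^r_k\le 0$ outside $F^r$; for the transition part, $\hat p_{k-1}(\cdot\mid s,a^k_t)^T(\underline V^{k-1}_{t+1}-V^*_{t+1})\le 0$ by the hypothesis, while $(\hat p_{k-1}-p)(\cdot\mid s,a^k_t)^TV^*_{t+1}-b^{pv}_k(s,a^k_t)\le \phi(p,V^*_{t+1})-\phi(p,V^*_{t+1})=0$, again by Definition~\ref{defn:admissible}(\ref{def-prop:admiss phatV concentrated}) and Lemma~\ref{lemma:admiss properties}(\ref{lemma-part:bonus phi bound}).

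I expect the main obstacle to be bookkeeping rather than any single estimate: one must verify that the orientation of the arguments $(V_1,V_2)$ fed into $b^{pv}_k$ matches the correct branch of Lemma~\ref{lemma:admiss properties}(\ref{lemma-part:bonus phi bound}) in each of the upper and lower cases, and ensure the induction is genuinely joint so that the straddling hypothesis $\underline V^{k-1}\le V^*\le\bar V^{k-1}$ needed to lower-bound the bonus is available at the time it is used. Once that structure is fixed, everything reduces to sign arguments combined with the failure-event concentration bounds of Lemma~\ref{lemma: all failure events EULER}.
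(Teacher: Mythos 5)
Your proof is correct and takes essentially the same route as the paper's: induction over episodes, splitting the update into the case where the $\min$/$\max$ retains the previous value and the case where $\bar{Q}$/$\underline{Q}$ is selected, and then cancelling the concentration term of Definition~\ref{defn:admissible} (property~\ref{def-prop:admiss phatV concentrated}) against the bonus lower bound of Lemma~\ref{lemma:admiss properties} (property~\ref{lemma-part:bonus phi bound}). The only differences are cosmetic and in your favor: you bound $\bar{Q}(s,a)$ for \emph{every} action and then maximize (rather than inserting the optimal action $a_t^*$ and using the greedy property as the paper does), and you make explicit the joint straddling induction $\underline{V}^{k-1}\le V^*\le \bar{V}^{k-1}$ that the paper's one-sided write-up uses implicitly when invoking the bonus bound, with the lower-bound half written out rather than omitted as ``almost identical.''
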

\begin{proof}
We follow the proof of \citep{zanette2019tighter}, Proposition 4, and prove by induction. We first prove that for all $k$, $V_t^*\leq \bar{V}^{k}_{t}$.

The claim trivially holds for $k=0$, due to the initialization of the value. Suppose that the result holds for any state $s$ and timestep $t$ in the $k-1^{\textrm{th}}$ episode. If 
\begin{align*}
	\hat{r}_{k-1}(s_t^k,a_t^k) + b_{k-1}^r(s_t^k,a_t^k) &+ \hat{p}_{k-1}(\cdot\mid s,a_t^k)^T \bar{V}^{k-1}_{t+1} + b_{k-1}^{pv}(s_t^k,a_t^k) 
	\ge \bar{V}_t^{k-1}(s_t)\enspace,
\end{align*}
then by the induction's assumption we are done. Otherwise, denote the optimal action in the real MDP at state $s_t^k$ by $a_t^*$. The value is updated as follows:

\begin{align*}
	\bar{V}_k(s_t^k)&=\hat{r}_{k-1}(s_t^k,a_t^k) + b_{k-1}^r(s_t^k,a_t^k) + \hat{p}_{k-1}(\cdot\mid s,a_t^k)^T \bar{V}^{k-1}_{t+1} + b_{k-1}^{pv}(s_t^k,a_t^k) \\
	&\ge \hat{r}_{k-1}(s_t^ka_t^*) + b_{k-1}^r(s_t^k,a_t^*) + \hat{p}_{k-1}(\cdot\mid s,a_t^*)^T \bar{V}^{k-1}_{t+1} + b_{k-1}^{pv}(s_t^k,a_t^*) \\
	&\ge r(s_t^k,a_t^*) + \hat{p}_{k-1}(\cdot\mid s,a_t^*)^T \bar{V}^{k-1}_{t+1} + b_{k-1}^{pv}(s_t^k,a_t^*)
\end{align*}

The first inequality is since $a_t^k$ is the action that maximizes the greedy value and the second inequality is due to the optimism of the reward when the reward bonus is added, outside the failure events (Lemma \ref{lemma: all failure events EULER}). Next, using the inductive hypothesis ($V^*_{t+1}\le \bar{V}^{k-1}_{t+1}$ element-wise), we get 

\begin{align*}
	\bar{V}^k_t(s_t^k)
	\ge r(s_t^k,a_t^*) + \hat{p}_{k-1}(\cdot\mid s,a_t^*)^T V^*_{t+1} + b_{k-1}^{pv}(s_t^k,a_t^*)
\end{align*}

We now apply Lemma \ref{lemma:admiss properties}, which implies that 
\begin{align*}
b_{k-1}^{pv}(s_t^k,a_t^*)\ge \phi(p(\cdot\mid s_t^k,a_t^*),V^*) \enspace,
\end{align*}
and thus 
\begin{align*}
	\bar{V}^k_t(s_t^k)
	\ge r(s_t^k,a_t^*) + \hat{p}_{k-1}(\cdot\mid s,a_t^*)^T V^*_{t+1} + \phi(p(\cdot\mid s_t^k,a_t^*),v^*)
\end{align*}

Finally, since $\phi$ is admissible, we get the desired result from property (\ref{def-prop:admiss phatV concentrated}) of Definition \ref{defn:admissible}:

\begin{align*}
	\bar{V}_t^k(s_t^k)
	\ge r(s_t^k,a_t^*) + p(\cdot\mid s,a_t^*)^T V^*_{t+1} = V^*_{t+1}(s_t^k)
\end{align*}

The proof for $\underline{V}_t^{k-1}\le V^*_t$ is almost identical, and thus omitted from this paper.
\end{proof}

\clearpage

\subsection{Proof of Theorem \ref{theorem: EULER with greedy policies}}\label{supp: EULER full proof}

\begin{proof}

Throughout the proof, we assume that we are outside the failure events that were defined in Section \ref{sec:EULER failure}, which happens with probability of at least $1-\delta$ (Lemma \ref{lemma: all failure events EULER}). Specifically, it implies that the value function is optimistic, namely $V_1^*(s) \le V_1^k(s)$ (Lemma \ref{lemma: optimism EULER}), and we can bound the regret by,
\begin{align*}
    \mathrm{Regret}(K) &= \sum_{k=1}^K V_1^*(s_1^k) - V_1^{\pi_k}(s_1^k) \leq \sum_{k=1}^K \bar{V}_1^{k-1}(s_1^k) - V_1^{\pi_k}(s_1^k).
\end{align*}
Next, by applying Lemma \ref{lemma: model base RL expected value difference} the following bound holds,
\begin{align}
    &\leq \underset{(A)}{\underbrace{\sum_{k=1}^K\sum_{t=1}^{H} \E[ \bar{V}_t^{k-1}(s_t^k) - \bar{V}_t^{k}(s_t^k)\mid \F_{k-1}]}} \nonumber\\
    &+\underset{(B)}{\underbrace{\sum_{k=1}^K\sum_{t=1}^{H} \E \brs*{ (\tilde{r}_{k-1} - r)(s_t^k,a_t^k)+(\tilde{p}_{k-1} - p)(\cdot\mid s^k_t,a_t^k)^T \bar{V}_{t+1}^{k-1} \mid \F_{k-1}  }}}. 
\end{align}

The regret is thus upper bounded by two terms. The first term $(A)$ also appears in the analysis of RTDP (Theorem \ref{theorem: regret rtdp}). Specifically, by Lemma \ref{lemma: regret to SH decreasing processes}  (Appendix \ref{sec: supp general lemmas}), we can express this term as a sum of $SH$ Decreasing Bounded Process in $[0,H]$:

\begin{align*}
    (A) = \sum_s \sum_{t=1}^H\sum_{k=1}^K \bar{V}^{k-1}_t(s)- \E[\bar{V}^{k}_t(s) \mid \mathcal{F}_{k-1}].
\end{align*}

{\bf Bounding (A).} Outside failure event $F^{vDP}$, this term is bounded by $9SH^2\ln \frac{3SH}{\delta'}$. Thus,
\begin{align*}
    (A) \lesssim \Olog(SH^2)
\end{align*}

{\bf Bounding (B).}  The term (B) is almost the same term that is bounded in \citep{zanette2019tighter}, and its presence is common in recent literature on exploration in RL (e.g., \citealt{dann2017unifying,dann2018policy,zanette2019tighter}). The only difference between (B) and the term bounded in \citep{zanette2019tighter} is the presence of $\bar{V}^{k-1}$, the value  \emph{before} the update, instead of $\bar{V}^k$, the value after applying the update rule. This is since existing algorithms perform planning from the end of an episode and backwards. Thus, when choosing an action at some timestep $t$, these algorithms have access to the updated value of step $t+1$. In contrast, we avoid the planning stage, and therefore must rely on the previous value $\bar{V}^{k-1}$. We will later see that we can overcome this without affecting the regret.

Next, let $L_k$ be the set of 'good' state-action pairs, which is defined in Definition \ref{defn:good set} and analyzed thoroughly in Appendix \ref{sec: Lk definition}. We now decompose the sum of $(B)$ to state actions in and outside $L_k$. We also note that except for the $s_t^k,a_t^k$, all of the variables in $(B)$ are $\F_{k-1}$ measurable, which allows us to explicitly write the conditional expectation using $w_{tk}(s,a)$, as follows:

\clearpage
{\small
\begin{align*}
    (B) & = \sum_{k=1}^K\sum_{t=1}^{H} w_{tk}(s,a) \br*{ (\tilde{r}_{k-1} - r)(s,a)+(\tilde{p}_{k-1} - p)(\cdot\mid s,a)^T \bar{V}_{t+1}^{k-1} } \\
    & = \sum_{k=1}^K\sum_{t=1}^{H} \sum_{(s,a)\in L_k} w_{tk}(s,a) \br*{ (\tilde{r}_{k-1} - r)(s,a)+(\tilde{p}_{k-1} - p)(\cdot\mid s,a)^T \bar{V}_{t+1}^{k-1} } \\
    &\quad + \sum_{k=1}^K\sum_{t=1}^{H} \sum_{(s,a)\notin L_k} w_{tk}(s,a) \br*{ (\tilde{r}_{k-1} - r)(s,a)+(\tilde{p}_{k-1} - p)(\cdot\mid s,a)^T \bar{V}_{t+1}^{k-1} } \\
    &\stackrel{(1)}{\lesssim} \sum_{k=1}^K\sum_{t=1}^{H} \sum_{(s,a)\in L_k} w_{tk}(s,a) \br*{ (\tilde{r}_{k-1} - r)(s,a)+(\tilde{p}_{k-1} - p)(\cdot\mid s,a)^T \bar{V}_{t+1}^{k-1} } \\
    &\quad + H\sum_{k=1}^K\sum_{t=1}^{H} \sum_{(s,a)\notin L_k} w_{tk}(s,a)\\
    &\stackrel{(2)}{\lesssim} \sum_{k=1}^K\sum_{t=1}^{H} \sum_{(s,a)\in L_k} w_{tk}(s,a) \br*{ (\tilde{r}_{k-1} - r)(s,a)+(\tilde{p}_{k-1} - p)(\cdot\mid s,a)^T \bar{V}_{t+1}^{k-1} } + \Olog(SAH^2)
\end{align*}
}
For $(1)$, we bound $(\tilde{r}_{k-1} - r)(s,a) \le \tilde{r}_{k-1}(s,a)$ and $(\tilde{p}_{k-1} - p)(\cdot\mid s,a)^T \bar{V}_{t+1}^{k-1} \le \tilde{p}_{k-1}(\cdot\mid s,a)^T \bar{V}_{t+1}^{k-1}$. The estimated reward is in $[0,1]$ and it's bonus is at most $\Olog(1)$, and thus the optimistic reward $\tilde{r}_{k-1}(s,a)$ is $\Olog(1)$. Due to Lemma \ref{lemma: optimism EULER}, the optimistic value $\bar{V}_{t+1}^{k-1}\le H$, and thus $\hat{p}_{k-1}(\cdot\mid s,a)^T\le H$. The transition bonus is $b_k^{pv}(s,a)=\Olog(H)$ due to Lemma \ref{lemma: bernstein admissible}, which implies that the second term is $\Olog(H)$. Together, both terms are $\Olog(H)$. $(2)$ is due to Lemma \ref{lemma: visitation outside good set} of Appendix \ref{sec: Lk definition}.

As in \citep{zanette2019tighter}, we continue the decomposition of the remaining term by adding and subtracting  cross-terms that depends on $\hat{p}_{k-1}(\cdot\mid s,a)$

\begin{align}
    &\sum_{k=1}^K\sum_{t=1}^{H} \sum_{(s,a)\in L_k} w_{tk}(s,a) \br*{ (\tilde{r}_{k-1} - r)(s,a)+(\tilde{p}_{k-1} - p)(\cdot\mid s,a)^T \bar{V}_{t+1}^{k-1} } \nonumber\\
    & = \sum_{k=1}^K\sum_{t=1}^H  \sum_{(s,a)\in L_k} \underbrace{w_{tk}(s,a)(\tilde{r}_{k-1} - r)(s,a)}_{(1)}+\underbrace{w_{tk}(s,a)(\tilde{p}_{k-1} - \hat{p}_{k-1})^T(\cdot \mid s,a)\bar{V}_{t+1}^{k-1}}_{(2)} \nonumber\\
    &+\underbrace{w_{tk}(s,a)(\hat{p}_{k-1} - p)(\cdot\mid s,a)^TV_{t+1}^*}_{(3)} +\underbrace{w_{tk}(s,a)(\hat{p}_{k-1}-p)(\cdot\mid s,a)^T(\bar{V}_{t+1}^{k-1} - V_{t+1}^*)}_{(4)}. \label{eq: regert of euler paper}
\end{align}

Recall that we use Bernstein's inequality as the admissible confidence interval. Thus, by Lemma~\ref{lemma: central lemma of EULER analysis}, it holds that $J = \frac{2H\ln\frac{2SAT}{\delta'}}{3}=\Olog(H), B_v = \sqrt{2\ln\frac{2SAT}{\delta'}}=\Olog(1)$ and $B_p=H\sqrt{2\ln\frac{2SAT}{\delta'}}=\Olog(H)$. Also let $F,D$ be the constants defined in Lemma~\ref{lemma: next episode value function differences}, and specifically

\begin{align*}
    &F \eqdef 2L+LH\sqrt{S}+6B_vH =\Olog\br*{H\sqrt{S}} \\
    &D\eqdef 18J+4B_p+4L^2 =\Olog\br*{H}
\end{align*}

Substituting these constants, terms $(1)-(4)$ are bounded in Lemmas \ref{lemma: optimistic reward bound}, \ref{lemma: optimistic transition bound}, \ref{lemma: empirical transition bound} and \ref{lemma: lower order} respectively as follows:
\begin{align*}
    &(1) \lesssim \sqrt{\mathbb{C}_r^*SAT}+SA  \\
    &(2) \lesssim \min\brc*{\sqrt{\mathbb{C}^*SAT} + S\sqrt{S}AH^2+SAH^{\frac{5}{2}}, \sqrt{\mathbb{C}^{\pi}SAT} + S\sqrt{S}AH^{\frac{5}{2}}}  \\
    &(3) \lesssim \min\brc*{\sqrt{\mathbb{C}^*SAT}+SAH, \sqrt{\mathbb{C}^{\pi}SAT}+ S\sqrt{S}AH^{\frac{5}{2}}}  \\
    &(4) \lesssim S^2AH^2 +S\sqrt{S}AH^{\frac{5}{2}} 
\end{align*}

Thus, term $(B)$ of the regret is bounded by:
\begin{align*}
    (B) & \lesssim  \min\brc*{\sqrt{\mathbb{C}^*SAT} , \sqrt{\mathbb{C}^{\pi}SAT}} + \sqrt{\mathbb{C}_r^*SAT} +  S^2AH^2 +S\sqrt{S}AH^{\frac{5}{2}} \\
    & \lesssim \sqrt{\min\brc*{\mathbb{C}^*+\mathbb{C}_r^*,\mathbb{C}^{\pi}+\mathbb{C}_r^*}SAT}+  S\sqrt{S}AH^2\br*{\sqrt{S}+\sqrt{H}}
\end{align*}

Finally, using Lemma \ref{lemma: bernstein problem dependent}, we can bound this term by 
\begin{align*}
    (B) \lesssim \sqrt{\min\brc*{\mathbb{Q}^*,\frac{\mathcal{G}^2}{H}}SAT}+  S\sqrt{S}AH^2\br*{\sqrt{S}+\sqrt{H}}\enspace
\end{align*}
and noticing that $(A)$ is negligible compared to $(B)$, we get
\begin{align*}
    \mathrm{Regret}(K)  \lesssim \sqrt{\min\brc*{\mathbb{Q}^*,\frac{\mathcal{G}^2}{H}}SAT}+  S\sqrt{S}AH^2\br*{\sqrt{S}+\sqrt{H}}\enspace
\end{align*}

To derive the problem independent bound, we use the fact that the maximal reward in a trajectory is bounded by $\mathcal{G}\le H$, which yields
\begin{align*}
    \mathrm{Regret}(K)  \lesssim \sqrt{HSAT}+  S\sqrt{S}AH^2\br*{\sqrt{S}+\sqrt{H}}\enspace
\end{align*}

\end{proof}

\subsection{Cumulative Squared Value Difference} \label{sec:helpful lemma proof}

In this section, we aim to bound the expected cumulative squared value difference. Specifically, we are interested in a bound for the following quantities:

\begin{align} 
    &\sum_{k=1}^{K}\sum_{t=1}^{H}\sum_{s,a}w_{tk}(s,a)p(\cdot \mid s,a)^T\br*{\bar{V}^{k-1}_{t+1} - \underline{V}^{k-1}_{t+1}}^2. \label{eq:cumulative squared value differences1} \\
    &\sum_{k=1}^{K}\sum_{t=1}^{H}\sum_{s,a}w_{tk}(s,a)p(\cdot \mid s,a)^T\br*{\bar{V}^{k-1}_{t+1} - V^{\pi_k}_{t+1}}^2\label{eq:cumulative squared value differences2}
\end{align}

The first quantity allows us to replace Lemma 12 \citep{zanette2019tighter}, and the second allows us to prove Lemma 14 of the same paper. Together, they enable us to use the same analysis of \citep{zanette2019tighter}. The final results are stated in Lemmas \ref{lemma: central lemma of EULER analysis} and \ref{lemma: squared difference upper to real} by the end of this section. Most of the section will focus on bounding \eqref{eq:cumulative squared value differences1}, which requires a much more delicate analysis than the bound of \citep{zanette2019tighter}.

In order to bound \eqref{eq:cumulative squared value differences1}, we start by bounding $\br*{\bar{V}^{k-1}_{t+1} - \underline{V}^{k-1}_{t+1}}^2$ in the following lemma, which corresponds to Proposition 5 of \citep{zanette2019tighter}:

\begin{lemma}\label{lemma: next episode value function differences}
Outside the failure event, the following holds:
\begin{align*}
    \bar{V}^{k}_t(s_t^k) - \underline{V}^{k}_t(s_t^k) \le \E[\bar{V}^{k-1}_{t+1}(s_{t+1}^k) - \underline{V}^{k-1}_{t+1}(s_{t+1}^k)\mid \mathcal{F}_{k-1},s_t^k] + \min\brc*{\frac{F+D}{\sqrt{n_{k-1}(s_t^k,a_t^k) \vee 1 }},H},
\end{align*}
where $F  \eqdef 2L+LH\sqrt{S}+6B_vH,\ D\eqdef 18J+4B_p+4L^2$, the constants $J,B_v,B_p$ are defined in Definition \ref{defn:admissible} and $L \eqdef 2\sqrt{\ln\frac{4SAT}{\delta'}}$.

\end{lemma}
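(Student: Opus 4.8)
The plan is to mirror Proposition~5 of \citep{zanette2019tighter}, adapting it to EULER-GP's use of the previous-episode values $\bar V^{k-1}_{t+1}$ and $\underline V^{k-1}_{t+1}$ (rather than within-episode backward-planning values) in both the greedy action and the bonuses. First I would strip off the clipping in the update rules: from Lines~\ref{supp euler: bar v decreases} and~\ref{supp euler: underline v increases}, $\bar V^k_t(s_t^k)\le \bar Q(s_t^k,a_t^k)$ and $\underline V^k_t(s_t^k)\ge \underline Q(s_t^k,a_t^k)$, and since both $Q$-values are evaluated at the same greedy action $a_t^k$ and the same empirical kernel $\hat p_{k-1}(\cdot\mid s_t^k,a_t^k)$, subtracting gives $\bar V^k_t(s_t^k)-\underline V^k_t(s_t^k)\le \bar Q(s_t^k,a_t^k)-\underline Q(s_t^k,a_t^k)$. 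Expanding, the empirical rewards cancel and what remains is $2b_k^r(s_t^k,a_t^k)+\hat p_{k-1}(\cdot\mid s_t^k,a_t^k)^T(\bar V^{k-1}_{t+1}-\underline V^{k-1}_{t+1})+b_k^{pv}(\bar V)+b_k^{pv}(\underline V)$.

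Next I would replace the empirical inner product by the true expectation. Splitting $\hat p_{k-1}=p+(\hat p_{k-1}-p)$ and applying H\"older's inequality, $(\hat p_{k-1}-p)^T(\bar V^{k-1}_{t+1}-\underline V^{k-1}_{t+1})\le H\,\norm{\hat p_{k-1}-p}_1$, which outside failure event $F^{pn1}$ is at most $H\sqrt{4S\ln(3SAT/\delta')/(n_{k-1}\vee1)}\le LH\sqrt S/\sqrt{n_{k-1}\vee1}$. Because $a_t^k$ and the values $\bar V^{k-1},\underline V^{k-1}$ are all $\mathcal F_{k-1}$-measurable, the surviving term $p(\cdot\mid s_t^k,a_t^k)^T(\bar V^{k-1}_{t+1}-\underline V^{k-1}_{t+1})$ is exactly $\E[\bar V^{k-1}_{t+1}(s^k_{t+1})-\underline V^{k-1}_{t+1}(s^k_{t+1})\mid \mathcal F_{k-1},s_t^k]$, the term on the right-hand side of the claim.

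Then I would bound the three bonuses by a constant over $\sqrt{n_{k-1}\vee1}$. With $\hat\VAR R\le 1/4$, the reward bonus obeys $2b_k^r\le 2L/\sqrt{n_{k-1}\vee1}+\tfrac{28}{3}\ln(4SAT/\delta')/(n_{k-1}\vee1)$, whose lower-order part fits inside $4L^2/\sqrt{n_{k-1}\vee1}$ via $n\ge\sqrt n$. Since $\bar V^{k-1}_{t+1},\underline V^{k-1}_{t+1}\in[0,H]$ (Lemma~\ref{lemma: 3 properties of decreasing process of EULER}), Lemma~\ref{lemma: bkv bound} gives $b_k^{pv}(\bar V),b_k^{pv}(\underline V)\le(2B_vH+5J+B_p)/\sqrt{n_{k-1}\vee1}$ each. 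Collecting coefficients, the $1/\sqrt n$ part is $2L+LH\sqrt S+4B_vH\le F$ and the $1/n$ part is $4L^2+10J+2B_p\le D$, so together they are at most $(F+D)/\sqrt{n_{k-1}\vee1}$. The $\min\{\cdot,H\}$ then follows since $\bar V^k_t-\underline V^k_t\le H$ while the subtracted conditional expectation is nonnegative outside the failure event (Lemma~\ref{lemma: optimism EULER}), forcing the left-hand side minus the expectation to be at most $H$ as well.

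The main obstacle I anticipate is the constant bookkeeping in the third step: checking that the empirical-Bernstein reward and transition bonuses, together with the $\ell_1$ kernel-concentration radius, genuinely fit under the prescribed $F=2L+LH\sqrt S+6B_vH$ and $D=18J+4B_p+4L^2$, and that the $\ln(4SAT/\delta')$ inside $L$ dominates the $\ln(3SAT/\delta')$ of the kernel radius. The structural steps (peeling the clip and converting the empirical transition to the true one) are routine given Lemmas~\ref{lemma: bkv bound} and~\ref{lemma: optimism EULER} and the stated failure events.
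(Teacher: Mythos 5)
Your proposal is correct and follows essentially the same route as the paper's proof: peel the clipping off the two update rules, subtract the $Q$-values so the empirical rewards cancel, write $\hat p_{k-1}=p+(\hat p_{k-1}-p)$ and bound the error term by H\"older's inequality outside $F^{pn1}$ (giving the $LH\sqrt{S}$ piece), recognize $p(\cdot\mid s_t^k,a_t^k)^T(\bar V^{k-1}_{t+1}-\underline V^{k-1}_{t+1})$ as the conditional expectation, and obtain the $\min\{\cdot,H\}$ exactly as the paper does, from $\bar V^k_t(s_t^k)-\underline V^k_t(s_t^k)\le H$ together with nonnegativity of the expectation via Lemma \ref{lemma: optimism EULER}. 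The only deviation is in bounding the two transition bonuses: the paper routes them through property (\ref{lemma-part:p change}) of Lemma \ref{lemma:admiss properties} and then expands $\phi(p(\cdot\mid s_t^k,a_t^k),V^*_{t+1})$, producing the coefficients $6B_vH$ over $\sqrt{n}$ and $18J+4B_p$ over $n$ appearing in $F$ and $D$, whereas you invoke Lemma \ref{lemma: bkv bound} directly to get $2(2B_vH+5J+B_p)/\sqrt{n_{k-1}(s_t^k,a_t^k)\vee 1}$; this coarser-but-simpler bound, combined with your reward-bonus accounting ($2L$ over $\sqrt{n}$ and at most $4L^2$ in lower order), gives a total coefficient dominated by the stated $F+D$, so your bookkeeping closes the proof.
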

\begin{proof}
The proof is similar to \citep{zanette2019tighter} Proposition 5, which is presented here with the needed adaptation.

If the state $s_t^k$ is encountered in the $k^{th}$ episode at the $t^{th}$ time-step, then $\bar{V}_t^k(s_t^k),\underline{V}^k_t(s_t^k)$ will be updated according to the update rule. Thus,
{\small
\begin{align*}
    &\bar{V}_t^{k}(s_t^k) \leq \hat{r}_{k-1}(s_t^k,a_t^k) + b^r_{k-1}(s_t^k,a_t^k) + \hat{p}_{k-1}(\cdot\mid s_t^k,a_t^k)^T \bar{V}^{k-1}_{t+1} + b_k^{pv}(\hat{p}_{k-1}(\cdot\mid s_t^k,a_t^k),\bar{V}^{k-1}_{t+1},\underline{V}^{k-1}_{t+1})\\
    &\underline{V}_t^{k}(s_t^k) \geq \hat{r}_{k-1}(s_t^k,a_t^k) - b^r_{k-1}(s_t^k,a_t^k) + \hat{p}_{k-1}(\cdot\mid s_t^k,a_t^k)^T \underline{V}^{k-1}_{t+1} - b_k^{pv}(\hat{p}_{k-1}(\cdot\mid s_t^k,a_t^k),\underline{V}^{k-1}_{t+1},\bar{V}^{k-1}_{t+1}).
\end{align*}
}

Subtraction yields:
\begin{align*}
    \bar{V}_t^{k}(s_t^k) - \underline{V}_t^{k}(s_t^k) & \leq    2b^r_{k-1}(s_t^k,a_t^k) + \hat{p}_{k-1}(\cdot\mid s_t^k,a_t^k)^T (\bar{V}^{k-1}_{t+1} - \underline{V}^{k-1}_{t+1}) \\ 
    &\quad +b_k^{pv}(\hat{p}_{k-1}(\cdot\mid s_t^k,a_t^k),\bar{V}^{k-1}_{t+1},\underline{V}^{k-1}_{t+1}) 
    + b_k^{pv}(\hat{p}_{k-1}(\cdot\mid s_t^k,a_t^k),\underline{V}^{k-1}_{t+1},\bar{V}^{k-1}_{t+1}).
\end{align*}

Next, we substitute the definition of the confidence bonus, which yields
\begin{align*}
     \bar{V}_t^{k}(s_t^k) - \underline{V}_t^{k}(s_t^k) \leq & 2b^r_{k-1}(s_t^k,a_t^k) + \hat{p}_{k-1}(\cdot\mid s_t^k,a)^T (\bar{V}^{k-1}_{t+1} - \underline{V}^{k-1}_{t+1})\\
     &+ \phi(\hat{p}_{k-1}(\cdot\mid s_t^k,a_t^k),\bar{V}_{t+1}^{k-1}) + \frac{4J+B_p}{n_{k-1}(s_t^k,a_t^k)\vee 1} + \frac{B_v \norm{\bar{V}^{k-1}_{t+1} - \underline{V}^{k-1}_{t+1}}_{2,\hat{p}}}{\sqrt{n_{k-1}(s_t^k,a_t^k)\vee 1}}\\
     &+ \phi(\hat{p}_{k-1}(\cdot\mid s_t^k,a_t^k),\underline{V}_{t+1}^{k-1}) + \frac{4J+B_p}{n_{k-1}(s_t^k,a_t^k)\vee 1} + \frac{B_v \norm{\bar{V}^{k-1}_{t+1}- \underline{V}^{k-1}_{t+1}}_{2,\hat{p}}}{\sqrt{n_{k-1}(s_t^k,a_t^k)\vee 1}}.     
\end{align*}

Using Lemma \ref{lemma:admiss properties}, property (\ref{lemma-part:p change}), and Inequalities (\ref{eq:value norm bound}), we get,
\begin{align*}
    \bar{V}_t^{k}(s_t^k) - \underline{V}_t^{k}(s_t^k) &\leq  2b^r_{k-1}(s_t^k,a_t^k) + \hat{p}_{k-1}(\cdot\mid s_t^k,a)^T (\bar{V}^{k-1}_{t+1} - \underline{V}^{k-1}_{t+1})\\
    &\quad +2\phi(p(\cdot\mid s_t^k,a_t^k),\underline{V}_{t+1}^*)
     +4\br*{\frac{4J+B_p}{n_{k-1}(s_t^k,a_t^k)\vee 1} \!+ \!\frac{B_v \norm{\bar{V}^{k-1}_{t+1} - \underline{V}^{k-1}_{t+1}}_{2,\hat{p}} }{\sqrt{n_{k-1}(s_t^k,a_t^k)\vee 1}} }\\
     & = 2b^r_{k-1}(s_t^k,a_t^k) + p(\cdot\mid s_t^k,a)^T (\bar{V}^{k-1}_{t+1} - \underline{V}^{k-1}_{t+1})\\
    &\quad + (\hat{p}_{k-1}(\cdot\mid s_t^k,a_t^k) - p(\cdot \mid s_t^k,a_t^k))^T (\bar{V}^{k-1}_{t+1} - \underline{V}^{k-1}_{t+1})\\
    &\quad + 2 \frac{g(p(\cdot \mid s_t^k,a_t^k),V^*_{t+1})}{\sqrt{n_{k-1}(s_t^k,a_t^k)\vee 1}} + 2\frac{J}{n_{k-1}(s_t^k,a_t^k)\vee 1} \\ 
    &\quad + 4\br*{\frac{4J+B_p}{n_{k-1}(s_t^k,a_t^k)\vee 1} + \frac{B_v \norm{\bar{V}^{k-1}_{t+1} - \underline{V}^{k-1}_{t+1}}_{2,\hat{p}} }{\sqrt{n_{k-1}(s_t^k,a_t^k)\vee 1}} },
\end{align*}
where in the last relation we substituted $\phi$ and added and subtracted $p(\cdot\mid s_t^k,a)^T (\bar{V}^{k-1}_{t+1} - \underline{V}^{k-1}_{t+1})$.

By Lemma \ref{lemma: 3 properties of decreasing process of EULER}, we know that $\bar{V}^{k-1}_{t+1},\underline{V}^{k-1}_{t+1}\in\brs*{0,H}$. Thus, $\norm*{\bar{V}^{k-1}_{t+1} - \underline{V}^{k-1}_{t+1}}\le H$, which also implies that $\norm{\bar{V}^{k-1}_{t+1} - \underline{V}^{k-1}_{t+1}}_{2,\hat{p}}\le H$. In addition, using H{\"o}lder's inequality, and outside failure event $F^{pn1}$, we can bound 

\begin{align*}
    (\hat{p}_{k-1}(\cdot\mid s_t^k,a_t^k) &- p(\cdot \mid s_t^k,a_t^k))^T (\bar{V}^{k-1}_{t+1} - \underline{V}^{k-1}_{t+1}) \\
    & \le \norm*{{\hat{p}_{k-1}(\cdot\mid s_t^k,a_t^k) - p(\cdot \mid s_t^k,a_t^k)}}_1 \norm*{\bar{V}^{k-1}_{t+1} - \underline{V}^{k-1}_{t+1}}_\infty \\
    & \le H\sqrt{\frac{4S \ln \frac{2SAT}{\delta'}}{n_{k-1}(s,a)\vee 1}}
    = L H \sqrt{\frac{S}{n_{k-1}(s_t^k,a_t^k)\vee 1}}
\end{align*}

Substituting both of these bounds, we get

\begin{align}
    \bar{V}_t^{k}(s_t^k) - \underline{V}_t^{k}(s_t^k) 
    &\leq 2b^r_{k-1}(s_t^k,a_t^k) + p(\cdot\mid s_t^k,a)^T (\bar{V}^{k-1}_{t+1} - \underline{V}^{k-1}_{t+1})
    + L H \sqrt{\frac{S}{n_{k-1}(s_t^k,a_t^k)\vee 1}}\nonumber \\
    &\quad + 2 \frac{g(p(\cdot \mid s_t^k,a_t^k),V^*_{t+1})}{\sqrt{n_{k-1}(s_t^k,a_t^k)\vee 1}} + 2\frac{J}{n_{k-1}(s_t^k,a_t^k)\vee 1} \nonumber\\ 
    &\quad + 4\br*{\frac{4J+B_p}{n_{k-1}(s_t^k,a_t^k)\vee 1} + \frac{B_v H }{\sqrt{n_{k-1}(s_t^k,a_t^k)\vee 1}} } \label{eq:value diff partial bound}
\end{align}

We now bound the remaining terms. First, using Lemma \ref{lemma:admiss properties}, property (\ref{lemma-part:g bound}), we can bound  $g(p,V^*_{t+1}) \leq B_v H$. Second, notice that 
\begin{align*}
    p(\cdot \mid s_t^k,a))^T (\bar{V}^{k-1}_{t+1} - \underline{V}^{k-1}_{t+1}) 
    &= \sum_{s_{t+1}^k} p(s_{t+1}^k \mid s_t^k,a))^T (\bar{V}^{k-1}_{t+1}(s_{t+1}^k) - \underline{V}^{k-1}_{t+1}(s_{t+1}^k)) \\
    & = \E\brs*{\bar{V}^{k-1}_{t+1}(s_{t+1}^k) - \underline{V}^{k-1}_{t+1}(s_{t+1}^k)\mid \mathcal{F}_{k-1},s_t^k}.
\end{align*}

Finally, outside failure event $F^r$, the reward bonus can be bounded by

\begin{align*}
b^r_k(s_t^k,a_t^k) &= \sqrt{\frac{2\hat{\VAR}(R(s_t^k,a_t^k))\ln\frac{4SAT}{\delta'}}{n_{k-1}(s_t^k,a_t^k)\vee 1}} + \frac{14\ln \frac{4SAT}{\delta'}}{3n_{k-1}(s_t^k,a_t^k)\vee 1} \\
& \le \frac{L}{\sqrt{n_{k-1}(s_t^k,a_t^k)\vee 1}} + \frac{2L^2}{n_{k-1}(s_t^k,a_t^k)\vee 1}
\end{align*}
where we used the fact that for variables in $\brs*{0,1}$, $\hat{\VAR}(R(s_t^k,a_t^k))\le1$.

Putting it all together in (\ref{eq:value diff partial bound}), we get

\begin{align*}
    \bar{V}_t^{k}(s_t^k) - \underline{V}_t^{k}(s_t^k) 
    & \le \E\brs*{\bar{V}^{k-1}_{t+1}(s_{t+1}^k) - \underline{V}^{k-1}_{t+1}(s_{t+1}^k)\mid \mathcal{F}_{k-1},s_t^k} \\
    & \quad+ \frac{2L+LH\sqrt{S}+6B_vH}{\sqrt{n_{k-1}(s_t^k,a_t^k)\vee 1}} + \frac{18J+4B_p+4L^2}{n_{k-1}(s_t^k,a_t^k)\vee 1} \\
    & \le \E\brs*{\bar{V}^{k-1}_{t+1}(s_{t+1}^k) - \underline{V}^{k-1}_{t+1}(s_{t+1}^k)\mid \mathcal{F}_{k-1},s_t^k} 
    + \frac{F+D}{n_{k-1}(s_t^k,a_t^k)\vee 1}
\end{align*}

where in the last relation we substituted $F$ and $D$ and used $\sqrt{n}\leq n$ for $n\geq 1$.

To finalize the proof note that outside the failure event, $\underline{V}^{k}_t(s)\le\bar{V}^{k}_t(s)$, and the first term is therefore positive. combined with $\bar{V}^{k}_t(s_t^k) - \underline{V}^{k}_t(s_t^k) \le H$ yields
{\small
\begin{align*}
    \bar{V}^{k}_t(s_t^k) - \underline{V}^{k}_t(s_t^k) 
    &\le \min\brc*{\E\brs*{\bar{V}^{k-1}_{t+1}(s_{t+1}^k) - \underline{V}^{k-1}_{t+1}(s_{t+1}^k)\mid \mathcal{F}_{k-1},s_t^k} + \frac{F+D}{\sqrt{n_{k-1}(s_t^k,a_t^k)\vee 1}},H} \\
    & \le \E\brs*{\bar{V}^{k-1}_{t+1}(s_{t+1}^k) - \underline{V}^{k-1}_{t+1}(s_{t+1}^k)\mid \mathcal{F}_{k-1},s_t^k} + \min\brc*{\frac{F+D}{\sqrt{n_{k-1}(s_t^k,a_t^k)\vee 1}},H} .
\end{align*}
}
\end{proof}

\begin{remark}\label{remark: similar dependence as in original EULER}
See that the first term in Equation Lemma \ref{lemma: next episode value function differences} does not appear in the analysis of \citep{zanette2019tighter}. Its existence is a direct consequence of the fact we use 1-step greedy policies, and not solving the approximate model at the beginning of each episode. Remarkably, we will later see that this term is comparable to the other previously existing terms. 
\end{remark}

\clearpage

We now move to bounding the expected squared value difference, as formally stated in as follows:

\begin{lemma}\label{lemma: v k-1 square diff bound}
Let $\Delta^{k}_{t} \eqdef \br*{\bar{V}^{k-1}_t(s^k_t) - \underline{V}^{k-1}_t(s^k_t)} - \br*{\bar{V}^{k}_t(s^k_t) - \underline{V}^{k}_t(s^k_t)}$. Then, outside the failure event,
\begin{align*}
    &\E[\br*{\bar{V}^{k-1}_t(s_t^k) - \underline{V}^{k-1}_t(s_t^k)}^2\mid \mathcal{F}_{k-1}] \\
    &\le 2H\sum_{t'=t}^{H-1} \E\brs*{{\Delta_{t'}^k(s^k_{t'})}^2  + \min\brc*{\frac{(F+D)^2}{n_{k-1}(s_{t'}^k,a_{t'}^k)\vee 1},H^2}\mid \mathcal{F}_{k-1}},
\end{align*}
where $F+D$ is defined in Lemma \ref{lemma: next episode value function differences}.
\end{lemma}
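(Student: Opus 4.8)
The plan is to unroll the one-step bound of Lemma~\ref{lemma: next episode value function differences} along the trajectory generated by $\pi_k$, obtaining a \emph{linear} bound on the value gap, and then convert it into the desired \emph{quadratic} bound by Cauchy--Schwarz and Jensen. Write $W^m_{t'}(s)\eqdef \bar{V}^m_{t'}(s)-\underline{V}^m_{t'}(s)$ for the value gap and $b_{t'}\eqdef \min\brc*{\frac{F+D}{\sqrt{n_{k-1}(s^k_{t'},a^k_{t'})\vee 1}},\,H}$ for the per-step bonus, so that the quantity inside the $\min$ in the statement is exactly $b_{t'}^2=\min\brc*{\frac{(F+D)^2}{n_{k-1}(s^k_{t'},a^k_{t'})\vee 1},H^2}$. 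Outside the failure event the gap is nonnegative (by the optimism/pessimism of Lemma~\ref{lemma: optimism EULER}), and combining the definition $\Delta^k_{t'}=W^{k-1}_{t'}-W^{k}_{t'}$ with Lemma~\ref{lemma: next episode value function differences} gives
\begin{align*}
W^{k-1}_{t'}(s^k_{t'}) = \Delta^k_{t'}(s^k_{t'}) + W^{k}_{t'}(s^k_{t'}) \le \Delta^k_{t'}(s^k_{t'}) + b_{t'} + \E\brs*{W^{k-1}_{t'+1}(s^k_{t'+1})\mid \F_{k-1},s^k_{t'}}.
\end{align*}

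First I would iterate this inequality from $t'=t$ upwards. Since $\pi_k$ is $\F_{k-1}$-measurable and the dynamics are Markov, the nested conditional expectations collapse by the tower property exactly as in~\eqref{eq:value transition update}, and because $\bar{V}^{k-1}_{H+1}-\underline{V}^{k-1}_{H+1}=0$ the recursion terminates at the last layer. This yields the linear bound
\begin{align*}
W^{k-1}_{t}(s^k_{t}) \le \sum_{t'=t}^{H}\E\brs*{\Delta^k_{t'}(s^k_{t'}) + b_{t'}\mid \F_{k-1},s^k_{t}}.
\end{align*}

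Next I would square both sides. Applying Cauchy--Schwarz (equivalently the power-mean inequality) across the at most $H$ layers gives $\br*{\sum_{t'} x_{t'}}^2 \le H\sum_{t'} x_{t'}^2$, and Jensen's inequality moves the square inside each conditional expectation, $\br*{\E\brs*{\cdot}}^2 \le \E\brs*{(\cdot)^2}$. Then bounding $\br*{\Delta^k_{t'}+b_{t'}}^2 \le 2\br*{\Delta^k_{t'}}^2 + 2b_{t'}^2$ produces the factor $2H$ together with the two summands of the claim. Taking $\E\brs*{\cdot\mid\F_{k-1}}$ of both sides and invoking the tower property once more removes the conditioning on $s^k_t$ and gives the stated inequality.

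The main obstacle is the bookkeeping in the unrolling step: I must carefully track which episode index ($k$ versus $k-1$) each value carries so that the recursion of Lemma~\ref{lemma: next episode value function differences} applies at every layer, justify collapsing the tower of conditional expectations over the random trajectory, and handle the terminal layer correctly (the precise upper index of the sum and the vanishing of $\bar{V}^{k-1}_{H+1}-\underline{V}^{k-1}_{H+1}$). Everything else---Cauchy--Schwarz, Jensen, and the elementary $(a+b)^2\le 2a^2+2b^2$---is routine and does not affect the $2H$ factor.
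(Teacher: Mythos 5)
Your proposal is correct and follows essentially the same route as the paper's proof: unroll the one-step bound of Lemma~\ref{lemma: next episode value function differences} along the trajectory via the tower property to get the linear bound $\bar{V}^{k-1}_t(s_t^k)-\underline{V}^{k-1}_t(s_t^k)\le \sum_{t'=t}^{H}\E[\Delta^k_{t'}(s^k_{t'})+Y_{k-1}(s^k_{t'},a^k_{t'})\mid\F_{k-1},s^k_t]$, then square and apply Cauchy--Schwarz across the $H$ layers, Jensen to move the square inside the conditional expectations, and $(a+b)^2\le 2a^2+2b^2$, finishing with one more application of the tower property. Your explicit note that the gap is nonnegative outside the failure event (needed to legitimately square the inequality) is a detail the paper leaves implicit, but otherwise the two arguments coincide.
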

\begin{proof}

Before proving the bound, we express the bound of Lemma \ref{lemma: next episode value function differences} in terms of $\Delta_t^k$. For brevity, we denote $Y_{k}(s,a)\eqdef\min\brc*{\frac{F+D}{\sqrt{n_{k}(s,a)}\vee 1},H}$, which is $\F_{k}$ measurable.

Assume the state $s_t^k$ is visited in the $k^{th}$ episode at the $t^{th}$ time-step. Then, by Lemma \ref{lemma: next episode value function differences},
\begin{align}
    \bar{V}^{k-1}_t(s_t^k) &- \underline{V}^{k-1}_6(s_t^k)
    =\Delta^{k}_{t} + \bar{V}^{k}_t(s^k_t) - \underline{V}^{k}_t(s^k_t) \nonumber\\
    &\le  \Delta^{k}_{t} + Y_{k-1}(s_t^k,a_t^k) + \underset{(*)}{\underbrace{\E\brs*{\bar{V}^{k-1}_{t+1}(s_{t+1}^k) - \underline{V}^{k-1}_{t+1}(s_{t+1}^k)\mid \mathcal{F}_{k-1},s_t^k}}}\enspace. \label{eq: lemma k-1 differences 1st equation}
\end{align}

Next, by substituting Equation (\ref{eq: lemma k-1 differences 1st equation}) in (*), we get
\begin{align*}
    (*) &\le  \E\brs*{\Delta^{k}_{t+1} + Y_{k-1}(s_{t+1}^k,a_{t+1}^k) + \E\brs*{ \bar{V}^{k-1}_{t+2}(s_{t+2}^k) - \underline{V}^{k-1}_{t+2}(s_{t+2}^k) \mid \mathcal{F}_{k-1},s_{t+1}^k} \mid  \mathcal{F}_{k-1},s_t^k}\\
    &=  \E\brs*{\Delta^{k}_{t+1} + Y_{k-1}(s_{t+1}^k,a_{t+1}^k) + \bar{V}^{k-1}_{t+2}(s_{t+2}^k) - \underline{V}^{k-1}_{t+2}(s_{t+2}^k) \mid  \mathcal{F}_{k-1},s_t^k},
\end{align*}

where the last relation holds by the tower property.

Iterating using this technique until $t=H$, and using $\bar{V}_{H+1}=\underline{V}_{H+1}=0$, we conclude the following bound:
\begin{align*}
    \bar{V}^{k-1}_t(s_t^k) - \underline{V}^{k-1}_t(s_t^k)
    \le \sum_{t'=t}^{H} \E[\Delta_{t'}^k(s^k_{t'})  + Y_{k-1}(s_{t'}^k,a_{t'}^k)\mid \mathcal{F}_{k-1},s_t^k], 
\end{align*}

With this bound at hand, we can derive the desired result as follows:
\begin{align*}
    \br*{\bar{V}^{k-1}_t(s_t^k) - \underline{V}^{k-1}_t(s_t^k)}^2
    &\le \br*{\sum_{t'=t}^{H} \E\brs*{\Delta_{t'}^k(s^k_{t'})  + Y_{k-1}(s_{t'}^k,a_{t'}^k)\mid \mathcal{F}_{k-1},s_t^k}}^2 \\
    & \stackrel{(CS)}{\le} (H-t+1)\sum_{t'=t}^{H} \E\brs*{\Delta_{t'}^k(s^k_{t'})  + Y_{k-1}(s_{t'}^k,a_{t'}^k)\mid \mathcal{F}_{k-1},s_t^k}^2 \\
    & \stackrel{(J)}{\le} (H-t+1)\sum_{t'=t}^{H} \E\brs*{\br*{\Delta_{t'}^k(s^k_{t'})  + 
    Y_{k-1}(s_{t'}^k,a_{t'}^k)}^2\mid \mathcal{F}_{k-1},s_t^k} \\
    & \stackrel{(CS)}{\le} 2(H-t+1)\sum_{t'=t}^{H} \E\brs*{\Delta_{t'}^k(s^k_{t'})^2  + Y_{k-1}^2(s_{t'}^k,a_{t'}^k)\mid \mathcal{F}_{k-1},s_t^k} \\
    & \le 2H \sum_{t'=t}^{H} \E\brs*{\Delta_{t'}^k(s^k_{t'})^2  + Y_{k-1}^2(s_{t'}^k,a_{t'}^k)\mid \mathcal{F}_{k-1},s_t^k} 
\end{align*}

$(CS)$ denotes Cauchy-Schwarz inequality, and specifically $\br*{\sum_{i=1}^n a_i}^2 \le n\sum_{i=1}^n a_i^2$. $(J)$ is Jensen's inequality. Taking the conditional expectation $\E\brs{\cdot \mid \F_{k-1}}$, using the tower property and substituting $Y_{k}(s,a)$ gives the desired result.
\end{proof}

After bounding the expected squared value difference in a single state, we now move to bounding its sum over different time-steps and episode. The main difficulty is in bounding the sum over the first term, which we bound in the following lemma:

\begin{lemma} \label{lemma: value differences square sum on episodes}
Outside the failure event,
\begin{align*}
    &\sum_{k=1}^K\sum_{t=1}^{H}\sum_{t'=t}^{H-1}\E[{\Delta_{t'}^k(s^k_{t'})}^2 \mid \mathcal{F}_{k-1}]\leq \Olog(SH^4),
\end{align*}
where $\Delta_t^k(s_t^k)$ is defined in Lemma \ref{lemma: v k-1 square diff bound}.
\end{lemma}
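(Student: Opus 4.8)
The plan is to exploit the failure-event bound on the \emph{squared} gap process, $F^{vsDP}$, together with the monotonicity and nonnegativity of the gap between the upper and lower value functions. Write $W_t^k(s)\eqdef\bar V_t^k(s)-\underline V_t^k(s)$ for the gap, so that $\Delta_t^k(s_t^k)=W_t^{k-1}(s_t^k)-W_t^k(s_t^k)$. First I would record the two structural facts about $W$ that hold \emph{outside the failure event}: by construction $\bar V_t^k(s)$ is nonincreasing and $\underline V_t^k(s)$ is nondecreasing in $k$ (Lines \ref{supp euler: bar v decreases}, \ref{supp euler: underline v increases} and Lemma \ref{lemma: 3 properties of decreasing process of EULER}), so $W_t^k(s)\le W_t^{k-1}(s)$ in every realization; and by optimism (Lemma \ref{lemma: optimism EULER}), $\underline V_t^k\le V_t^*\le\bar V_t^k$, so $W_t^k(s)\ge0$. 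Hence $0\le W_t^k(s)\le W_t^{k-1}(s)\le H$ pointwise.

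The crux is a pointwise comparison of the squared decrement with the decrement of the square. Since at time-step $t$ of episode $k$ only the visited state $s_t^k$ is updated, $\Delta_t^k(s)\eqdef W_t^{k-1}(s)-W_t^k(s)$ vanishes for every $s\neq s_t^k$, and therefore $\Delta_t^k(s_t^k)^2=\sum_s\Delta_t^k(s)^2$. For each $s$, because $0\le W_t^k(s)\le W_t^{k-1}(s)$ we have $W_t^{k-1}(s)-W_t^k(s)\le W_t^{k-1}(s)+W_t^k(s)$, and multiplying by the nonnegative factor $W_t^{k-1}(s)-W_t^k(s)$ gives the key inequality $\Delta_t^k(s)^2\le (W_t^{k-1}(s))^2-(W_t^k(s))^2$. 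Summing over $s$ and taking $\E[\cdot\mid\F_{k-1}]$ (using that $W_t^{k-1}(s)$ is $\F_{k-1}$-measurable) yields
\begin{align*}
    \E[\Delta_t^k(s_t^k)^2\mid\F_{k-1}] \le \sum_s\br*{(W_t^{k-1}(s))^2-\E[(W_t^k(s))^2\mid\F_{k-1}]}.
\end{align*}

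Next I would reduce the triple sum to a double sum by the elementary counting identity $\sum_{t=1}^H\sum_{t'=t}^{H-1}A_{t'}=\sum_{t'=1}^{H-1}t'A_{t'}\le H\sum_{t'=1}^{H}A_{t'}$, valid for the nonnegative terms $A_{t'}=\E[\Delta_{t'}^k(s_{t'}^k)^2\mid\F_{k-1}]$. Combining this with the displayed bound, the quantity of interest is at most $H\sum_{k=1}^K\sum_{t=1}^H\sum_s\br*{(W_t^{k-1}(s))^2-\E[(W_t^k(s))^2\mid\F_{k-1}]}$. This last double sum is precisely the process controlled by the failure event $F^{vsDP}$ of Section \ref{sec:EULER failure decreasing}: outside the failure event it is bounded by $9SH^3\ln\frac{3SH}{\delta'}=\Olog(SH^3)$ (Lemma \ref{lemma: euler with local models square decreasing process}). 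Multiplying by the factor $H$ gives the claimed $\Olog(SH^4)$.

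The main obstacle, and the only genuinely nontrivial step, is the pointwise inequality $\Delta_t^k(s)^2\le(W_t^{k-1}(s))^2-(W_t^k(s))^2$; everything hinges on the gap process $W_t^k(s)$ being nonnegative and nonincreasing outside the failure event, which is exactly what lets us convert the square of a decrement into a telescoping decrement of a square and thereby invoke $F^{vsDP}$. One has to be careful that this comparison holds a.s.\ (not merely in expectation), that the decrement is supported on the single visited state so that $\Delta_t^k(s_t^k)^2=\sum_s\Delta_t^k(s)^2$, and that $F^{vsDP}$ is indeed among the events comprising the failure event (Lemma \ref{lemma: all failure events EULER}).
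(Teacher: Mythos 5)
Your proposal is correct and follows essentially the same route as the paper's proof: reduce the triple sum to $H\sum_{k,t}\E[\Delta_t^k(s_t^k)^2\mid\F_{k-1}]$ by counting, bound the squared decrement by the telescoping difference of squares using $0\le W_t^k\le W_t^{k-1}$, rewrite the visited-state sum as a sum over all states, and invoke the failure event $F^{vsDP}$ to get $\Olog(SH^3)\cdot H=\Olog(SH^4)$. The only cosmetic differences are that you derive the key inequality via the factorization $(a-b)^2\le(a-b)(a+b)$ where the paper expands the square and bounds the cross term, and you pass to the sum over all states via the support of $\Delta_t^k(\cdot)$ where the paper uses an indicator-function manipulation.
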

\begin{proof}
We have that
\begin{align*}
    \sum_{t=1}^{H}\sum_{t'=t}^{H}\E[{\Delta_{t'}^k(s^k_{t'})}^2 \mid \mathcal{F}_{k-1}] = \sum_{t=1}^{H}t\E[{\Delta_t^k(s^k_{t})}^2 \mid \mathcal{F}_{k-1}] \leq H \sum_{t=1}^{H} \E[{\Delta_t^k(s^k_{t})}^2 \mid \mathcal{F}_{k-1}].
\end{align*}

Furthermore,
\begin{align*}
    \br*{\Delta_t^k(s^k_{t})}^2  &= \br*{\br*{\bar{V}^{k-1}_t(s^k_{t}) - \underline{V}^{k-1}_t(s^k_{t})} - \br*{\bar{V}^{k}_t(s^k_{t}) - \underline{V}^{k}_t(s^k_{t}) }}^2\\
     &= \br*{\bar{V}^{k-1}_t(s^k_{t}) - \underline{V}^{k-1}_t(s^k_{t})}^2 + \br*{\bar{V}^{k}_t(s^k_{t}) - \underline{V}^{k}_t(s^k_{t})}^2 \\
     &\quad -2\br*{\bar{V}^{k-1}_t(s^k_{t}) - \underline{V}^{k-1}_t(s^k_{t})}\br*{\bar{V}^{k}_t(s^k_{t}) - \underline{V}^{k}_t(s^k_{t})}\\
      &\leq \br*{\bar{V}^{k-1}_t(s^k_{t}) - \underline{V}^{k-1}_t(s^k_{t})}^2 + \br*{\bar{V}^{k}_t(s^k_{t}) - \underline{V}^{k}_t(s^k_{t})}^2 - 2\br*{\bar{V}^{k}_t(s^k_{t}) - \underline{V}^{k}_t(s^k_{t})}^2\\
      & = \br*{\bar{V}^{k-1}_t(s^k_{t}) - \underline{V}^{k-1}_t(s^k_{t})}^2 - \br*{\bar{V}^{k}_t(s^k_{t}) - \underline{V}^{k}_t(s^k_{t})}^2\enspace,
\end{align*}
where the third relation holds since $\bar{V}^k(s), \underline{V}^k(s)$ decreases and increases, respectively, by Lemma \ref{lemma: 3 properties of decreasing process of EULER}, and since outside of the failure event $\bar{V}^k(s) \ge \underline{V}^k(s), \forall k$ (Lemma \ref{lemma: optimism EULER}). Another implication these properties is that

$$\br*{\bar{V}^{k-1}_t(s^k_{t}) - \underline{V}^{k-1}_t(s^k_{t})}^2 \geq \br*{\bar{V}^{k}_t(s^k_{t}) - \underline{V}^{k}_t(s^k_{t})}^2,$$
Thus,
\begin{align}
     &H \sum_{k=1}^{K}\sum_{t=1}^{H} \E[{\Delta_t^k(s^k_{t})}^2 \mid \mathcal{F}_{k-1}] \nonumber\\
     & \leq  H  \sum_{k=1}^{K}\sum_{t=1}^{H} \E[ \br*{\bar{V}^{k-1}_t(s^k_{t}) - \underline{V}^{k-1}_t(s^k_{t})}^2 - \br*{\bar{V}^{k}_t(s^k_{t}) - \underline{V}^{k}_t(s^k_{t})}^2 \mid \mathcal{F}_{k-1}] \label{eq: quadratic decreasing process for euler proof 1 relation}\enspace.
\end{align}
\clearpage
For brevity, we define $\Delta V^k_t (s) = \bar{V}^{k}_t(s) - \underline{V}^{k}_t(s)$. Similarly to the technique used in Lemma \ref{lemma: regret to SH decreasing processes} (Appendix \ref{sec: supp general lemmas}), 
\begin{align*}
    &\sum_{k=1}^{K}\sum_{t=1}^{H} \E[ \br*{\bar{V}^{k-1}_t(s^k_{t}) - \underline{V}^{k-1}_t(s^k_{t})}^2 - \br*{\bar{V}^{k}_t(s^k_{t}) - \underline{V}^{k}_t(s^k_{t})}^2 \mid \mathcal{F}_{k-1}]\\
    & =\sum_{k=1}^{K}\sum_{t=1}^{H} \E[ \Delta V^{k-1}_t (s^k_{t})^2 - \Delta V^{k}_t (s^k_{t})^2 \mid \mathcal{F}_{k-1}]\\
    & \stackrel{(1)}{=}\sum_{k=1}^{K}\sum_{t=1}^{H}\sum_s \E[  \ind\brc*{s_t^k= s}\Delta V^{k-1}_t (s)^2 -  \ind\brc*{s_t^k= s}\Delta V^{k}_t (s)^2 \mid \mathcal{F}_{k-1}]\\
    & \stackrel{(2)}{=}\sum_{k=1}^{K}\sum_{t=1}^{H}\sum_s \E[  \ind\brc*{s_t^k= s}\Delta V^{k-1}_t (s)^2 + \ind\brc*{s_t^k\neq s}\Delta V^{k-1}_t (s)^2\mid\mathcal{F}_{k-1}]\\
    &\qquad\qquad\qquad-\E[\ind\brc*{s_t^k= s}\Delta V^{k}_t (s)^2 + \ind\brc*{s_t^k\neq s}\Delta V^{k-1}_t (s)^2    \mid \mathcal{F}_{k-1}]\\
    & \stackrel{(3)}{=}\sum_{k=1}^{K}\sum_{t=1}^{H}\sum_s \Delta V^{k-1}_t (s)^2 - \E[\Delta V^{k}_t (s)^2 \mid \mathcal{F}_{k-1}]\\
\end{align*}

Relation $(1)$ holds by adding and subtracting $\ind\brc*{s \neq s_t^k}\bar{V}_t^{k-1}(s)$ while using the linearity of expectation. $(2)$ holds since for any event $\ind\brc{A}+\ind\brc{A^c}=1$ and since $\Delta V_t^{k-1}$ is $\F_{k-1}$ measurable. $(3)$ holds by the definition of the update rule. If state $s$ is visited in the $k^{th}$ episode at time-step $t$, then both $\bar{V}^k_t(s),\underline{V}^k_t(s)$ are updated. If not, their value remains as in the $k-1$ iteration.

Lastly,
\begin{align*}
    &\sum_{k=1}^{K}\sum_{t=1}^{H}\sum_s \Delta V^{k-1}_t (s)^2 - \E[\Delta V^{k}_t (s)^2 \mid \mathcal{F}_{k-1}]\\
    & =\sum_{k=1}^{K}\sum_{t=1}^{H}\sum_s \br*{\bar{V}^{k-1}_t(s) - \underline{V}^{k-1}_t(s)}^2 - \E[\br*{\bar{V}^{k}_t(s) - \underline{V}^{k}_t(s)}^2 \mid \mathcal{F}_{k-1}]\leq \Olog(SH^3),
\end{align*}
where the inequality holds outside the failure event $F^{vsDP}$, which is defined in Appendix \ref{sec:EULER failure decreasing}. Plugging this into \eqref{eq: quadratic decreasing process for euler proof 1 relation} concludes the proof. 
\end{proof}

\clearpage

We are now ready to prove the main results of this section and bound \eqref{eq:cumulative squared value differences1} and \eqref{eq:cumulative squared value differences2}:

\begin{restatable}{lemma}{EULERCentralLemma}\label{lemma: central lemma of EULER analysis}
Outside the failure event.
\begin{align*}
    &\sum_{k=1}^{K}\sum_{t=1}^{H}\sum_{s,a}w_{tk}(s,a)p(\cdot \mid s,a)^T\br*{\bar{V}^{k-1}_{t+1} - \underline{V}^{k-1}_{t+1}}^2
    \leq  \Olog(SAH^2(F+D)^2 + SAH^5 ).
\end{align*}
where $F+D$ is defined in Lemma \ref{lemma: next episode value function differences}
\end{restatable}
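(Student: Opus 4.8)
The plan is to reduce the claim to the two already-established cumulative square bounds, Lemma~\ref{lemma: v k-1 square diff bound} and Lemma~\ref{lemma: value differences square sum on episodes}, together with a standard summation of inverse visitation counts handled via the good-set machinery of Appendix~\ref{sec: Lk definition}.

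\textbf{Removing the transition and shifting the index.} First I would rewrite the left-hand side as an expectation along the trajectory. Since $p$, $\bar V^{k-1}$ and $\underline V^{k-1}$ are all $\F_{k-1}$-measurable and $w_{tk}(s,a)=\Pr\br*{s_t^k=s,a_t^k=a\mid\F_{k-1}}$, summing against $w_{tk}$ is exactly conditional expectation, and by the Markov property (precisely as in Equation~\eqref{eq:value transition update}),
\begin{align*}
\sum_{s,a}w_{tk}(s,a)\,p(\cdot\mid s,a)^T\br*{\bar{V}^{k-1}_{t+1} - \underline{V}^{k-1}_{t+1}}^2
= \E\brs*{\br*{\bar V^{k-1}_{t+1}(s_{t+1}^k)-\underline V^{k-1}_{t+1}(s_{t+1}^k)}^2\mid \F_{k-1}}.
\end{align*}
Summing over $t$ and using $\bar V_{H+1}=\underline V_{H+1}=0$ to shift the index, the whole quantity is at most $\sum_{k=1}^K\sum_{t=1}^H \E\brs*{\br*{\bar V^{k-1}_t(s_t^k)-\underline V^{k-1}_t(s_t^k)}^2\mid \F_{k-1}}$, since the extra $t=1$ summand that this introduces is nonnegative.

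\textbf{Applying the per-state square bound.} I would then apply Lemma~\ref{lemma: v k-1 square diff bound} to each summand, splitting the bound into a $\Delta$-term and a min-term:
\begin{align*}
\le 2H\sum_{k=1}^K\sum_{t=1}^H\sum_{t'=t}^{H-1}\E\brs*{\Delta^k_{t'}(s^k_{t'})^2+\min\brc*{\tfrac{(F+D)^2}{n_{k-1}(s^k_{t'},a^k_{t'})\vee 1},H^2}\,\Big|\,\F_{k-1}}.
\end{align*}
The $\Delta$-term is immediate: by Lemma~\ref{lemma: value differences square sum on episodes} the triple sum of $\Delta^2$ is $\Olog(SH^4)$, so this piece contributes $\Olog(SH^5)$, which is dominated by $\Olog(SAH^5)$.

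\textbf{The min-term.} For the min-term I would first collapse the double sum over $t\le t'$ into a single sum at the cost of a factor of at most $H$, reducing it to $H^2\sum_{k,t'}\E\brs*{\min\brc*{(F+D)^2/(n_{k-1}\vee 1),H^2}\mid\F_{k-1}}$. I then split each visitation according to whether $(s,a)\in L_k$ (Definition~\ref{defn:good set}). On the good set I drop the cap and use the standard inverse-count bound $\sum_{k,t}\E\brs*{1/(n_{k-1}(s,a)\vee1)\mid\F_{k-1}}=\Olog(SA)$, which follows (as in the $1/\sqrt n$ case of Lemma~\ref{lemma: supp 1 over sqrt n sum}) from the concentration $n_{k-1}(s,a)\gtrsim\sum_{j<k}w_j(s,a)$ outside $F^N$; this gives $\Olog((F+D)^2 SA)$. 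Off the good set I keep the cap $H^2$ and invoke Lemma~\ref{lemma: visitation outside good set}, which bounds $\sum_{k,t}\sum_{(s,a)\notin L_k}w_{tk}(s,a)=\Olog(SAH)$, contributing $\Olog(SAH^3)$. Multiplying the total $\Olog((F+D)^2SA+SAH^3)$ by the leading $H^2$ yields $\Olog\br*{SAH^2(F+D)^2+SAH^5}$, and combining with the $\Delta$-term proves the claim.

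\textbf{Main obstacle.} The delicate part is the bookkeeping of the min-term: the cap $\min\brc*{\cdot,H^2}$ is indispensable, since for rarely-visited $(s,a)$ the raw term $(F+D)^2/n$ cannot be summed, and it must be paired with the good/bad-set split so that the good set produces the logarithmic $\sum 1/n$ bound while the bad set is absorbed through the uniformly bounded total visitation $\Olog(SAH)$. Everything else is a direct assembly of the preceding lemmas.
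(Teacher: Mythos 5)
Your proof is correct and follows essentially the same route as the paper's: rewrite the sum as a conditional expectation via the Markov property, shift the time index using $\bar{V}_{H+1}=\underline{V}_{H+1}=0$, apply Lemma~\ref{lemma: v k-1 square diff bound}, bound the $\Delta$-term by Lemma~\ref{lemma: value differences square sum on episodes}, and control the min-term through the good-set split. The only cosmetic difference is that you re-derive the min-term bound inline (good set via Lemma~\ref{lemma: good set visitation count ratio}, bad set via Lemma~\ref{lemma: visitation outside good set}), whereas the paper packages exactly this argument as Lemma~\ref{lemma: sum of 1 over n} and cites it.
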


\begin{proof}

Recall that $w_{tk}(s,a)=\Pr(s^k_{t} \mid s^k_1,\pi_k)$ is the probability when following $\pi^k$ in the true MDP the state-action in the $k^{th}$ episode at the $t^{th}$ time-step is $(s^k_t,a_t^k)=(s,a)$. Thus, the following relation holds.
\begin{align*}
    &\sum_{s,a} w_{tk}(s,a) p(\cdot \mid s,a)^T(\bar{V}_{t+1}^{k-1} - \underline{V}_{t+1}^{k-1})^2\\
    &= \sum_{s_t}\Pr(s^k_{t} \mid s^k_1,\pi_k)\sum_{s_{t+1}}p(s^k_{t+1} \mid s^k_t,a_t^k)(\bar{V}_{t+1}^{k-1}(s^k_{t+1}) - \underline{V}_{t+1}^{k-1}(s^k_{t+1}))^2 \\
    &= \sum_{s_{t+1}}\Pr(s^k_{t+1} \mid s^k_1,\pi_k)(\bar{V}_{t+1}^{k-1}(s^k_{t+1}) - \underline{V}_{t+1}^{k-1}(s^k_{t+1}))^2 \\
    &= \E[(\bar{V}_{t+1}^{k-1}(s_{t+1}) - \underline{V}_{t+1}^{k-1}(s_{t+1}))^2 \mid \mathcal{F}_{k-1}].
\end{align*}

Since $\bar{V}_{H+1}^{k-1}(s_{t+1}) = \underline{V}_{H+1}^{k-1}(s_{t+1} = 0$, we obtain,
\begin{align*}
    &\sum_{k=1}^{K}\sum_{t=1}^{H}\sum_{s,a} w_{tk}(s,a) p(\cdot \mid s,a)(\bar{V}_{t+1}^{k-1} - \underline{V}_{t+1}^{k-1})^2\\
    &= \sum_{k=1}^{K}\sum_{t=1}^{H} \E[ (\bar{V}_{t+1}^{k-1}(s^k_{t}) - \underline{V}_{t+1}^{k-1}(s^k_{t})^2\mid \mathcal{F}_{k-1}]\\
    &\leq  \sum_{k=1}^{K}\sum_{t=1}^{H} \E[ (\bar{V}_{t}^{k-1}(s^k_{t}) - \underline{V}_{t}^{k-1}(s^k_{t})^2\mid \mathcal{F}_{k-1}].
\end{align*}

Thus,
{\small
\begin{align}
    &\sum_{k=1}^{K}\sum_{t=1}^{H} \sum_{s,a}w_{tk}(s,a)p(\cdot \mid s,a)^T\br*{\bar{V}^{k-1}_{t+1} - \underline{V}^{k-1}_{t+1}}^2 \nonumber\\
    &\leq \sum_{k=1}^{K}\sum_{t=1}^{H} \E[\br*{\bar{V}^{k-1}_{t+1}(s^k_{t}) - \underline{V}^{k-1}_{t+1}(s^k_{t})}^2\mid \mathcal{F}_{k-1}] \nonumber\\
    & \stackrel{(*)}{\leq}  2H \sum_{k=1}^{K}\sum_{t=1}^{H}\sum_{t'=t}^{H} \E[{\Delta_{t'}^k(s^k_{t'})}^2\mid \mathcal{F}_{k-1}]  
    + 2H \sum_{k=1}^{K}\sum_{t=1}^{H}\sum_{t'=t}^{H} \E\brs*{ \min\brc*{\frac{(F+D)^2}{n_{k-1}(s^k_{t'},a_{t'}^k)\vee 1},H^2}\mid \mathcal{F}_{k-1}} \nonumber\\
    & = 2H\sum_{k=1}^{K}\sum_{t=1}^{H}t \E[{\Delta_t^k(s^k_t)}^2\mid \mathcal{F}_{k-1}]  
    + 2H \sum_{k=1}^{K}\sum_{t=1}^{H}t \E\brs*{ \min\brc*{\frac{(F+D)^2}{n_{k-1}(s^k_t,a_t^k)\vee 1},H^2}\mid \mathcal{F}_{k-1}} \nonumber \\
    & \le 2H^2\sum_{k=1}^{K}\sum_{t=1}^{H} \E[{\Delta_t^k(s^k_t)}^2\mid \mathcal{F}_{k-1}]  
    + 2H^2 \sum_{k=1}^{K}\sum_{t=1}^{H} \E\brs*{ \min\brc*{\frac{(F+D)^2}{n_{k-1}(s^k_t,a_t^k)\vee 1},H^2}\mid \mathcal{F}_{k-1}} \label{eq: sum squared value diff bound}
\end{align}
}
where $(*)$ last relation holds by Lemma \ref{lemma: v k-1 square diff bound}, in which $\Delta_t^k(s_t^k)$ is defined. The first term is bounded in Lemma \ref{lemma: value differences square sum on episodes} by $\Olog(SH^5)$. The second term is bounded outside the failure event Using the 'Good Set' $L_k$, which is defined and analyzed in Appendix \ref{sec: Lk definition}. The bound for this term can be found in Lemma \ref{lemma: sum of 1 over n}. Combining both of the results and substituting into \eqref{eq: sum squared value diff bound} yields

\begin{align*}
    \sum_{k=1}^{K}\sum_{t=1}^{H}\sum_{s,a}w_{tk}(s,a)p(\cdot \mid s,a)^T\br*{\bar{V}^{k-1}_{t+1} - \underline{V}^{k-1}_{t+1}}^2
    &\leq \Olog(SH^5)+\Olog(SAH^2(F+D)^2 + SAH^5)\\
    &=\Olog(SAH^2(F+D)^2 + SAH^5)
\end{align*}
\end{proof}

\begin{restatable}{lemma}{SquaredDiffUpperReal}\label{lemma: squared difference upper to real}
Outside the failure event.
\begin{align*}
    &\sum_{k=1}^{K}\sum_{t=1}^{H}\sum_{s,a}w_{tk}(s,a)p(\cdot \mid s,a)^T\br*{\bar{V}^{k-1}_{t+1} - V^{\pi_k}_{t+1}}^2
    \leq \Olog(SAH^3(F+D)^2 + SAH^5)
\end{align*}
where $F+D$ is defined in Lemma \ref{lemma: next episode value function differences}
\end{restatable}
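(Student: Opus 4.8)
The plan is to mirror, almost line for line, the proof of Lemma~\ref{lemma: central lemma of EULER analysis}, replacing the lower bound $\underline V^{k-1}$ by the true policy value $V^{\pi_k}$ throughout. The starting point is a per-step recursion for $\bar V^k_t(s_t^k) - V^{\pi_k}_t(s_t^k)$ that is the exact analog of Lemma~\ref{lemma: next episode value function differences}. Since the visited state is updated with $\bar V^k_t(s_t^k) \le \bar Q(s_t^k,a_t^k) = \hat r_{k-1}(s_t^k,a_t^k) + b_k^r(s_t^k,a_t^k) + \hat p_{k-1}(\cdot\mid s_t^k,a_t^k)^T\bar V^{k-1}_{t+1} + b_k^{pv}(s_t^k,a_t^k)$, while the true value obeys the exact Bellman equation $V^{\pi_k}_t(s_t^k) = r(s_t^k,a_t^k) + p(\cdot\mid s_t^k,a_t^k)^T V^{\pi_k}_{t+1}$, subtracting and adding $\pm\,p(\cdot\mid s_t^k,a_t^k)^T\bar V^{k-1}_{t+1}$ yields
$$\bar V^k_t(s_t^k) - V^{\pi_k}_t(s_t^k) \le \underbrace{(\hat r_{k-1}+b_k^r-r)(s_t^k,a_t^k) + (\hat p_{k-1}-p)(\cdot\mid s_t^k,a_t^k)^T\bar V^{k-1}_{t+1} + b_k^{pv}(s_t^k,a_t^k)}_{\text{model error} + \text{bonus}} + p(\cdot\mid s_t^k,a_t^k)^T(\bar V^{k-1}_{t+1}-V^{\pi_k}_{t+1}).$$
Outside the failure event the bracketed term is nonnegative and bounded, exactly as in Lemma~\ref{lemma: next episode value function differences}, by $\min\brc*{(F+D)/\sqrt{n_{k-1}(s_t^k,a_t^k)\vee1},\,H}$: indeed $\hat r_{k-1}+b_k^r-r\le 2b_k^r$ outside $F^r$, $(\hat p_{k-1}-p)^T\bar V^{k-1}_{t+1}\le \norm{\hat p_{k-1}-p}_1 H\le LH\sqrt{S/(n_{k-1}(s_t^k,a_t^k)\vee1)}$ outside $F^{pn1}$ (since $\norm{\bar V^{k-1}_{t+1}}_\infty\le H$), and $b_k^{pv}=\Olog(H)/\sqrt{n_{k-1}(s_t^k,a_t^k)\vee1}$ by Lemma~\ref{lemma: bernstein admissible}. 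The final term equals $\E[\bar V^{k-1}_{t+1}(s_{t+1}^k)-V^{\pi_k}_{t+1}(s_{t+1}^k)\mid\F_{k-1},s_t^k]$ by the computation in \eqref{eq:value transition update}, closing the recursion.

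Next I would unroll this recursion exactly as in Lemma~\ref{lemma: v k-1 square diff bound}. Writing $\bar V^{k-1}_t(s_t^k)-V^{\pi_k}_t(s_t^k) = \br*{\bar V^{k-1}_t(s_t^k)-\bar V^k_t(s_t^k)} + \br*{\bar V^k_t(s_t^k)-V^{\pi_k}_t(s_t^k)}$ and iterating down to $t=H$ (using $\bar V_{H+1}=V^{\pi_k}_{H+1}=0$ and the tower property) gives
$$\bar V^{k-1}_t(s_t^k)-V^{\pi_k}_t(s_t^k)\le \sum_{t'=t}^{H}\E\brs*{\br*{\bar V^{k-1}_{t'}(s_{t'}^k)-\bar V^k_{t'}(s_{t'}^k)} + \min\brc*{\tfrac{F+D}{\sqrt{n_{k-1}(s_{t'}^k,a_{t'}^k)\vee1}},\,H}\;\middle|\;\F_{k-1},s_t^k}.$$
All summands are nonnegative and, outside the failure event, the left-hand side is nonnegative by optimism (Lemma~\ref{lemma: optimism EULER}); squaring and applying Cauchy--Schwarz and Jensen as in Lemma~\ref{lemma: v k-1 square diff bound} yields
$$\E\brs*{\br*{\bar V^{k-1}_t(s_t^k)-V^{\pi_k}_t(s_t^k)}^2\mid\F_{k-1}}\le 2H\sum_{t'=t}^{H}\E\brs*{\br*{\bar V^{k-1}_{t'}(s_{t'}^k)-\bar V^k_{t'}(s_{t'}^k)}^2 + \min\brc*{\tfrac{(F+D)^2}{n_{k-1}(s_{t'}^k,a_{t'}^k)\vee1},\,H^2}\mid\F_{k-1}}.$$

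Then I would reduce the target to these terms via the probability-flow identity of Lemma~\ref{lemma: central lemma of EULER analysis}: $\sum_{s,a}w_{tk}(s,a)p(\cdot\mid s,a)^T(\bar V^{k-1}_{t+1}-V^{\pi_k}_{t+1})^2 = \E[(\bar V^{k-1}_{t+1}(s_{t+1}^k)-V^{\pi_k}_{t+1}(s_{t+1}^k))^2\mid\F_{k-1}]$, so after an index shift (the terminal term vanishing) the left side of the lemma is at most $\sum_{k,t}\E[(\bar V^{k-1}_t(s_t^k)-V^{\pi_k}_t(s_t^k))^2\mid\F_{k-1}]$, to which the displayed bound applies. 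It remains to control two sums. The $\min\brc*{(F+D)^2/n,H^2}$ term is identical to the second term in Lemma~\ref{lemma: central lemma of EULER analysis} and is $\Olog(SAH^2(F+D)^2+SAH^5)$ by Lemma~\ref{lemma: sum of 1 over n}. For the value-update term, rather than invoking the delicate $F^{vsDP}$ argument of Lemma~\ref{lemma: value differences square sum on episodes}, I would use the crude bound $\br*{\bar V^{k-1}_{t'}(s_{t'}^k)-\bar V^k_{t'}(s_{t'}^k)}^2\le H\br*{\bar V^{k-1}_{t'}(s_{t'}^k)-\bar V^k_{t'}(s_{t'}^k)}$ (the increment lies in $[0,H]$), reducing it after the indicator trick to $H\sum_{k,t',s}\E[\bar V^{k-1}_{t'}(s)-\bar V^k_{t'}(s)\mid\F_{k-1}]=\Olog(SH^3)$ outside $F^{vDP}$; with the prefactors this contributes $\Olog(SH^5)$. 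Summing both contributions gives the stated $\Olog(SAH^3(F+D)^2+SAH^5)$.

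The main obstacle is establishing the per-step recursion cleanly: one must check that replacing $\underline V$ by $V^{\pi_k}$ still produces the same constant $F+D$ even though only the upper-bound reward and transition bonuses now appear, and that the bracketed model-error term remains nonnegative with the required $\min\brc*{\cdot,H}$ form outside the failure events. Everything after that is a faithful transcription of the squaring and unrolling machinery already developed for $\bar V-\underline V$, simplified by the fact that $V^{\pi_k}$ is fixed within an episode, so the analog of $\Delta^k_{t'}$ collapses to the ordinary value update $\bar V^{k-1}_{t'}-\bar V^k_{t'}$ and the refined squared-decreasing-process event $F^{vsDP}$ can be replaced by the linear event $F^{vDP}$ at the cost of one factor of $H$.
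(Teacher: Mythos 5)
Your proposal is correct and reaches the stated bound (in fact the marginally sharper $\Olog(SAH^2(F+D)^2+SAH^5)$), but it gets there by a route that deviates from the paper's proof in two places. The paper does not build a per-step recursion at all: after the same probability-flow identity and index shift, it invokes Lemma \ref{lemma: FULL model base RL expected value difference} to write $\bar{V}^{k-1}_{t}(s_t^k) - V^{\pi_k}_{t}(s_t^k)$ as a sum of expected value updates plus the \emph{raw} model-error terms $(\tilde r_{k-1}-r)+(\tilde p_{k-1}-p)^T\bar V^{k-1}_{t+1}$, squares with a $3H$ Cauchy--Schwarz factor, and only then bounds the squared reward and transition errors separately via $F^r$, $F^{pn1}$, Lemma \ref{lemma: bkv bound} and Lemma \ref{lemma: sum of 1 over n}. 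Your recursion with the error pre-collapsed into $\min\brc*{(F+D)/\sqrt{n_{k-1}\vee 1},H}$ encodes the same identity, and your constant accounting is valid since $2L+LH\sqrt{S}+2B_vH+5J+B_p+4L^2\le F+D$, so this first difference is essentially organizational. The genuine divergence is the squared value-update term: the paper bounds $(\bar V^{k-1}-\bar V^k)^2\le(\bar V^{k-1})^2-(\bar V^k)^2$ and then applies the Decreasing Bounded Process concentration (Lemma \ref{lemma: sum of decreasing processes}) to the squared process $\brc*{(\bar V^k_t(s))^2}_{k\ge0}$, which is a DBP in $[0,H^2]$ --- a fresh high-probability event that is not among those budgeted in Lemma \ref{lemma: all failure events EULER}. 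You instead use $(\bar V^{k-1}-\bar V^k)^2\le H(\bar V^{k-1}-\bar V^k)$ and reuse the already-budgeted event $F^{vDP}$. Both yield $\Olog(SH^3)$ before the $H^2$ prefactor, but your variant requires no new concentration argument and no extra union-bound allocation, which is arguably cleaner than the paper on this point; the factor of $H$ you give up in that term is absorbed by the $SAH^5$ term in any case.
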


\begin{proof}

Similarly to Lemma \ref{lemma: central lemma of EULER analysis}, we have that
\begin{align}
    &\sum_{k=1}^{K}\sum_{t=1}^{H}\sum_{s,a}w_{tk}(s,a)p(\cdot \mid s,a)^T\br*{\bar{V}^{k-1}_{t+1} - V^{\pi_k}_{t+1}}^2 \nonumber\\
    &\leq \sum_{k=1}^{K}\sum_{t=1}^{H}
    \E\brs*{\br*{\bar{V}^{k-1}_{t+1}(s_t^k) - V^{\pi_k}_{t+1}(s_t^k)}^2\mid \F_{k-1}} \nonumber\\
    &\leq \sum_{k=1}^{K}\sum_{t=1}^{H}
    \E\brs*{\br*{\bar{V}^{k-1}_{t}(s_t^k) - V^{\pi_k}_{t}(s_t^k)}^2\mid \F_{k-1}} \label{eq: squared difference lemma first eq}.
\end{align}

where the last inequality is since $\bar{V}_{H+1}^{k-1}(s_{t+1}) = V_{H+1}^{\pi_k}(s_{t+1} = 0$. Applying Lemma \ref{lemma: model base RL expected value difference}, we get,
{\small
\begin{align*}
    &\E[\br*{\bar{V}^{k-1}_{t}(s_t^k) - V^{\pi_k}_{t}(s_t^k)}^2\mid \F_{k-1}]\\
    &\stackrel{(1)}{\leq} \E\brs*{\br*{\sum_{t'=t}^H \E\brs*{\bar{V}^{k-1}(s_{t'}^k) - \bar{V}^{k}(s_{t'}^k) + (\tilde r_{k-1} - r)(s_{t'}^k,a_{t'}^k)+(\tilde p_{k-1} - p)(s_{t'}^k,a_{t'}^k)\bar V^{k-1}_{t+1} \mid \F_{k-1},s_t^k }}^2\mid \F_{k-1}}\\
    &\stackrel{(2)}{\leq} 3H\E\brs*{\sum_{t'=t}^H \E\brs*{\br*{\bar{V}^{k-1}(s_{t'}^k) - \bar{V}^{k}(s_{t'}^k)}^2 \mid \F_{k-1},s_t^k }\mid \F_{k-1}}\\  
    &\quad +3H\E\brs*{\sum_{t'=t}^H \E\brs*{\br*{(\tilde r_{k-1} - r)(s_{t'}^k,a_{t'}^k)}^2+\br*{(\tilde p_{k-1} - p)(s_{t'}^k,a_{t'}^k)\bar V^{k-1}_{t+1}}^2 \mid \F_{k-1},s_t^k }\mid \F_{k-1}}\\ 
    &\stackrel{(3)}{=} 3H\sum_{t'=t}^H\E\brs*{ \br*{\bar{V}^{k-1}(s_{t'}^k) - \bar{V}^{k}(s_{t'}^k)}^2\mid \F_{k-1}} \\
    & \quad + 3H\sum_{t'=t}^H\E\brs*{ \br*{(\tilde r_{k-1} - r)(s_{t'}^k,a_{t'}^k)}^2+\br*{(\tilde p_{k-1} - p)(s_{t'}^k,a_{t'}^k)\bar V^{k-1}_{t+1}}^2 \mid \F_{k-1}}.
\end{align*}
}
Inequality $(1)$ is by Lemma \ref{lemma: FULL model base RL expected value difference}. $(2)$ is due to Jensen's inequality, and using the inequality $\br*{\sum_{i=1}^n a_i}^2\le n\sum_{i=1}^n a_i^2$, and  $(3)$ is by the tower property. 

Plugging this back into \eqref{eq: squared difference lemma first eq},
\begin{align}
    \eqref{eq: squared difference lemma first eq} &\leq   3H\sum_{k=1}^{K}\sum_{t=1}^{H}\sum_{t'=t}^H\E\brs*{ \br*{\bar{V}^{k-1}(s_{t'}^k) - \bar{V}^{k}(s_{t'}^k)}^2  \mid \F_{k-1}} \nonumber \\
    &+ 3H\sum_{k=1}^{K}\sum_{t=1}^{H}\sum_{t'=t}^H \E\brs*{\br*{(\tilde r_{k-1} - r)(s_{t'}^k,a_{t'}^k)}^2+\br*{(\tilde p_{k-1} - p)(s_{t'}^k,a_{t'}^k)\bar V^{k-1}_{t+1}}^2 \mid \F_{k-1}} \nonumber\\
    & \leq   3H^2\sum_{k=1}^{K}\sum_{t=1}^H\underset{(*)}{\underbrace{\E\brs*{ \br*{\bar{V}^{k-1}(s_t^k) - \bar{V}^{k}(s_t^k)}^2  \mid \F_{k-1}}} \nonumber} \\
    &+ 3H^2\sum_{k=1}^{K}\sum_{t=1}^{H} \E\brs*{\underset{(**)}{\underbrace{\br*{(\tilde r_{k-1} - r)(s_t^k,a_{t}^k)}^2}}+ \underbrace{\br*{(\tilde p_{k-1} - p)(s_t^k,a_{t}^k)\bar V^{k-1}_{t+1}}^2}_{(***)} \mid \F_{k-1}}. \label{eq:squared difference decomposition}
\end{align}

We now bound each term of the above. First, we have that
\begin{align*}
    &(*)=\sum_{t=1}^{H}\sum_{t=1}^H\E\brs*{ \br*{\bar{V}^{k-1}(s_t^k) - \bar{V}^{k}(s_t^k)}^2  \mid \F_{k-1}}  \\
    & =\sum_{t=1}^{H}\sum_{t=1}^H\E\brs*{ \br*{\bar{V}^{k-1}(s_t^k)}^2 + \br*{\bar{V}^{k}(s_t^k)}^2 - 2\bar{V}^{k}(s_t^k)\bar{V}^{k-1}(s_t^k)   \mid \F_{k-1}}  \\
    & \stackrel{(1)}{\leq} \sum_{t=1}^{H}\sum_{t=1}^H\E\brs*{ \br*{\bar{V}^{k-1}(s_t^k)}^2 + \br*{\bar{V}^{k}(s_t^k)}^2 - 2\br*{\bar{V}^{k}(s_t^k)}^2   \mid \F_{k-1}} \\
    & = \sum_{t=1}^{H}\sum_{t=1}^H\E\brs*{ \br*{\bar{V}^{k-1}(s_t^k)}^2 - \br*{\bar{V}^{k}(s_t^k)}^2 \mid \F_{k-1}} \\
    & \stackrel{(2)}{=} \sum_{t=1}^{H}\sum_{t=1}^H\sum_{s} \br*{\bar{V}^{k-1}(s)}^2 - \E\brs*{\br*{\bar{V}^{k}(s}^2 \mid \F_{k-1}}. 
\end{align*}

Relation $(1)$ holds since $0\le\bar V^{k} \leq \bar V^{k-1}$ (see Lemma \ref{lemma: 3 properties of decreasing process of EULER}). $(2)$ is proven similarly to Lemma \ref{lemma: regret to SH decreasing processes} (Appendix \ref{sec: supp general lemmas}), as follows
\begin{align*}
    &\sum_{k=1}^{K}\sum_{t=1}^{H} \E\brs*{ \br*{\bar{V}^{k-1}(s_t^k)}^2 - \br*{\bar{V}^k(s_t^k)}^2 \mid \mathcal{F}_{k-1}}\\
    & \stackrel{(1)}{=}\sum_{k=1}^{K}\sum_{t=1}^{H}\sum_s \E[  \ind\brc*{s_t^k= s}\br*{\bar{V}^{k-1}(s)}^2 -  \ind\brc*{s_t^k= s}\br*{\bar{V}^k(s)}^2 \mid \mathcal{F}_{k-1}]\\
    & \stackrel{(2)}{=}\sum_{k=1}^{K}\sum_{t=1}^{H}\sum_s \E[  \ind\brc*{s_t^k= s}\br*{\bar{V}^{k-1}(s)}^2 + \ind\brc*{s_t^k\neq s}\br*{\bar{V}^{k-1}(s)}^2\mid\mathcal{F}_{k-1}]\\
    &\qquad\qquad\qquad-\E[\ind\brc*{s_t^k= s}\br*{\bar{V}^k(s)}^2 + \ind\brc*{s_t^k\neq s}\br*{\bar{V}^{k-1}(s)}^2    \mid \mathcal{F}_{k-1}]\\
    & \stackrel{(3)}{=}\sum_{k=1}^{K}\sum_{t=1}^{H}\sum_s \br*{\bar{V}^{k-1}(s)}^2 - \E[\br*{\bar{V}^k(s)}^2 \mid \mathcal{F}_{k-1}]\\
\end{align*}
$(1)$ holds by adding and subtracting $\ind\brc*{s \neq s_t^k}\bar{V}_t^{k-1}(s)$ while using the linearity of expectation. $(2)$ holds since for any event $\ind\brc{A}+\ind\brc{A^c}=1$ and since $\Delta V_t^{k-1}$ is $\F_{k-1}$ measurable. $(3)$ holds by the definition of the update rule. If state $s$ is visited in the $k^{th}$ episode at time-step $t$, then both $\bar{V}^k_t(s),\underline{V}^k_t(s)$ are updated. If not, their value remains as in the $k-1$ iteration.

Next, by Lemma \ref{lemma: 3 properties of decreasing process of EULER} for a fixed $s,t$, $\brc*{\bar V^{k}_t(s)}_{k\geq 0}$ is a Decreasing Bounded Process in $[0,H^2]$. Applying Lemma \ref{lemma: sum of decreasing processes} we conclude that
\begin{align*}
    (*) \le \sum_{k=1}^{K}\sum_{t=1}^{H}\sum_s \br*{\bar{V}^{k-1}(s)}^2 - \E[\br*{\bar{V}^k(s)}^2 \mid \mathcal{F}_{k-1}]\lesssim \Olog( H^3S).
\end{align*}

We now turn to bound $(**)$.
\begin{align*}
    &\sum_{k=1}^{K}\sum_{t=1}^{H} \E\brs*{\br*{(\tilde r_{k-1} - r)(s_t^k,a_{t}^k)}^2\mid \F_{k-1}}\\
    &\stackrel{(1)}{\le} 2\sum_{k=1}^{K}\sum_{t=1}^{H} \E\brs*{\br*{(\hat r_{k-1} - r)(s_t^k,a_{t}^k)}^2\mid \F_{k-1}}
    + 2\sum_{k=1}^{K}\sum_{t=1}^{H} \E\brs*{\br*{b_k^r(s_t^k,a_{t}^k)}^2\mid \F_{k-1}}\\
    &\stackrel{(2)}{\le} 4\sum_{k=1}^{K}\sum_{t=1}^{H} \E\brs*{\br*{b_k^r(s_t^k,a_{t}^k)}^2\mid \F_{k-1}}\\
    &= 4\sum_{k=1}^{K}\sum_{t=1}^{H} \E\brs*{\br*{\sqrt{\frac{2\hat{\VAR}(R(s_t^k,a_t^k))\ln\frac{4SAT}{\delta'}}{n_{k-1}(s_t^k,a_t^k)\vee 1}} + \frac{14\ln \frac{4SAT}{\delta'}}{3n_{k-1}(s_t^k,a_t^k)\vee 1}}^2\mid \F_{k-1}}\\
    &\stackrel{(3)}{\lesssim} \sum_{k=1}^{K}\sum_{t=1}^{H} \E\brs*{\frac{1}{n_{k-1}(s_t^k,a_t^k) \vee 1} \mid \F_{k-1} } \\
    &\stackrel{(4)}{\lesssim} \Olog (SAH).
\end{align*}
In $(1)$, we used the definition of $\tilde{r}_{k-1}$ and the inequality $(a+b)^2\le 2a^2+2b^2$. $(2)$ is since outside the failure event $F^r$, $(\hat r_{k-1} - r)(s,a)\le b_k^r(s,a)$. $(3)$ uses the fact that $R(s,a)\in[0,1]$, and thus $\hat{\VAR}(R(s,a)\le 1$, and $\sqrt{n}\le n$ for $n\ge 1$. Finally, $(4)$ is due to Lemma \ref{lemma: sum of 1 over n}.

Lastly, we bound $(***)$.
\begin{align*}
   &\sum_{k=1}^{K}\sum_{t=1}^{H} \E\brs*{ \br*{(\tilde p_{k-1} - p)(s_t^k,a_{t}^k)^T\bar V^{k-1}_{t+1}}^2  \mid \F_{k-1} }\\
   &=  \sum_{k=1}^{K}\sum_{t=1}^{H} \E\brs*{ \br*{(\hat p_{k-1} - p)(s_t^k,a_{t}^k)^T\bar V^{k-1}_{t+1} + b_{k}^{pv}(s_t^k,a_t^k)}^2  \mid \F_{k-1} }\\
    &\stackrel{(1)}{\leq}  2 \sum_{k=1}^{K}\sum_{t=1}^{H} \E\brs*{ \br*{(\hat p_{k-1} - p)(s_t^k,a_{t}^k)^T\bar V^{k-1}_{t+1}}^2 + \br*{b_{k}^{pv}(s_t^k,a_t^k)}^2   \mid \F_{k-1} }\\
    &\stackrel{(2)}{\leq}  2 \sum_{k=1}^{K}\sum_{t=1}^{H} \E\brs*{ \br*{\norm{\hat p_{k-1} - p}_{1}\norm{\bar V^{k-1}_{t+1}}_{\infty}}^2 + \br*{b_{k}^{pv}(s_t^k,a_t^k)}^2   \mid \F_{k-1} }\\
    &\stackrel{(3)}{\leq}  2 \sum_{k=1}^{K}\sum_{t=1}^{H} \E\brs*{ H^2\norm{\hat p_{k-1} - p}_{1}^2 + \br*{b_{k}^{pv}(s_t^k,a_t^k)}^2   \mid \F_{k-1} }\\
    &\stackrel{(4)}{\lesssim}  \sum_{k=1}^{K}\sum_{t=1}^{H} \E\brs*{\frac{H^2S}{n_{k-1}(s_t^k,a_t^k)} + \frac{\br*{2B_vH+5J+B_p}^2}{n_{k-1}(s_t^k,a_t^k)}   \mid \F_{k-1} }.\\
    & \stackrel{(5)}{\lesssim} \Olog(SAH(F+D)^2)
\end{align*}

Similarly to the bound on the reward, $(1)$ uses the inequality $(a+b)^2\le2a^2+2b^2$. Inequality $(2)$ is due to H\"older's inequality, and $(3)$ bounds $\norm{\bar V^{k-1}_{t+1}}_{\infty}\le H$, which is due to Lemma \ref{lemma: 3 properties of decreasing process of EULER}. Next, $(4)$ bounds the transition error outside to failure event $F^{pn1}$ and $b_{k}^{pv}$ according to Lemma \ref{lemma: bkv bound}. Finally, $(5)$ is by Lemma \ref{lemma: sum of 1 over n} and noting that $H^2S + (2B_vH+5J+B_p)^2\lesssim (F+D)^2$.

Substituting all of the results into \eqref{eq:squared difference decomposition}, and remembering the $H^2$ factor in this equation, gives the desired result.

\end{proof}

\subsection{Bounding Different Terms in the Regret Decomposition} \label{sec:euler regret decomp bounds}

In this section, we bound each of the individual terms of the regret decomposition (Equation \ref{eq: regert of euler paper}), relaying on results from \citep{zanette2019tighter}, as well as on the new lemmas derived in Section~\ref{sec:helpful lemma proof}, Lemma~\ref{lemma: central lemma of EULER analysis} and Lemma~\ref{lemma: squared difference upper to real}. First, we present the problem dependent constants of \citep{zanette2019tighter} for general admissible confidence intervals, and their relation to problem dependent constants with Bernstein's inequality

\begin{lemma} \label{lemma: bernstein problem dependent}
Let $\mathbb{C}^*$ and $\mathbb{C}^{\pi}$ be upper dependent bounds on the following qualities:
\begin{align*}
    &\mathbb{C}^* \ge\frac{1}{T}\sum_{k=1}^K\sum_{t=1}^H\sum_{s,a} w_{tk}(s,a)g(p,V^*_{t+1})^2 \\
    & \mathbb{C}^{\pi} \ge\frac{1}{T}\sum_{k=1}^K\sum_{t=1}^H\sum_{s,a} w_{tk}(s,a)g(p,V^{\pi_k}_{t+1})^2 \enspace,
\end{align*}
with $g(p,V) = \sqrt{2\VAR_{s'\sim p(\cdot\mid s,a)}V(s')\ln\frac{2SAT}{\delta'}}$, and let 
\begin{align*}
\mathbb{C}_r^* = \frac{1}{T}\br*{\sum_{k=1}^K\sum_{t=1}^H  \sum_{(s,a)\in L_k} w_{tk}(s,a)\VAR R(s,a)}\enspace ,
\end{align*}
where $L_k$ is defined in Definition \ref{defn:good set}. 
Finally, let ${\mathbb{Q}^*\eqdef \max_{s,a,t} \br*{\VAR{R(s,a)+\VAR_{s'\sim p(\cdot\mid s,a)}V^*_{t+1}(s')}}}$. Then,
\begin{align*}
    &\mathbb{C}_r^* + \mathbb{C}^* \lesssim \mathbb{Q}^* \\
    & \mathbb{C}^{\pi} \lesssim \frac{\mathcal{G}^2}{H} \\
    & \mathbb{C}_r^* \le \frac{\mathcal{G}^2}{H} 
\end{align*}
\end{lemma}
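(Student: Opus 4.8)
The plan is to rewrite each of the three constants as an expected sum of one-step conditional variances accumulated along the trajectory generated by $\pi_k$ in the \emph{true} MDP, and then control these sums either by a crude maximum (for the $\mathbb{Q}^*$ bound) or by the law of total variance (for the $\mathcal{G}^2/H$ bounds). The basic identity I would use repeatedly is that $w_{tk}(s,a)=\Pr\br*{s_t^k=s,a_t^k=a\mid\F_{k-1}}$ is a probability distribution over $(s,a)$ for each fixed $t,k$, so $\sum_{s,a}w_{tk}(s,a)=1$ and $\sum_{s,a}w_{tk}(s,a)f(s,a)=\E\brs*{f(s_t^k,a_t^k)\mid\F_{k-1}}$ for any $f$. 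Since $g(p,V)^2=2\VAR_{s'\sim p(\cdot\mid s,a)}V(s')\ln\frac{2SAT}{\delta'}$, the logarithmic factor is absorbed into $\lesssim$, so up to poly-log factors $\mathbb{C}^*$ and $\mathbb{C}^{\pi}$ equal $\frac1T$ times the expected sums of $\VAR_{s'}V^*_{t+1}$ and $\VAR_{s'}V^{\pi_k}_{t+1}$, respectively.

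For the bound $\mathbb{C}_r^*+\mathbb{C}^*\lesssim\mathbb{Q}^*$, I would simply bound every per-$(s,a,t)$ variance term by its maximum. Dropping the restriction to $L_k$ (which only enlarges the sum), both $\VAR R(s,a)\le\mathbb{Q}^*$ and $\VAR_{s'}V^*_{t+1}(s')\le\mathbb{Q}^*$ hold for all $s,a,t$ by the definition of $\mathbb{Q}^*$. Using $\sum_{s,a}w_{tk}(s,a)=1$ and $T=KH$, each of $\mathbb{C}_r^*$ and $\mathbb{C}^*$ is a weighted average of such terms and is therefore $\lesssim\frac1T\sum_{k=1}^K\sum_{t=1}^H\mathbb{Q}^*=\frac{KH}{T}\mathbb{Q}^*=\mathbb{Q}^*$; adding the two gives the claim.

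For the two $\mathcal{G}^2/H$ bounds the crude maximum is too lossy (it would cost an extra factor $H$), so the key tool is the \emph{law of total variance} for the finite-horizon chain run under $\pi_k$. Writing $Z_k=\sum_{t=1}^H R(s_t^k,a_t^k)$ for the random return of episode $k$, the law of total variance gives $\VAR^{\pi_k}(Z_k\mid\F_{k-1})=\E\brs*{\sum_{t=1}^H\br*{\VAR R(s_t^k,a_t^k)+\VAR_{s'\sim p(\cdot\mid s_t^k,a_t^k)}V^{\pi_k}_{t+1}(s')}\mid\F_{k-1}}$, where the per-step term splits into the reward and next-value variances because the reward noise and the transition are independent given $(s_t^k,a_t^k)$. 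Since $Z_k$ is supported on $[0,\mathcal{G}]$, boundedness yields $\VAR^{\pi_k}(Z_k\mid\F_{k-1})\le\mathcal{G}^2$. Both summands being non-negative, the expected sum of $\VAR R$ alone and the expected sum of $\VAR_{s'}V^{\pi_k}_{t+1}$ alone are each at most $\mathcal{G}^2$. Translating back, $\mathbb{C}^{\pi}\lesssim\frac1T\sum_{k=1}^K\E\brs*{\sum_{t=1}^H\VAR_{s'}V^{\pi_k}_{t+1}\mid\F_{k-1}}\le\frac{K\mathcal{G}^2}{T}=\frac{\mathcal{G}^2}{H}$, and identically $\mathbb{C}_r^*\le\frac1T\sum_{k=1}^K\E\brs*{\sum_{t=1}^H\VAR R(s_t^k,a_t^k)\mid\F_{k-1}}\le\frac{\mathcal{G}^2}{H}$, where the restriction to $L_k$ only helps.

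The main obstacle is establishing the law of total variance in the precise per-step form above — in particular justifying the decomposition of the conditional variance of $R(s_t^k,a_t^k)+V^{\pi_k}_{t+1}(s_{t+1}^k)$ into $\VAR R+\VAR_{s'}V^{\pi_k}_{t+1}$, which needs the independence of the reward and the transition given the state-action pair, and the telescoping over $t$ that collapses the nested conditional variances into $\VAR(Z_k)$. Everything else is bookkeeping with the identity $\sum_{s,a}w_{tk}(s,a)=1$ and the relation $T=KH$.
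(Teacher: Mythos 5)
Your proposal is correct, and for two of the three claims it is essentially the paper's own argument: the bound $\mathbb{C}_r^*+\mathbb{C}^*\lesssim\mathbb{Q}^*$ is obtained by the same crude maximization of the per-step variances combined with $\sum_{s,a}w_{tk}(s,a)=1$ and $T=KH$, and the bound on $\mathbb{C}^{\pi}$ likewise rests on the law of total variance plus boundedness of the episodic return by $\mathcal{G}$. The genuine difference is in how the two $\mathcal{G}^2/H$ bounds are organized. The paper applies the LTV only to the mean-reward return, using the identity $\E\brs*{\br*{\sum_{t}r(s_t^k,a_t^k)-V_1^{\pi_k}(s_1^k)}^2\mid\F_{k-1}}=\E\brs*{\sum_t\VAR_{s'\sim p(\cdot\mid s_t^k,a_t^k)}V^{\pi_k}_{t+1}(s')\mid\F_{k-1}}$ (cited as Lemma 15 of \citealt{zanette2019tighter}), which yields only the $\mathbb{C}^{\pi}$ bound; the remaining claim $\mathbb{C}_r^*\le\mathcal{G}^2/H$ is not re-derived at all but quoted from Lemma 8 of the same reference. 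You instead apply the LTV to the noisy return $Z_k=\sum_t R(s_t^k,a_t^k)$, so each step contributes $\VAR R(s_t^k,a_t^k)+\VAR_{s'\sim p(\cdot\mid s_t^k,a_t^k)}V^{\pi_k}_{t+1}(s')$, and both $\mathcal{G}^2/H$ bounds then fall out of the single inequality $\VAR\br*{Z_k\mid\F_{k-1}}\le\mathcal{G}^2$ by discarding whichever non-negative summand is not needed. Your route is more self-contained (it actually proves the $\mathbb{C}_r^*$ bound rather than citing it), but it buys this at two small costs the paper's route avoids: it needs the reward noise to be independent of the next state given $(s,a)$ (standard in this setting, and you correctly flag it, but it is nowhere stated in the paper), and it needs $\mathcal{G}$ to bound the \emph{realized} random return $Z_k$ almost surely, whereas the paper's version only requires $\sum_t r(s_t^k,a_t^k)\le\mathcal{G}$ along any trajectory; both readings are consistent with how $\mathcal{G}$ is introduced in the statement of the EULER-GP regret bound, so your argument stands.
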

\begin{proof}
We follow proposition 6 of \citep{zanette2019tighter}, and start by substituting $g(p,V)$ into $\mathbb{C}_r^* + \mathbb{C}^*$
\begin{align*}
    \mathbb{C}_r^* + \mathbb{C}^*
    &\lesssim \frac{1}{T}\br*{\sum_{k=1}^K\sum_{t=1}^H  \sum_{(s,a)\in L_k} w_{tk}(s,a)\VAR R(s,a)} \\
    & \quad + \frac{1}{T}\sum_{k=1}^K\sum_{t=1}^H\sum_{s,a} w_{tk}(s,a)\VAR_{s'\sim p(\cdot\mid s,a)}V^*_{t+1}(s') \\
    & \le  \frac{1}{T}\br*{\sum_{k=1}^K\sum_{t=1}^H  \sum_{(s,a)} w_{tk}(s,a)\br*{\VAR R(s,a)} +\VAR_{s'\sim p(\cdot\mid s,a)}V^*_{t+1}(s')}\\
    & \le \frac{1}{T}\br*{\sum_{k=1}^K\sum_{t=1}^H  \sum_{(s,a)} w_{tk}(s,a)\max_{s,a,t}\brc*{\VAR R(s,a)} +\VAR_{s'\sim p(\cdot\mid s,a)}V^*_{t+1}(s')}\\
    & =  \frac{\mathbb{Q}^*}{T}\br*{\sum_{k=1}^K\sum_{t=1}^H  \sum_{(s,a)} w_{tk}(s,a)} \\
    & = \mathbb{Q}^*
\end{align*}
where the last equality is since $\sum_{(s,a)} w_{tk}(s,a)=1$ and $T=HK$. 

Next, we bound $\mathbb{C}^{\pi}$:
\begin{align*}
    \mathbb{C}^{\pi}  
    &\lesssim \frac{1}{T}\sum_{k=1}^K\sum_{t=1}^H\sum_{s,a} w_{tk}(s,a)\VAR_{s'\sim p(\cdot\mid s,a)}V^{\pi_k}_{t+1}(s') \\
    &\stackrel{(1)}{=} \frac{1}{T}\sum_{k=1}^K\E\brs*{\br*{\sum_{t=1}^Hr(s_t^k,a_t^k) - V^{\pi_k}_1(s_1^k)}^2 \mid \F_{k-1}} \\
    & \le\frac{1}{T}\sum_{k=1}^K\E\brs*{\br*{\sum_{t=1}^Hr(s_t^k,a_t^k)}^2 \mid \F_{k-1}} \\
    & \stackrel{(2)}{\le} \frac{1}{T}K \mathcal{G}^2 
    = \frac{\mathcal{G}^2}{H} \enspace,
\end{align*}
where $(1)$ is due to the Law of Total Variance (LTV), which was used in \citep{azar2017minimax}, and was stated formally in Lemma 15 of \citep{zanette2019tighter}. In $(2)$, we bound the reward in an episode by $\mathcal{G}$.

Finally, the bound on $\mathbb{C}_r^*$ is proven in Lemma 8 of \citep{zanette2019tighter}, which concludes this proof.

\end{proof}

We also prove the following lemma that helps translating bounds that depend on $\mathbb{C}^*$ to bounds that depends on $\mathbb{C}^\pi$. This lemma is equivalent to lemma 14 of \citep{zanette2019tighter}, but the prove requires Lemma \ref{lemma: squared difference upper to real}, that was not proved in their paper. This is since they rely on the inequality $\underline{V}_t^{k-1}\le V^{\pi_k}$, which does not seem to hold.

\begin{lemma}[Bound Translation Lemma] \label{lemma: bound translation}
Outside the failure event, it holds that 

\begin{align*}
    \sum_{k=1}^K\sum_{t=1}^H  \sum_{(s,a)\in L_k} &w_{tk}(s,a)\frac{g(p,V_{t+1}^*)}{\sqrt{n_{k-1}(s,a) \vee 1}} 
    - \sum_{k=1}^K\sum_{t=1}^H  \sum_{(s,a)\in L_k} w_{tk}(s,a)\frac{g(p,V_{t+1}^{\pi_k})}{\sqrt{n_{k-1}(s,a) \vee 1}} \\
    &= \Olog\br*{B_vSAH^{\frac{3}{2}}(F+D)+B_vSAH^{\frac{5}{2}}}
\end{align*}

where $F,D$ are defined in Lemma \ref{lemma: next episode value function differences}.
\end{lemma}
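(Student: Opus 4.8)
The plan is to reduce the difference to two quantities already controlled by earlier lemmas, using the Lipschitz property of $g$ and the Cauchy--Schwarz inequality. Since we use Bernstein's inequality, the confidence interval is admissible (Lemma~\ref{lemma: bernstein admissible}), so by property~\ref{def-prop:admiss definition} of Definition~\ref{defn:admissible} the function $g(p,\cdot)$ is $B_v$-Lipschitz with respect to $\norm{\cdot}_{2,p}$, i.e.\ $\abs*{g(p,V^*_{t+1})-g(p,V^{\pi_k}_{t+1})}\le B_v\norm{V^*_{t+1}-V^{\pi_k}_{t+1}}_{2,p}$. Because the coefficients $w_{tk}(s,a)/\sqrt{n_{k-1}(s,a)\vee 1}$ are nonnegative, applying the triangle inequality to the difference of the two sums shows that the left-hand side is bounded in absolute value by
\begin{align*}
B_v\sum_{k=1}^K\sum_{t=1}^H\sum_{(s,a)\in L_k} w_{tk}(s,a)\frac{\norm{V^*_{t+1}-V^{\pi_k}_{t+1}}_{2,p}}{\sqrt{n_{k-1}(s,a)\vee 1}}.
\end{align*}
Splitting each weight as $w_{tk}=\sqrt{w_{tk}}\cdot\sqrt{w_{tk}}$ and applying Cauchy--Schwarz over the index $(k,t,s,a)$ factors this bound into
\begin{align*}
B_v\sqrt{\sum_{k=1}^K\sum_{t=1}^H\sum_{(s,a)\in L_k} w_{tk}(s,a)\norm{V^*_{t+1}-V^{\pi_k}_{t+1}}_{2,p}^2}\ \sqrt{\sum_{k=1}^K\sum_{t=1}^H\sum_{(s,a)\in L_k} \frac{w_{tk}(s,a)}{n_{k-1}(s,a)\vee 1}}.
\end{align*}

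Next I would bound each factor separately. For the first factor, the key observation---and the point at which this argument departs from \citep{zanette2019tighter}---is that optimism (Lemma~\ref{lemma: optimism EULER}) gives the pointwise ordering $V^{\pi_k}_{t+1}\le V^*_{t+1}\le \bar V^{k-1}_{t+1}$, so that $0\le V^*_{t+1}-V^{\pi_k}_{t+1}\le \bar V^{k-1}_{t+1}-V^{\pi_k}_{t+1}$ and therefore $\norm{V^*_{t+1}-V^{\pi_k}_{t+1}}_{2,p}^2 = p(\cdot\mid s,a)^T\br*{V^*_{t+1}-V^{\pi_k}_{t+1}}^2 \le p(\cdot\mid s,a)^T\br*{\bar V^{k-1}_{t+1}-V^{\pi_k}_{t+1}}^2$. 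Extending the sum from $L_k$ to all $(s,a)$ (which only increases it, since the summands are nonnegative), Lemma~\ref{lemma: squared difference upper to real} bounds the first squared factor by $\Olog\br*{SAH^3(F+D)^2+SAH^5}$. For the second factor, writing $\sum_t w_{tk}(s,a)=w_k(s,a)$ and invoking the good-set analysis (Lemma~\ref{lemma: sum of 1 over n} of Appendix~\ref{sec: Lk definition}) bounds the restricted sum by $\Olog(SA)$.

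Finally I would multiply the two bounds and use $\sqrt{a+b}\le\sqrt a+\sqrt b$, obtaining
\begin{align*}
B_v\sqrt{\Olog\br*{SAH^3(F+D)^2+SAH^5}}\,\sqrt{\Olog(SA)}
= \Olog\br*{B_vSAH^{\frac{3}{2}}(F+D)+B_vSAH^{\frac{5}{2}}},
\end{align*}
which is exactly the claimed bound. The main obstacle lies in the first factor: the original EULER analysis uses the inequality $\underline V^{k-1}_{t+1}\le V^{\pi_k}_{t+1}$---which, as the lemma statement notes, does not hold in the greedy setting---to replace $V^*-V^{\pi_k}$ by the tracked gap $\bar V-\underline V$. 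That shortcut being unavailable, the entire purpose of Lemma~\ref{lemma: squared difference upper to real} (itself resting on the extra Decreasing-Bounded-Process term isolated in Lemma~\ref{lemma: next episode value function differences}) is to control $\sum w\,p(\cdot\mid s,a)^T\br*{\bar V^{k-1}_{t+1}-V^{\pi_k}_{t+1}}^2$ directly, so that the Cauchy--Schwarz split closes at the correct rate.
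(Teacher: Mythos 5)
Your proof is correct and follows essentially the same route as the paper's: the $B_v$-Lipschitz property of $g$ from Definition~\ref{defn:admissible}, Cauchy--Schwarz, optimism (Lemma~\ref{lemma: optimism EULER}) to pass from $V_{t+1}^*-V_{t+1}^{\pi_k}$ to $\bar{V}^{k-1}_{t+1}-V^{\pi_k}_{t+1}$ (a step the paper leaves implicit), then Lemma~\ref{lemma: squared difference upper to real} for the first factor and the good-set bound $\Olog(SA)$ for the second. The only nit is a citation: the bound $\sum_{k,t}\sum_{(s,a)\in L_k} w_{tk}(s,a)/(n_{k-1}(s,a)\vee 1) \le \Olog(SA)$ is Lemma~\ref{lemma: good set visitation count ratio}, not Lemma~\ref{lemma: sum of 1 over n}.
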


\begin{proof}
We start as in the original Lemma 14 of \citep{zanette2019tighter}:
\begin{align*}
    \sum_{k=1}^K\sum_{t=1}^H  &\sum_{(s,a)\in L_k} w_{tk}(s,a)\frac{g(p,V_{t+1}^*)}{\sqrt{n_{k-1}(s,a) \vee 1}} 
    - \sum_{k=1}^K\sum_{t=1}^H  \sum_{(s,a)\in L_k} w_{tk}(s,a)\frac{g(p,V_{t+1}^{\pi_k})}{\sqrt{n_{k-1}(s,a) \vee 1}} \\
    &\stackrel{(1)}{\le} B_v\sum_{k=1}^K\sum_{t=1}^H  \sum_{(s,a)\in L_k} w_{tk}(s,a)\frac{\norm*{V_{t+1}^*-V_{t+1}^{\pi_k}}_{2,p}}{\sqrt{n_{k-1}(s,a) \vee 1}} \\
    & \stackrel{(2)}{\le} B_v\sqrt{\sum_{k=1}^K\sum_{t=1}^H  \sum_{(s,a)\in L_k} \frac{w_{tk}(s,a)}{n_{k-1}(s,a) \vee 1}}\sqrt{\sum_{k=1}^K\sum_{t=1}^H  \sum_{(s,a)\in L_k} w_{tk}(s,a)\norm*{V_{t+1}^*-V_{t+1}^{\pi_k}}_{2,p}^2} \\
    & \stackrel{(3)}{\lesssim} B_v \sqrt{SA} \sqrt{\sum_{k=1}^K\sum_{t=1}^H  \sum_{(s,a)} w_{tk}(s,a)p(\cdot \mid s,a)^T\br*{\bar{V}^{k-1}_{t+1} - V^{\pi_k}_{t+1}}^2}
\end{align*}

where in $(1)$ we use property \ref{def-prop:admiss definition} of Definition \ref{defn:admissible}, and $(2)$ is due to Cauchy-Schwarz inequality. In $(3)$ we used Lemma \ref{lemma: good set visitation count ratio}. Next, we apply Lemma \ref{lemma: squared difference upper to real} to bound the remaining term by $\Olog\br*{\sqrt{SAH^3(F+D)^2 + SAH^5}}$ and bound $\sqrt{SAH^3(F+D)^2 + SAH^5}\le \sqrt{SAH^3(F+D)^2}+\sqrt{SAH^5}$, which yields the desired result
\end{proof}

We are now ready to bound each of the terms of the regret. To bound the first term, we cite Lemma 8 of \citep{zanette2019tighter}:

\begin{lemma}[Optimistic Reward Bound] \label{lemma: optimistic reward bound}
Outside the failure event, it holds that 

\begin{align*}
    \sum_{k=1}^K\sum_{t=1}^H  \sum_{(s,a)\in L_k} w_{tk}(s,a)(\tilde{r}_{k-1} - r)(s_t^k,a_t^k) = \Olog\br*{\sqrt{\mathbb{C}_r^*SAT}+SA}
\end{align*}
\end{lemma}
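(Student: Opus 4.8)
The plan is to mirror the proof of Lemma 8 of \citep{zanette2019tighter}, which carries over to the greedy setting essentially verbatim because the reward estimate and its concentration events depend only on the visitation counter $n_{k-1}$ and not on how exploration is driven. Since $\tilde{r}_{k-1}(s,a) = \hat{r}_{k-1}(s,a) + b_k^r(s,a)$, I would first use that outside failure event $F^r$ we have $\abs{(\hat{r}_{k-1} - r)(s,a)} \le b_k^r(s,a)$, so that $(\tilde{r}_{k-1} - r)(s,a) \le 2 b_k^r(s,a)$. It then suffices to bound $\sum_{k=1}^K\sum_{t=1}^H\sum_{(s,a)\in L_k} w_{tk}(s,a) b_k^r(s,a)$. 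I would split $b_k^r$ into its empirical-variance term $\sqrt{2\hat{\VAR}(R(s,a))\ln\frac{4SAT}{\delta'}/(n_{k-1}(s,a)\vee 1)}$ and its lower-order term $\frac{14\ln\frac{4SAT}{\delta'}}{3(n_{k-1}(s,a)\vee 1)}$, and treat each separately.

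For the lower-order term I would directly invoke Lemma \ref{lemma: sum of 1 over n} from the good-set analysis, which bounds $\sum_{k,t,(s,a)\in L_k} w_{tk}(s,a)/(n_{k-1}(s,a)\vee 1) = \Olog(SA)$, yielding an $\Olog(SA)$ contribution. For the empirical-variance term I would first pass to the true variance: outside failure event $F^{vr}$, $\sqrt{\hat{\VAR}(R(s,a))} \le \sqrt{\VAR R(s,a)} + \sqrt{4\ln\frac{2SAT}{\delta'}/(n_{k-1}(s,a)\vee 1)}$. Substituting this and absorbing the polylog factors into $\Olog(\cdot)$, the correction piece becomes another $\Olog(1/(n_{k-1}(s,a)\vee 1))$ term, again bounded by $\Olog(SA)$ via the same lemma, while the main piece reduces the problem to bounding $\sum_{k,t,(s,a)\in L_k} w_{tk}(s,a)\sqrt{\VAR R(s,a)}/\sqrt{n_{k-1}(s,a)\vee 1}$.

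For this remaining term I would apply the Cauchy--Schwarz inequality to separate the count factor from the variance factor:
\begin{align*}
\sum_{k,t,(s,a)\in L_k} w_{tk}(s,a)\frac{\sqrt{\VAR R(s,a)}}{\sqrt{n_{k-1}(s,a)\vee 1}} \le \sqrt{\sum_{k,t,(s,a)\in L_k}\frac{w_{tk}(s,a)}{n_{k-1}(s,a)\vee 1}}\cdot\sqrt{\sum_{k,t,(s,a)\in L_k} w_{tk}(s,a)\VAR R(s,a)}.
\end{align*}
The first factor is $\Olog(\sqrt{SA})$ by Lemma \ref{lemma: good set visitation count ratio}, and the second factor equals $\sqrt{T\mathbb{C}_r^*}$ by the definition of $\mathbb{C}_r^*$ in Lemma \ref{lemma: bernstein problem dependent}. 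Their product is therefore $\Olog(\sqrt{\mathbb{C}_r^* SAT})$, which together with the $\Olog(SA)$ lower-order contributions gives the claimed bound $\Olog(\sqrt{\mathbb{C}_r^* SAT} + SA)$.

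The step I expect to be the main subtlety is not the algebra, which is routine, but ensuring that every inequality is legitimate: all bonuses and counts are $\F_{k-1}$-measurable so the conditional-expectation form $\sum_{s,a} w_{tk}(s,a)(\cdot)$ is exact, and the restriction to the good set $(s,a)\in L_k$ is precisely what guarantees $n_{k-1}(s,a)$ is large enough for the count lemmas and the variance-transfer step to apply. The key conceptual point to flag is that $b_k^r$ and the events $F^r, F^{vr}$ are functions of the visitation counters alone, so the greedy exploration scheme does not affect this bound relative to the full-planning version of EULER.
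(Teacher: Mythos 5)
Your proposal is correct and matches the paper's treatment: the paper does not reprove this lemma but simply cites Lemma~8 of \citet{zanette2019tighter}, noting (as you do) that the reward bonus and the failure events $F^r, F^{vr}$ depend only on the visitation counters, so the argument is unaffected by greedy exploration. Your reconstruction -- bounding $(\tilde{r}_{k-1}-r)\le 2b_k^r$ outside $F^r$, transferring empirical to true variance outside $F^{vr}$, absorbing the $1/n$ terms via Lemma~\ref{lemma: sum of 1 over n} and Lemma~\ref{lemma: good set visitation count ratio}, and applying Cauchy--Schwarz to obtain the $\sqrt{\mathbb{C}_r^*SAT}$ term -- is precisely that cited proof.
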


The next three lemmas correspond to the remaining terms, and follow Lemmas 9,10 and 11 of \citep{zanette2019tighter}, with slight modifications:

\begin{lemma}[Empirical Transition Bound] \label{lemma: empirical transition bound}
Outside the failure event, it holds that 

\begin{align*}
    \sum_{k=1}^K\sum_{t=1}^H  \sum_{(s,a)\in L_k} w_{tk}(s,a)(\hat{p}_{k-1} - p_{k-1})(\cdot\mid s,a)^TV_{t+1}^* = \Olog\br*{\sqrt{\mathbb{C}^*SAT}+JSA}
\end{align*}
The following bound also holds:
\begin{align*}
    &\sum_{k=1}^K\sum_{t=1}^H  \sum_{(s,a)\in L_k} w_{tk}(s,a)(\hat{p}_{k-1} - p_{k-1})(\cdot\mid s,a)^TV_{t+1}^* \\
    &= \Olog\br*{\sqrt{\mathbb{C}^{\pi}SAT}+JSA + B_vSAH^{\frac{3}{2}}(F+D)+B_vSAH^{\frac{5}{2}}}
\end{align*}
where $F,D$ are defined in Lemma \ref{lemma: next episode value function differences}.
\end{lemma}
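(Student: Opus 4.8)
The plan is to bound this quantity, which is term $(3)$ of the regret decomposition \eqref{eq: regert of euler paper} (here $p_{k-1}$ is just the true kernel $p$), by first controlling each summand pointwise via the transition-concentration failure event $F^{pv}$, and then summing using Cauchy--Schwarz together with the definitions of the problem-dependent constants $\mathbb{C}^*$ and $\mathbb{C}^{\pi}$. The crucial observation is that $V^*_{t+1}$ is a \emph{fixed} (deterministic, $\F_{k-1}$-independent) vector, so the inner product $(\hat p_{k-1}-p)(\cdot\mid s,a)^TV^*_{t+1}$ concentrates directly against the true variance of $V^*_{t+1}$, which is precisely what $F^{pv}$ supplies. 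Outside the failure event, for every $(s,a)$ and all $t,k$,
\[
\abs*{(\hat{p}_{k-1}-p)(\cdot\mid s,a)^TV^*_{t+1}}
\le \frac{g\br*{p,V^*_{t+1}}}{\sqrt{n_{k-1}(s,a)\vee 1}} + \frac{J}{n_{k-1}(s,a)\vee 1},
\]
where $g(p,V)=\sqrt{2\VAR_{s'\sim p}V\ln\frac{2SAT}{\delta'}}$ and $J=\tfrac{2}{3}H\ln\frac{2SAT}{\delta'}$ are the Bernstein quantities of Lemma \ref{lemma: bernstein admissible} (the discrepancy between $\tfrac{4SAT}{\delta'}$ and $\tfrac{2SAT}{\delta'}$ inside the logs is absorbed into $\Olog$). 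I would substitute this into the sum and split it into a ``$g$-part'' and a ``$J$-part''.

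For the $J$-part, the residual sum $\sum_{k,t}\sum_{(s,a)\in L_k} w_{tk}(s,a)/(n_{k-1}(s,a)\vee 1)$ is controlled by the good-set visitation lemma (Lemma \ref{lemma: good set visitation count ratio}), which exploits that on $L_k$ the count $n_{k-1}(s,a)$ is proportional to the expected visitation $\sum_{j<k}w_j(s,a)$; this gives $\Olog(SA)$, hence a contribution $\Olog(JSA)$. For the $g$-part I would apply Cauchy--Schwarz to separate the two factors:
\[
\sum_{k,t}\sum_{(s,a)\in L_k}\!\! w_{tk}(s,a)\frac{g(p,V^*_{t+1})}{\sqrt{n_{k-1}(s,a)\vee 1}}
\le \sqrt{\sum_{k,t}\sum_{(s,a)\in L_k}\!\!\frac{w_{tk}(s,a)}{n_{k-1}(s,a)\vee 1}}\;
\sqrt{\sum_{k,t}\sum_{(s,a)\in L_k}\!\! w_{tk}(s,a)\,g(p,V^*_{t+1})^2}.
\]
The first factor is $\sqrt{\Olog(SA)}$ again by Lemma \ref{lemma: good set visitation count ratio}, and the second is at most $\sqrt{T\mathbb{C}^*}$ directly by the definition of $\mathbb{C}^*$ (enlarging the inner sum from $(s,a)\in L_k$ to all $(s,a)$ only increases it, as every summand is non-negative). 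Multiplying, the $g$-part is $\Olog(\sqrt{\mathbb{C}^*SAT})$, which with the $J$-part gives the first claimed bound $\Olog(\sqrt{\mathbb{C}^*SAT}+JSA)$.

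To obtain the second bound I would replace $g(p,V^*_{t+1})$ by $g(p,V^{\pi_k}_{t+1})$ in the $g$-part, writing $g(p,V^*_{t+1})=\big(g(p,V^*_{t+1})-g(p,V^{\pi_k}_{t+1})\big)+g(p,V^{\pi_k}_{t+1})$. The difference term, divided by $\sqrt{n_{k-1}(s,a)\vee 1}$ and summed over $L_k$, is exactly what the Bound Translation Lemma (Lemma \ref{lemma: bound translation}) bounds, contributing $\Olog\!\br*{B_vSAH^{3/2}(F+D)+B_vSAH^{5/2}}$. The remaining $g(p,V^{\pi_k}_{t+1})$ term is handled as in the previous paragraph, now invoking the definition of $\mathbb{C}^{\pi}$ in the second Cauchy--Schwarz factor to get $\Olog(\sqrt{\mathbb{C}^{\pi}SAT})$; summing the three contributions yields the $\mathbb{C}^{\pi}$-dependent bound. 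I expect the genuine obstacle to lie entirely in this second bound: the translation from $V^*$ to $V^{\pi_k}$ cannot use the inequality $\underline{V}^{k-1}_t\le V^{\pi_k}_t$ of \citep{zanette2019tighter} (which does not hold here), so all the difficulty is pushed into Lemma \ref{lemma: bound translation}, which itself rests on the cumulative squared-value-difference estimate of Lemma \ref{lemma: squared difference upper to real}; confirming that these feed through with the advertised $H$-dependence is the delicate point.
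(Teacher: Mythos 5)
Your proposal is correct and follows essentially the same route as the paper's proof: the pointwise Bernstein/admissibility bound outside $F^{pv}$, Cauchy--Schwarz combined with Lemma \ref{lemma: good set visitation count ratio} and the definition of $\mathbb{C}^*$ for the first bound, and an application of the Bound Translation Lemma (Lemma \ref{lemma: bound translation}) to swap $g(p,V^*_{t+1})$ for $g(p,V^{\pi_k}_{t+1})$ before Cauchy--Schwarz for the second. Your closing remark about where the real difficulty lies is also accurate --- the paper itself notes that the inequality $\underline{V}_t^{k-1}\le V_t^{\pi_k}$ from \citet{zanette2019tighter} is unavailable here, which is exactly why the translation is routed through Lemmas \ref{lemma: bound translation} and \ref{lemma: squared difference upper to real}.
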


\begin{proof}
Similarly to Lemma 9 of \citep{zanette2019tighter}, by the definition of $\phi$ (Definition \ref{defn:admissible}), and outside failure event $F^{pv}$, 

\begin{align}
    &\sum_{k=1}^K\sum_{t=1}^H  \sum_{(s,a)\in L_k} w_{tk}(s,a)(\hat{p}_{k-1} - p_{k-1})(\cdot\mid s,a)^TV_{t+1}^* \nonumber\\
    &\le \sum_{k=1}^K\sum_{t=1}^H  \sum_{(s,a)\in L_k} w_{tk}(s,a) \br*{\frac{g(p,V_{t+1}^*)}{\sqrt{n_{k-1}(s,a) \vee 1}} + \frac{J}{n_{k-1}(s,a) \vee 1}} \label{eq:empirical transition reccurring}\\
    &\stackrel{(*)}{\le} \sqrt{\sum_{k=1}^K\sum_{t=1}^H\sum_{(s,a)\in L_k} w_{tk}(s,a) g(p,V_{t+1}^*)^2 } \sqrt{\sum_{k=1}^K\sum_{t=1}^H\sum_{(s,a)\in L_k}  \frac{w_{tk}(s,a)}{n_{k-1}(s,a) \vee 1} } \nonumber \\
    &\quad +  J\sum_{k=1}^K\sum_{t=1}^H  \sum_{(s,a)\in L_k} \frac{w_{tk}(s,a)}{n_{k-1}(s,a) \vee 1} \nonumber
\end{align}
where the last inequality is by Cauchy-Schwarz Inequality. Substituting the definition of $\mathbb{C}^*$, and using Lemma \ref{lemma: good set visitation count ratio}, we get 
\begin{align*}
    &\lesssim \sqrt{T\mathbb{C}^*} \sqrt{SA} +  JSA \enspace,
\end{align*}
which concludes the first statement of the lemma. To get the second statement, we apply Lemma \ref{lemma: bound translation} before inequality $(*)$ and only then use Cauchy-Schwarz Inequality. This creates the additional constant term of $\Olog\br*{B_vSAH^{\frac{3}{2}}(F+D)+B_vSAH^{\frac{5}{2}}}$. Then, by applying Lemma \ref{lemma: good set visitation count ratio}, we get the bound with $\mathbb{C}^{\pi}$.
\end{proof}

\clearpage 
\begin{lemma}[Lower Order Term] \label{lemma: lower order}
Let $F,D$ be the constants defined in Lemma \ref{lemma: next episode value function differences}. Outside the failure event, it holds that 

\begin{align*}
    \sum_{k=1}^K\sum_{t=1}^H  \sum_{(s,a)\in L_k} &w_{tk}(s,a)\abs*{(\hat{p}_{k-1} - p)(\cdot\mid s,a)^T(\bar{V}_{t+1}^{k-1}-V_{t+1}^*)} \\
    &= \Olog\br*{S^{\frac{3}{2}}AH(F+D+H^{\frac{3}{2}})+S^2AH}
\end{align*}
\end{lemma}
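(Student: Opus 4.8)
The plan is to bound the summand pointwise using the element-wise transition concentration, split it into a dominant ``square-root'' contribution and a lower-order ``linear'' contribution, and then apply Cauchy--Schwarz together with the cumulative squared value difference bound already established in Lemma~\ref{lemma: central lemma of EULER analysis}.

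First I would exploit optimism and pessimism: outside the failure event, Lemma~\ref{lemma: optimism EULER} gives $\underline{V}_{t+1}^{k-1}\le V_{t+1}^*\le \bar{V}_{t+1}^{k-1}$ pointwise, so the gap is non-negative and bounded by $0\le \bar{V}_{t+1}^{k-1}(s')-V_{t+1}^*(s')\le \bar{V}_{t+1}^{k-1}(s')-\underline{V}_{t+1}^{k-1}(s')\le H$. This lets me drop the absolute value and, crucially, replace $(\bar{V}_{t+1}^{k-1}-V_{t+1}^*)$ by $(\bar{V}_{t+1}^{k-1}-\underline{V}_{t+1}^{k-1})$ wherever convenient, which is exactly the quantity controlled by Lemma~\ref{lemma: central lemma of EULER analysis}. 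Next, outside the failure event $F^{ps}$ I would bound each entry $\abs{\hat{p}_{k-1}(s'\mid s,a)-p(s'\mid s,a)}$ by its Bernstein bound $\sqrt{p(s'\mid s,a)/(n_{k-1}(s,a)\vee1)}$ plus the $1/(n_{k-1}(s,a)\vee1)$ term (absorbing logarithmic factors into $\Olog$), and split the double sum accordingly.

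For the linear term, since at most $\N\le S$ successor states are reachable and the value gap is at most $H$, each summand is $\lesssim SH/(n_{k-1}(s,a)\vee1)$; summing over $(s,a)\in L_k,t,k$ against $w_{tk}(s,a)$ and invoking Lemma~\ref{lemma: good set visitation count ratio} (which bounds $\sum_{k,t,(s,a)\in L_k} w_{tk}(s,a)/(n_{k-1}(s,a)\vee1)\lesssim SA$) yields the $\Olog(S^2AH)$ contribution. For the dominant square-root term I would apply Cauchy--Schwarz twice: first over successor states $s'$ to turn $\sum_{s'}\sqrt{p(s'\mid s,a)}(\bar{V}_{t+1}^{k-1}(s')-V_{t+1}^*(s'))$ into $\sqrt{S}\,\sqrt{p(\cdot\mid s,a)^T(\bar{V}_{t+1}^{k-1}-\underline{V}_{t+1}^{k-1})^2}$ (using the value-gap replacement above and that the support has size $\le\N\le S$), and second over the indices $(k,t,s,a)$ to separate $\sqrt{w_{tk}(s,a)/(n_{k-1}(s,a)\vee1)}$ from $\sqrt{w_{tk}(s,a)\,p(\cdot\mid s,a)^T(\bar{V}_{t+1}^{k-1}-\underline{V}_{t+1}^{k-1})^2}$. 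The first factor is $\lesssim\sqrt{SA}$ by Lemma~\ref{lemma: good set visitation count ratio}, and the second is $\lesssim\sqrt{SAH^2(F+D)^2+SAH^5}$ by Lemma~\ref{lemma: central lemma of EULER analysis}. Multiplying the $\sqrt{S}$ prefactor gives $\sqrt{S}\cdot\sqrt{SA}\cdot\sqrt{SAH^2(F+D)^2+SAH^5}=S^{3/2}A\sqrt{H^2(F+D)^2+H^5}\le S^{3/2}AH(F+D)+S^{3/2}AH^{5/2}=S^{3/2}AH(F+D+H^{3/2})$, and adding the linear contribution $\Olog(S^2AH)$ completes the bound.

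The main obstacle I anticipate is the careful bookkeeping of the $S$ factors through the two-stage Cauchy--Schwarz: the inner application produces an extra $\sqrt{S}$ (from the support size) on top of the visitation-ratio factor $\sqrt{SA}$ and the squared-difference factor that itself scales like $\sqrt{SA}$, and these must combine to exactly $S^{3/2}A$ rather than a larger power. The other subtlety, specific to the greedy variant, is that the value used is $\bar{V}^{k-1}$ (before the update) rather than $\bar{V}^{k}$; this is precisely why the replacement $\bar{V}^{k-1}-V^*\le\bar{V}^{k-1}-\underline{V}^{k-1}$ together with the already-adapted Lemma~\ref{lemma: central lemma of EULER analysis}, whose bound carries the extra $(F+D)^2$ term arising from the $1$-step greedy analysis, are essential---without them the reduction to a controllable cumulative squared difference would fail.
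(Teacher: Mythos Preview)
Your proposal is correct and follows essentially the same approach as the paper's proof: element-wise Bernstein via $F^{ps}$, split into the $\sqrt{\cdot}/\sqrt{n_{k-1}}$ and $1/n_{k-1}$ contributions, bound the latter by $\Olog(S^2AH)$ using Lemma~\ref{lemma: good set visitation count ratio}, and handle the former with the two-stage Cauchy--Schwarz (over $s'$, then over $(k,t,s,a)$) combined with Lemma~\ref{lemma: central lemma of EULER analysis} to obtain the $S^{3/2}AH(F+D+H^{3/2})$ term. Your use of optimism/pessimism to replace $\bar{V}^{k-1}-V^*$ by $\bar{V}^{k-1}-\underline{V}^{k-1}$ and your tracking of the $S$ factors match the paper exactly.
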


\begin{proof}
Similarly to Lemma 11 of \citep{zanette2019tighter}, by the definition of $\phi$ (Definition \ref{defn:admissible}), and outside failure event $F^{ps}$, 

\begin{align*}
    &\sum_{k=1}^K\sum_{t=1}^H  \sum_{(s,a)\in L_k} w_{tk}(s,a)\abs*{(\hat{p}_{k-1} - p)(\cdot\mid s,a)^T(\bar{V}_{t+1}^{k-1}-V_{t+1}^*)} \\
    &\lesssim  \sum_{k=1}^K\sum_{t=1}^H  \sum_{(s,a)\in L_k} w_{tk}(s,a) \sum_{s'}\sqrt{\frac{p(s'\mid s,a)(1-p(s'\mid s,a))}{n_{k-1}(s,a)\vee 1}}\abs*{\bar{V}_{t+1}^{k-1}(s')-V_{t+1}^*(s')} \\
    &\quad+ \sum_{k=1}^K\sum_{t=1}^H  \sum_{(s,a)\in L_k} w_{tk}(s,a) \sum_{s'}\frac{\abs*{\bar{V}_{t+1}^{k-1}(s')-V_{t+1}^*(s')}}{n_{k-1}(s,a)\vee 1} \\
    &\le  \sum_{k=1}^K\sum_{t=1}^H  \sum_{(s,a)\in L_k} w_{tk}(s,a) \sum_{s'}\sqrt{\frac{p(s'\mid s,a)(1-p(s'\mid s,a))}{n_{k-1}(s,a)\vee 1}}\abs*{\bar{V}_{t+1}^{k-1}(s')-V_{t+1}^*(s')} \\
    &\quad+ \sum_{k=1}^K\sum_{t=1}^H  \sum_{(s,a)\in L_k} w_{tk}(s,a) \frac{HS}{n_{k-1}(s,a)\vee 1} \enspace,
\end{align*}
where in the last inequality we used the fact that $V_{t+1}^*$ and $\bar{V}_{t+1}^{k-1}$ are in $[0,H]$, by Lemma \ref{lemma: 3 properties of decreasing process of EULER}. Next, using the optimism of the value $\underline{V}_{t+1}^{k-1}\le V_{t+1}^*\le\bar{V}_{t+1}^{k-1}$ (Lemma \ref{lemma: optimism EULER}), and since $(1-p)\le1$ for $p\in[0,1]$, we can bound 
{\small
\begin{align*}
    & \le   \sum_{k=1}^K\sum_{t=1}^H  \sum_{(s,a)\in L_k} w_{tk}(s,a) \sum_{s'}\sqrt{\frac{p(s'\mid s,a)}{n_{k-1}(s,a)\vee 1}}\abs*{\bar{V}_{t+1}^{k-1}(s')-\underline{V}_{t+1}^{k-1}(s')} \\
    &\quad+ HS\sum_{k=1}^K\sum_{t=1}^H  \sum_{(s,a)\in L_k}  \frac{w_{tk}(s,a)}{n_{k-1}(s,a)\vee 1} \\
    & \stackrel{(CS)}{\le}   \sum_{k=1}^K\sum_{t=1}^H  \sum_{(s,a)\in L_k} w_{tk}(s,a) \sqrt{\frac{Sp(\cdot\mid s,a)^T\br*{\bar{V}_{t+1}^{k-1}-\underline{V}_{t+1}^{k-1}}^2}{n_{k-1}(s,a)\vee 1}} \\
    &\quad+ HS\sum_{k=1}^K\sum_{t=1}^H  \sum_{(s,a)\in L_k}  \frac{w_{tk}(s,a)}{n_{k-1}(s,a)\vee 1} \\
    & \stackrel{(CS)}{\le}  \sqrt{S} \sqrt{\sum_{k=1}^K\sum_{t=1}^H  \sum_{(s,a)\in L_k}  \frac{w_{tk}(s,a)}{n_{k-1}(s,a)\vee 1} }
    \sqrt{\sum_{k=1}^K\sum_{t=1}^H  \sum_{(s,a)\in L_k} w_{tk}(s,a) p(\cdot\mid s,a)^T\br*{\bar{V}_{t+1}^{k-1}-\underline{V}_{t+1}^{k-1}}^2}\\
    &\quad+ HS\sum_{k=1}^K\sum_{t=1}^H  \sum_{(s,a)\in L_k}  \frac{w_{tk}(s,a)}{n_{k-1}(s,a)\vee 1} \\
    &\stackrel{(*)}{\lesssim} \sqrt{S}\sqrt{SA}\sqrt{SAH^2(F+D)^2 + SAH^5 } + SH\cdot SA \\
    & = \Olog \br*{S^{\frac{3}{2}}AH(F+D+H^{\frac{3}{2}})+S^2AH}
\end{align*}
}
$(CS)$ denotes Cauchy-Schwarz. Specifically, the first inequality uses $\sum_{i=1}^n a_ib_i \le \sqrt{n\sum_{i=1}^n a_i^2b_i^2}$. In $(*)$, we used Lemmas \ref{lemma: good set visitation count ratio} and \ref{lemma: central lemma of EULER analysis}.
\end{proof}

\begin{lemma}[Optimistic Transition Bound] \label{lemma: optimistic transition bound}
Let $F,D$ be the constants defined in Lemma \ref{lemma: next episode value function differences}. Outside the failure event, it holds that 
{\small
\begin{align*}
    &\sum_{k=1}^K\sum_{t=1}^H  \sum_{(s,a)\in L_k} w_{tk}(s,a)(\tilde{p}_{k-1} - \hat{p}_{k-1})(\cdot\mid s,a)^T\bar{V}_{t+1}^{k-1} \\
    & = \Olog\br*{\sqrt{\mathbb{C}^*SAT} + (J+B_p)SA + B_vSAH\br*{F+D + H^{\frac{3}{2}}} + B_vSA\sqrt{S^{\frac{1}{2}}H(F+D+H^{\frac{5}{2}})+SH^2}} 
\end{align*}
}
The following bound also holds:
{\small
\begin{align*}
    &\sum_{k=1}^K\sum_{t=1}^H  \sum_{(s,a)\in L_k} w_{tk}(s,a)(\hat{p}_{k-1} - \hat{p}_{k-1})(\cdot\mid s,a)^TV_{t+1}^* \\
    &= \Olog\br*{\sqrt{\mathbb{C}^{\pi}SAT} + (J+B_p)SA + B_vSAH^{\frac{3}{2}}\br*{F+D + H} + B_vSA\sqrt{S^{\frac{1}{2}}H(F+D+H^{\frac{5}{2}})+SH^2}}
\end{align*}
}
\end{lemma}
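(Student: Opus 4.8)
The plan is to first observe that, by the implicit definition of the optimistic kernel (see the Remark following Algorithm~\ref{alg supp: EULER with greedy policies}), the summand is exactly the transition bonus, i.e.\ $(\tilde{p}_{k-1} - \hat{p}_{k-1})(\cdot\mid s,a)^T\bar{V}_{t+1}^{k-1} = b_k^{pv}(s,a)$ with $b_k^{pv}(s,a)=b_k^{pv}\br*{\hat{p}_{k-1}(\cdot\mid s,a),\bar{V}_{t+1}^{k-1},\underline{V}_{t+1}^{k-1}}$, so the quantity to bound is $\sum_{k,t}\sum_{(s,a)\in L_k} w_{tk}(s,a)\,b_k^{pv}(s,a)$. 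I would then expand $b_k^{pv}$ via its definition in Lemma~\ref{lemma:admiss properties} and use property~\ref{lemma-part:p change} of that lemma to replace $\phi(\hat{p}_{k-1}(\cdot\mid s,a),\bar{V}_{t+1}^{k-1})$ by $\phi(p(\cdot\mid s,a),V^*_{t+1})$ up to the correction $\tfrac{B_v\norm*{\bar{V}_{t+1}^{k-1}-V^*_{t+1}}_{2,\hat p}}{\sqrt{n_{k-1}(s,a)\vee 1}}+\tfrac{B_p+4J}{n_{k-1}(s,a)\vee 1}$. Writing $\phi(p,V^*_{t+1})=\tfrac{g(p,V^*_{t+1})}{\sqrt{n_{k-1}(s,a)\vee1}}+\tfrac{j(p,V^*_{t+1})}{n_{k-1}(s,a)\vee1}$ with $j\le J$ (Definition~\ref{defn:admissible}) then splits the sum into a ``main'' term $\sum w_{tk}\tfrac{g(p,V^*_{t+1})}{\sqrt{n_{k-1}\vee1}}$, several $1/n$ terms, and two $(2,\hat p)$-norm terms.

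For the main term I would apply Cauchy--Schwarz exactly as in Lemma~\ref{lemma: empirical transition bound}, bounding $\sqrt{\sum w_{tk} g(p,V^*_{t+1})^2}\le\sqrt{T\mathbb{C}^*}$ by the definition of $\mathbb{C}^*$ and $\sqrt{\sum \tfrac{w_{tk}}{n_{k-1}\vee1}}\le\Olog(\sqrt{SA})$ by Lemma~\ref{lemma: good set visitation count ratio}, which gives $\Olog(\sqrt{\mathbb{C}^*SAT})$. All the $1/n$ contributions (the $\tfrac{B_p+4J}{n\vee1}$ terms together with $\tfrac{j}{n\vee1}$) are summed against $w_{tk}$ and bounded by $\Olog((J+B_p)SA)$, again through Lemma~\ref{lemma: good set visitation count ratio}.

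The main obstacle, and the genuinely new part, is controlling the two norm terms $\sum w_{tk}\tfrac{B_v\norm*{\bar{V}_{t+1}^{k-1}-\underline{V}_{t+1}^{k-1}}_{2,\hat p}}{\sqrt{n_{k-1}\vee1}}$ and $\sum w_{tk}\tfrac{B_v\norm*{\bar{V}_{t+1}^{k-1}-V^*_{t+1}}_{2,\hat p}}{\sqrt{n_{k-1}\vee1}}$, both of which feature the \emph{pre-update} gap $\bar{V}^{k-1}-\underline{V}^{k-1}$; this is precisely what differs from \citep{zanette2019tighter}, which has access to the post-update value. By optimism (Lemma~\ref{lemma: optimism EULER}) and the pointwise argument behind Inequality~\eqref{eq:value norm bound}, the second term is dominated by the first, so it suffices to treat $V\eqdef\bar{V}_{t+1}^{k-1}-\underline{V}_{t+1}^{k-1}$. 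Applying Cauchy--Schwarz with the squared norm inside, $\sum w_{tk}\tfrac{B_v\norm*{V}_{2,\hat p}}{\sqrt{n_{k-1}\vee1}}\le B_v\sqrt{\sum\tfrac{w_{tk}}{n_{k-1}\vee1}}\cdot\sqrt{\sum w_{tk}\norm*{V}_{2,\hat p}^2}$, I bound the first factor by $\Olog(\sqrt{SA})$ (Lemma~\ref{lemma: good set visitation count ratio}). For the second factor, since $V\in[0,H]$ elementwise, $\norm*{V}_{2,\hat p}^2=\hat{p}_{k-1}^T V^2\le p^T V^2 + H^2\norm*{(\hat{p}_{k-1}-p)(\cdot\mid s,a)}_1$, which outside failure event $F^{pn1}$ is at most $p^T V^2 + LH^2\sqrt{S/(n_{k-1}\vee1)}$; the sum of the first part is governed by the new Lemma~\ref{lemma: central lemma of EULER analysis}, giving $\sum w_{tk}\,p(\cdot\mid s,a)^TV^2\lesssim\Olog(SAH^2(F+D)^2+SAH^5)$, and the residual $1/\sqrt{n}$ part is again handled by Lemma~\ref{lemma: good set visitation count ratio}. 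Collecting all contributions and recombining the $\sqrt{SA}$ prefactor with these bounds yields the stated first estimate; the precise aggregation of the $S$- and $H$-powers is routine bookkeeping.

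For the second statement I would repeat the argument verbatim, except that before applying Cauchy--Schwarz to the main term I invoke the Bound Translation Lemma~\ref{lemma: bound translation} to replace $g(p,V^*_{t+1})$ by $g(p,V^{\pi_k}_{t+1})$. This trades $\sqrt{\mathbb{C}^*SAT}$ for $\sqrt{\mathbb{C}^{\pi}SAT}$ at the cost of the additive term $\Olog\br*{B_vSAH^{3/2}(F+D)+B_vSAH^{5/2}}$, while every other term is unchanged, producing the second bound.
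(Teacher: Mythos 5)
Your skeleton matches the paper's proof almost step for step: identifying the summand with the bonus $b_k^{pv}$, replacing $\phi(\hat{p}_{k-1},\bar{V}^{k-1}_{t+1})$ by $\phi(p,V^*_{t+1})$ via admissibility, dominating the $\norm{\bar{V}^{k-1}_{t+1}-V^*_{t+1}}_{2,\hat p}$ correction by $\norm{\bar{V}^{k-1}_{t+1}-\underline{V}^{k-1}_{t+1}}_{2,\hat p}$ through optimism, handling the main term by Cauchy--Schwarz with Lemma~\ref{lemma: good set visitation count ratio}, and obtaining the $\mathbb{C}^{\pi}$ variant via Lemma~\ref{lemma: bound translation}. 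The gap is in your treatment of the empirical norm term. You bound $\norm{\bar{V}^{k-1}_{t+1}-\underline{V}^{k-1}_{t+1}}_{2,\hat p}^2\le p^T\br*{\bar{V}^{k-1}_{t+1}-\underline{V}^{k-1}_{t+1}}^2+H^2\norm{(\hat{p}_{k-1}-p)(\cdot\mid s,a)}_1$ by H\"older, which \emph{decouples} the estimation error from the value gap. Outside $F^{pn1}$ the residual is of order $H^2\sqrt{S/(n_{k-1}(s,a)\vee 1)}$, and its weighted sum is \emph{not} controlled by Lemma~\ref{lemma: good set visitation count ratio}, which only bounds $\sum_{k,t}\sum_{(s,a)\in L_k} w_{tk}(s,a)/n_{k-1}(s,a)$; a $1/\sqrt{n}$ sum is governed by Lemma~\ref{lemma: supp 1 over sqrt n sum} and is genuinely of order $\sqrt{SAT}$ (for a fixed pair, $\sum_k w_k(s,a)/\sqrt{n_{k-1}(s,a)}\approx\sqrt{n_K(s,a)}$, so no $T$-free bound exists). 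Consequently your second Cauchy--Schwarz factor contains an extra $\Olog\br*{H^2S\sqrt{AT}}$, and after the square root and the $B_v\sqrt{SA}$ prefactor you pick up a term of order $\Olog\br*{B_vHSA^{3/4}T^{1/4}}$.

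This term is absent from the statement and cannot be absorbed: every non-leading term in the lemma is $T$-independent, and the leading term $\sqrt{\mathbb{C}^*SAT}$ carries a problem-dependent coefficient that can be arbitrarily small (near-deterministic MDPs), so a bare $T^{1/4}$ term is not dominated by it; it would also destroy the problem-dependent regret guarantee of Theorem~\ref{theorem: EULER with greedy policies}. The paper avoids this by never decoupling: it writes $\norm{\cdot}_{2,\hat p}^2=\norm{\cdot}_{2,p}^2+(\hat{p}_{k-1}-p)^T\br*{\bar{V}^{k-1}_{t+1}-\underline{V}^{k-1}_{t+1}}^2$, bounds the cross term componentwise by $H$ times the first-power error, and invokes Lemma~\ref{lemma: lower order}, whose proof pairs the per-entry Bernstein bound of failure event $F^{ps}$, $\abs{\hat{p}_{k-1}(s'\mid s,a)-p(s'\mid s,a)}\lesssim\sqrt{p(s'\mid s,a)/n}+1/n$, with the value gap; Cauchy--Schwarz then recovers $\sqrt{S\,p^T(\bar{V}-\underline{V})^2/n}$, which Lemma~\ref{lemma: central lemma of EULER analysis} controls $T$-independently. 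To repair your proof, replace the H\"older step by this argument (route the cross term through Lemma~\ref{lemma: lower order}); everything else in your proposal can stay as is.
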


\begin{proof}
Similarly to Lemma 10 of \citep{zanette2019tighter}, by the definition of the bonus, 
{\small
\begin{align*}
    &\sum_{k=1}^K\sum_{t=1}^H  \sum_{(s,a)\in L_k} w_{tk}(s,a)(\tilde{p}_{k-1} - \hat{p}_{k-1})(\cdot\mid s,a)^T\bar{V}_{t+1}^{k-1} \\
    &= \sum_{k=1}^K\sum_{t=1}^H  \sum_{(s,a)\in L_k} w_{tk}(s,a)b_k^{pv}(s,a) \\
    & = \sum_{k=1}^K\sum_{t=1}^H  \sum_{(s,a)\in L_k} w_{tk}(s,a)\br*{\phi(\hat{p}_{k-1}(\cdot \mid s,a),\underline{V}^{k-1}_{t+1})  + \frac{B_v\norm{ \bar{V}^{k-1}_{t+1} - \underline{V}^{k-1}_{t+1}  }_{2,\hat{p}}}{\sqrt{n_{k-1}(s,a)\vee 1}}
     + \frac{4J+B_p}{n_{k-1}(s,a)\vee 1}} \\
    & \le \sum_{k=1}^K\sum_{t=1}^H  \sum_{(s,a)\in L_k} w_{tk}(s,a)\br*{\phi(p(\cdot \mid s,a),V^*)  + 2\frac{B_v\norm{ \bar{V}^{k-1}_{t+1} - \underline{V}^{k-1}_{t+1}  }_{2,\hat{p}}}{\sqrt{n_{k-1}(s,a)\vee 1}}
     + 2\frac{4J+B_p}{n_{k-1}(s,a)\vee 1}} \enspace . 
\end{align*}
}

In the last inequality, we applied Lemma \ref{lemma:admiss properties}, Property(\ref{lemma-part:bonus phi bound}), and used Equation \eqref{eq:value norm bound} together with the optimism of the value function, that is $\underline{V}^{k-1}_{t+1}\le V_{t+1}^* \le \bar{V}^{k-1}_{t+1}$ (Lemma \ref{lemma: 3 properties of decreasing process of EULER}). Next, we substitute the definition of $\phi$ (Definition \ref{defn:admissible}), and get 

\begin{align}
    & \lesssim \sum_{k=1}^K\sum_{t=1}^H  \sum_{(s,a)\in L_k} w_{tk}(s,a)\br*{\frac{g(p,V_{t+1}^*)}{n_{k-1}(s,a)\vee 1}  + \frac{J+B_p}{n_{k-1}(s,a)\vee 1}} \label{eq:transition first term}\\
    &\quad + \sum_{k=1}^K\sum_{t=1}^H  \sum_{(s,a)\in L_k} w_{tk}(s,a)\frac{B_v\norm{ \bar{V}^{k-1}_{t+1} - \underline{V}^{k-1}_{t+1}  }_{2,\hat{p}}}{\sqrt{n_{k-1}(s,a)\vee 1}} \label{eq:transition second term} \enspace . 
\end{align}

The term in Equation \eqref{eq:transition first term} is almost identical to Equation \eqref{eq:empirical transition reccurring} of Lemma \ref{lemma: empirical transition bound}, and can be similarly bounded by replacing $J$ with $J+B_p$. This yields a bound of either $\Olog\br*{\sqrt{\mathbb{C}^*SAT}+(J+B_p)SA}$ or $\Olog\br*{\sqrt{\mathbb{C}^{\pi}SAT}+(J+B_p)SA + B_vSAH^{\frac{3}{2}}(F+D)+B_vSAH^{\frac{5}{2}}}$. We now move to bounding the second term. Notice that

\begin{align}
    \norm{ \bar{V}^{k-1}_{t+1} - \underline{V}^{k-1}_{t+1}  }_{2,\hat{p}}^2 
    & = \hat{p}_{k-1}(\cdot\mid s,a)^T\br*{\bar{V}_{t+1}^{k-1}-\underline{V}_{t+1}^{k-1}}^2 \nonumber\\
    & = p(\cdot\mid s,a)^T\br*{\bar{V}_{t+1}^{k-1}-\underline{V}_{t+1}^{k-1}}^2
    + (\hat{p}_{k-1}-p)(\cdot\mid s,a)^T\br*{\bar{V}_{t+1}^{k-1}-\underline{V}_{t+1}^{k-1}}^2 \nonumber\\
    & = \norm{ \bar{V}^{k-1}_{t+1} - \underline{V}^{k-1}_{t+1}  }_{2,p}^2 
    + (\hat{p}_{k-1}-p)(\cdot\mid s,a)^T\br*{\bar{V}_{t+1}^{k-1}-\underline{V}_{t+1}^{k-1}}^2 \label{eq:value diff norm}
\end{align}

Next, applying Cauchy-Schwartz Inequality on \eqref{eq:transition second term}, we get
\begin{align*}
    \eqref{eq:transition second term} 
    &\le B_v\sqrt{\sum_{k=1}^K\sum_{t=1}^H  \sum_{(s,a)\in L_k} \frac{w_{tk}(s,a)}{n_{k-1}(s,a)\vee 1}}
    \sqrt{\sum_{k=1}^K\sum_{t=1}^H  \sum_{(s,a)\in L_k} w_{tk}(s,a)\norm{ \bar{V}^{k-1}_{t+1} - \underline{V}^{k-1}_{t+1}  }_{2,\hat{p}}^2} \\
    & \lesssim B_v\sqrt{SA}\sqrt{\sum_{k=1}^K\sum_{t=1}^H  \sum_{(s,a)\in L_k} w_{tk}(s,a)\norm{ \bar{V}^{k-1}_{t+1} - \underline{V}^{k-1}_{t+1}  }_{2,p}^2} \\
    & \quad + B_v\sqrt{SA}\sqrt{\sum_{k=1}^K\sum_{t=1}^H  \sum_{(s,a)\in L_k} w_{tk}(s,a)\abs*{(\hat{p}_{k-1}-p)(\cdot\mid s,a)^T\br*{\bar{V}_{t+1}^{k-1}-\underline{V}_{t+1}^{k-1}}^2}}\enspace,
\end{align*}
where the last inequality is by Lemma \ref{lemma: good set visitation count ratio}, substituting \eqref{eq:value diff norm} and using the inequality $\sqrt{a+b}\le \sqrt{a}+\sqrt{b}$. The first term can be directly bounded by Lemma \ref{lemma: central lemma of EULER analysis}. The second term can be bounded using Lemma \ref{lemma: lower order} as follows:
\begin{align*}
    &\sum_{k=1}^K\sum_{t=1}^H  \sum_{(s,a)\in L_k} w_{tk}(s,a)\abs*{(\hat{p}_{k-1}-p)(\cdot\mid s,a)^T\br*{\bar{V}_{t+1}^{k-1}-\underline{V}_{t+1}^{k-1}}^2} \\
    & \le H\sum_{k=1}^K\sum_{t=1}^H  \sum_{(s,a)\in L_k} w_{tk}(s,a)\abs*{(\hat{p}_{k-1}-p)(\cdot\mid s,a)^T\br*{\bar{V}_{t+1}^{k-1}-\underline{V}_{t+1}^{k-1}}} \\
    & = \Olog\br*{S^{\frac{3}{2}}AH(F+D+H^{\frac{5}{2}})+S^2AH^2} \enspace,
\end{align*}
where we trivially bounded the value difference by $H$ at the first inequality (due to Lemma \ref{lemma: 3 properties of decreasing process of EULER}) and used Lemma \ref{lemma: lower order} at the second one. Summing both terms yields
\begin{align*}
    \eqref{eq:transition second term} 
    &= \Olog\br*{B_v\sqrt{SA}\sqrt{SAH^2(F+D)^2 + SAH^5} + B_v\sqrt{SA}\sqrt{S^{\frac{3}{2}}AH(F+D+H^{\frac{5}{2}})+S^2AH^2}} \\
    & = \Olog\br*{B_vSAH\br*{F+D + H^{\frac{3}{2}}} + B_vSA\sqrt{S^{\frac{1}{2}}H(F+D+H^{\frac{5}{2}})+SH^2}} \\
\end{align*}

Combining both bounds on \eqref{eq:transition first term} and \eqref{eq:transition second term} concludes the proof.
\end{proof}

\clearpage

\section{General Lemmas} \label{sec: supp general lemmas}

\begin{lemma}[On Trajectory Regret to Sum of Decreasing Bounded Processes Regret]\label{lemma: regret to SH decreasing processes}
For Algorithm~\ref{algo: RTDP} and Algorithm~\ref{alg: general model based RL algorithm} it holds that,
$$\sum_{k=1}^K\sum_{t=1}^{H} \E[ \bar{V}_t^{k-1}(s_t^k) - \bar{V}_t^{k}(s_t^k)\mid \F_{k-1}] = \sum_{t=1}^{H}\sum_s \sum_{k=1}^K \bar{V}_t^{k-1}(s)- \E[\bar{V}_t^{k}(s)  \mid \F_{k-1}] $$
\end{lemma}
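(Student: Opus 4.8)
The plan is to reduce the on-trajectory sum to a per-state sum by exploiting the sparse update structure common to both algorithms. The central observation is that at time-step $t$ of episode $k$, Algorithm~\ref{algo: RTDP} and Algorithm~\ref{alg: general model based RL algorithm} update the value only at the single visited state $s_t^k$; every other state $s\neq s_t^k$ is left untouched, so that $\bar{V}_t^{k}(s)=\bar{V}_t^{k-1}(s)$. Hence the difference $\bar{V}_t^{k-1}(s)-\bar{V}_t^{k}(s)$ is supported on the event $\{s=s_t^k\}$, i.e. $\ind\{s\neq s_t^k\}\br*{\bar{V}_t^{k-1}(s)-\bar{V}_t^{k}(s)}=0$. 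I will also use that $\bar{V}_t^{k-1}$ is $\F_{k-1}$-measurable, since it is fully determined by the experience gathered up to the end of episode $k-1$.

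First I would fix $k$ and $t$ and rewrite the per-state summand appearing on the right-hand side. Using the partition of unity $\ind\{s_t^k=s\}+\ind\{s_t^k\neq s\}=1$ together with the no-update identity above, for each fixed state $s$,
\begin{align*}
 \bar{V}_t^{k-1}(s)-\bar{V}_t^{k}(s)=\ind\{s_t^k=s\}\br*{\bar{V}_t^{k-1}(s)-\bar{V}_t^{k}(s)}.
\end{align*}
Taking the conditional expectation $\E[\,\cdot\mid\F_{k-1}]$ and pulling the $\F_{k-1}$-measurable factor $\bar{V}_t^{k-1}(s)$ out of it gives
\begin{align*}
 \bar{V}_t^{k-1}(s)-\E[\bar{V}_t^{k}(s)\mid\F_{k-1}]=\E\brs*{\ind\{s_t^k=s\}\br*{\bar{V}_t^{k-1}(s)-\bar{V}_t^{k}(s)}\mid\F_{k-1}}.
\end{align*}

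Then I would sum over all states $s$. Since the events $\{s_t^k=s\}$ are disjoint and exhaust $\mathcal{S}$, we have $\sum_s\ind\{s_t^k=s\}\br*{\bar{V}_t^{k-1}(s)-\bar{V}_t^{k}(s)}=\bar{V}_t^{k-1}(s_t^k)-\bar{V}_t^{k}(s_t^k)$, so by linearity of the conditional expectation $\sum_s\br*{\bar{V}_t^{k-1}(s)-\E[\bar{V}_t^{k}(s)\mid\F_{k-1}]}=\E[\bar{V}_t^{k-1}(s_t^k)-\bar{V}_t^{k}(s_t^k)\mid\F_{k-1}]$. Summing this identity over $t\in[H]$ and $k\in[K]$ and exchanging the finite orders of summation yields the claim. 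The only point demanding care — and the main, though modest, obstacle — is justifying the indicator manipulation: one must invoke the update rule to guarantee that the difference vanishes off $s_t^k$, and the $\F_{k-1}$-measurability of $\bar{V}_t^{k-1}(s)$ to extract it from the conditional expectation. Beyond this bookkeeping there is no deeper difficulty.
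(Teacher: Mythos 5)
Your proposal is correct and rests on exactly the same two ingredients as the paper's proof — the no-update identity $\bar{V}_t^{k}(s)=\bar{V}_t^{k-1}(s)$ for $s\neq s_t^k$ (expressed via indicators) and the $\F_{k-1}$-measurability of $\bar{V}_t^{k-1}$ — with the manipulations merely run in the reverse direction (collapsing the per-state sum rather than expanding the on-trajectory one). This is essentially the paper's argument, so no further comparison is needed.
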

\begin{proof}
The following relations hold.
\begin{align}
     &\sum_{k=1}^K\sum_{t=1}^{H} \E[ \bar{V}_t^{k-1}(s_t^k) - \bar{V}_t^{k}(s_t^k)\mid \F_{k-1}]\\ 
     &=\sum_{k=1}^K\sum_{t=1}^{H}\sum_s \E[ \ind\brc*{s =s_t^k}\bar{V}_t^{k-1}(s) - \ind\brc*{s =s_t^k}\bar{V}_t^{k}(s)\mid \F_{k-1}] \nonumber\\
      &\stackrel{(1)}{=}\sum_{t=1}^{H}\sum_s \sum_{k=1}^K \E[\ind\brc*{s =s_t^k}\bar{V}_t^{k-1}(s) + \ind\brc*{s \neq s_t^k}\bar{V}_t^{k-1}(s)\mid \F_{k-1}]   \nonumber\\
      &\quad\quad\quad\quad\quad- \E[\ind\brc*{s =s_t^k}\bar{V}_t^{k}(s) +\ind\brc*{s \neq s_t^k}\bar{V}_t^{k-1}(s)  \mid \F_{k-1}] \nonumber\\
        &\stackrel{(2)}{=}\sum_{t=1}^{H}\sum_s \sum_{k=1}^K \bar{V}_t^{k-1}(s)- \E[\ind\brc*{s =s_t^k}\bar{V}_t^{k}(s) +\ind\brc*{s \neq s_t^k}\bar{V}_t^{k-1}(s)  \mid \F_{k-1}] \nonumber\\
      &\stackrel{(3)}{=}\sum_{t=1}^{H}\sum_s \sum_{k=1}^K \bar{V}_t^{k-1}(s)- \E[\bar{V}_t^{k}(s)  \mid \F_{k-1}].
\end{align}
Relation $(1)$ holds by adding and subtracting $\ind\brc*{s \neq s_t^k}\bar{V}_t^{k-1}(s)$ while using the linearity of expectation. $(2)$ holds since for any event $\ind\brc{A}+\ind\brc{A^c}=1$ and since $\Delta V_t^{k-1}$ is $\F_{k-1}$ measurable. $(3)$ holds by the definition of the update rule. If state $s$ is visited in the $k^{th}$ episode at time-step $t$, then both $\bar{V}^k_t(s),\underline{V}^k_t(s)$ are updated. If not, their value remains as in the $k-1$ iteration.
\end{proof}

\subsection{The Good Set \texorpdfstring{$L_k$}{Lk} and Few Lemmas}\label{sec: Lk definition}
We introduce that set $L_k$. The construction is similar to \citep{dann2017unifying} and we follow the one formulated in \citep{zanette2019tighter}. The idea is to partition the state-action space at each episode to two sets, the set of state-action pairs that have been visited sufficiently often, and the ones that were not. 

\begin{defn} \label{defn:good set}
The set $L_k$ is defined as follows.
\begin{align*}
    L_k \eqdef \brc*{(s,a)\in \mathcal{S}\times\mathcal{A}: \frac{1}{4}\sum_{j<k}w_j(s,a) \geq H\ln \frac{SAH}{\delta'}+H}
\end{align*}
where $w_j(s,a) \eqdef \sum_{t=1}^H w_{tj}(s,a)$
\end{defn}

We now state some useful lemmas. See proofs in  \citep{zanette2019tighter}, Lemma 6, Lemma 7, Lemma 13.
\begin{lemma}\label{lemma: good set visitation ratio 1}
Outside the failure event, it holds that if $(s,a)\in L_k$, then
\begin{align*}
    n_{k-1}(s,a)\geq \frac{1}{4}\sum_{j\geq k}w_j(s,a)\enspace,
\end{align*}
which also implies that $ n_{k-1}(s,a)\ge H\ln\frac{SAH}{\delta'}+H \ge 1$
\end{lemma}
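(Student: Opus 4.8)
The plan is to derive the displayed bound from exactly two ingredients that become available once we condition on being outside the failure event: the count concentration furnished by the complement of $F^N$, and the defining inequality of the good set $L_k$. First I would record the lower bound on the empirical count. Since we are outside the event $F^N_k$, its very definition gives, for every $(s,a)$,
\begin{align*}
    n_{k-1}(s,a) > \frac{1}{2}\sum_{j<k} w_j(s,a) - H\ln\frac{SAH}{\delta'}.
\end{align*}
This is the only place where probability enters the argument; everything afterward is deterministic algebra driven by membership in $L_k$.

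Second, I would feed in the hypothesis $(s,a)\in L_k$. By Definition~\ref{defn:good set}, this is the statement $\frac14\sum_{j<k}w_j(s,a)\ge H\ln\frac{SAH}{\delta'}+H$, equivalently $H\ln\frac{SAH}{\delta'}\le \frac14\sum_{j<k}w_j(s,a)-H$. Substituting this upper bound on the logarithmic term into the count inequality cancels most of the subtracted mass and leaves a clean fraction of the cumulative visitation weight:
\begin{align*}
    n_{k-1}(s,a) > \frac12\sum_{j<k}w_j(s,a) - \left(\frac14\sum_{j<k}w_j(s,a)-H\right) = \frac14\sum_{j<k}w_j(s,a) + H.
\end{align*}
The auxiliary ``which also implies'' clause is then immediate, since the right-hand side dominates $\frac14\sum_{j<k}w_j(s,a)$, which by the $L_k$ membership is at least $H\ln\frac{SAH}{\delta'}+H\ge 1$.

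The step I expect to be the main obstacle is reconciling this derivation with the summation range that appears on the right-hand side of the displayed inequality. The two ingredients above control $n_{k-1}(s,a)$ only through the \emph{past} cumulative weight $\sum_{j<k}w_j(s,a)$, and the subordinate ``also implies'' clause together with the downstream consumption of this lemma in Lemma~\ref{lemma: good set visitation count ratio} both hinge precisely on that past-weight control. Producing the current-and-future sum $\sum_{j\ge k}w_j(s,a)$ on the right would require an extra relation bounding the future visitation mass by the fixed past count $n_{k-1}(s,a)$; such a relation does not follow from $F^N$ and $L_k$ alone, and it cannot hold uniformly in the number of episodes, since a policy that keeps revisiting $(s,a)$ makes $\sum_{j\ge k}w_j(s,a)$ grow with $K$ while $n_{k-1}(s,a)$ stays fixed. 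I would therefore treat closing this gap as the crux: I would cross-check the exact index range by tracing which cumulative-weight form is actually consumed in the proof of Lemma~\ref{lemma: good set visitation count ratio}, and confirm that the two-line combination above supplies exactly the quantity that step needs, so that the probabilistic content of the lemma is fully captured by the complement of $F^N$ and the definition of $L_k$.
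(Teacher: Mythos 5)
Your two-ingredient derivation is exactly the standard proof of this fact: the paper itself never proves the lemma (it defers to \citealt{zanette2019tighter}, Lemmas 6, 7, 13), and the argument there is the same algebra you give. Outside $F^N$ one has $n_{k-1}(s,a) > \tfrac12\sum_{j<k}w_j(s,a) - H\ln\tfrac{SAH}{\delta'}$, membership in $L_k$ (Definition~\ref{defn:good set}) upgrades this to $n_{k-1}(s,a) > \tfrac14\sum_{j<k}w_j(s,a) + H$, and the clause $n_{k-1}(s,a)\ge H\ln\tfrac{SAH}{\delta'}+H\ge 1$ follows. Your diagnosis of the displayed subscript is also correct as far as it goes: with $\sum_{j\ge k}$ read literally as the total current-and-future visitation mass, the inequality is false for large $K$, for precisely the reason you give.

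Where you stop short is in resolving that discrepancy. The subscript is a typo for $j\le k$ --- the sum over all episodes up to \emph{and including} the current one, which is the form stated and used in \citealt{zanette2019tighter} --- and your own inequality already proves it in one more line: since $w_k(s,a)=\sum_{t=1}^H w_{tk}(s,a)\le H$, the $+H$ slack you derived gives
\begin{align*}
    n_{k-1}(s,a) > \tfrac14\textstyle\sum_{j<k}w_j(s,a)+H \ge \tfrac14\sum_{j<k}w_j(s,a)+\tfrac14 w_k(s,a) = \tfrac14\sum_{j\le k}w_j(s,a).
\end{align*}
Absorbing the current episode's expected weight is exactly what the $+H$ term in Definition~\ref{defn:good set} is for; treating it as mere slack is what led you to conclude the intended statement must be the strictly-past sum. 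The distinction is not cosmetic: within this paper only the $n_{k-1}\ge 1$ clause is invoked (in Lemmas~\ref{lemma: supp 1 over sqrt n sum} and~\ref{lemma: sum of 1 over n}), but the full inequality feeds the cited proof of Lemma~\ref{lemma: good set visitation count ratio}, where having the partial sum \emph{including} $w_k$ in the denominator is what makes the harmonic-type sum $\sum_k w_k/\sum_{j\le k}w_j$ logarithmic (though on $L_k$ the two forms differ only by a constant, since $\sum_{j<k}w_j\ge 4H\ge 4w_k$). So your proof is correct and essentially the paper's (i.e., the cited source's), but the ``crux'' you flag is not an open gap: it closes with the single observation $w_k(s,a)\le H$.
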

\begin{lemma}\label{lemma: visitation outside good set} 
Outside the failure event, it holds that 
\begin{align*}
    \sum_{k=1}^K\sum_{t=1}^H\sum_{(s,a)\notin L_k}w_{tk}(s,a) \leq \Olog(SAH).
\end{align*}
\end{lemma}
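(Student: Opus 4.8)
The plan is to reduce the double sum over pairs $(s,a)\notin L_k$ to a per-pair telescoping bound, exploiting the monotonicity of the cumulative visitation weights. First I would rewrite the inner time-sum via $\sum_{t=1}^H w_{tk}(s,a)=w_k(s,a)$, so the quantity to control becomes $\sum_{k=1}^K\sum_{(s,a)\notin L_k}w_k(s,a)$. For a fixed pair $(s,a)$, set $W_k(s,a)\eqdef\sum_{j<k}w_j(s,a)$; since every $w_j(s,a)\ge0$, the sequence $\brc*{W_k(s,a)}_k$ is non-decreasing in $k$. Consequently the condition $\tfrac14 W_k(s,a)\ge H\ln\tfrac{SAH}{\delta'}+H$ defining $L_k$ is monotone in $k$: once $(s,a)\in L_k$ it remains in $L_{k'}$ for all $k'\ge k$. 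Hence the episodes in which $(s,a)\notin L_k$ form an initial segment $\brc*{1,\dots,k^*}$ with $k^*=k^*(s,a)\le K$ the largest index for which the pair is still outside the good set.

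Next I would bound the contribution of this initial segment for each fixed $(s,a)$. Summing over it telescopes to $\sum_{k=1}^{k^*}w_k(s,a)=W_{k^*+1}(s,a)=W_{k^*}(s,a)+w_{k^*}(s,a)$. By the defining inequality of $L_k$, because $(s,a)\notin L_{k^*}$ we have $W_{k^*}(s,a)<4\br*{H\ln\tfrac{SAH}{\delta'}+H}$, and the trailing increment satisfies $w_{k^*}(s,a)=\sum_{t=1}^H w_{tk^*}(s,a)\le H$, since each $w_{tk^*}(s,a)$ is a probability. Therefore $\sum_{k:(s,a)\notin L_k}w_k(s,a)\le 4H\ln\tfrac{SAH}{\delta'}+5H=\Olog(H)$. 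Summing this over all $SA$ state–action pairs yields the claimed bound $\Olog(SAH)$.

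The argument is in fact pathwise and does not invoke the failure event — the sets $L_k$ and the weights $w_k(s,a)$ are all $\F_{k-1}$ measurable and the inequality holds for every realization — so the phrase ``outside the failure event'' is included only for consistency with the companion statement, Lemma \ref{lemma: good set visitation ratio 1}. The only point requiring care is the ``overshoot'' at episode $k^*$: the pair may accumulate weight up to the threshold while still outside $L_k$ and then take one further step before crossing it, which is exactly why I separate $W_{k^*}(s,a)$ from the single trailing increment $w_{k^*}(s,a)$ and bound the latter by $H$. I expect this last-step bookkeeping, rather than any analytic difficulty, to be the main thing to get right.
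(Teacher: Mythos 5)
Your proof is correct and is essentially the argument this paper relies on: the paper gives no in-line proof of this lemma but defers to \citet{zanette2019tighter}, where the same reasoning appears --- membership in $L_k$ is monotone in $k$ because the cumulative weight $\sum_{j<k}w_j(s,a)$ is non-decreasing, so for each fixed pair the weight accumulated while outside the good set telescopes to at most the threshold $4\br*{H\ln\frac{SAH}{\delta'}+H}$ plus a single-episode overshoot of at most $H$, and summing over the $SA$ pairs gives $\Olog(SAH)$. Your side remark is also accurate: the bound is pathwise and never invokes the failure events, since $w_{tk}(s,a)$ and $L_k$ are defined through conditional probabilities rather than realized counts, and the qualifier ``outside the failure event'' is only needed for the companion lemmas that relate $n_{k-1}(s,a)$ to these weights.
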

\begin{lemma}\label{lemma: good set visitation count ratio}
Outside the failure event, it holds that 
\begin{align*}
        \sum_{k=1}^K\sum_{t=1}^H\sum_{(s,a)\in L_k} \frac{w_{tk}(s,a)}{n_{k-1}(s,a)} \leq \Olog(SA).
\end{align*}
\end{lemma}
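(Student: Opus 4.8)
The plan is to collapse the inner time-step sum, invoke the counting lower bound of Lemma~\ref{lemma: good set visitation ratio 1}, and then reduce the problem to a per-state-action logarithmic potential argument. First I would use $\sum_{t=1}^H w_{tk}(s,a)=w_k(s,a)$, so that the quantity of interest equals $\sum_{k=1}^K\sum_{(s,a)\in L_k}\frac{w_k(s,a)}{n_{k-1}(s,a)}$. Outside the failure event, Lemma~\ref{lemma: good set visitation ratio 1} gives $n_{k-1}(s,a)\ge\frac14\sum_{j<k}w_j(s,a)$ whenever $(s,a)\in L_k$, so the sum is at most $4\sum_{k=1}^K\sum_{(s,a)\in L_k}\frac{w_k(s,a)}{\sum_{j<k}w_j(s,a)}$.

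Next, for each fixed $(s,a)$ I would write $W_k\eqdef\sum_{j<k}w_j(s,a)$, so that $W_{k+1}=W_k+w_k(s,a)$, and bound $\sum_{k:(s,a)\in L_k}\frac{w_k(s,a)}{W_k}$ by a telescoping argument. The crucial observation is that for $(s,a)\in L_k$ the defining inequality of $L_k$ (Definition~\ref{defn:good set}) forces $W_k\ge 4\br*{H\ln\frac{SAH}{\delta'}+H}\ge 4H\ge 4\,w_k(s,a)$, since each $w_j(s,a)=\sum_t w_{tj}(s,a)\le H$. Hence the ratio $x_k\eqdef w_k(s,a)/W_k\le\frac14$, and combining $\ln(1+x)\ge\frac{x}{1+x}$ with $x_k\le\frac14$ yields $\frac{w_k(s,a)}{W_k}=x_k\le\frac54\br*{\ln W_{k+1}-\ln W_k}$. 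Because $W_k$ is nondecreasing in $k$ (and membership in $L_k$ is in fact monotone once the threshold is crossed), summing over the relevant episodes telescopes to at most $\frac54\br*{\ln W_{K+1}-\ln W_{k_0}}$, where $k_0$ is the first episode with $(s,a)\in L_k$; since $W_{k_0}\ge4$ and $W_{K+1}\le KH=T$, this is at most $\frac54\ln T$.

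Finally I would sum over all $SA$ pairs, giving the overall bound $4\cdot\frac54\ln T\cdot SA=5\,SA\ln T=\Olog(SA)$, as claimed.

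The main obstacle is the per-state-action potential step: because $w_k(s,a)$ can be as large as $H$ rather than bounded by $1$, the naive harmonic-sum bound $\sum_k\frac{w_k}{\sum_{j<k}w_j}=O(\log)$ is not immediate, and a single large increment could in principle spoil the logarithmic telescoping. What rescues the argument is precisely the membership condition of $L_k$, which guarantees that the accumulated mass is at least a constant multiple of $H$ and therefore dominates any single $w_k(s,a)$; this keeps the increments $x_k$ bounded away from $1$ and makes the comparison $x_k\lesssim\ln W_{k+1}-\ln W_k$ valid. Everything else is routine bookkeeping with the cited concentration result absorbed into Lemma~\ref{lemma: good set visitation ratio 1}.
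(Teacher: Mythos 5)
Your proof is correct. Note that the paper itself does not prove this lemma at all: it defers it (together with Lemmas \ref{lemma: good set visitation ratio 1} and \ref{lemma: visitation outside good set}) to the citation of Zanette and Brunskill (2019), Lemma 13, so your argument is a self-contained reconstruction of exactly the kind of potential/telescoping proof that reference uses. The one substantive difference is which form of the count lower bound you invoke: the cited proof works with $n_{k-1}(s,a)\geq \frac{1}{4}\sum_{j\leq k}w_j(s,a)$ (the denominator includes the current episode's weight; this is what Lemma \ref{lemma: good set visitation ratio 1} intends, its ``$j\geq k$'' being a typo for ``$j\leq k$''), in which case $\frac{w_k}{\sum_{j\leq k}w_j}\leq \ln\bigl(\sum_{j\leq k}w_j\bigr)-\ln\bigl(\sum_{j<k}w_j\bigr)$ follows immediately from $1-1/x\leq\ln x$ and telescopes with no condition on the increment sizes. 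You instead use the weaker bound $n_{k-1}(s,a)\geq\frac{1}{4}\sum_{j<k}w_j(s,a)$, which creates precisely the obstacle you identify ($w_k$ can be of order $H$, comparable to the accumulated mass), and you close it correctly: the membership condition of $L_k$ forces $W_k\geq 4H\geq 4w_k(s,a)$, so $x_k\leq\frac{1}{4}$ and $\ln(1+x_k)\geq\frac{x_k}{1+x_k}\geq\frac{4}{5}x_k$ restores the logarithmic telescoping, giving $5SA\ln T=\Olog(SA)$. Both routes are valid; the $j\leq k$ version is slightly cleaner because it needs no increment control, while your version has the merit of only relying on the literal past-episode count bound.
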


Combining these lemmas we conclude the following one.
\begin{lemma}\label{lemma: supp 1 over sqrt n sum}
Outside the failure event, it holds that
$$\sum_{k=1}^K\sum_{t=1}^H \E\brs*{ \sqrt{\frac{1}{n_{k-1}(s_t^k,\pi_k(s_t^k))\vee 1}} \mid \F_{k-1} }\leq \Olog(\sqrt{SAT} +SAH)$$
\end{lemma}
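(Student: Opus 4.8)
The plan is to first convert the conditional expectation into an explicit $w_{tk}$-weighted sum, and then split over the Good Set. Since $\pi_k$ is $\F_{k-1}$ measurable and $n_{k-1}$ is $\F_{k-1}$ measurable, and since $w_{tk}(s,a)=\Pr(s_t^k=s,a_t^k=a\mid \F_{k-1})$ with $a_t^k=\pi_k(s_t^k)$, the inner expectation is a deterministic function of $\F_{k-1}$:
\begin{align*}
    \E\brs*{\sqrt{\frac{1}{n_{k-1}(s_t^k,\pi_k(s_t^k))\vee 1}}\;\middle|\;\F_{k-1}}
    = \sum_{s,a} w_{tk}(s,a)\sqrt{\frac{1}{n_{k-1}(s,a)\vee 1}}.
\end{align*}
I would then split the double sum $\sum_{k,t}$ of this quantity into the contribution of pairs $(s,a)\in L_k$ and those $(s,a)\notin L_k$, and bound each separately using the Good Set lemmas of Appendix~\ref{sec: Lk definition}.

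For the ``outside'' contribution, the plan is to bound $\sqrt{1/(n_{k-1}(s,a)\vee1)}\le 1$ trivially, so that
\begin{align*}
    \sum_{k=1}^K\sum_{t=1}^H\sum_{(s,a)\notin L_k} w_{tk}(s,a)\sqrt{\frac{1}{n_{k-1}(s,a)\vee 1}}
    \le \sum_{k=1}^K\sum_{t=1}^H\sum_{(s,a)\notin L_k} w_{tk}(s,a)
    \le \Olog(SAH),
\end{align*}
where the last inequality is exactly Lemma~\ref{lemma: visitation outside good set}. This produces the additive $\Olog(SAH)$ term.

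For the ``inside'' contribution, I would first use Lemma~\ref{lemma: good set visitation ratio 1} to note that $(s,a)\in L_k$ implies $n_{k-1}(s,a)\ge 1$, which lets me drop the $\vee 1$. Then the key step is a Cauchy--Schwarz split, writing $w_{tk}(s,a)/\sqrt{n_{k-1}(s,a)}=\sqrt{w_{tk}(s,a)}\cdot\sqrt{w_{tk}(s,a)/n_{k-1}(s,a)}$, to obtain
\begin{align*}
    \sum_{k=1}^K\sum_{t=1}^H\sum_{(s,a)\in L_k} \frac{w_{tk}(s,a)}{\sqrt{n_{k-1}(s,a)}}
    \le \sqrt{\sum_{k=1}^K\sum_{t=1}^H\sum_{(s,a)\in L_k} w_{tk}(s,a)}\;
    \sqrt{\sum_{k=1}^K\sum_{t=1}^H\sum_{(s,a)\in L_k} \frac{w_{tk}(s,a)}{n_{k-1}(s,a)}}.
\end{align*}
The first factor is controlled by $\sum_{s,a}w_{tk}(s,a)=1$, giving $\sum_{k,t}1=KH=T$, hence $\sqrt{T}$; the second factor is bounded by $\Olog(SA)$ directly from Lemma~\ref{lemma: good set visitation count ratio}, giving $\Olog(\sqrt{SA})$. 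Multiplying yields $\Olog(\sqrt{SAT})$, and summing the two contributions gives $\Olog(\sqrt{SAT}+SAH)$ as claimed.

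This argument is essentially bookkeeping once the Good Set lemmas are in hand, so I do not expect a genuine obstacle; the only point requiring care is justifying the first equality (the passage from the conditional expectation to the $w_{tk}$-weighted sum, relying on $\F_{k-1}$-measurability of $\pi_k$ and $n_{k-1}$) and ensuring the Cauchy--Schwarz split is applied to the restricted index set $(s,a)\in L_k$ so that the cited lemmas apply verbatim.
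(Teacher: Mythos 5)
Your proposal is correct and follows essentially the same route as the paper's own proof: conversion of the conditional expectation to a $w_{tk}$-weighted sum, splitting over the Good Set $L_k$, bounding the outside contribution by Lemma~\ref{lemma: visitation outside good set}, and applying Cauchy--Schwarz together with Lemmas~\ref{lemma: good set visitation ratio 1} and~\ref{lemma: good set visitation count ratio} to the inside contribution. No gaps; the measurability justification you flag is exactly the point the paper also relies on.
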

\begin{proof}
The following holds relations hold.
\begin{align}
    &\sum_{k=1}^K\sum_{t=1}^H \E\brs*{ \sqrt{\frac{1}{n_{k-1}(s_t^k,\pi_k(s_t^k))\vee 1}} \mid \F_{k-1} }\nonumber \\
    &= \sum_{k=1}^K\sum_{t=1}^H\sum_{s,a} w_{tk}(s,a)\sqrt{\frac{1}{n_{k-1}(s,a)\vee 1}}\nonumber\\
    &\leq \sum_{k=1}^K\sum_{t=1}^H\sum_{s,a\in L_{k}} w_{tk}(s,a)\sqrt{\frac{1}{n_{k-1}(s,a)}} +  \sum_{k=1}^K\sum_{t=1}^H\sum_{s,a\notin \in L_{k}} w_{tk}(s,a) \nonumber\\
    &\le \sum_{k=1}^K\sum_{t=1}^H\sum_{s,a\in L_{k}} w_{tk}(s,a)\sqrt{\frac{1}{n_{k-1}(s,a)}} + SAH.\label{eq: supp general results 1 over sqrt n first eq}
    \end{align}
    The first relation holds by definition. The second relation holds by the following argument. For the first term, if $(s,a)\in L_k$ then by Lemma \ref{lemma: good set visitation ratio 1}, $n_{k-1}(s,a)\geq 1$, and thus ${n_{k-1}(s,a)\vee 1 = n_{k-1}(s,a)}$. The second term is bounded by taking the worst case for the fraction, which is $n_{k-1}(s,a)\vee 1\geq  1$. The third relation holds by Lemma \ref{lemma: visitation outside good set}.
    
   Consider the first term in \eqref{eq: supp general results 1 over sqrt n first eq}.
\begin{align*}
        &\sum_{k=1}^K\sum_{t=1}^H\sum_{s,a\in L_{k}} w_{tk}(s,a)\sqrt{\frac{1}{n_{k-1}(s,a)}}\\
        &\leq \sqrt{\sum_{k=1}^K\sum_{t=1}^H\sum_{s,a\in L_{k}} w_{tk}(s,a)}\sqrt{\sum_{k=1}^K\sum_{t=1}^H\sum_{s,a\in L_{k}}
        \frac{w_{tk}(s,a)}{n_{k-1}(s,a)}}\\
        &\leq \sqrt{\sum_{k=1}^K\sum_{t=1}^H\sum_{s,a} w_{tk}(s,a)}\sqrt{\sum_{k=1}^K\sum_{t=1}^H\sum_{s,a\in L_{k}}
        \frac{w_{tk}(s,a)}{n_{k-1}(s,a)}}\\
    &= \sqrt{T}\sqrt{\sum_{k=1}^K\sum_{t=1}^H\sum_{s,a} \frac{w_{tk}(s,a)}{n_{k-1}(s,a)}}\lesssim \Olog(\sqrt{SAT}).
\end{align*}
The first relation holds by Cauchy-Schartz inequality. In the second relation, we replaced the sum in the first term to cover all of the state-action pairs, thus adding positive quantities. The third relation holds since by definition $\sum_{t=1}^H\sum_{s,a} w_{tk}(s,a)=H$ and $T=KH$. The last relation holds by Lemma \ref{lemma: good set visitation count ratio}.

Combining the result in \eqref{eq: supp general results 1 over sqrt n first eq} concludes the proof.
\end{proof}

\clearpage

\begin{lemma}\label{lemma: sum of 1 over n} 
Let $u,v\ge0$ be some non-negative constants. Outside the failure event,
\begin{align*}
    \sum_{k=1}^{K}\sum_{t=1}^{H}\E\brs*{ \min\brc*{ \frac{u}{n_{k-1}(s_{t'}^k,a_{t'}^k)\vee 1},v} \mid \mathcal{F}_{k-1}} \leq \Olog(SAu + SAHv)\enspace,
\end{align*}
and specifically, 
\begin{align*}
    \sum_{k=1}^{K}\sum_{t=1}^{H}\E\brs*{\frac{u}{n_{k-1}(s_{t'}^k,a_{t'}^k)\vee 1} \mid \mathcal{F}_{k-1}} \leq \Olog(SAHu)\enspace.
\end{align*}
\end{lemma}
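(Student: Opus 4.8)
The plan is to rewrite the conditional expectation in terms of the visitation probabilities $w_{tk}(s,a)$ and then split the state-action sum according to membership in the good set $L_k$ (Definition~\ref{defn:good set}). Since the quantity inside the expectation depends only on $(s_t^k,a_t^k)$ while everything else is $\F_{k-1}$-measurable (because $\pi_k$ is $\F_{k-1}$-measurable), I would first write
\[
\E\brs*{\min\brc*{\frac{u}{n_{k-1}(s_t^k,a_t^k)\vee 1},v}\mid \F_{k-1}} = \sum_{s,a} w_{tk}(s,a)\min\brc*{\frac{u}{n_{k-1}(s,a)\vee 1},v},
\]
and split the inner sum into the contributions of $(s,a)\in L_k$ and $(s,a)\notin L_k$.

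For the bad set, I would simply bound $\min\{u/(n_{k-1}(s,a)\vee 1),v\}\le v$ and invoke Lemma~\ref{lemma: visitation outside good set}, which controls the cumulative visitation of state-action pairs outside $L_k$ by $\Olog(SAH)$; this term therefore contributes $\Olog(SAHv)$. For the good set, Lemma~\ref{lemma: good set visitation ratio 1} guarantees $n_{k-1}(s,a)\ge 1$ whenever $(s,a)\in L_k$, so that $n_{k-1}(s,a)\vee 1 = n_{k-1}(s,a)$ and hence $\min\{u/(n_{k-1}(s,a)\vee 1),v\}\le u/n_{k-1}(s,a)$. Summing over $k,t$ and applying Lemma~\ref{lemma: good set visitation count ratio}, which bounds $\sum_{k,t}\sum_{(s,a)\in L_k} w_{tk}(s,a)/n_{k-1}(s,a)$ by $\Olog(SA)$, this term contributes $\Olog(SAu)$. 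Adding the two pieces yields the first claim $\Olog(SAu + SAHv)$.

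For the specialized second statement, I would take $v=u$: since $n_{k-1}(s,a)\vee 1\ge 1$ we always have $u/(n_{k-1}(s,a)\vee 1)\le u$, so $\min\{u/(n_{k-1}(s,a)\vee 1),u\} = u/(n_{k-1}(s,a)\vee 1)$, and the general bound collapses to $\Olog(SAu + SAHu) = \Olog(SAHu)$.

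There is no genuine obstacle here; the statement is a routine consequence of the good-set machinery already developed in Appendix~\ref{sec: Lk definition}. The only points requiring care are (i) correctly passing from the conditional expectation to the $w_{tk}$ representation, which relies on $\pi_k$ being $\F_{k-1}$-measurable, and (ii) dropping the $\vee 1$ on $L_k$, which is exactly the content of Lemma~\ref{lemma: good set visitation ratio 1}.
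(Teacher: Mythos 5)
Your proof is correct and matches the paper's own argument essentially step for step: the same passage to the $w_{tk}$ representation, the same split over the good set $L_k$ using Lemmas~\ref{lemma: good set visitation ratio 1}, \ref{lemma: visitation outside good set}, and \ref{lemma: good set visitation count ratio}, and the same specialization $v=u$ for the second claim. No gaps.
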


\begin{proof}
The proof partially follows \citep{zanette2019tighter}, Lemma 12:

\begin{align*}
    &\sum_{k=1}^{K}\sum_{t=1}^{H}\E\brs*{ \min\brc*{ \frac{u}{n_{k-1}(s_{t'}^k,a_{t'}^k)\vee 1},v} \mid \mathcal{F}_{k-1}} \\
    & \stackrel{(1)}{=} \sum_{k=1}^{K}\sum_{t=1}^{H}\sum_{s,a} w_{tk}(s,a)\min\brc*{ \frac{u}{n_{k-1}(s,a)\vee 1},v} \\
    & \stackrel{(2)}{\le} \sum_{k=1}^{K}\sum_{t=1}^{H}\sum_{(s,a)\in L_k} w_{tk}(s,a)\frac{u}{n_{k-1}(s,a)} + 
      v\sum_{k=1}^{K}\sum_{t=1}^{H}\sum_{(s,a)\notin L_k} w_{tk}(s,a)\\
    & \stackrel{(3)}{\lesssim} SAu + SAHv 
\end{align*}

$(1)$ is from the definition of $w_{tk}(s,a)$ and the fact that $n_{k-1}(s,a)$ is $\F_{k-1}$ measurable. In $(2)$ we divided the sum into state-actions in and outside $L_k$. For state-actions in $L_k$, we bounded the minimum by the first term, and otherwise we bounded it by $H^2$. Note that for any $(s,a)\in L_k$, $n_{k-1}(s,a)\ge1$, from Lemma \ref{lemma: good set visitation ratio 1}. $(3)$ is due to Lemmas \ref{lemma: visitation outside good set} and \ref{lemma: good set visitation count ratio}.

The second part of the lemma is a direct result of fixing $v=u$.

\end{proof}

\end{document}